\newcommand{\dee}{\mathrm{d}}
\def\balign#1\ealign{\begin{align}#1\end{align}}
\def\baligns#1\ealigns{\begin{align*}#1\end{align*}}
\def\balignat#1\ealign{\begin{alignat}#1\end{alignat}}
\def\balignats#1\ealigns{\begin{alignat*}#1\end{alignat*}}
\def\bitemize#1\eitemize{\begin{itemize}#1\end{itemize}}
\def\benumerate#1\eenumerate{\begin{enumerate}#1\end{enumerate}}
\newenvironment{talign*}
 {\csname align*\endcsname}
 {\endalign}
\newenvironment{talign}
 {\csname align\endcsname}
 {\endalign}
\def\balignst#1\ealignst{\begin{talign*}#1\end{talign*}}
\def\balignt#1\ealignt{\begin{talign}#1\end{talign}}
\let\originalleft\left
\let\originalright\right
\renewcommand{\left}{\mathopen{}\mathclose\bgroup\originalleft}
\renewcommand{\right}{\aftergroup\egroup\originalright}
\def\tinycitep*#1{{\tiny\citep*{#1}}}
\def\tinycitealt*#1{{\tiny\citealt*{#1}}}
\def\tinycite*#1{{\tiny\cite*{#1}}}
\def\smallcitep*#1{{\scriptsize\citep*{#1}}}
\def\smallcitealt*#1{{\scriptsize\citealt*{#1}}}
\def\smallcite*#1{{\scriptsize\cite*{#1}}}
\def\mbf#1{\mathbf{#1}}
\def\reals{\mathbb{R}} %
\def\<{\left\langle} %
\def\>{\right\rangle}
\newcommand{\boldone}{\mbf{1}} %
\newcommand{\ident}{\mbf{I}} %
\def\norm#1{\left\|{#1}\right\|} %
\newcommand{\onenorm}[1]{\norm{#1}_1} %
\newcommand{\infnorm}[1]{\norm{#1}_{\infty}} %
\newcommand{\fronorm}[1]{\norm{#1}_{\mathrm{F}}} %
\newcommand{\binner}[2]{\left\langle{#1},{#2}\right\rangle} %
\def\mineig#1{\lambda_{\mathrm{min}}\left({#1}\right)}
\def\indic#1{\mbb{I}\left[{#1}\right]} %
\def\Earg#1{\E\left[{#1}\right]}
\def\P{\mbb{P}} %
\def\Parg#1{\P\left({#1}\right)}
\DeclareSymbolFont{rsfs}{U}{rsfs}{m}{n}
\DeclareSymbolFontAlphabet{\mathscrsfs}{rsfs}
\newcommand{\Unif}{\textnormal{Unif}}
\providecommand{\sign}{\mathop\mathrm{sign}}
\newtheorem{theorem}{Theorem}
\newtheorem{lemma}[theorem]{Lemma}
\newtheorem{corollary}[theorem]{Corollary}
\newtheorem{definition}[theorem]{Definition}
\renewenvironment{proof}{\noindent\textbf{Proof.}\hspace*{.3em}}{\qed \vspace{.1in}}
\newenvironment{proof-sketch}{\noindent\textbf{Proof Sketch}
  \hspace*{1em}}{\qed\bigskip\\}
\newenvironment{proof-idea}{\noindent\textbf{Proof Idea}
  \hspace*{1em}}{\qed\bigskip\\}
\newenvironment{proof-of-lemma}[1][{}]{\noindent\textbf{Proof of Lemma {#1}}
  \hspace*{1em}}{\qed\\}
\newenvironment{proof-of-theorem}[1][{}]{\noindent\textbf{Proof of Theorem {#1}}
  \hspace*{1em}}{\qed\\}
\newenvironment{proof-attempt}{\noindent\textbf{Proof Attempt}
  \hspace*{1em}}{\qed\bigskip\\}
\newenvironment{remark}{\noindent\textbf{Remark.}
  \hspace*{0em}}{\smallskip}%
\newtheorem{proposition}[theorem]{Proposition}
\newtheorem{assumption}{Assumption}
\newtheorem*{assumption*}{Assumptions}
\theoremstyle{definition}
\newcommand*\samethanks[1][\value{footnote}]{\footnotemark[#1]}
\def\bw{\boldsymbol{w}}
\def\bW{\boldsymbol{W}}
\def\bU{\boldsymbol{U}}
\def\bu{\boldsymbol{u}}
\def\bx{\boldsymbol{x}}
\def\bX{\boldsymbol{X}}
\def\bb{\boldsymbol{b}}
\def\ba{\boldsymbol{a}}
\def\bA{\boldsymbol{A}}
\def\bAtilde{\boldsymbol{\tilde{A}}}
\def\bH{\boldsymbol{H}}
\def\bv{\boldsymbol{v}}
\def\bz{\boldsymbol{z}}
\def\bT{\boldsymbol{T}}
\def\bdelta{\boldsymbol{\delta}}
\def\bDelta{\boldsymbol{\Delta}}
\def\bahat{\boldsymbol{\hat{a}}}
\def\bxi{\boldsymbol{\xi}}
\def\bSigma{\boldsymbol{\Sigma}}
\def\DFL{\mathrm{DFL}}
\def\aDFL{\alpha\operatorname{-DFL}}
\def\SFL{\mathrm{SFL}}
\def\abSFL{(\alpha{,}\beta)\operatorname{\!-SFL}}
\def\Eargs#1#2{\E_{#1}\left[#2\right]}
\def\abs#1{\left| {#1} \right|}
\def\AR{\mathrm{AR}}
\def\vect{\mathrm{vec}}
\def\nFA{n_{\mathrm{FA}}}
\let\citep\cite
\let\citet\cite
\begin{document}

\title{Robust Feature Learning for Multi-Index Models in High Dimensions}

\author{
Alireza Mousavi-Hosseini\thanks{Department of Computer Science at the University of Toronto, and Vector Institute. \texttt{\{mousavi,erdogdu\}@cs.toronto.edu}.}
\ \ \ \ \ \
Adel Javanmard\thanks{Department of Data Sciences and Operations, University of Southern California. \texttt{ajavanma@usc.edu}.}
\ \ \ \ \ \
Murat A.~Erdogdu\samethanks[1]
}

\maketitle

\vspace{-4mm}

\begin{abstract}
Recently, there have been numerous studies on feature learning with neural networks, specifically on learning single- and multi-index models where the target is a function of a low-dimensional projection of the input.
Prior works have shown that in high dimensions, the majority of the compute and data resources are spent on recovering the low-dimensional projection; once this subspace is recovered, the remainder of the target can be learned independently of the ambient dimension. However, implications of feature learning in adversarial settings remain unexplored.
In this work, we take the first steps towards understanding adversarially robust feature learning with neural networks. Specifically, we prove that the hidden directions of a multi-index model offer a Bayes optimal low-dimensional projection for robustness against $\ell_2$-bounded adversarial perturbations under the squared loss, assuming that the multi-index coordinates are statistically independent from the rest of the coordinates. 
Therefore, robust learning can be achieved by first performing standard feature learning, then robustly tuning a linear readout layer on top of the standard representations. In particular, we show that adversarially robust learning is just as easy as standard learning. Specifically, the additional number of samples needed to robustly learn multi-index models when compared to standard learning does not depend on dimensionality.
\end{abstract}

\section{Introduction}
A crucial capability of neural networks is their ability to hierarchically learn useful features, 
and to avoid the curse of dimensionality by \emph{adapting} to potential low-dimensional structures in data through (regularized) empirical risk minimization (ERM)~\citep{bach2017breaking,schmidt2020nonparametric}. Recently, a theoretical line of work has demonstrated that gradient-based training, which is not a priori guaranteed to implement ERM due to non-convexity, also demonstrates similar behavior and efficiently learns functions of low-dimensional projections~\citep{wei2019regularization,damian2022neural,bietti2022learning,barak2022hidden,ba2022high-dimensional,mousavi2023neural} or functions with certain hierarchical properties~\citep{abbe2022merged,abbe2023sgd,dandi2023learning}. 
These theoretical insights provided a useful avenue for explaining standard feature learning mechanisms in neural networks.

On the other hand, it has been empirically observed that deep neural networks trained with respect to standard losses are susceptible to adversarial attacks; small perturbations in the input may not be detectable by humans, yet they can significantly alter the prediction performed by the model~\citep{szegedy2014intriguing}. To overcome this issue, a popular approach is to instead minimize the adversarially robust empirical risk~\citep{madry2018towards}. However, unlike its standard counterpart, achieving successful generalization of deep neural networks on robust test risk has been particularly challenging, and even the standard performance of the model can degrade once adversarial training is performed~\citep{tsipras2018robustness}. %
Given this, one may wonder if robust neural networks can still adapt to specific problem structures that enhance generalization. To explore this, we focus on hidden low-dimensionality, a well-known structural property, and aim to answer the following fundamental question:
\begin{center}
    \emph{Can neural networks retain their statistical adaptivity to low-dimensional structures \\when trained for robustness against adversarial perturbations?}
\end{center}
We answer this question positively by providing the following contributions.
\begin{itemize}[itemsep=1pt,leftmargin=15pt]
    \item When considering $\ell_2$-constrained perturbations,
    Bayes optimal predictors can be constructed by projecting the input data onto the low-dimensional subspace defined by the target function. %
    In this sense, the optimal low-dimensional projection remains unchanged compared to standard learning.

    \item Consequently, provided that they have access to an oracle that is able to recover the low-dimensional target subspace, neural networks can achieve a sample complexity that is \emph{independent of the ambient dimension} when robustly learning multi-index models.
    This is achieved by minimizing the empirical adversarial risk with respect to the second layer. 
    While the basic definition of empirical adversarial risk implies computational complexity dependent on input dimension, by simply projecting the inputs onto the low-dimensional target subspace, the computational complexity can also be made independent of the input dimension.

    \item 
    An oracle for recovering the low-dimensional target subspace can be constructed by training the first layer of a two-layer neural network with a standard loss function, as demonstrated by many prior works. By combining our results with two particular choices of oracle implementation~\citep{damian2022neural,lee2024neural}, we provide end-to-end guarantees for robustly learning multi-index models with gradient-based algorithms.
\end{itemize}

\subsection{Related Works}
\paragraph{Feature Learning for Single/Multi-Index Models.}
Many recent works have focused on proving benefits of feature learning, allowing the neural network weights to travel far from initialization, as opposed to freezing weights around initialization in lazy training~\citep{chizat2019lazy} which is equivalent to using the Neural Tangent Kernel~\citep{jacot2018neural}. 
When using online SGD on the squared loss,~\citet{benarous2021online} showed that the complexity of learning single-index models with known link function depends on a quantity called information exponent. 
Gradient-based learning of single-index models with information exponent 1 was studied in~\cite{ba2022high-dimensional,mousavi2023neural,moniri2023theory}, and~\cite{damian2022neural} considered multi-index polynomials where the equivalent of information exponent is at most 2. 
For general information exponent,~\cite{bietti2022learning} provided an algorithm for gradient-based learning using two-layer neural networks. 
A feature learning analysis faithful to SGD without modifications was presented in~\citet{glasgow2024sgd} for learning the XOR.
The counterpart of information exponent for multi-index models, the leap exponent, was introduced in~\cite{abbe2023sgd}. 
Considering SGD on the squared loss as an example of a Correlational Statistical Query (CSQ) algorithm,~\cite{damian2023smoothing} provided CSQ-optimal algorithms for learning single-index models. Further improvements to the isotropic sample complexity were achieved by either considering structured anisotropic Gaussian data~\citep{ba2023learning,mousavi2023gradient}, or the sparsity of the hidden direction~\citep{vural2024pruning}.
The benefits of feature learning have also been considered for multitask learning~\citep{collins2024provable} and in networks with depth larger than 2~\citep{nichani2023provable,wang2024learning}.

More recently, it was observed that gradient-based learning can go beyond CSQ algorithms by reusing batches~\citep{dandi2024benefits,lee2024neural,arnaboldi2024repetita}, or by changing the loss function~\citep{joshi2024complexity}. In such cases, the algorithm becomes an instance of a Statistical Query (SQ) learner, and the sample complexity is characterized by the generative exponent of the link function~\citep{damian2024computational}.

The above works mostly exist in a narrow-width setting where design choices guarantee that neurons do not signficantly interact with each other. Another line of research focused on the mean-field or wide limits of two-layer neural networks that takes the mean-field interaction between neurons into account~\citep{chizat2018global,rotskoff2018neural,mei2018mean} for providing learnability guarantees~\citep{wei2019regularization,chizat2020implicit,abbe2022merged,telgarsky2023feature,mahankali2023beyond,chen2024mean}. In particular, the mean-field Langevin algorithm provides global convergence guarantees for two-layer neural networks~\citep{chizat2022convergence,nitanda2022convex}, leading to sample complexity linear in an effective dimension for learning sparse parities~\citep{suzuki2023feature,nitanda2024improved} and multi-index models~\citep{mousavi2024learning}.

\paragraph{Adversarially Robust Learning.}
The existence of small worst-case or adversarial perturbations that can significantly change the prediction of deep neural networks was first demonstrated in~\citet{szegedy2014intriguing}. Among many defences proposed, one effective approach is adversarial training introduced by~\citet{madry2018towards}, which is based on solving a min-max problem to perform robust optimization. One observation regarding this algorithm is that it tends to decrease the standard performance of the model~\citep{tsipras2018robustness}. Therefore, the following works studied the hardness of robust learning and established a statistical separation in a simple mixture of Gaussians setting~\citep{schmidt2018adversarially}, or computational separation by proving statistical query lower bounds~\citep{bubeck2019adversarial}. Further studies focused on exact characterizations of the robust and standard error, as well as the fundamental and the algorithmic tradeoffs between robustness and accuracy in the context of linear regression~\citep{javanmard2020precise}, mixture of Gaussians classification~\citep{javanmard2022precise}, and in the random features model~\citep{hassani2024curse}. Closer to our work,~\cite{javanmard2024adversarial} showed that this tradeoff is mitigated when the data enjoy a low-dimensional structure. However, the focus there is on binary classification and generalized linear models, where the features live on a low-dimensional manifold. Here, we consider a multi-index model wherein the response depends on a low-dimensional projection of inputs. In addition, in~\cite{javanmard2024adversarial} it is assumed that the manifold structure is known and the focus is on population adversarial risk and accuracy (assuming infinite samples with fixed dimension), while here we consider algorithms for feature learning, and derive rates of convergence for adversarial risk.

In this work, we provide an alternative narrative compared to the line of work above by showing that in a high-dimensional regression setting, learning multi-index models that are robust against $\ell_2$ perturbations can be as easy as standard learning. We achieve this result by focusing on the feature learning capability of neural networks, i.e.\ their ability to capture low-dimensional projections.

\paragraph{Notation.} For Euclidean vectors, $\binner{\cdot}{\cdot}$ and $\norm{\cdot}$ denote the Euclidean inner product and norm respectively. For tensors, $\fronorm{\cdot}$ and $\norm{\cdot}$ denote the Frobenius and operator norms respectively. We use $\mathbb{S}^{k-1}$ for the unit sphere in $\reals^k$, and $\tau_k$ denotes the uniform probability measure on $\mathbb{S}^{k-1}$.
For quantities $a$ and $b$, $a = \mathcal{O}(b)$ means there is an absolute constant $C$ such that $a \leq Cb$, and $\Omega$ is similarly defined. $\tilde{\mathcal{O}}$ and $\tilde{\Omega}$ allow $C$ to grow polylogarithmically with problem parameters.

\section{Problem Setup: Feature Learning and Adversarial Robustness}\label{sec:setup}
\paragraph{Statistical Model.} Consider a regression setting where the input $\bx \in \reals^d$ and the target $y \in \reals$ are generated from a distribution $(\bx,y) \sim \mathcal{P}$. For a prediction function $f : \reals^d \to \reals$, its population adversarial risk is defined as
\begin{equation}
    \AR(f) \coloneqq \Earg{\max_{\norm{\bdelta} \leq \varepsilon}(f(\bx + \bdelta) - y)^2},
\end{equation}
where the expectation is over all random variables inside the brackets. Note that under this model, the adversary can perform a worst-case perturbation on the input, with a budget of $\varepsilon$ measured in $\ell_2$-norm, before passing it to the model.
Given a (non-parametric) family of prediction functions $\mathcal{F}$, our goal is to learn a predictor that achieves the optimal adversarial risk given by
\begin{equation}\label{eq:adv_risk_min}
    \AR^* \coloneqq \min_{f \in \mathcal{F}}\AR(f),
\end{equation}
We focus on learners of the form of two-layer neural networks with width $N$, given as
\begin{equation}
    f(\bx;\ba,\bW,\bb) = \ba^\top\sigma(\bW\bx + \bb),
\end{equation}
where $\ba \in \reals^N$ is the second layer weights and $\bW \in \reals^{N \times d}$ and $\bb \in \reals^N$ are the first layer weights and biases. To avoid overloading the notation we use 
$\AR(f(\cdot;\ba,\bW,\bb))=\AR(\ba,\bW,\bb) .$
Given access to $n$ i.i.d.\ samples $\{\bx^{(i)},y^{(i)}\}_{i=1}^n$ from $\mathcal{P}$, the goal is to learn the network parameters $\ba,\bW$, and $\bb$ in such a way that the quantity $\AR(\ba,\bW,\bb)$ is close to the optimal adversarial risk $\AR^*$.

A long line of recent works has shown that neural networks are particularly efficient in regression tasks when the target is a function of a low-dimensional projection of the input, see e.g.~\cite{bach2017breaking}.
Throughout the paper, we also make the same assumption that the data follows a \emph{multi-index model},
\begin{equation}\label{eq:multi_index_model}
    \Earg{y\,|\, \bx} = g(\binner{\bu_1}{\bx},\hdots,\binner{\bu_k}{\bx}),
\end{equation}
for all $\bx \in \reals^d$, where $g : \reals^k \to \reals$ is the link function, and we assume $\bu_1,\hdots,\bu_k$ are orthonormal without loss of generality. Let $\bU \in \reals^{k \times d}$ be an orthonormal matrix whose rows are given by $(\bu_i)$; we use the shorthand notation $g(\binner{\bu_1}{\bx},\hdots,\binner{\bu_k}{\bx}) := g(\bU\bx)$. In the special case where $k=1$, this model reduces to a \emph{single-index model}. In this paper, we consider the setting where $k \ll d$, and in particular $k = \mathcal{O}(1)$.

\paragraph{Feature Learning.} In the context of training two-layer neural networks when learning multi-index models, feature learning refers to recovering the target directions $\bU$ via the first layer weights $\bW$. Successful feature learning reduces the effective dimension of the problem from the input dimension $d$ to the number of target directions $k$, and circumvents the curse of dimensionality when $k \ll d$.

The complexity of recovering $\bU$ depends on multiple factors such as the choice of algorithm as well as the properties of the link function. We will provide an overview of some existing results for recovering $\bU$ with neural networks in Section~\ref{sec:feature_learner}, along with several concrete examples.

\section{Optimal Representations for Robust Learning}\label{sec:represent}
In this section, we demonstrate that under $\ell_2$-constrained perturbations, 
the optimal low-dimensional representations for robust learning coincide with those in a standard setting, both of which are given by the target directions $\bU$. Crucially, our result relies on the following assumption on the input distribution.
\begin{assumption}\label{assump:indep}
    Suppose $\tilde{\bU} \in \reals^{(d-k) \times d}$ is any orthonormal matrix whose rows complete the rows of $\bU$ into a basis of $\reals^d$. Then, $y$ and $\bU\bx$ are jointly independent from $\tilde{\bU}\bx$.
\end{assumption}
The above assumption is quite general. For example, with the notation $\bx_\parallel \coloneqq \bU\bx$ and $\bx_\perp \coloneqq \tilde{\bU}\bx$, it holds when $y = g(\bx_\parallel) + \varsigma$ where $\varsigma$ is independent zero-mean noise, and $\bx=\bU^\top\bU \boldsymbol{z}_1 +\tilde{\bU}^\top\tilde{\bU} \boldsymbol{z}_2$ for independent vectors $\boldsymbol{z}_1, \boldsymbol{z}_2\in\reals^d$. 
We now present a central result below along with its proof. We discuss the necessity of Assumption~\ref{assump:indep} and $\ell_2$ constrained attacks to obtain this result in Appendix~\ref{app:necessity}.
\begin{theorem}\label{thm:robust_standard_fl}
    Suppose Assumption~\ref{assump:indep} holds and~\eqref{eq:adv_risk_min} admits a minimizer. Then, there exists a function $f^* : \reals^d \to \reals$ of the form $f^*(\bx) = h(\bU\bx)$
    with $h : \reals^k \to \reals$ given by $h(\bz) = \Earg{f(\bx) \,|\, \bU\bx = \bz}$ for some $f \in \mathcal{F}$, such that
    \begin{equation}
        \AR(f^*) \leq \AR^*,
    \end{equation}
    with equality when $f^* \in \mathcal{F}$.
\end{theorem}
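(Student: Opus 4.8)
The plan is to take an arbitrary minimizer $f\in\mathcal{F}$ of \eqref{eq:adv_risk_min} (which exists by hypothesis) and define $f^*(\bx)=h(\bU\bx)$ with $h(\bz)=\Earg{f(\bx)\mid \bU\bx=\bz}$, then show $\AR(f^*)\le\AR(f)=\AR^*$. Using the notation $\bx_\parallel=\bU\bx$, $\bx_\perp=\tilde\bU\bx$, write the perturbation in the same coordinates, $\bdelta=\bU^\top\bdelta_\parallel+\tilde\bU^\top\bdelta_\perp$ with $\norm{\bdelta}^2=\norm{\bdelta_\parallel}^2+\norm{\bdelta_\perp}^2$, so that $\bU(\bx+\bdelta)=\bx_\parallel+\bdelta_\parallel$. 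The key point is that $f^*$ depends only on $\bx_\parallel+\bdelta_\parallel$, so a perturbation in the $\bx_\perp$ directions is wasted; hence the adversary against $f^*$ will only use the budget in the $\bx_\parallel$ directions, i.e. $\max_{\norm{\bdelta}\le\varepsilon}(f^*(\bx+\bdelta)-y)^2=\max_{\norm{\bdelta_\parallel}\le\varepsilon}(h(\bx_\parallel+\bdelta_\parallel)-y)^2$.

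The main step is a conditional Jensen argument. Fix $\bx_\parallel$ and $y$; condition on the pair $(\bx_\parallel,y)$. For any fixed admissible shift $\bz$ with $\norm{\bz}\le\varepsilon$ we have, by Jensen's inequality applied to the convex map $t\mapsto(t-y)^2$ and to the conditional expectation defining $h$,
\begin{equation}
    \bigl(h(\bx_\parallel+\bz)-y\bigr)^2 \le \Earg{\bigl(f(\bx+\bU^\top\bz)-y\bigr)^2 \,\big|\, \bx_\parallel, y},
\end{equation}
provided $h(\bx_\parallel+\bz)=\Earg{f(\bx+\bU^\top\bz)\mid \bx_\parallel+\bU^\top\!\cdot\text{(shift)}}$ can be identified with $\Earg{f(\bx+\bU^\top\bz)\mid \bx_\parallel,y}$ — and this identification is exactly where Assumption~\ref{assump:indep} enters: because $\bx_\perp$ is independent of $(\bx_\parallel,y)$, translating the input by $\bU^\top\bz$ (which only changes the $\parallel$-component) gives $\Earg{f(\bx+\bU^\top\bz)\mid \bx_\parallel,y}=\Earg{f(\bx)\mid \bx_\parallel=\bx_\parallel+\bz} = h(\bx_\parallel+\bz)$, and the conditional law of $\bx_\perp$ is unchanged. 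Taking the maximum over $\norm{\bz}\le\varepsilon$ on the left and bounding the right-hand side by $\Earg{\max_{\norm{\bdelta}\le\varepsilon}(f(\bx+\bdelta)-y)^2\mid \bx_\parallel,y}$ (the single fixed $\bdelta=\bU^\top\bz$ is dominated by the worst-case $\bdelta$), then taking outer expectation over $(\bx_\parallel,y)$, yields $\AR(f^*)\le\AR(f)=\AR^*$.

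For the equality claim, note that if $f^*\in\mathcal{F}$ then by definition of $\AR^*$ as a minimum over $\mathcal{F}$ we also have $\AR(f^*)\ge\AR^*$, so the inequality is tight. I expect the main obstacle to be the careful bookkeeping in the conditioning step: one must verify that the ``max then expectation'' on the adversarial side interacts correctly with Jensen — specifically that restricting the $f^*$-adversary to $\parallel$-perturbations loses nothing (because $f^*$ ignores $\perp$-perturbations) while the $f$-adversary is only helped by having the full $\varepsilon$-ball available — and that Assumption~\ref{assump:indep} genuinely licenses treating $\Earg{f(\bx+\bU^\top\bz)\mid\bx_\parallel,y}$ as $h$ evaluated at the shifted point, which requires the joint (not just marginal) independence of $\bx_\perp$ from $(\bx_\parallel,y)$ as stated.
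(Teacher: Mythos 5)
Your proposal is correct and follows essentially the same route as the paper's proof: both reduce to showing that, conditionally on $(\bx_\parallel,y)$ and for a perturbation restricted to the parallel subspace, $\Earg{(f(\bx+\bdelta)-y)^2\mid \bx_\parallel,y}\ge (h(\bx_\parallel+\bdelta_\parallel)-y)^2$, using Assumption~\ref{assump:indep} to identify $\Earg{f(\bx+\bU^\top\bz)\mid\bx_\parallel,y}$ with $h(\bx_\parallel+\bz)$, and then interchanging the max with the conditional expectation in the favorable direction. The only cosmetic difference is that you invoke Jensen's inequality directly where the paper expands $(f(\bx+\bdelta)-y)^2$ into a bias-variance decomposition and shows the cross term vanishes and the variance term is nonnegative — these are two phrasings of the same $L^2$-optimality of conditional expectation.
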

\begin{remark}
To understand the significance of the above result, 
define the function class $\mathcal{H} = \{\bz \mapsto \Earg{f(\bx) \,|\, \bU\bx = \bz}\text{ for } f \in \mathcal{F}\}$, 
and observe that the last statement of the theorem reads
$$\min_{h \in \mathcal{H}} \AR(h(\bU \cdot)) \leq \AR^*.$$
To achieve the optimal adversarial risk $\AR^*$, one only needs to $(i)$~learn the target directions $\bU$, and $(ii)$~approximate functions in a $k$-dimensional subspace rather than $d$. 
For two-layer neural networks, the first layer $\bW$ recovers $\bU$, and the remaining parameters $\ba$ and $\bb$ are used to approximate the optimal $h$. 
While this recipe is general, 
we provide specific implications in the next section.   
\end{remark}

\begin{proof}%
    We will show that for every $f \in \mathcal{F}$, $h(\bz) = \Earg{f(\bx) \,|\, \bU\bx = \bz}$ gives $\AR(h(\bU\cdot)) \leq \AR(f)$. Then, choosing $f$ to be some minimizer of $\AR$ yields the desired result.
    
    Define the residuals $r_{y}(\bx_\parallel,\bdelta_\parallel) \coloneqq y - h(\bx_\parallel + \bdelta_\parallel)$, and $r_f(\bx,\bdelta) \coloneqq f(\bx + \bdelta) - h(\bx_\parallel + \bdelta_\parallel)$. Then, by a decomposition of the squared loss and the tower property of conditional expectation,
    \begin{align*}
        \AR(f) 
        &= \Earg{\Earg{\max_{\norm{\bdelta} \leq \varepsilon}r_y(\bx_\parallel,\bdelta_\parallel)^2 + r_f(\bx,\bdelta)^2 - 2r_y(\bx_\parallel,\bdelta_\parallel)r_f(\bx,\bdelta) \,\Big|\, \bx_\parallel, y}}\\
        &\geq \Earg{\max_{\norm{\bdelta} \leq \varepsilon}r_y(\bx_\parallel,\bdelta_\parallel)^2 + \Earg{r_f(\bx,\bdelta)^2 \,\big|\, \bx_\parallel,y} - 2r_y(\bx_\parallel,\bdelta_\parallel)\Earg{r_f(\bx,\bdelta)\,\big|\,\bx_\parallel,y}}\\
        &\geq \Earg{\max_{\{\norm{\bdelta} \leq \varepsilon, \bdelta_\perp = 0\}}r_y(\bx_\parallel,\bdelta_\parallel)^2 + \Earg{r_f(\bx,\bdelta)^2 \,\big|\, \bx_\parallel,y} - 2r_y(\bx_\parallel,\bdelta_\parallel)\Earg{r_f(\bx,\bdelta)\,\big|\,\bx_\parallel,y}}.
    \end{align*}
    Since $y| \bx_\parallel$ is independent from $\bx_\perp$, for any fixed $\bdelta$, we have $\Earg{r_f(\bx,\bdelta) \,|\, \bx_\parallel,y} = \Earg{r_f(\bx,\bdelta) \,|\, \bx_\parallel}$.
    In addition, by using the notation $f(\bx) = f(\bx_\parallel,\bx_\perp)$, provided that $\bdelta_\perp = 0$,
    Assumption~\ref{assump:indep} yields
    \begin{align*}
        h(\bz + \bdelta_\parallel) = \Earg{f(\bx)\,|\, \bx_\parallel=\bz + \bdelta_\parallel} = \Earg{f(\bz + \bdelta_\parallel,\bx_\perp + \bdelta_\perp)}
        = \Earg{f(\bx + \bdelta) \,|\, \bx_\parallel = \bz},
    \end{align*}
    for all $\bz \in \reals^k$. Plugging in $\bz = \bx_\parallel$ gives $\Earg{r_f(\bx,\bdelta) \,|\, \bx_\parallel} =0$.
    Therefore,
    \begin{align*}
        \AR(f) &\geq \Earg{\max_{\{\norm{\bdelta} \leq \varepsilon, \bdelta_\perp=0\}}r_y(\bx_\parallel,\bdelta_\parallel)^2 + \Earg{r_f(\bx,\bdelta)^2 \,|\, \bx_\parallel,y}}\\
        &\geq \Earg{\max_{\{\norm{\bdelta} \leq \varepsilon, \bdelta_\perp=0\}}r_y(\bx_\parallel,\bdelta_\parallel)^2}\\
        &= \Earg{\max_{\{\norm{\bdelta} \leq \varepsilon,\bdelta_\perp=0\}}(y - h(\bU(\bx + \bdelta)))^2}= \AR(h(\bU\cdot)),
    \end{align*}
    where we dropped the constraint $\bdelta_\perp=0$ as it does not contribute. This concludes the proof.
\end{proof}
\vspace{-0.3cm}

\paragraph{A discussion on robust/non-robust feature decomposition.} Many works on adversarial robustness in classification assume that features can be divided into robust and non-robust groups, with standard training relying on non-robust features and robust training using robust ones. This explains the performance gap between the two approaches (see e.g.~\cite{tsipras2018robustness,ilyas2019adversarial,kim2021distilling,li2024adversarial}.) However, our focus is on a different phenomenon: dimensionality reduction. Unlike previous studies, we do not rely on this robust/non-robust decomposition. Instead, the $k$ relevant features for predicting $y$ can be either robust or non-robust. The robust training of the second layer ensures the model utilizes the robust subset of these $k$ features, if such a subset exists, while the first layer performs dimensionality reduction. Crucially, applying robust training to all layers in high-dimensional settings can fail to achieve dimensionality reduction, which may deteriorate the generalization performance in the setting we consider, as illustrated in Figure~\ref{fig:experiment}.

Before moving to the next section, we provide the following remark on proper scaling of $\varepsilon$.
    Since $\Earg{\norm{\bx}}$ grows with $\sqrt{d}$, it may seem natural to scale the adversary budget $\varepsilon$ with dimension as well. However, we provide a simple argument on the contrary.
    Consider the single-index case $y= g(\binner{\bu}{\bx})$, and let $h$ be the optimal function constructed in Theorem~\ref{thm:robust_standard_fl}, providing the prediction function $\bx \mapsto h(\binner{\bu}{\bx})$. It can then be observed that even a constant order $\varepsilon$ can cause a significant change in the input of $h$, e.g., choosing $\bdelta = \varepsilon \bu$ perturbs the input of the predictor by $\varepsilon$. This justifies focusing on the regime where $\varepsilon$ is of constant order relative to the input dimension, which is the focus in the rest of the paper.

\section{Learning Procedure and Guarantees}\label{sec:learning}
As outlined in the previous section, to robustly learn the target model, standard representations $\bU$ suffice.
In this section, we present concrete examples demonstrating how combining a standard feature learning oracle with adversarially robust training in the second layer results in robust learning.
We assume access to the following \emph{feature learning oracle} to recover $\bU$. We will provide instances of practical implementations of this oracle using standard gradient-based algorithms in Section~\ref{sec:feature_learner}. 

\begin{definition}[DFL]\label{def:fl}
    An $\alpha$-Deterministic Feature Learner (DFL) is an oracle that for every $\zeta > 0$, given $n_\DFL(\zeta)$ samples from $\mathcal{P}$, returns a weight matrix $\bW = (\bw_1,\hdots,\bw_N)^\top \in \reals^{N \times d}$ with unit-norm rows, such that for all $\bu \in \mathrm{span}(\bu_1,\hdots,\bu_k)$ with $\norm{\bu} = 1$, for some $\alpha > 0$ we have
    $$\frac{\abs{\{i : \binner{\bw_i}{\bu} \geq 1 - \zeta\}}}{N} \geq \alpha \zeta^{(k-1)/2}.$$
\end{definition}
An $\alpha$-DFL oracle returns weights such that roughly an $\alpha$-proportion of them align with (and sufficiently cover) the target subspace.
By a packing argument, we can show that the best achievable ratio is $\alpha \leq c(k)$ for some constant $c(k) > 0$ depending only on $k$, which is why we use the normalizing factor $\zeta^{(k-1)/2}$ above. We show in Section~\ref{sec:feature_learner} that the definition above with a constant order $\alpha$ is attainable by standard gradient-based algorithms. 
That said, in the multi-index setting, it is possible to improve our learning guarantees by considering the following stochastic oracle.
\begin{definition}[SFL]\label{def:sfl} An $(\alpha{,}\beta)$-Stochastic Feature Learner (SFL) is an oracle that for every $\zeta>0$, given $n_{\SFL}(\zeta)$ samples from $\mathcal{P}$, returns a random weight matrix $\bW = (\bw_1, \hdots, \bw_N)^\top\in \reals^{N \times d}$ with unit-norm rows, such that there exists $S \subseteq [N]$ with ${\abs{S}}/{N} \geq \alpha$ satisfying $\norm{\bw_i - \bU^\top\bU\bw_i}^2 \leq \zeta$ for $i \in S$ .
Further, 
$\big(\!\frac{\bU\bw_i}{\norm{\bU\bw_i}}\!\big)_{i\in S} \!\!\!\!\stackrel{\mathrm{i.i.d.}}{\sim}\!\!\! \mu$,
and $\frac{\dee \mu}{\dee \tau_k} \geq \beta$ for some $\beta > 0$, 
where $\mu$ is some measure and $\tau_k$ is uniform, both supported on $\mathbb{S}^{k-1}$.
\end{definition}

The lower bound on $\frac{\dee \mu}{\dee \tau_k}$ ensures sufficient coverage of the low-dimensional space of target directions.
We note that an $(\alpha,\beta)$-SFL oracle can be used to directly implement an $\alpha$-DFL oracle; by a standard union bound argument, one can show $N = \tilde{\Theta}(1/(\alpha\beta\zeta^{(k-1)/2}))$ guarantees the output of $(\alpha,\beta)$-SFL satisfies Definition~\ref{def:fl} with high probability.
Therefore, while its definition is slightly more involved, $(\alpha,\beta)$-SFL is a more specialized oracle compared to $\alpha$-DFL.

Once the first layer representation is provided by above oracles, we can fix the biases at some random initialization, and train the second layer weights $\ba$ by minimizing the empirical adversarial risk
\begin{equation}
    \widehat{\AR}(\ba,\bW,\bb) = \frac{1}{\nFA}\sum_{i=1}^{\nFA}\max_{\norm{\bdelta^{(i)}} \leq \varepsilon}(f(\bx^{(i)} + \bdelta^{(i)};\ba,\bW,\bb) - y^{(i)})^2,
\end{equation}
where $\nFA$ denotes the number of samples used in the function approximation phase. We formalize the training procedure with two-layer neural networks in Algorithm~\ref{alg:learning_procedure}.
\begin{algorithm}[h]
\begin{algorithmic}[1]
\REQUIRE $\zeta$, $r_a$, $r_b$, $\{\bx^{(i)},y^{(i)}\}_{i=1}^{n_\mathrm{FL}(\zeta) + \nFA}$, $\mathrm{FL} \in \{\aDFL, \abSFL\}$.
\STATE \textbf{Phase 1: Feature Learning}
\STATE\hspace{\algorithmicindent} $\bW = \mathrm{FL}\left(\zeta,\{\bx^{(i)},y^{(i)}\}_{i=\nFA +1}^{\nFA +n_\mathrm{FL}(\zeta)}\right)$.
\STATE \textbf{Phase 2: Robust Function Approximation}
\STATE\hspace{\algorithmicindent} $b_j \overset{\emph{\text{iid}}}{\sim} \Unif(-r_b,r_b)$ for $1 \leq j\leq N$.
\STATE\hspace{\algorithmicindent} $\bahat = \argmin_{\norm{\ba} \leq \tfrac{r_a}{\sqrt{N}}}\widehat{\AR}(\ba,\bW,\bb)$.
\RETURN $(\ba,\bW,\bb)$
\end{algorithmic}
\caption{Adversarially robust learning with two-layer NNs.}
\label{alg:learning_procedure}
\end{algorithm}
We highlight that keeping biases at random initialization while only training the second layer $\ba$ performs non-linear function approximation, and has been used in many prior works on feature learning~\citep{damian2022neural,mousavi2023gradient,oko2024learning}. Further, while $\ba \mapsto \widehat{\AR}(\ba,\bW,\bb)$ is a convex function for fixed $\bW$ and $\bb$ since it is a maximum over convex functions, exact training of $\ba$ in practice may not be straightforward since the inner maximization is not concave and does not admit a closed-form solution. In practice, some form of gradient descent ascent algorithm is typically used when training $\ba$~\citep{madry2018towards}. In this work, we do not consider the computational aspect of solving this min-max problem, and leave that analysis as future work.

We will make the following standard tail assumptions on the data distribution.
\begin{assumption}\label{assump:subgaussian}
    Suppose $\bx$ has zero mean and $\mathcal{O}(1)$ subGaussian norm.
    Furthermore, for all $r\geq 1$, it holds that $\Earg{\abs{y}^r}^{1/r} \leq \mathcal{O}(r^{p/2})$ for some constant $p \geq 1$.
\end{assumption}
Note that the condition on $y$ above is mild; for example, it holds for a noisy multi-index model $y = g(\bU\bx) + \varsigma$, where $\varsigma$ has $\mathcal{O}(1)$ subGaussian norm and $g$ grows at most polynomially, i.e.,\ $\abs{g(\cdot)} \lesssim 1 + |\cdot|^p$.
Similarly, we also keep the function class $\mathcal{F}$ quite general and
provide our first set of results for a class of pseudo-Lipschitz functions which is introduced below.
\begin{assumption}\label{assump:Lip}
    We assume $\mathcal{F}$ is a class of functions that are pseudo-Lipschitz along the target coordinates. Specifically, using the notation $f(\bx) = f(\bx_\parallel,\bx_\perp)$ and defining $\varepsilon_{1} \coloneqq 1 \lor \varepsilon$, we have
    $$\abs{f(\bz_1,\bx_\perp) - f(\bz_2,\bx_\perp)} \leq L(\bx_\perp)\big(\varepsilon_1^{1-p}\norm{\bz_1}^{p-1} + \varepsilon_1^{1-p}\norm{\bz_2}^{p-1} + 1\big)\norm{\bz_1 - \bz_2}$$
    for all $f \in \mathcal{F}$, all $\bz_1,\bz_2 \in \reals^k$, and some constants $L$ and $p \geq 1$ such that $\Earg{L(\bx_\perp)} \leq L$. 
\end{assumption}
\begin{remark}
    The prefactor $\varepsilon_1^{1-p}$ is justified intuitively since the optimal function of the form $h(\bz) = \Earg{f(\bx) \,|\, \bU\bx = \bz}$ should satisfy $\Earg{\max_{\norm{\bdelta} \leq \varepsilon}(y - h(\bU(\bx + \bdelta)))^2} = \AR^*$, which is bounded, and
    does not grow with $\varepsilon$ beyond a certain point.
    This implies that $h$ must be sufficiently smooth while its input is perturbed, and in particular, its (local) Lipschitz constant should remain bounded while $\varepsilon$ grows. Therefore, we introduce the above prefactor to cancel the effect of $\norm{\bz}^{p-1}$ growing with $\varepsilon_1^{p-1}$ under adversarial attacks.
\end{remark}

Later in Section~\ref{sec:compare_poly}, we focus on a subclass of predictors that are polynomials of a fixed degree $p$ to achieve refined results. The first result of this section assumes access to $\aDFL$ oracle.
\begin{theorem}\label{thm:multi_index_fl}
    Suppose Assumptions~\ref{assump:indep},\ref{assump:subgaussian},\ref{assump:Lip} hold and the ReLU activation is used. For a tolerance $\epsilon > 0$ define $\tilde{\epsilon} \coloneqq \epsilon \land (\epsilon^2/\AR^*)$, and for the adversary budget $\varepsilon$ recall $\varepsilon_1 \coloneqq 1\lor \varepsilon$. Consider Algorithm~\ref{alg:learning_procedure} with $\text{\normalfont{FL}} = \aDFL$ oracle, $r_a = \tilde{\mathcal{O}}\big((\varepsilon_1/\sqrt{\tilde{\epsilon}})^{k+1+1/k}/\alpha\big)$ and $r_b = \tilde{\mathcal{O}}\big(\varepsilon_1(\varepsilon_1/\sqrt{\tilde{\epsilon}})^{1+1/k}\big)$. Then, if the number of second phase samples $\nFA$, the number of neurons $N$, and $\aDFL$ error $\zeta$ satisfy
    \begin{align*}
        \nFA \geq \tilde{\Omega}\Bigg(\frac{\varepsilon_1^4}{\alpha^4\epsilon^2}\bigg(\frac{\varepsilon_1}{\sqrt{\tilde{\epsilon}}}\bigg)^{\mathcal{O}(k)}\Bigg), && N \geq \tilde{\Omega}\Bigg(\frac{1}{\alpha\zeta^{(k-1)/2}}\bigg(\frac{\varepsilon_1}{\sqrt{\tilde{\epsilon}}}\bigg)^{\mathcal{O}(k)}\Bigg), && \zeta \leq \tilde{\mathcal{O}}\Bigg(\bigg(\frac{\tilde{\epsilon}}{\sqrt{\varepsilon_1}}\bigg)^{\mathcal{O}(k)}\Bigg),
    \end{align*}
    we have $\AR(\bahat,\bW,\bb) \leq \AR^* + \epsilon$ with probability at least $1 - \nFA^{-c}$ where $c > 0$ is an absolute constant.
\end{theorem}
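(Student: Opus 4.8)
The plan is to reduce, via Theorem~\ref{thm:robust_standard_fl}, to approximating a single pseudo-Lipschitz function on a low-dimensional ball by a network built on the oracle's weights, and then to control the generalization gap of the empirical adversarial risk. \textbf{Step 1 (reduction to a $k$-dimensional approximation problem).} Let $f \in \mathcal{F}$ minimize $\AR$ and let $h(\bz) = \Earg{f(\bx)\,|\,\bU\bx=\bz}$ be the function from Theorem~\ref{thm:robust_standard_fl}, so $\AR(h(\bU\cdot)) \le \AR^*$. Assumption~\ref{assump:indep} gives $h(\bz) = \Earg{f(\bz,\bx_\perp)}$, so averaging Assumption~\ref{assump:Lip} over $\bx_\perp$ shows $h$ inherits the same pseudo-Lipschitz bound with constant $L$; since $\bU\bx$ is $\mathcal{O}(1)$-subGaussian and $k=\mathcal{O}(1)$, both $\bU\bx$ and $\bU(\bx+\bdelta)$ lie w.h.p.\ in a ball $B_R\subset\reals^k$ with $R=\tilde{\mathcal{O}}(\varepsilon_1)$, on which the prefactor $\varepsilon_1^{1-p}\norm{\cdot}^{p-1}+1$ is $\mathcal{O}(1)$ --- so $h$ is effectively $\mathcal{O}(1)$-Lipschitz on $B_R$, which is the role of that prefactor. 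Writing $\hat f(\bx+\bdelta)-y = (\hat f(\bx+\bdelta)-h(\bU(\bx+\bdelta)))+(h(\bU(\bx+\bdelta))-y)$ and applying Cauchy--Schwarz yields $\AR(\hat f) \le \AR^* + 2\sqrt{\AR^*}\,\mathcal{E}+\mathcal{E}^2$ with $\mathcal{E}^2 \coloneqq \Earg{\max_{\norm{\bdelta}\le\varepsilon}(\hat f(\bx+\bdelta)-h(\bU(\bx+\bdelta)))^2}$. Hence it suffices to build $\ba^*$ with $\norm{\ba^*}\le r_a/\sqrt{N}$ such that $\hat f=(\ba^*)^\top\sigma(\bW\cdot+\bb)$ achieves $\mathcal{E}=\tilde{\mathcal{O}}(\sqrt{\tilde{\epsilon}})$; the ratio $\varepsilon_1/\sqrt{\tilde{\epsilon}}$ (domain radius over target accuracy) then governs every parameter.

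\textbf{Step 2 (building the second layer on the oracle's neurons --- the heart of the proof).} Represent $h$ on $B_R$, up to an affine term, as a ReLU ridgelet $\int_{\mathbb{S}^{k-1}}\int_{\reals}\gamma(\bw,b)\,\sigma(\binner{\bw}{\bz}+b)\,\mathrm{d}b\,\tau_k(\mathrm{d}\bw)$ with $\gamma$ supported on $|b|\le cR$ and $\norm{\gamma}_\infty\le\tilde{\mathcal{O}}\big((\varepsilon_1/\sqrt{\tilde{\epsilon}})^{\mathcal{O}(k)}\big)$, and discretize. For the direction integral, fix a $\sqrt{\zeta}$-net of $\mathbb{S}^{k-1}$ of size $\Theta(\zeta^{-(k-1)/2})$; Definition~\ref{def:fl} guarantees each net point has $\ge\alpha\zeta^{(k-1)/2}N$ rows $\bw_i$ within $\sqrt{2\zeta}$ of it, so after assigning each aligned row to one net cell a Riemann sum with cell volume $\asymp\zeta^{(k-1)/2}$ --- exactly the normalization built into the $\aDFL$ definition --- approximates the direction integral. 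For the inner bias integral, the biases of the rows in a cell are i.i.d.\ $\Unif(-r_b,r_b)$ (drawn in Phase~2 independently of $\bW$), so a Monte-Carlo average over them approximates $\int\gamma\,\sigma\,\mathrm{d}b$ once $r_b$ covers the bias support and $\alpha\zeta^{(k-1)/2}N$ is large, and a union bound over the $\Theta(\zeta^{-(k-1)/2})$ cells yields the stated lower bound on $N$. Reading off the coefficients gives $\abs{a_i^*}\asymp r_b\norm{\gamma}_\infty/(\alpha N)$, hence $\norm{\ba^*}^2\lesssim r_b^2\norm{\gamma}_\infty^2/(\alpha^2 N)$, i.e.\ $\norm{\ba^*}\le r_a/\sqrt{N}$ for the prescribed $r_a,r_b$; tuning $R\asymp\varepsilon_1$ and the net resolution $\sqrt{\zeta}$ to a suitable power of $\sqrt{\tilde{\epsilon}}/\varepsilon_1$ drives the truncation, Riemann-sum, and Monte-Carlo errors below $\tilde{\mathcal{O}}(\sqrt{\tilde{\epsilon}})$, which is the origin of the bound on $\zeta$.

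\textbf{Step 3 (subspace leakage --- the genuinely new difficulty).} The oracle's rows are only approximately in $\mathrm{span}(\bu_1,\ldots,\bu_k)$: if $\binner{\bw_i}{\bu}\ge 1-\zeta$ then $\bw_i=\bw_i^\parallel+\bw_i^\perp$ with $\norm{\bw_i^\perp}\le\sqrt{2\zeta}$, so $\binner{\bw_i}{\bx}=\binner{\bw_i^\parallel}{\bU\bx}+\binner{\bw_i^\perp}{\bx}$ and the extra term is $\mathcal{O}(\sqrt{\zeta})$-subGaussian --- crucially dimension-free, since $\bx$ has $\mathcal{O}(1)$ subGaussian norm. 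Thus replacing $\sigma(\binner{\bw_i}{\bx}+b_i)$ by $\sigma(\binner{\bw_i^\parallel}{\bU\bx}+b_i)$ costs only $\tilde{\mathcal{O}}(\sqrt{\zeta})$ per neuron, and --- the key point for robustness --- the adversary can move the output through the perpendicular components by at most $\sum_i\abs{a_i^*}\norm{\bw_i^\perp}\varepsilon\le\sqrt{N}\norm{\ba^*}\sqrt{2\zeta}\,\varepsilon\le r_a\sqrt{2\zeta}\,\varepsilon$, again dimension-free and negligible for $\zeta$ as small as stated. This is precisely why a small but $d$-independent bound on $\zeta$ suffices rather than something like $\zeta\lesssim 1/d$.

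\textbf{Step 4 (generalization and conclusion).} Since the feature-learning samples are disjoint from, hence independent of, the $\nFA$ function-approximation samples, I would condition on $\bW$ and bound $\sup_{\norm{\ba}\le r_a/\sqrt{N}}\abs{\AR(\ba,\bW,\bb)-\widehat{\AR}(\ba,\bW,\bb)}\le\epsilon/4$ with probability $\ge 1-\nFA^{-c}$. The network is linear in $\ba$, and since $\sigma$ is $1$-Lipschitz, $\norm{\bW}_{\mathrm{op}}\le\sqrt{N}$ and $\norm{\ba}\le r_a/\sqrt{N}$, the adversary changes the output by at most $r_a\varepsilon$ uniformly; hence the adversarial squared loss lies between two dimension-free envelopes and the Rademacher/covering complexity of the loss class is controlled by $r_a$, $r_b$ and moments of $\norm{\bx}$ rather than by $d$, while the tail bounds of Assumption~\ref{assump:subgaussian} are absorbed into the polylogarithmic factors and the polynomial failure probability $\nFA^{-c}$. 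Combining the pieces, $\AR(\bahat,\bW,\bb)\le\widehat{\AR}(\bahat,\bW,\bb)+\epsilon/4\le\widehat{\AR}(\ba^*,\bW,\bb)+\epsilon/4\le\AR(\ba^*,\bW,\bb)+\epsilon/2\le\AR^*+\epsilon$. I expect Steps~2--3 to be the main obstacle: producing a bounded-norm second layer supported on the oracle's guaranteed-aligned neurons, propagating the direction- and bias-discretization errors while respecting $\norm{\ba^*}\le r_a/\sqrt{N}$, and ensuring this survives both the $\varepsilon$-fattening of the data ball and the adversary's exploitation of the residual $\bx_\perp$-leakage of those neurons.
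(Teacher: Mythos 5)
Your proposal follows essentially the same route as the paper's: decompose the suboptimality into generalization ($\AR(\bahat,\bW,\bb)-\AR(\ba^*,\bW,\bb)$) plus approximation ($\AR(\ba^*,\bW,\bb)-\AR^*$), control the approximation by a Bach-type ridgelet representation of the Theorem~\ref{thm:robust_standard_fl} optimum $h$ discretized over the $\aDFL$-aligned neurons with random biases (paper's $\mathcal{E}_1$, via Lemma~\ref{lem:Lip_infty_width_approx} and Proposition~\ref{prop:finite_width_riemann}) together with the subspace-leakage term from replacing $\bw_j$ by its projection (paper's $\mathcal{E}_2$, Lemma~\ref{lem:E2_bound}), and control generalization by truncating the squared loss and running a Rademacher/Maurey-sparsification covering argument as in Proposition~\ref{prop:uniform_gen_bound}. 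One small imprecision in Step~4: the paper's covering bound is keyed to moments of the projected marginals $\binner{\bw_j}{\bx}$ (which are $\mathcal{O}(1)$-subGaussian and dimension-free), not to moments of $\norm{\bx}$ as you write --- taken literally that would reintroduce a $\sqrt{d}$ factor, though your ``dimension-free envelopes'' remark makes clear you intend the former.
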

\begin{remark}
The total sample complexity 
of Algorithm~\ref{alg:learning_procedure} is given by the sum of complexities of
the feature learning oracle $n_{\DFL}(\zeta)$ and the function approximation $\nFA$, i.e.,  $n_{\mathrm{total}} = \nFA + n_{\DFL}(\zeta)$.
We will provide bounds on $n_{\DFL}(\zeta)$ in Propositions~\ref{prop:single_index_poly_fl} and \ref{prop:mutli_index_poly_fl} to ultimately characterize $n_{\mathrm{total}}$ in Corollaries~\ref{cor:lee} and \ref{cor:damian}.
\end{remark}

The above theorem states that once the feature learning oracle has recovered the target subspace, the number of samples and neurons needed for robust learning is independent of the ambient dimension $d$. Thus, in a high-dimensional setting, statistical complexity is dominated by the feature learning oracle, implying that adversarially robust learning is statistically as easy as standard learning.

Arguing about computational complexity is more involved. While the number of neurons required is independent of $d$, in its naive implementation, Phase 2 of Algorithm~\ref{alg:learning_procedure} needs to solve inner maximization problems over $\reals^d$, which may be costly. 
However, suppose that at the end of Phase 1 we know that all weights live in $\mathrm{span}(\bu_1,\hdots,\bu_k)$, implying \ $\bw_j \approx \bU^\top\bU\bw_j$ for all $j \in [N]$. We can then directly estimate this subspace, e.g.\ via principal component analysis, and then project $\bx$ onto this subspace since
$$\sum_{j=1}^Na_j\sigma(\binner{\bw_j}{\bx} + b_j) \approx \sum_{j=1}^Na_j\sigma(\binner{\bU\bw_j}{\bU\bx} + b_j).$$
With this modification, we only need to consider worst-case perturbations over $\reals^k$, thus the computational complexity of Phase 2 will also be independent of the ambient dimension $d$.
Note that all weights aligning with $\mathrm{span}(\bu_1,\hdots,\bu_k)$ is stronger than the requirements in Definition~\ref{def:fl} or~\ref{def:sfl}. However, it can still be satisfied in standard settings, see Appendix~\ref{app:feature_learner}.

It is possible to remove the dependence on $\zeta$ in the number of neurons by instead assuming access to an $\abSFL$ oracle, as outlined below.
\begin{theorem}\label{thm:multi_index_sfl}
    Consider the same setting as Theorem~\ref{thm:multi_index_fl}, except that we use the $\abSFL$ oracle in Algorithm~\ref{alg:learning_procedure} with $r_a = \tilde{\mathcal{O}}\big((\varepsilon_1/\sqrt{\tilde{\epsilon}})^{k+1+1/k}/(\alpha\beta)\big)$. Then,
    the sufficient number of second phase samples $\nFA$, neurons $N$, and oracle error $\zeta$ are given as
    \vspace{-.02in}
    \begin{align*}
        \nFA \geq \tilde{\Omega}\Bigg(\frac{\varepsilon_1^4}{\alpha^4\beta^4\epsilon^2}\bigg(\frac{\varepsilon_1}{\sqrt{\tilde{\epsilon}}}\bigg)^{\mathcal{O}(k)}\Bigg),  \quad N \geq \tilde{\Omega}\Bigg(\frac{1}{\alpha\beta^2}\bigg(\frac{\varepsilon_1}{\sqrt{\tilde{\epsilon}}}\bigg)^{\mathcal{O}(k)}\Bigg), \quad \zeta \leq \tilde{\mathcal{O}}\Bigg(\beta^2\bigg(\frac{\tilde{\epsilon}}{\sqrt{\varepsilon_1}}\bigg)^{\mathcal{O}(k)}\Bigg).
    \end{align*}
\end{theorem}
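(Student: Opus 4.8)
The plan is to mirror the proof of Theorem~\ref{thm:multi_index_fl} and change only the step in which the first-layer directions returned by the oracle are used to build a good low-dimensional predictor; the reduction to a $k$-dimensional approximation problem (via Theorem~\ref{thm:robust_standard_fl}), the handling of the adversarial cross term, and the uniform-convergence argument over the second-layer weights all carry over verbatim. As a sanity check on what \emph{cannot} be done: one may use the $\abSFL$ oracle to emulate an $\aDFL$ oracle (the union-bound reduction noted after Definition~\ref{def:sfl}) and then invoke Theorem~\ref{thm:multi_index_fl} as a black box; this reproduces the $\nFA$ and $\zeta$ bounds with $\alpha$ replaced by $\alpha\beta$, but it keeps a $\zeta^{-(k-1)/2}$ factor in the required number of neurons. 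Removing that factor -- which is the whole point of Theorem~\ref{thm:multi_index_sfl} -- forces us to exploit the i.i.d.\ structure of the $\abSFL$ directions directly, replacing the net over $\mathbb{S}^{k-1}$ by a Monte-Carlo discretization.

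Concretely, by Theorem~\ref{thm:robust_standard_fl} there is an optimal predictor $f^* = h^*(\bU\cdot)$ with $h^*(\bz) = \Earg{f(\bx)\mid\bU\bx=\bz}$ and $\AR(f^*)\le\AR^*$; Assumption~\ref{assump:Lip} passes to $h^*$, whose local Lipschitz constant on the ball $B_R\subseteq\reals^k$ of radius $R = \tilde{\mathcal{O}}(\varepsilon_1)$ -- large enough to contain $\bx_\parallel + \bdelta_\parallel$ with high probability for $\norm{\bdelta_\parallel}\le\varepsilon$ -- is $\mathcal{O}(1)$, which is exactly what the prefactor $\varepsilon_1^{1-p}$ in Assumption~\ref{assump:Lip} is designed to ensure. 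As in the proof of Theorem~\ref{thm:multi_index_fl} (following the ReLU construction of \cite{damian2022neural}), $h^*$ admits on $B_R$ an integral representation $h^*(\bz) = \Esubarg{\bv\sim\tau_k,\,b\sim\Unif(-r_b,r_b)}{c(\bv,b)\,\sigma(\binner{\bv}{\bz}+b)}$ accurate to $\tilde{\mathcal{O}}(\tilde\epsilon)$, with $r_b$ and $\infnorm{c}$ the quantities appearing in the statement; the approximation \emph{cost} is unchanged, only the sampling of directions changes, and this is what turns $1/\alpha$ into $1/(\alpha\beta)$ in $r_a$.

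The modified discretization is the heart of the argument. Let $S$ be the good set of Definition~\ref{def:sfl} with $\abs{S}\ge\alpha N$, set $\bv_i\coloneqq\bU\bw_i/\norm{\bU\bw_i}$ for $i\in S$ (so the $\bv_i$ are i.i.d.\ $\sim\mu$), and keep the algorithm's biases $b_i\sim\Unif(-r_b,r_b)$. Reweighting against $\mu$, $h^*(\bz) = \Esubarg{\bv\sim\mu,\,b}{\tilde{c}(\bv,b)\sigma(\binner{\bv}{\bz}+b)}$ with $\tilde{c} = c\cdot(\dee\tau_k/\dee\mu)$ and $\infnorm{\tilde{c}}\le\infnorm{c}/\beta$ by $\dee\mu/\dee\tau_k\ge\beta$; I would then set $a_i\coloneqq\tilde{c}(\bv_i,b_i)/\abs{S}$ (and $a_i=0$ off $S$), so that $\sum_{i\in S}a_i\sigma(\binner{\bv_i}{\cdot}+b_i)$ is an unbiased estimator of $h^*$. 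A $\log$-covering of $B_R$ (of size $\tilde{\mathcal{O}}(1)$ since $k=\mathcal{O}(1)$) together with the elementary variance bound $\infnorm{\tilde{c}}^2 r_b^2/\abs{S}\le\infnorm{c}^2 r_b^2/(\beta^2\abs{S})$ gives a uniform-over-$B_R$ error of order $\tilde\epsilon$ once $\abs{S}\gtrsim\beta^{-2}(\varepsilon_1/\sqrt{\tilde\epsilon})^{\mathcal{O}(k)}$, i.e.\ $N\gtrsim(\alpha\beta^2)^{-1}(\varepsilon_1/\sqrt{\tilde\epsilon})^{\mathcal{O}(k)}$, the square on $\beta$ coming solely from the squared importance weight. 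Passing from $\binner{\bv_i}{\bx_\parallel}$ to $\binner{\bw_i}{\bx}$ costs, per neuron, $\abs{\binner{\bw_i-\bU^\top\bU\bw_i}{\bx}} + \mathcal{O}(\zeta)\norm{\bx_\parallel}$; crucially the first term is a projection of $\bx$ onto a \emph{fixed} direction of norm $\le\sqrt\zeta$ in the orthogonal complement, hence $\tilde{\mathcal{O}}(\sqrt\zeta)$ by Assumption~\ref{assump:subgaussian} rather than $\mathcal{O}(\sqrt{\zeta d})$, so the aggregate mismatch is $\tilde{\mathcal{O}}(\sqrt\zeta\,\infnorm{\tilde{c}}) = \tilde{\mathcal{O}}(\sqrt\zeta\,\infnorm{c}/\beta)$, which is $\le\tilde\epsilon$ precisely when $\zeta\lesssim\beta^2(\tilde\epsilon/\sqrt{\varepsilon_1})^{\mathcal{O}(k)}$. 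Since $\twonorm{\ba}\le\infnorm{\tilde{c}}/\sqrt{\abs{S}}$, this $\ba$ lies in the constraint ball $\{\twonorm{\ba}\le r_a/\sqrt{N}\}$ for the stated $r_a$.

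It then remains to turn the pointwise approximation of $h^*$ into a guarantee on $\AR(\bahat,\bW,\bb)$, and this part is identical to Theorem~\ref{thm:multi_index_fl}: a uniform-over-$B_R$ approximation error $\delta$ gives $\AR(\ba,\bW,\bb)\le\AR^* + \mathcal{O}(\delta^2 + \delta\sqrt{\AR^*})$ via $\abs{a^2-b^2}\le\abs{a-b}\abs{a+b}$ and Cauchy--Schwarz, and $\delta = \tilde\epsilon = \epsilon\land(\epsilon^2/\AR^*)$ makes this $\AR^*+\mathcal{O}(\epsilon)$; then $\AR(\bahat,\bW,\bb)-\AR^*\le 2\sup_{\twonorm{\ba}\le r_a/\sqrt N}\abs{\AR(\ba,\bW,\bb)-\widehat{\AR}(\ba,\bW,\bb)} + \mathcal{O}(\epsilon)$ by optimality of $\bahat$, and the uniform deviation is controlled by a Rademacher/covering bound for the adversarial loss $\ba\mapsto\max_{\norm{\bdelta}\le\varepsilon}(\ba^\top\sigma(\bW(\bx+\bdelta)+\bb)-y)^2$ -- convex and $\mathcal{O}(r_b+R)$-Lipschitz in $\ba$ on the ball, with polynomial moments by Assumption~\ref{assump:subgaussian} -- combined with a truncation at level $\polylog(\nFA)$, which yields $\nFA\gtrsim\varepsilon_1^4(\alpha\beta)^{-4}\epsilon^{-2}(\varepsilon_1/\sqrt{\tilde\epsilon})^{\mathcal{O}(k)}$ and the $1-\nFA^{-c}$ probability. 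The hard part will be the Monte-Carlo step: getting the \emph{uniform}-over-$B_R$ control of the reweighted random-feature average with the right $\beta^{-2}$ scaling, while propagating the $\varepsilon$-dependence through $R\asymp\varepsilon_1$ and through $r_b$ and $\infnorm{c}$ so that the final exponents in $r_a$, $r_b$, $N$, $\nFA$, $\zeta$ come out matching Theorem~\ref{thm:multi_index_fl} with $\zeta^{-(k-1)/2}$ traded for $\beta^{-2}$; a lesser subtlety is that both the adversarial-loss uniform convergence and the weight-mismatch bound above must be kept dimension-free using the $\mathcal{O}(1)$ subGaussian norm of $\bx$ rather than $\norm{\bx}\sim\sqrt d$.
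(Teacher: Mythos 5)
Your proposal takes essentially the same route as the paper: keep the generalization argument (Proposition~\ref{prop:uniform_gen_bound}) and the decomposition $\AR(\ba^*,\bW,\bb)-\AR^*\lesssim\mathcal{E}_1+\mathcal{E}_2+\sqrt{\mathcal{E}_3(\mathcal{E}_1+\mathcal{E}_2)}$ unchanged, and swap the Riemann-sum discretization over a packing of $\mathbb{S}^{k-1}$ (Proposition~\ref{prop:finite_width_riemann}) for a Monte-Carlo/importance-weighting discretization over the i.i.d.\ $\abSFL$ directions (Proposition~\ref{prop:finite_width_concentration}), with $a^*_i\propto\hat{h}(\bv_i,b_i)\,\frac{\dee\tau_k}{\dee\mu}(\bv_i)$, which trades the $\zeta^{-(k-1)/2}$ factor in $N$ for $\beta^{-2}$. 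This is exactly the paper's Proposition~\ref{prop:approx_multi_index}, SFL branch, fed into the same two-term decomposition as Theorem~\ref{thm:multi_index_fl_explicit}.

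Two minor technical slips, neither fatal for the $\mathcal{O}(k)$-exponent statement: (i) for the general pseudo-Lipschitz class, the infinite-width integral representation of $h^*$ is supplied by the Bach-style random-feature bound (Lemma~\ref{lem:Lip_infty_width_approx}, which only gives an \emph{approximate} representation with $L^2$ mass $\Delta$), not by the exact ReLU construction of~\cite{damian2022neural}, which is the polynomial case (Theorem~\ref{thm:poly_sfl}); (ii) the target pointwise accuracy should be $\epsilon_{\mathrm{approx}}\lesssim\sqrt{\tilde\epsilon}$, not $\tilde\epsilon$ — the decomposition $\mathcal{E}_1+\sqrt{\mathcal{E}_3\mathcal{E}_1}\lesssim\epsilon$ already requires only $\mathcal{E}_1\lesssim\tilde\epsilon$ and $\mathcal{E}_1\approx\epsilon_{\mathrm{approx}}^2$. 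Your stricter $\delta=\tilde\epsilon$ still proves a valid bound but inflates the exponents of $(\varepsilon_1/\sqrt{\tilde\epsilon})$ relative to the explicit Appendix version by roughly a factor of two; the $\mathcal{O}(k)$ in the theorem absorbs this, but you should be aware the paper's explicit exponents rely on the $\sqrt{\tilde\epsilon}$ target.
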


Under a Gaussian input assumption, there exist $\aDFL$ and $\abSFL$ oracles 
that rely only on standard gradient-based training such that for a small constant $\zeta$, 
$n_{\DFL}(\zeta)$ and $n_{\SFL}(\zeta)$ both scale with some polynomial of $d$, where the exponent depends on certain properties of the link function, termed as the \emph{information or generative exponent}~\citep{benarous2021online,damian2024computational}. 
We will provide explicit examples of such algorithms in Section~\ref{sec:feature_learner} to characterize the total sample complexity $n_{\mathrm{total}}=\nFA + n_{\DFL/\SFL}$.
For the interested reader, we restate Theorems~\ref{thm:multi_index_fl} and~\ref{thm:multi_index_sfl} in Appendix~\ref{app:details} with explicit exponents.

\subsection{Competing against the Optimal Polynomial Predictor}\label{sec:compare_poly}
In this section, we restrict $\mathcal{F}$ to only polynomials, which allows us to derive more refined bounds on the number of samples and neurons. Specifically, we make the following assumption.
\begin{assumption}\label{assump:polynomial}
    Suppose $\mathcal{F}$ is the class of $d$-variate polynomials of degree $p$ for some constant $p > 0$. Further, $\sigma$ is either a polynomial of degree $q \geq p$, or the ReLU activation for which we define $q = (p-1)\lor 1$.
\end{assumption}
While the ReLU activation is sufficient for function approximation, we also consider polynomial activations in Assumption~\ref{assump:polynomial} since using those, recent works have been able to achieve sharper theoretical guarantees of recovering the target directions~\citep{lee2024neural};
we provide a more detailed discussion in Section~\ref{sec:feature_learner}.
Note that a priori we do not require a growth constraint on the  coefficients of the polynomials in $\mathcal{F}$. The optimal function $h$ in Theorem~\ref{thm:robust_standard_fl} automatically chooses a polynomial with suitably bounded coefficients in order to avoid incurring a large robust risk.

The following result establishes the sample and computational complexity for competing against polynomial predictors when having access to $\aDFL$ oracle.
\begin{theorem}\label{thm:poly_fl}
    Suppose Assumptions~\ref{assump:indep},\ref{assump:subgaussian},\ref{assump:polynomial} hold.  For a tolerance $\epsilon > 0$ define $\tilde{\epsilon} \coloneqq \epsilon \land (\epsilon^2/\AR^*)$, and for the adversary budget $\varepsilon$ recall $\varepsilon_1 \coloneqq 1\lor \varepsilon$. Consider Algorithm~\ref{alg:learning_procedure} with $\aDFL$ oracle,~$r_a=\tilde{\mathcal{O}}(1)$, $r_b = \tilde{\mathcal{O}}(\varepsilon_1)$. If the number of second phase samples $\nFA$, neurons $N$, and $\aDFL$ error $\zeta$ satisfy
        \begin{align*}
            \nFA \geq \tilde{\Omega}\bigg(\frac{\varepsilon_1^{4(q+1)}}{\alpha^4\epsilon^2}\bigg), \quad N \geq \tilde{\Omega}\Bigg(\frac{\varepsilon_1^{q+1}}{\alpha\zeta^{\frac{k-1}{2}}\sqrt{\tilde{\epsilon}}}\Bigg), \quad \zeta \leq \tilde{\mathcal{O}}\Bigg(\frac{\tilde{\epsilon}}{\varepsilon_1^{2(q+1)}}\Bigg),
        \end{align*}
    we have $\AR(\bahat, \bW, \bb) \leq \AR^* + \epsilon$ with probability at least $1 - \nFA^{-c}$ where $c > 0$ is an absolute constant.
\end{theorem}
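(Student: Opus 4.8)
\textbf{Proof plan for Theorem~\ref{thm:poly_fl}.}

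The plan is to follow the two-phase structure of Algorithm~\ref{alg:learning_procedure}: first reduce to approximating the optimal low-dimensional function $h$ from Theorem~\ref{thm:robust_standard_fl}, then control the approximation and estimation errors in Phase~2. Concretely, by Theorem~\ref{thm:robust_standard_fl} there is $h : \reals^k \to \reals$ with $\AR(h(\bU\cdot)) \leq \AR^*$, and since $\mathcal{F}$ consists of degree-$p$ polynomials, $h$ is itself a degree-$p$ polynomial in $k$ variables; the remark after Assumption~\ref{assump:polynomial} guarantees its coefficients are controlled by $\AR^*$ and $\varepsilon_1$ (otherwise the adversarial risk of $h(\bU\cdot)$ would blow up, since a degree-$p$ polynomial with a large coefficient can be moved substantially by an $\ell_2$-perturbation of size $\varepsilon$ aligned with $\bU$). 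I would first make this quantitative: bound $\|h\|$ in an appropriate coefficient norm (e.g.\ $\sum |\text{coeff}|$ over the monomial basis on a ball of radius $\sim \varepsilon_1$) by something like $\mathrm{poly}(\AR^*, \varepsilon_1)$, using that $\AR(h(\bU\cdot))$ being finite forces $h$ to be $\varepsilon_1^{1-p}$-pseudo-Lipschitz on the relevant scale.

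Next comes the approximation step: build a two-layer network $\sum_j a_j \sigma(\binner{\bw_j}{\bx} + b_j)$ with the given $\bW$ (from the $\aDFL$ oracle) and $b_j \sim \Unif(-r_b, r_b)$ that approximates $h(\bU\bx)$ uniformly on a high-probability region in $\bx_\parallel$, with $\ell_2$-error in the perturbed sense at most $\sim \sqrt{\tilde\epsilon}$. Here I would use the standard random-bias function-approximation machinery (as in \citet{damian2022neural}): for each of the $\sim k$-dimensional monomials making up $h$, a degree-$p$ polynomial of $\binner{\bu}{\bx}$ for $\bu$ in the span, the $\aDFL$ guarantee gives an $\alpha\zeta^{(k-1)/2}$-fraction of neurons with $\binner{\bw_i}{\bu} \geq 1-\zeta$, i.e.\ nearly aligned with any target direction; averaging over these aligned neurons with appropriately chosen $a_j$ reconstructs $\sigma$-ridge approximations of each monomial. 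The number of neurons $N \gtrsim \varepsilon_1^{q+1}/(\alpha \zeta^{(k-1)/2}\sqrt{\tilde\epsilon})$ and the bias range $r_b \sim \varepsilon_1$ are exactly what is needed so that (i) enough aligned neurons exist for each direction, (ii) the biases cover the range of inputs $\binner{\bu}{\bx+\bdelta}$ which is of order $\varepsilon_1 \cdot \mathrm{polylog}$, and (iii) the resulting $\ba$ has $\|\ba\| \leq r_a/\sqrt N = \tilde{\mathcal O}(1/\sqrt N)$. The degree matching $q \geq p$ (or $q = (p-1)\lor 1$ for ReLU) enters because a single ridge activation of degree $q$ can generate monomials up to degree $q$; the error from the misalignment $\zeta$ propagates through the $p$-th power, giving the $\zeta \lesssim \tilde\epsilon/\varepsilon_1^{2(q+1)}$ requirement.

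Then I would handle generalization: $\ba \mapsto \widehat{\AR}(\ba,\bW,\bb)$ is convex (max of convex functions), so $\bahat$ is the ERM over the ball $\|\ba\| \leq r_a/\sqrt N$; I need a uniform convergence bound $\sup_{\|\ba\|\leq r_a/\sqrt N} |\widehat{\AR}(\ba,\bW,\bb) - \AR(\ba,\bW,\bb)| \leq \epsilon/3$ with the claimed probability. This is where the $\nFA \gtrsim \varepsilon_1^{4(q+1)}/(\alpha^4\epsilon^2)$ comes from. The key technical point is that the adversarial loss $\max_{\|\bdelta\|\leq\varepsilon}(f(\bx+\bdelta) - y)^2$ is (a) Lipschitz in $\ba$ with a constant controlled by $r_a$, $r_b$, and the range of $\binner{\bw_j}{\bx+\bdelta}$, and (b) has heavy-ish but controlled tails because $y$ has $\mathcal{O}(r^{p/2})$ moments and the network output is bounded by $\|\ba\|_1 \cdot (\text{subgaussian} + r_b)$. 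I would combine a Dudley/Rademacher bound over the $\ba$-ball (dimension-free, since it only sees the $N$ neuron outputs) with a truncation argument on $y$ and on $\|\bx\|$-type quantities, noting crucially that the $\aDFL$ rows are unit-norm so $\binner{\bw_j}{\bx+\bdelta}$ is $\mathcal O(1)$-subgaussian plus $\varepsilon$; the envelope on the loss scales polynomially in $r_a, r_b \sim \varepsilon_1^{q+1}$, hence the $\varepsilon_1^{4(q+1)}$ factor after squaring. The union of the approximation bound ($\AR \leq \AR^* + \sim\sqrt{\tilde\epsilon}$-type, converted to an additive $\epsilon$ via $\tilde\epsilon = \epsilon \wedge \epsilon^2/\AR^*$) and the two generalization bounds (for $\bahat$ and for the approximating $\ba$) gives $\AR(\bahat,\bW,\bb) \leq \AR^* + \epsilon$.

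\textbf{Main obstacle.} The hardest part is the approximation step in the \emph{adversarial} sense rather than the clean sense: I need the network to approximate $h(\bU\cdot)$ not just at $\bx$ but uniformly over the $\varepsilon$-ball around $\bx$, which requires the approximant to be Lipschitz-controlled on a slightly enlarged region and requires the bias coverage $r_b$ and the neuron count to be sized for the perturbed inputs $\binner{\bu}{\bx+\bdelta}$. Getting the $\varepsilon_1$-dependencies to land exactly as stated — $r_a = \tilde{\mathcal O}(1)$, $r_b = \tilde{\mathcal O}(\varepsilon_1)$, and the powers $\varepsilon_1^{q+1}$, $\varepsilon_1^{2(q+1)}$, $\varepsilon_1^{4(q+1)}$ — is a delicate bookkeeping exercise: one must track how the coefficient bound on $h$ (scaling like $\varepsilon_1^{1-p}$ times a range factor $\varepsilon_1^{p}$, net $\varepsilon_1$-ish per degree), the degree-$q$ ridge expansion, and the final squaring in the risk all compound. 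A secondary subtlety is ensuring the dimension-independence genuinely survives: every bound must be phrased in terms of the $N$ neuron pre-activations and the $k$-dimensional structure of $h$, never touching $d$ directly, which relies on the unit-norm rows from the oracle and on Assumption~\ref{assump:indep} (via Theorem~\ref{thm:robust_standard_fl}) to legitimately replace the target by $h(\bU\cdot)$.
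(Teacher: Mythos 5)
Your plan follows essentially the same route as the paper: decompose $\AR(\bahat,\bW,\bb)-\AR^*$ into a generalization term (controlled via a Rademacher/covering argument over the $\ba$-ball with a truncation argument for the unbounded squared loss) and an approximation term (reduce to the optimal $h(\bU\cdot)$ from Theorem~\ref{thm:robust_standard_fl}, bound $h$'s coefficients via $\AR(h(\bU\cdot))\leq\AR^*$, then approximate $h$ by a finite-width network using aligned neurons from $\aDFL$ and random biases), and you correctly flag the adversarial-vs-clean approximation issue as the crux. The only places you are less explicit than the paper's machinery are (a) the approximation decomposition through the projected weights $\Pi_{\bU}\bW$ that separates the misalignment error $\mathcal{E}_2$ from the pure function-approximation error $\mathcal{E}_1$, (b) the tensor-moment argument (Lemma~\ref{lem:monomial_approx}) used to convert ridge monomials $\binner{\bv}{\bz}^s$ into arbitrary symmetric tensors $\bT^{(s)}[\bz^{\otimes s}]$, and (c) the Maurey sparsification / packing-of-$\mathbb{S}^{k-1}$ construction behind the Riemann-sum discretization, but these are implementations of the ideas you already name.
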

Consequently, when restricting $\mathcal{F}$ to the class of fixed degree polynomials, there is no curse of dimensionality for sample complexity, even in the latent dimension $k$. This is consistent with the standard learning setting, see e.g.~\cite{chen2020learning}. Further, similar to the general case above, it is possible to remove the $\zeta$ dependence from $N$ when having access to an SFL oracle, thus also achieving computational complexity as a fixed polynomial independent of the latent dimension.
\begin{theorem}\label{thm:poly_sfl}
    In the setting of Theorem~\ref{thm:poly_fl}, consider using Algorithm~\ref{alg:learning_procedure} with an $\abSFL$ oracle. Then, the sufficient number of second phase samples, neurons, and oracle error are given as
    \begin{align*}
        \nFA \geq \tilde{\Omega}\Bigg(\frac{\varepsilon_1^{4(q+1)}}{\alpha^4\beta^4\epsilon^2}\Bigg), \quad N \geq \tilde{\Omega}\Bigg(\frac{\varepsilon_1^{2(q+1)}}{\alpha\beta^2\tilde{\epsilon}}\Bigg), \quad \zeta \leq \tilde{\mathcal{O}}\Bigg(\frac{\beta^2\tilde{\epsilon}}{\varepsilon_1^{2(q+1)}}\Bigg).
    \end{align*}
\end{theorem}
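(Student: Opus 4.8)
The plan is to run the same argument as Theorem~\ref{thm:poly_fl}, but replace every step that relied on the coverage guarantee of the $\aDFL$ oracle with the corresponding step for the $\abSFL$ oracle, which gives a stronger handle on the distribution of the useful neurons. Recall the overall pipeline from Theorem~\ref{thm:robust_standard_fl}: it suffices to (i) reduce to a $k$-dimensional problem by using the fact that the neurons produced by the oracle (approximately) lie in $\vspn(\bu_1,\ldots,\bu_k)$, (ii) show that the random-bias ReLU (or polynomial) features $\bz\mapsto\sigma(\binner{\bU\bw_i}{\bz}+b_i)$ with $\bw_i$ distributed per the SFL oracle can approximate the target $h(\bz)=\Earg{f(\bx)\mid\bU\bx=\bz}$ (a degree-$p$ polynomial with controlled coefficients, by the remark following Assumption~\ref{assump:polynomial}) in the relevant norm, with $\ba$ of the prescribed size $\norm{\ba}\le r_a/\sqrt N = \tilde{\mathcal O}(1)/(\alpha\beta\sqrt N)$, and (iii) control the generalization gap of the empirical adversarial risk via a uniform-convergence/Rademacher bound over this $\ba$-ball, accounting for the adversarial maximum over $\norm{\bdelta}\le\varepsilon$.

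First I would invoke Definition~\ref{def:sfl} to extract the good set $S\subseteq[N]$ with $\abs S/N\ge\alpha$, $\norm{\bw_i-\bU^\top\bU\bw_i}^2\le\zeta$ for $i\in S$, and the directions $\bU\bw_i/\norm{\bU\bw_i}$ i.i.d.\ from $\mu$ with $\dee\mu/\dee\tau_k\ge\beta$. The key improvement over the DFL case: instead of a worst-case covering of $\mathbb S^{k-1}$ at resolution $\zeta$ (which forced $N\gtrsim \zeta^{-(k-1)/2}$ and introduced the $\zeta^{(k-1)/2}$ factor into $N$), we now have genuinely random directions whose law dominates $\beta\tau_k$. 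This lets me write the target polynomial $h$ on the $k$-ball of radius $R = \tilde{\mathcal O}(\varepsilon_1)$ (the scale of $\bU\bx$ under $\ell_2$ attacks of budget $\varepsilon$) as an expectation against $\tau_k$ of ReLU/polynomial ridge functions — the standard random-feature representation of polynomials — then importance-reweight by $\dee\tau_k/\dee\mu\le 1/\beta$ to express it as an expectation against $\mu$, and finally use a Monte Carlo / matrix-Bernstein argument over the $\abs S\ge\alpha N$ i.i.d.\ samples from $\mu$ to get an $L^2(\text{of }\bU\bx)$ approximation of $h$ by $\sum_{i\in S}a_i\sigma(\binner{\bU\bw_i}{\bz}+b_i)$ with the $a_i$'s bounded; the $1/\beta$ reweighting is exactly what produces the $\beta$-dependence in $r_a$ and the $\beta^2$ in $N$. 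The perturbation $\norm{\bw_i-\bU^\top\bU\bw_i}^2\le\zeta$ and the $b_i$'s not being resampled only add lower-order errors, provided $\zeta\le\tilde{\mathcal O}(\beta^2\tilde\epsilon/\varepsilon_1^{2(q+1)})$, which dictates the stated bound on $\zeta$; here the $\varepsilon_1^{q+1}$ scaling enters because the ReLU/degree-$q$ feature can vary by $\sim\varepsilon_1^{q}$ over the radius-$\varepsilon_1$ ball and $\norm\ba$ contributes another factor.

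For the generalization step (iii), I would reuse verbatim the argument from Theorem~\ref{thm:poly_fl}: the adversarial loss $\ba\mapsto\max_{\norm\bdelta\le\varepsilon}(f(\bx+\bdelta;\ba,\bW,\bb)-y)^2$ is bounded and Lipschitz in $\ba$ over $\norm\ba\le r_a/\sqrt N$ with constants controlled by $r_a$, $r_b$, $\varepsilon_1$ and the subGaussian/moment tails of Assumption~\ref{assump:subgaussian} (using that $\max_\bdelta$ of a Lipschitz family is Lipschitz, and a net over $\bdelta$ is not even needed for the statistical bound, only for computation). A Rademacher contraction plus Dudley/$L^2$ bound then gives a generalization gap $\tilde{\mathcal O}(\mathrm{poly}(r_a,r_b,\varepsilon_1)/\sqrt{\nFA})$, and setting this $\le\tilde\epsilon$ yields $\nFA\ge\tilde\Omega(\varepsilon_1^{4(q+1)}/(\alpha^4\beta^4\epsilon^2))$ — the $\beta^{-4}$ tracking the $\beta^{-1}$ inside $r_a$ raised to the fourth power. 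Converting an $\tilde\epsilon$-bound on $(\AR(\bahat,\bW,\bb)-\AR^*)$-type quantities into the stated $\AR^*+\epsilon$ bound uses the same $\tilde\epsilon=\epsilon\land(\epsilon^2/\AR^*)$ trick as before (an inequality of the form $\AR(\bahat)\le(\sqrt{\AR^*}+\sqrt{\tilde\epsilon})^2$).

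The main obstacle I anticipate is step (ii): making the random-feature approximation of the degree-$p$ target $h$ quantitatively tight in the adversarial norm — i.e.\ controlling $\Earg{\max_{\norm\bdelta\le\varepsilon}(\sum_i a_i\sigma(\binner{\bU\bw_i}{\bU(\bx+\bdelta)}+b_i)-h(\bU(\bx+\bdelta)))^2}$ rather than just the unperturbed $L^2$ error — while simultaneously keeping $\norm\ba=\tilde{\mathcal O}(1/(\alpha\beta\sqrt N))$ and threading the $\norm{\bw_i-\bU^\top\bU\bw_i}^2\le\zeta$ slack and the frozen random biases through. Concretely one must (a) argue $h$ itself, being the conditional-expectation projection of a degree-$p$ polynomial, has coefficients bounded in terms of $\AR^*$ and $\varepsilon_1$, (b) obtain a uniform-over-the-$\varepsilon$-ball approximation, which I expect to handle by noting that a degree-$\le p$ polynomial error on $\reals^k$ that is small in $L^2(\bU\bx)$ is automatically small in $L^\infty$ on any bounded ball and on $L^2$ of the dilated distribution (polynomial norm equivalence on bounded-dimensional spaces with subGaussian measures), so no separate adversarial covering of $\bdelta$ is needed, and (c) bound the variance of the Monte-Carlo estimator after the $1/\beta$ reweighting, which is where a crude bound would cost an extra $\beta^{-1}$ and one must be careful to land exactly on $N\gtrsim \varepsilon_1^{2(q+1)}/(\alpha\beta^2\tilde\epsilon)$.
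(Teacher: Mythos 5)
Your proposal matches the paper's proof structure: combine the uniform generalization bound over the $\ell_2$-ball of second-layer weights (Proposition~\ref{prop:uniform_gen_bound}) with the approximation bound for the $\abSFL$ oracle (Proposition~\ref{prop:approx_poly}, SFL branch), the latter resting on the Monte-Carlo discretization of Proposition~\ref{prop:finite_width_concentration}. The central mechanism you identify — importance-reweighting the infinite-width representation by $\tfrac{\dee\tau_k}{\dee\mu}\le 1/\beta$, then Hoeffding-type concentration over the $|S|\ge\alpha N$ i.i.d.\ directions, which is what puts $\beta^{-1}$ into $r_a$, $\beta^{-2}$ into $N$ (and into the $\zeta$ tolerance via the $\mathcal{E}_2$ term), and $\beta^{-4}$ into $\nFA$ through $\varkappa\sim r_a^2 r_b^{2q}$ — is exactly what the paper does. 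You also correctly route the coefficient control on $h$ through the comparison-with-zero argument (Lemma~\ref{lem:h_control}), and the $\tilde\epsilon=\epsilon\land(\epsilon^2/\AR^*)$ conversion through the cross term $\sqrt{\mathcal E_3(\mathcal E_1+\mathcal E_2)}$ in the $\AR$ decomposition.

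The one place you propose a different route is the uniform-over-the-ball approximation (your obstacle (b)): you suggest using polynomial norm equivalence ($L^2$-small degree-$\le p$ error $\Rightarrow$ $L^\infty$-small on a bounded ball) to upgrade from an $L^2(\bU\bx)$ bound. That works when $\sigma$ is a polynomial, since then the discretization error $\hat{\mathcal T}\hat h - h$ is itself a low-degree polynomial; but under Assumption~\ref{assump:polynomial} the activation may also be ReLU, in which case the finite-width network is not a polynomial and the norm-equivalence step does not apply. The paper instead bounds $\sup_{\|\bz\|\le r_b}\bigl|\hat{\mathcal T}\hat h(\bz)-\mathcal T\hat h(\bz)\bigr|$ directly via a $\Delta$-net of the radius-$r_b$ ball in $\reals^k$, a per-point Hoeffding bound, and a union bound (the fact that $k=\mathcal O(1)$ keeps the net cost logarithmic), which handles both activations uniformly and avoids the issue. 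Everything else in your plan lands on the right lemmas.
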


We remark that the guarantees provided in Theorem~\ref{thm:poly_sfl} are generally better than those in Theorem~\ref{thm:poly_fl} for large $k$; yet, they are strictly worse for $k=1$. 
    That said, both Theorems~\ref{thm:poly_sfl} and \ref{thm:poly_fl} respectively achieve better sample complexity guarantees compared to their counterparts in the previous section,
    namely Theorems~\ref{thm:multi_index_fl} and \ref{thm:multi_index_sfl}, simply by restricting the function class $\mathcal{F}$ to polynomials.

\subsection{Oracle Implementations for Feature Learning}\label{sec:feature_learner}
The task of recovering the target directions $\bU$ is classical in statistics, and is known as sufficient dimension reduction~\citep{li1989regression,li1991sliced}, with many dedicated algorithms, see e.g.\ \citet{kakade2011efficient, dudeja2018learning, chen2020learning, yuan2023efficient} to name a few. Here, we focus on algorithms based on neural networks and iterative gradient-based optimization.

While we only consider the case where $\bx$ is an isotropic Gaussian random vector, recovering the hidden direction has also been considered for non-isotropic Gaussians~\citep{ba2023learning,mousavi2023gradient} where the additional anisotropic structure in the inputs can provide further statistical benefits, or non-Gaussian spherically symmetric distributions~\citep{zweig2023single}. Our results readily extend to these settings as well.
First, we present the case of single-index polynomials.
\begin{proposition}[{\citet{lee2024neural}}]\label{prop:single_index_poly_fl}
    Suppose $\bx \sim \mathcal{N}(0,\ident_d)$, $k=1$, and $g$ is a polynomial of degree $p$ where $p$ is constant. Then, there exists an iterative first-order algorithm on two-layer neural networks \emph{(see Algorithm~\ref{alg:single_index_polynomial})} that implements an $\abSFL$ oracle and an $\aDFL$ oracle, where $\alpha = \tilde{\Theta}(1)$ and $\beta = 1$. Furthermore, we have $n_{\SFL}(\zeta) = n_{\DFL}(\zeta) = \tilde{\mathcal{O}}(d/\zeta^2)$.
\end{proposition}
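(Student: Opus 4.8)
The plan is \emph{not} to reprove any feature-learning dynamics from scratch, but to invoke the analysis of Algorithm~\ref{alg:single_index_polynomial} due to \citet{lee2024neural} and then check, by a short translation argument, that its output meets Definitions~\ref{def:sfl} and~\ref{def:fl} with the claimed parameters $\alpha=\tilde{\Theta}(1)$, $\beta=1$, and $n_{\SFL}(\zeta)=n_{\DFL}(\zeta)=\tilde{\mathcal{O}}(d/\zeta^2)$. First I would recall the structure of that algorithm in the present regime ($k=1$, $\bx\sim\mathcal{N}(0,\ident_d)$, $g$ a polynomial of constant degree $p$): every nonconstant polynomial link has generative exponent at most $2$, so after the explicit label transformation built into the algorithm the degree-$\le 2$ Hermite projection of the transformed labels onto the target direction $\bu_1$ is nonzero. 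Reusing a single batch of size $\tilde{\mathcal{O}}(d)$ across a few first-layer steps turns SGD into a Statistical-Query (rather than merely Correlational-SQ) procedure and yields \emph{weak recovery} --- each neuron $j$ whose second-layer weight $a_j$ has the favourable sign attains a constant correlation $|\binner{\bw_j}{\bu_1}|$ --- after which an online amplification phase with fresh samples drives this correlation to $1-\zeta$.

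Second, I would import the quantitative guarantee of \citet{lee2024neural} essentially verbatim: with $n=\tilde{\mathcal{O}}(d/\zeta^2)$ samples, a fraction $\tilde{\Theta}(1)$ of the $N$ neurons end up with $\binner{\bw_j}{\bu_1}^2\ge 1-\zeta$. The translation to Definition~\ref{def:sfl} is immediate since for unit-norm $\bw_j$ one has $\norm{\bw_j-\bU^\top\bU\bw_j}^2 = 1-\binner{\bw_j}{\bu_1}^2 = (1-\abs{\binner{\bw_j}{\bu_1}})(1+\abs{\binner{\bw_j}{\bu_1}})$, so up to a factor of $2$ the accuracy notions $\binner{\bw_j}{\bu_1}^2\ge 1-\zeta$, $\abs{\binner{\bw_j}{\bu_1}}\ge 1-\zeta$, and $\norm{\bw_j-\bU^\top\bU\bw_j}^2\le\zeta$ are interchangeable after rescaling $\zeta$ by a constant; this supplies the set $S$ with $\abs{S}/N=\tilde{\Theta}(1)=:\alpha$. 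The sample bookkeeping then confirms $n_{\SFL}(\zeta)=n_{\DFL}(\zeta)=\tilde{\mathcal{O}}(d/\zeta^2)$: the factor $d$ is the cost of the weak-recovery/SQ phase (estimating a $d$-dimensional gradient to constant relative accuracy), the factor $\zeta^{-2}$ is the cost of the amplification phase (gradient estimates of standard deviation $\propto n^{-1/2}$ must be pushed below $\zeta$), and the hidden polylogarithms absorb a union bound over the $N=\mathrm{poly}(d)$ neurons and the relevant concentration events.

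Third --- the one ingredient that genuinely goes beyond quoting \citet{lee2024neural} --- I would pin down the law of the recovered signs in order to get $\beta=1$. For $k=1$ the sphere $\mathbb{S}^{k-1}=\{-1,+1\}$ and $\bU\bw_i/\norm{\bU\bw_i}=\sign\binner{\bu_1}{\bw_i}$, so Definition~\ref{def:sfl} with $\beta=1$ is exactly the assertion that these signs are i.i.d.\ and \emph{perfectly balanced}. Independence across $i\in S$ comes from the narrow-width structure of the algorithm: the trajectory of $(\bw_j,a_j,b_j)$ decouples across $j$, so the recovered directions are i.i.d. Exact balance comes from a reflection symmetry: the leading population first-layer gradient of neuron $j$ points along $\sign(a_j)\,c\,\bu_1$ for a fixed scalar $c$ set by the effective Hermite coefficient, hence $\sign\binner{\bu_1}{\bw_j}=\sign(a_j)\sign(c)$; since $a_j$ is drawn from a symmetric distribution and the associated symmetry of the trajectory leaves both the initialization and the event $\{j\in S\}$ invariant, $\sign(a_j)$ is a fair coin, giving $\mu=\tau_1$ and $\beta=1$. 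The same balanced-sign statement yields the $\aDFL$ guarantee: Definition~\ref{def:fl} quantifies over all unit $\bu\in\mathrm{span}(\bu_1)=\{\pm\bu_1\}$ and for $k=1$ requires $\abs{\{i:\binner{\bw_i}{\bu}\ge 1-\zeta\}}/N\ge\alpha\zeta^{(k-1)/2}=\alpha$, and a constant fraction of neurons aligned with $+\bu_1$ together with a constant fraction aligned with $-\bu_1$ covers both choices of $\bu$; alternatively one may simply invoke the $\abSFL\!\to\!\aDFL$ reduction noted after Definition~\ref{def:sfl}, which costs nothing here since $\alpha\beta=\tilde{\Theta}(1)$.

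\textbf{Expected main obstacle.} The routine parts --- recalling the algorithm, importing the alignment bound, and the sample bookkeeping --- should go through with only minor adaptation. The delicate point is the third step: establishing that the recovered neurons are i.i.d.\ with \emph{exactly} uniform sign, i.e.\ that $\beta=1$ and not merely $\beta=\Theta(1)$. This requires tracking the exact distributional symmetry of the algorithm's trajectory (not just its first moments) through every phase --- weak recovery, amplification, and any preprocessing --- and checking in particular that conditioning on a neuron landing in the good set $S$ does not bias its sign. This is where the strict decoupling across neurons and the exactness (rather than approximateness) of the reflection symmetry of the initialization and the readout/activation are essential, and it may force one to adopt the specific symmetric initialization (e.g.\ neurons in $\pm$ pairs) used in \citet{lee2024neural}.
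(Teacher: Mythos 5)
Your approach matches the paper's: the proposition is treated there essentially as a citation to \citet{lee2024neural}/\citet{oko2024learning} --- the algorithm is restated as Algorithm~\ref{alg:single_index_polynomial} in Appendix~\ref{app:feature_learner}, followed only by the remark that a proper choice of hyperparameters gives the claim, with no further detail --- and your translation between the alignment convention $\binner{\bw_j}{\bu_1}^2 \ge 1-\zeta$ and $\norm{\bw_j - \bU^\top\bU\bw_j}^2 \le \zeta$, together with the sample bookkeeping $\tilde{\mathcal{O}}(d/\zeta^2)$, is exactly the glue the paper elides. You are also right to single out $\beta=1$ as the genuinely delicate point: for $k=1$ the condition $\dee\mu/\dee\tau_1\ge 1$ forces $\mu$ to be \emph{exactly} uniform on $\mathbb{S}^0=\{\pm 1\}$, which rests on a reflection symmetry of the initialization, the random Hermite activations $\sigma_j$, and the event $\{j\in S\}$ --- a verification the paper likewise does not spell out.
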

Combined with Theorems~\ref{thm:multi_index_fl}-\ref{thm:poly_sfl}, we obtain the following total sample complexity guarantee for robustly learning Gaussian single-index models.
\begin{corollary}\label{cor:lee}
Consider the data model of Proposition~\ref{prop:single_index_poly_fl}, and assume that the adversary budget is $\varepsilon = \mathcal{O}(1)$. Then, the total sample complexity of Algorithm~\ref{alg:learning_procedure} to achieve optimal adversarial risk $\AR^*$ with a tolerance $\epsilon$ using either $\aDFL$ or $\abSFL$ oracle in Proposition~\ref{prop:single_index_poly_fl} is given as
\begin{itemize}[itemsep=1pt,leftmargin=15pt]
    \item $n_{\mathrm{total}} = \tilde{\mathcal{O}}(d/\tilde{\epsilon}^2)$ when choosing $\mathcal{F}$ to be polynomials of fixed degree as in Assumption~\ref{assump:polynomial}, and the polynomial activation according to~\cite{lee2024neural},
    \item $n_{\mathrm{total}} = \tilde{\mathcal{O}}(d/\tilde{\epsilon}^{\mathcal{O}(1)})$ when choosing $\mathcal{F}$ to be pseudo-Lipschitz functions as in Assumption~\ref{assump:Lip},
\end{itemize}
where we recall $\tilde{\epsilon} \coloneqq \epsilon \land ({\epsilon^2}/{\AR^*})$.
\end{corollary}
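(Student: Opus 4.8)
The statement is a direct composition of the feature‑learning oracle guarantee of Proposition~\ref{prop:single_index_poly_fl} with the Phase‑2 guarantees of Section~\ref{sec:learning}, specialized to $k=1$. The plan is: first, record the parameters supplied by Proposition~\ref{prop:single_index_poly_fl}, namely $k=1$, $\alpha=\tilde{\Theta}(1)$, $\beta=1$, and $n_{\DFL}(\zeta)=n_{\SFL}(\zeta)=\tilde{\mathcal{O}}(d/\zeta^2)$; second, note that the hypothesis $\varepsilon=\mathcal{O}(1)$ gives $\varepsilon_1=1\lor\varepsilon=\mathcal{O}(1)$, and that $p=\mathcal{O}(1)$ in Assumption~\ref{assump:polynomial} forces $q=\mathcal{O}(1)$; third, read off from the appropriate theorem the largest admissible oracle error $\zeta$ and the resulting $\nFA$. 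Since $n_{\mathrm{total}}=\nFA+n_{\DFL}(\zeta)$ and $\nFA$ carries no factor of $d$ while $n_{\DFL}(\zeta)=\tilde{\mathcal{O}}(d/\zeta^2)$, the whole computation reduces to choosing $\zeta$ as large as the theorem permits and checking that $\nFA$ is then dominated by $\tilde{\mathcal{O}}(d/\zeta^2)$.

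For the first bullet I would apply Theorem~\ref{thm:poly_fl} with the $\aDFL$ oracle and the polynomial activation of Proposition~\ref{prop:single_index_poly_fl}, so that the same polynomial activation class is used inside the oracle and in Phase~2 and $q=\mathcal{O}(1)$. Substituting $\alpha=\tilde{\Theta}(1)$, $\varepsilon_1=\mathcal{O}(1)$, $q=\mathcal{O}(1)$, $k=1$ into the theorem's bounds yields $\nFA=\tilde{\mathcal{O}}(1/\epsilon^2)$ and the admissibility condition $\zeta\le\tilde{\mathcal{O}}(\tilde{\epsilon})$; because $\zeta^{(k-1)/2}=1$ when $k=1$, the requirement on $N$ becomes $N=\tilde{\mathcal{O}}(1/\sqrt{\tilde{\epsilon}})$, independent of both $d$ and $\zeta$. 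Taking $\zeta=\tilde{\Theta}(\tilde{\epsilon})$ gives $n_{\DFL}(\zeta)=\tilde{\mathcal{O}}(d/\tilde{\epsilon}^2)$, and since $\tilde{\epsilon}\le\epsilon$ and $d\ge1$ imply $\nFA=\tilde{\mathcal{O}}(1/\epsilon^2)\le\tilde{\mathcal{O}}(d/\tilde{\epsilon}^2)$, we conclude $n_{\mathrm{total}}=\tilde{\mathcal{O}}(d/\tilde{\epsilon}^2)$. The $\abSFL$ route through Theorem~\ref{thm:poly_sfl} gives the same rate since $\beta=1$ makes all $\beta$‑factors there trivial.

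For the second bullet I would instead invoke Theorem~\ref{thm:multi_index_fl} with the $\aDFL$ oracle and the ReLU activation in Phase~2; the oracle of Proposition~\ref{prop:single_index_poly_fl} still merely supplies the first‑layer matrix $\bW$, and its internal training need not use the same activation. With $k=1$, $\alpha=\tilde{\Theta}(1)$, $\varepsilon_1=\mathcal{O}(1)$, every factor of the form $(\varepsilon_1/\sqrt{\tilde{\epsilon}})^{\mathcal{O}(k)}$ collapses to $\tilde{\epsilon}^{-\mathcal{O}(1)}$ and every $(\tilde{\epsilon}/\sqrt{\varepsilon_1})^{\mathcal{O}(k)}$ to $\tilde{\epsilon}^{\mathcal{O}(1)}$, so that $\nFA=\tilde{\mathcal{O}}(\tilde{\epsilon}^{-\mathcal{O}(1)})$, $N=\tilde{\mathcal{O}}(\tilde{\epsilon}^{-\mathcal{O}(1)})$ (again $\zeta$‑free because $k=1$), and the admissible error is $\zeta=\tilde{\Theta}(\tilde{\epsilon}^{\mathcal{O}(1)})$. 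Then $n_{\DFL}(\zeta)=\tilde{\mathcal{O}}(d/\tilde{\epsilon}^{\mathcal{O}(1)})$, and absorbing the $d$‑free $\nFA$ leaves $n_{\mathrm{total}}=\tilde{\mathcal{O}}(d/\tilde{\epsilon}^{\mathcal{O}(1)})$; Theorem~\ref{thm:multi_index_sfl} with $\beta=1$ again gives the same conclusion.

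The argument is essentially bookkeeping, so there is no genuinely hard step; the points that need care are (a) checking that the activation hypotheses of the Phase‑2 theorems are compatible with Proposition~\ref{prop:single_index_poly_fl}, i.e.\ using the polynomial activation for the polynomial class and ReLU for the pseudo‑Lipschitz class, while the oracle's own first‑layer training is unaffected by the Phase‑2 choice; (b) verifying that every $\varepsilon_1$‑, $p$‑, $q$‑, and $k$‑dependent exponent genuinely collapses to a constant or a polylog once $\varepsilon=\mathcal{O}(1)$, $p=\mathcal{O}(1)$, and $k=1$, so that the $\mathcal{O}(k)$ exponents in Theorems~\ref{thm:multi_index_fl}–\ref{thm:multi_index_sfl} are harmless; and (c) confirming that the failure probability is unchanged, since the $\aDFL$ oracle is deterministic and the only stochastic event is the Phase‑2 bound, which holds with probability at least $1-\nFA^{-c}$ (for the $\abSFL$ variant a union bound over the two phases still yields the same order).
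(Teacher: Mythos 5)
Your proposal is correct and follows the same route the paper intends: the paper states Corollary~\ref{cor:lee} without an explicit proof, and your bookkeeping—substituting $k=1$, $\alpha=\tilde\Theta(1)$, $\beta=1$, $\varepsilon_1=\mathcal{O}(1)$, $q=\mathcal{O}(1)$ into Theorems~\ref{thm:poly_fl}/\ref{thm:poly_sfl} (resp.\ \ref{thm:multi_index_fl}/\ref{thm:multi_index_sfl}), noting that $\zeta^{(k-1)/2}=1$ when $k=1$ so $N$ is $\zeta$-free, taking $\zeta=\tilde\Theta(\tilde\epsilon)$ (resp.\ $\tilde\Theta(\tilde\epsilon^{\mathcal{O}(1)})$), and observing that $n_{\DFL}(\zeta)=\tilde{\mathcal{O}}(d/\zeta^2)$ dominates the dimension-free $\nFA$—is exactly the intended computation. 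Your remarks on activation compatibility and the union bound over the two phases for the SFL route are correct and add useful clarity but do not change the argument.
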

When considering Gaussian single-index models beyond polynomials, we must introduce the concepts of \emph{information} and \emph{generative exponent} to characterize the sample complexity of recovering the target direction. Let $\gamma = \mathcal{N}(0,1)$, and for any $g : \reals \to \reals$ in $L^2(\gamma)$ (the space of square-integrable functions), let $g = \sum_{j \geq 0}\alpha_j\mathrm{He}_j$ denote its Hermite expansion, where $\mathrm{He}_j$ is the normalized Hermite polynomial of degree $j$. The information exponent of $g$ is defined as $s(g) \coloneqq \min\{j > 0 : \alpha_j \neq 0\}$. The generative exponent on the other hand, is defined as the minimum information exponent attainable by any transformation of $g$, i.e.\ $s^*(g) \coloneqq \min_{\mathcal{T}}s(\mathcal{T}(g))$, where the minimum is over all $\mathcal{T} \in L^2(g \# \gamma)$. Thus, $s^*(g) \leq s(g)$, and in particular, $s^* = 1$ for all polynomials. See \citet{benarous2021online,damian2024computational}
for details.

There exists an algorithm based on estimating partial traces that implements a $1$-DFL (or a $(1,1)$-SFL) oracle with $n_{\DFL}(\zeta) = \mathcal{O}(d^{s^*/2} + d/\zeta^2)$~\citep{damian2024computational}. While it may be possible to achieve a similar sample complexity when training neural networks with a ReLU activation, the state of the art results for ReLU neural networks so far are only able to control the sample complexity with the information exponent $s$, e.g.~\citet{bietti2022learning} provides a gradient-based algorithm for optimizing a variant of a two-layer ReLU network that implements $1$-DFL with $n_{\DFL} = \mathcal{O}(d^{s}\poly(\zeta^{-1}))$.

Recovering $\bU$ with $k > 1$ is more challenging, and the general picture is that the directions in $\bU$ are recovered hierarhically based on each direction's corresponding complexity, such as in~\cite{abbe2023sgd}. For simplicity, we look at a case that is sufficiently simple for all directions to be learned simultaneously, while emphasizing that in principle any guarantee for learning the subspace $\bU$ can be turned into an implementation of the oracles introduced in the previous section.

\begin{proposition}[{\cite{damian2022neural}}]\label{prop:mutli_index_poly_fl}
    Suppose $\bx \sim \mathcal{N}(0,\ident_d)$, $g$ is a polynomial of degree $p$, and $p$,$k$ are constant. Further assume $\frac{\sigma_{\mathrm{max}}(\nabla^2 g)}{\sigma_{\mathrm{min}}(\nabla^2 g)} \geq \kappa$ for some $\kappa > 0$, where $\sigma_{\mathrm{min}}$ and $\sigma_{\mathrm{max}}$ denote the minimum and maximum singular values, respectively. Then, there exists a first-order algorithm on two-layer ReLU neural networks \emph{(see Algorithm~\ref{alg:multi_index_polynomial})} that implements an $\abSFL$ and an $\aDFL$ oracle, where $\alpha = 1$ and $\beta \geq c_\kappa$ for some constant $c_\kappa > 0$ depending only $\kappa$. Further, we have $n_{\SFL}(\zeta) = n_{\DFL}(\zeta) = \tilde{\mathcal{O}}(d^2 + d/\zeta^2)$.
\end{proposition}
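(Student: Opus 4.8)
The plan is to read this off the single-gradient-step feature-learning theorem of \citet{damian2022neural} and then check that its output meets Definitions~\ref{def:fl} and~\ref{def:sfl} with the stated parameters. Recall the underlying mechanism, realized by the one-step procedure of Algorithm~\ref{alg:multi_index_polynomial}: initialize a two-layer ReLU network with rows $\bw_j^{(0)} \sim \Unif(\mathbb{S}^{d-1})$, biases $b_j$, and second layer $a_j$ at appropriate scales, and take one large-stepsize gradient step on the empirical squared loss with respect to the first layer. Computing the population gradient at row $j$ via Stein's lemma for Gaussian inputs --- valid for ReLU because $\Earg{\sigma''(\binner{\bw_j}{\bx}+b_j)}$ is a finite Gaussian-density evaluation --- and expanding in the small quantity $\binner{\bU\bw_j^{(0)}}{\bU\bx}$, the update decomposes into a radial part $\propto \bw_j^{(0)}$, a neuron-independent part $\propto \bU^\top\Earg{\nabla g(\bz)}$, and the informative piece $\propto \boldsymbol{M}\bw_j^{(0)}$, where $\boldsymbol{M} = \bU^\top A\bU$ with $A \coloneqq \Earg{\nabla^2 g(\bz)} \in \reals^{k\times k}$, $\bz \sim \mathcal{N}(0,\ident_k)$ --- the matrix controlled by the assumption on $\nabla^2 g$ in the statement (essentially the degree-$2$ Hermite coefficient matrix of $g$). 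After removing the first two contributions, as arranged by the centering/preprocessing in Algorithm~\ref{alg:multi_index_polynomial}, and renormalizing, the $\bU$-coordinates of $\bw_j$ point along $A\,\bU\bw_j^{(0)}$; the assumption on $\nabla^2 g$ (parametrized by $\kappa$, together with $p$ being constant) makes $A$ invertible with condition number controlled by $\kappa$, so all $k$ target directions are recovered simultaneously.

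Next I would invoke the quantitative guarantees of \citet{damian2022neural}. First, with $n = \tilde{\mathcal{O}}(d^2 + d/\zeta^2)$ samples the empirical gradient concentrates around the population gradient sharply enough that every post-step weight satisfies $\norm{\bw_j - \bU^\top\bU\bw_j}^2 \le \zeta$: the $d^2$ term pays for estimating the relevant $d \times d$ matrix to constant accuracy, the $d/\zeta^2$ term for driving the off-subspace residual below $\zeta$. Since \emph{every} neuron lands in the $\sqrt{\zeta}$-neighborhood of $\mathrm{span}(\bU)$, we may take $S = [N]$ in Definition~\ref{def:sfl}, so $\alpha = 1$. Second, the $\bU$-coordinates of the post-step weights are deterministic linear images of the i.i.d.\ initializations $\bw_j^{(0)}$, hence the normalized directions $(\bU\bw_j/\norm{\bU\bw_j})_{j \in S}$ are i.i.d.\ from the pushforward $\mu$ of $\tau_k$ under $\bv \mapsto A\bv/\norm{A\bv}$ on $\mathbb{S}^{k-1}$, up to the gradient-estimation error already folded into the $\zeta$-bound.

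The remaining step is the density bound. A Jacobian computation for the projective action of the invertible $A$ on $\mathbb{S}^{k-1}$ gives $\frac{\dee\mu}{\dee\tau_k}(\bv) \ge (\sigma_{\mathrm{min}}(A)/\sigma_{\mathrm{max}}(A))^{k}$ uniformly in $\bv$, which by the condition-number bound on $A$ is $\ge c_\kappa$ for a constant $c_\kappa > 0$ depending only on $\kappa$ (recall $k$ is constant); hence $\beta \ge c_\kappa$, establishing the $\abSFL$ claim with $n_{\SFL}(\zeta) = \tilde{\mathcal{O}}(d^2 + d/\zeta^2)$. The $\aDFL$ guarantee then follows from the $\abSFL \Rightarrow \aDFL$ reduction recorded after Definition~\ref{def:sfl}: running the procedure with $N = \tilde{\Theta}(1/(\alpha\beta\zeta^{(k-1)/2}))$ neurons, the density lower bound forces, for every unit $\bu \in \mathrm{span}(\bU)$, a $\Theta_\kappa(\zeta^{(k-1)/2})$-fraction of the in-subspace weights into the $(1-\zeta)$-cap around $\bu$ with high probability, so Definition~\ref{def:fl} holds with a dimension-free constant $\alpha$ and the same sample complexity $\tilde{\mathcal{O}}(d^2 + d/\zeta^2)$.

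The main obstacle is not any single estimate but faithfully executing the gradient-step analysis on the genuine two-layer ReLU network rather than an idealized matrix estimator: isolating the $\boldsymbol{M}\bw_j^{(0)}$ term from the much larger radial and neuron-independent contributions, handling the weak second derivative of ReLU so that Stein's identity still surfaces the second-order Hermite data of $g$, verifying that the assumption on $\nabla^2 g$ renders $A$ (and the effective map it induces on $\mathrm{span}(\bU)$ after centering) well-conditioned, and tracking how the single step interacts with renormalization and empirical-gradient noise. All of this is what \citet{damian2022neural} already establishes, so the work on our side is the bookkeeping that converts their population and finite-sample statements into the per-neuron alignment $\norm{\bw_j - \bU^\top\bU\bw_j}^2 \le \zeta$, the $\alpha = 1$ covering, and the density lower bound $\beta \ge c_\kappa$ demanded by Definitions~\ref{def:fl} and~\ref{def:sfl}.
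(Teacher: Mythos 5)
Your proposal matches the paper's proof route, which is itself essentially a citation to \citet{damian2022neural}: the paper's appendix simply notes that Algorithm~\ref{alg:multi_index_polynomial} (preprocessing plus one large gradient step with weight-decay cancellation) yields, with $n_{\mathrm{FL}} = \tilde{\mathcal{O}}(d^2 + d/\zeta^2)$ samples, per-neuron alignment $\binner{\bw_i}{\bU^\top\bH\bU\bw^0_i/\norm{\bU^\top\bH\bU\bw^0_i}} \geq 1-\zeta$ with $\bH = \Earg{\nabla^2 g(\bU\bx)}$, and then reads off $\alpha=1$ and a conditioning-dependent $\beta$. You reconstruct exactly this mechanism --- Stein's-lemma extraction of $\bH$ from the one-step gradient, the radial/constant/informative decomposition, every neuron landing in the $\sqrt{\zeta}$-neighborhood of $\mathrm{span}(\bU)$ giving $S=[N]$, and $\mu$ as the pushforward of $\tau_k$ under $\bv \mapsto \bH\bv/\norm{\bH\bv}$ --- and you additionally sketch the Jacobian bound for $\dee\mu/\dee\tau_k$ that the paper leaves implicit, so you have if anything supplied more detail than the paper. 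Same approach, correct.
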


Combining the above proposition with Theorems~\ref{thm:multi_index_fl}-\ref{thm:poly_sfl}, we obtain the following total sample complexity for robustly learning Gaussian multi-index models.

\begin{corollary}\label{cor:damian}
 Under the data model of Proposition~\ref{prop:mutli_index_poly_fl}, assume that the adversary budget is $\varepsilon = \mathcal{O}(1)$. Then, the total sample complexity of Algorithm~\ref{alg:learning_procedure} to achieve optimal adversarial risk $\AR^*$ with a tolerance $\epsilon$ using either $\aDFL$ or $\abSFL$ oracle in Proposition~\ref{prop:mutli_index_poly_fl} is given as
\begin{itemize}[itemsep=1pt,leftmargin=15pt]
    \item $n_{\mathrm{total}} = \tilde{\mathcal{O}}(d^2 + d/\tilde{\epsilon}^2)$ when choosing $\mathcal{F}$ to be polynomials of fixed degree as in Assumption~\ref{assump:polynomial}, and the polynomial activation according to~\cite{lee2024neural},
    \item $n_{\mathrm{total}} = \tilde{\mathcal{O}}(d^2 + d/\tilde{\epsilon}^{\mathcal{O}(k)})$ when choosing $\mathcal{F}$ to be pseudo-Lipschitz functions as in Assumption~\ref{assump:Lip},
\end{itemize}
where we recall $\tilde{\epsilon} \coloneqq \epsilon \land ({\epsilon^2}/{\AR^*})$.
\end{corollary}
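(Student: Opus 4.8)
The plan is to compose Proposition~\ref{prop:mutli_index_poly_fl}, which furnishes the feature-learning oracle, with the second-phase guarantees of Theorems~\ref{thm:poly_fl}--\ref{thm:poly_sfl} (for the polynomial function class $\mathcal{F}$) or Theorems~\ref{thm:multi_index_fl}--\ref{thm:multi_index_sfl} (for the pseudo-Lipschitz class). Under the hypotheses here, the oracle of Proposition~\ref{prop:mutli_index_poly_fl} has $\alpha = 1$ and $\beta \geq c_\kappa$, and since $\kappa$, $p$, $k$ are constant, both $\alpha$ and $\beta$ are $\tilde{\Theta}(1)$; moreover it runs with $n_{\DFL}(\zeta) = n_{\SFL}(\zeta) = \tilde{\mathcal{O}}(d^2 + d/\zeta^2)$ samples and may be executed with any width above its own ($d$-free) threshold, so we take $N$ to be the maximum of that threshold and the width demanded by the relevant second-phase theorem. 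Finally, the assumption $\varepsilon = \mathcal{O}(1)$ gives $\varepsilon_1 = 1 \lor \varepsilon = \mathcal{O}(1)$, and $q$ is a constant; hence every prefactor of the form $\varepsilon_1^{\Theta(q)}$, $\alpha^{-1}$, $\beta^{-1}$ occurring in those theorems is $\tilde{\mathcal{O}}(1)$ and is absorbed into the $\tilde{\mathcal{O}}/\tilde{\Omega}$ notation.

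\emph{Polynomial class.} I would choose $\zeta$ at the top of the range permitted by Theorem~\ref{thm:poly_sfl} (resp.\ Theorem~\ref{thm:poly_fl}), i.e.\ $\zeta = \tilde{\Theta}(\tilde{\epsilon})$ once the $\beta^2 \varepsilon_1^{-2(q+1)}$ factor is absorbed. Then the oracle cost is $n_{\DFL/\SFL}(\zeta) = \tilde{\mathcal{O}}(d^2 + d/\tilde{\epsilon}^2)$. The function-approximation phase requires only $\nFA = \tilde{\Omega}\big(\varepsilon_1^{4(q+1)}/(\alpha^4\beta^4\epsilon^2)\big) = \tilde{\Omega}(1/\epsilon^2)$ samples, which — using $\tilde{\epsilon} \leq \epsilon$ — can be met with $\tilde{\mathcal{O}}(1/\epsilon^2) \leq \tilde{\mathcal{O}}(1/\tilde{\epsilon}^2) \leq \tilde{\mathcal{O}}(d/\tilde{\epsilon}^2)$, hence is dominated by the oracle term. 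Adding the two phases gives $n_{\mathrm{total}} = \nFA + n_{\DFL/\SFL}(\zeta) = \tilde{\mathcal{O}}(d^2 + d/\tilde{\epsilon}^2)$, the first bullet. The success probability $1 - \nFA^{-c}$ follows from a union bound over the event that the oracle succeeds (deterministically for $\aDFL$; with high probability for $\abSFL$ after the $N = \tilde{\Theta}(1/(\alpha\beta\zeta^{(k-1)/2}))$ union bound noted after Definition~\ref{def:sfl}) and the event that Phase 2 succeeds as in Theorem~\ref{thm:poly_fl}/\ref{thm:poly_sfl}.

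\emph{Pseudo-Lipschitz class.} The same recipe applies with Theorems~\ref{thm:multi_index_sfl} and \ref{thm:multi_index_fl}, the only change being that the largest admissible oracle error is now $\zeta = \tilde{\Theta}(\tilde{\epsilon}^{\mathcal{O}(k)})$ (using $\beta = \tilde{\Theta}(1)$ and $\varepsilon_1 = \mathcal{O}(1)$ to absorb the $(\tilde{\epsilon}/\sqrt{\varepsilon_1})^{\mathcal{O}(k)}$ bound), so the oracle cost becomes $\tilde{\mathcal{O}}(d^2 + d/\tilde{\epsilon}^{\mathcal{O}(k)})$; the second-phase requirement $\nFA = \tilde{\Omega}\big(\varepsilon_1^4(\varepsilon_1/\sqrt{\tilde{\epsilon}})^{\mathcal{O}(k)}/(\alpha^4\beta^4\epsilon^2)\big) = \tilde{\Omega}(\tilde{\epsilon}^{-\mathcal{O}(k)})$ carries no factor of $d$ and is therefore dominated by $d\cdot\tilde{\epsilon}^{-\mathcal{O}(k)}$. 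Merging the finitely many (all $\mathcal{O}(k)$) exponents into one, we obtain $n_{\mathrm{total}} = \tilde{\mathcal{O}}(d^2 + d/\tilde{\epsilon}^{\mathcal{O}(k)})$, the second bullet.

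The argument is essentially a bookkeeping composition, so there is no genuine obstacle; the points that need care are (i) verifying that, under the corollary's hypotheses, all problem-dependent prefactors ($\varepsilon_1$, $q$, $\alpha$, $\beta$, and the constants hidden in $c_\kappa$) are $\tilde{\mathcal{O}}(1)$ and hence disappear into $\tilde{\mathcal{O}}/\tilde{\Omega}$; (ii) setting $\zeta$ exactly at the boundary of its admissible range so the term $d/\zeta^2$ inside $n_{\DFL/\SFL}(\zeta)$ is not inflated past $d/\tilde{\epsilon}^{\mathcal{O}(k)}$; and (iii) confirming that the $d$-independent quantities $\nFA$ and $N$ are always dominated by that oracle term, so that $n_{\mathrm{total}}$ collapses to the stated form.
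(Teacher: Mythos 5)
Your proposal is correct and matches the paper's implicit proof. The paper does not spell out a proof of Corollary~\ref{cor:damian}; it is obtained exactly as you describe, by the composition principle stated in the remark following Theorem~\ref{thm:multi_index_fl} (namely $n_{\mathrm{total}} = \nFA + n_{\DFL/\SFL}(\zeta)$), plugging in the oracle complexity $\tilde{\mathcal{O}}(d^2 + d/\zeta^2)$ from Proposition~\ref{prop:mutli_index_poly_fl}, choosing $\zeta$ at the largest admissible value so that the $d/\zeta^2$ term is not inflated, and noting that with $\alpha = 1$, $\beta \geq c_\kappa$, $k,p,q,\kappa = \mathcal{O}(1)$, and $\varepsilon_1 = \mathcal{O}(1)$, all problem-dependent prefactors and the $d$-free quantities $\nFA, N$ are absorbed into $\tilde{\mathcal{O}}/\tilde{\Omega}$ and dominated by the oracle term. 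Your three points of care at the end are exactly the ones that make the bookkeeping go through.
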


\begin{remark}
    We highlight that the gap in the total sample complexity of Corollary~\ref{cor:lee} and Corollary~\ref{cor:damian} is due to more efficient guarantees for recovering the hidden direction for single-index polynomials. It is an open question whether such efficient recovery is also possible for multi-index polynomials.
\end{remark}

\section{Numerical Experiments}\label{sec:experiment}

\begin{figure}[h]
    \centering
    \includegraphics[trim={.7cm 0 0 0},clip,width=.9\linewidth]{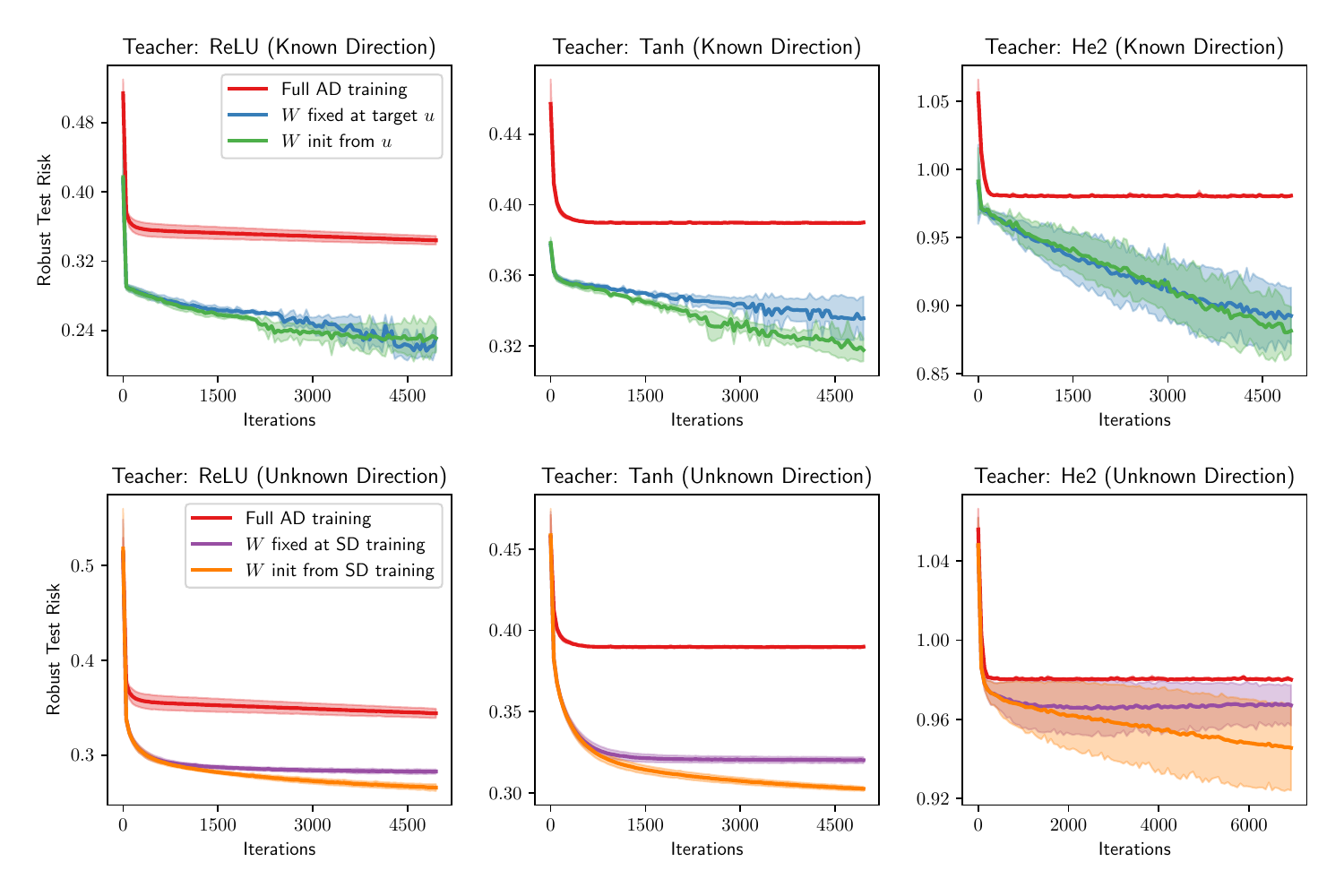}
    \caption{The adversarial test error of a two-layer ReLU network as a function of the number of adversarial training iterations, where each iteration is performed on a batch of independent 300 samples, except 500 samples for He2 with unknown direction to reduce variance. Full AD training performs adversarial training on all layers from random initialization. SD training is standard training, which provides a better initialization for $W$ before performing adversarial training. We use the adversary budget $\varepsilon=1$ for all experiments, each of which are averaged over three runs.}
    \label{fig:experiment}
\end{figure}
As a proof of concept, we also provide small-scale numerical studies to support intuitions derived from our theory\footnote{The code to reproduce the results is provided at: \url{https://github.com/mousavih/robust-feature-learning}}. Additional experiments on real datasets are provided in Appendix~\ref{app:experiments}. We consider a single-index setting, where the teacher non-linearity is given by either ReLU, tanh, or $\mathrm{He2}(z) = (z^2-1)/\sqrt{2}$ which is the normalized second Hermite polynomial. 
The student network has $N=100$ neurons, and the input is sampled from $\bx \sim \mathcal{N}(0,\ident_d)$ with $d=100$. We implement adversarial training in the following manner. 
At each iteration, we sample a new batch of i.i.d.\ training examples. We estimate the adversarial perturbations on this batch by performing 5 steps of signed projected gradient ascent, with a stepsize of $0.1$. 
We then perform a gradient descent step on the perturbed batch. To estimate the robust test risk, we fix a test set of $10{,}000$ i.i.d.\ samples, and use $20$ iterations to estimate the adversarial perturbation. 
Because of the online nature of the algorithm, the total number of samples used is the batch size times the number of iterations taken.

The first row of Figure~\ref{fig:experiment} compares the performance of three different approaches. Full AD training refers to adversarially training all layers from random initialization, where first layer weights are initialized uniformly on the sphere $\mathbb{S}^{d-1}\!$, second layer weights are initialized i.i.d.\ from $\mathcal{N}(0,1/N^2)$, and biases are initialized i.i.d.\ from $\mathcal{N}(0,1)$. In the two other approaches, we initialize all first layer weights to the target direction $\bu$. In one approach we fix this direction and do not train it, while in the other, we allow the training of first layer weights from this initialization. As can be seen from Figure~\ref{fig:experiment}, there is a considerable improvement in initializing from $\bu$, which is consistent with our theory that this direction provides a Bayes optimal projection for robust learning.

In the practical setting where we do not have the knowledge of $\bu$, we consider the following alternative. We first perform standard training on the network, i.e.\ assume $\varepsilon = 0$ (denoted in Figure~\ref{fig:experiment} by SD training). We can then either fix the first layer weights to these directions, or further train them adversarially from this initialization. Note that for a fair comparison with the full AD method, we provide the same random bias and second layer weight initializations across all methods at the beginning of the adversarial training stage. Even though this approach is not perfect at estimating the unknown direction, it still provides a considerable benefit over adversarially training all layers from random initialization, as demonstrated in the second row of Figure~\ref{fig:experiment}.

\section{Conclusion}
In this paper, we initiated a theoretical study of the role of feature learning in adversarial robustness of neural networks. Under $\ell_2$-constrained perturbations, we proved that projecting onto the latent subspace of a multi-index model is sufficient for achieving Bayes optimal adversarial risk with respect to the squared loss, provided that the index directions are statistically independent from the rest of the directions in the input space. Remarkably, this subspace can be estimated through standard feature learning with neural networks, thus turning a high-dimensional robust learning problem into a low-dimensional one. As a result, under the assumption of having access to a feature learning oracle which returns an estimate of this subspace, and can be implemented e.g.\ by training the first-layer of a two-layer neural network, we proved that robust learning of multi-index models is possible with a number of (additional) samples and neurons independent from the ambient dimension.

We conclude by mentioning several open questions that arise from this work.

\begin{itemize}[itemsep=1pt,leftmargin=15pt]
    \item Stronger notions of adversarial attacks such as $\ell_\infty$-norm constraints have been widely considered in empirical works. It remains open to understand optimal low-dimensional representations under such perturbations as well as their implications on sample complexity.

    \item While our work demonstrates that standard training is sufficient for the first layer, it is unclear what kind of representation is learned when all layers are trained adversarially. In particular, Figure~\ref{fig:experiment} suggests that adversarial training of the first layer may be suboptimal in this setting, even when infinitely many samples are available during training.

    \item Since our main motivation was to show independence from input dimension, the dependence of our bounds on the final robust test risk suboptimality $\epsilon$ are potentially improvable by a more careful analysis. It is an interesting direction to obtain a sharper dependency and investigate the optimality of such dependence on the tolerance $\epsilon$.
\end{itemize}

Finally, it is worth emphasizing that our theorems can be easily adapted to other standard feature learning oracles.
As such, based on the training procedure used and its complexity in feature learning, our results are amenable to further improvements in their total sample complexity.

\section*{Acknowledgments}
AJ was partially supported by the Sloan fellowship in mathematics, the NSF CAREER Award
DMS-1844481, the NSF Award DMS-2311024, an Amazon Faculty Research Award, an Adobe Faculty Research Award and an iORB grant form USC Marshall School of Business.
MAE was partially supported by the NSERC Grant [2019-06167], the CIFAR AI Chairs program, and the CIFAR Catalyst grant.

\bibliographystyle{alpha}
\bibliography{ref}

\newcommand{\etalchar}[1]{$^{#1}$}
\begin{thebibliography}{MHWSE23}

\bibitem[AAM22]{abbe2022merged}
Emmanuel Abbe, Enric~Boix Adsera, and Theodor Misiakiewicz.
\newblock The merged-staircase property: a necessary and nearly sufficient condition for sgd learning of sparse functions on two-layer neural networks.
\newblock In {\em Conference on Learning Theory}, 2022.

\bibitem[ABAM23]{abbe2023sgd}
Emmanuel Abbe, Enric Boix-Adsera, and Theodor Misiakiewicz.
\newblock Sgd learning on neural networks: leap complexity and saddle-to-saddle dynamics.
\newblock {\em arXiv preprint arXiv:2302.11055}, 2023.

\bibitem[ADK{\etalchar{+}}24]{arnaboldi2024repetita}
Luca Arnaboldi, Yatin Dandi, Florent Krzakala, Luca Pesce, and Ludovic Stephan.
\newblock Repetita iuvant: Data repetition allows sgd to learn high-dimensional multi-index functions.
\newblock {\em arXiv preprint arXiv:2405.15459}, 2024.

\bibitem[Bac17]{bach2017breaking}
Francis Bach.
\newblock Breaking the curse of dimensionality with convex neural networks.
\newblock {\em The Journal of Machine Learning Research}, 18(1):629--681, 2017.

\bibitem[BAGJ21]{benarous2021online}
Gerard Ben~Arous, Reza Gheissari, and Aukosh Jagannath.
\newblock Online stochastic gradient descent on non-convex losses from high-dimensional inference.
\newblock {\em J. Mach. Learn. Res.}, 22:106--1, 2021.

\bibitem[BBSS22]{bietti2022learning}
Alberto Bietti, Joan Bruna, Clayton Sanford, and Min~Jae Song.
\newblock Learning single-index models with shallow neural networks.
\newblock In {\em Advances in Neural Information Processing Systems}, 2022.

\bibitem[BEG{\etalchar{+}}22]{barak2022hidden}
Boaz Barak, Benjamin~L Edelman, Surbhi Goel, Sham Kakade, Eran Malach, and Cyril Zhang.
\newblock {Hidden Progress in Deep Learning: SGD Learns Parities Near the Computational Limit}.
\newblock {\em arXiv preprint arXiv:2207.08799}, 2022.

\bibitem[BES{\etalchar{+}}22]{ba2022high-dimensional}
Jimmy Ba, Murat~A Erdogdu, Taiji Suzuki, Zhichao Wang, Denny Wu, and Greg Yang.
\newblock {High-dimensional Asymptotics of Feature Learning: How One Gradient Step Improves the Representation}.
\newblock {\em arXiv preprint arXiv:2205.01445}, 2022.

\bibitem[BES{\etalchar{+}}23]{ba2023learning}
Jimmy Ba, Murat~A Erdogdu, Taiji Suzuki, Zhichao Wang, and Denny Wu.
\newblock Learning in the presence of low-dimensional structure: a spiked random matrix perspective.
\newblock {\em Advances in Neural Information Processing Systems}, 36, 2023.

\bibitem[BFT17]{bartlett2017spectrally}
Peter~L Bartlett, Dylan~J Foster, and Matus~J Telgarsky.
\newblock Spectrally-normalized margin bounds for neural networks.
\newblock {\em Advances in neural information processing systems}, 30, 2017.

\bibitem[BLPR19]{bubeck2019adversarial}
S{\'e}bastien Bubeck, Yin~Tat Lee, Eric Price, and Ilya Razenshteyn.
\newblock Adversarial examples from computational constraints.
\newblock In {\em International Conference on Machine Learning}, pages 831--840. PMLR, 2019.

\bibitem[CB18]{chizat2018global}
Lenaic Chizat and Francis Bach.
\newblock {On the Global Convergence of Gradient Descent for Over-parameterized Models using Optimal Transport}.
\newblock In {\em Advances in Neural Information Processing Systems}, 2018.

\bibitem[CB20]{chizat2020implicit}
L\'ena\"ic Chizat and Francis Bach.
\newblock {Implicit Bias of Gradient Descent for Wide Two-layer Neural Networks Trained with the Logistic Loss}.
\newblock In {\em Conference on Learning Theory}, 2020.

\bibitem[CG24]{chen2024mean}
Ziang Chen and Rong Ge.
\newblock Mean-field analysis for learning subspace-sparse polynomials with gaussian input.
\newblock {\em arXiv preprint arXiv:2402.08948}, 2024.

\bibitem[Chi22]{chizat2022convergence}
L{\'e}na{\"\i}c Chizat.
\newblock Convergence rates of gradient methods for convex optimization in the space of measures.
\newblock {\em Open Journal of Mathematical Optimization}, 3:1--19, 2022.

\bibitem[CHS{\etalchar{+}}24]{collins2024provable}
Liam Collins, Hamed Hassani, Mahdi Soltanolkotabi, Aryan Mokhtari, and Sanjay Shakkottai.
\newblock Provable multi-task representation learning by two-layer relu neural networks.
\newblock {\em Proceedings of machine learning research}, 235:9292, 2024.

\bibitem[CM20]{chen2020learning}
Sitan Chen and Raghu Meka.
\newblock Learning polynomials in few relevant dimensions.
\newblock In {\em Conference on Learning Theory}, 2020.

\bibitem[COB19]{chizat2019lazy}
Lenaic Chizat, Edouard Oyallon, and Francis Bach.
\newblock {On Lazy Training in Differentiable Programming}.
\newblock In {\em Advances in Neural Information Processing Systems}, 2019.

\bibitem[DH18]{dudeja2018learning}
Rishabh Dudeja and Daniel Hsu.
\newblock Learning single-index models in gaussian space.
\newblock In {\em Conference On Learning Theory}, pages 1887--1930. PMLR, 2018.

\bibitem[DKL{\etalchar{+}}23]{dandi2023learning}
Yatin Dandi, Florent Krzakala, Bruno Loureiro, Luca Pesce, and Ludovic Stephan.
\newblock Learning two-layer neural networks, one (giant) step at a time.
\newblock {\em arXiv preprint arXiv:2305.18270}, 2023.

\bibitem[DLS22]{damian2022neural}
Alexandru Damian, Jason Lee, and Mahdi Soltanolkotabi.
\newblock {Neural Networks can Learn Representations with Gradient Descent}.
\newblock In {\em Conference on Learning Theory}, 2022.

\bibitem[DNGL23]{damian2023smoothing}
Alex Damian, Eshaan Nichani, Rong Ge, and Jason~D Lee.
\newblock Smoothing the landscape boosts the signal for sgd: Optimal sample complexity for learning single index models.
\newblock {\em Advances in Neural Information Processing Systems}, 36, 2023.

\bibitem[DPVLB24]{damian2024computational}
Alex Damian, Loucas Pillaud-Vivien, Jason~D Lee, and Joan Bruna.
\newblock The computational complexity of learning gaussian single-index models.
\newblock {\em arXiv preprint arXiv:2403.05529}, 2024.

\bibitem[DTA{\etalchar{+}}24]{dandi2024benefits}
Yatin Dandi, Emanuele Troiani, Luca Arnaboldi, Luca Pesce, Lenka Zdeborov{\'a}, and Florent Krzakala.
\newblock The benefits of reusing batches for gradient descent in two-layer networks: Breaking the curse of information and leap exponents.
\newblock {\em arXiv preprint arXiv:2402.03220}, 2024.

\bibitem[Gla24]{glasgow2024sgd}
Margalit Glasgow.
\newblock {SGD} finds then tunes features in two-layer neural networks with near-optimal sample complexity: A case study in the {XOR} problem.
\newblock In {\em The Twelfth International Conference on Learning Representations}, 2024.

\bibitem[HJ24]{hassani2024curse}
Hamed Hassani and Adel Javanmard.
\newblock The curse of overparametrization in adversarial training: Precise analysis of robust generalization for random features regression.
\newblock {\em The Annals of Statistics}, 52(2):441--465, 2024.

\bibitem[IST{\etalchar{+}}19]{ilyas2019adversarial}
Andrew Ilyas, Shibani Santurkar, Dimitris Tsipras, Logan Engstrom, Brandon Tran, and Aleksander Madry.
\newblock Adversarial examples are not bugs, they are features.
\newblock In {\em Advances in Neural Information Processing Systems}, volume~32, 2019.

\bibitem[JGH18]{jacot2018neural}
Arthur Jacot, Franck Gabriel, and Clement Hongler.
\newblock {Neural Tangent Kernel: Convergence and Generalization in Neural Networks}.
\newblock In {\em Advances in Neural Information Processing Systems}, 2018.

\bibitem[JM24]{javanmard2024adversarial}
Adel Javanmard and Mohammad Mehrabi.
\newblock Adversarial robustness for latent models: Revisiting the robust-standard accuracies tradeoff.
\newblock {\em Operations Research}, 72(3):1016--1030, 2024.

\bibitem[JMS24]{joshi2024complexity}
Nirmit Joshi, Theodor Misiakiewicz, and Nathan Srebro.
\newblock On the complexity of learning sparse functions with statistical and gradient queries.
\newblock {\em arXiv preprint arXiv:2407.05622}, 2024.

\bibitem[JS22]{javanmard2022precise}
Adel Javanmard and Mahdi Soltanolkotabi.
\newblock Precise statistical analysis of classification accuracies for adversarial training.
\newblock {\em The Annals of Statistics}, 50(4):2127--2156, 2022.

\bibitem[JSH20]{javanmard2020precise}
Adel Javanmard, Mahdi Soltanolkotabi, and Hamed Hassani.
\newblock Precise tradeoffs in adversarial training for linear regression.
\newblock In {\em Conference on Learning Theory}, pages 2034--2078. PMLR, 2020.

\bibitem[KKSK11]{kakade2011efficient}
Sham~M Kakade, Varun Kanade, Ohad Shamir, and Adam Kalai.
\newblock Efficient learning of generalized linear and single index models with isotonic regression.
\newblock {\em Advances in Neural Information Processing Systems}, 24, 2011.

\bibitem[KLR21]{kim2021distilling}
Junho Kim, Byung-Kwan Lee, and Yong~Man Ro.
\newblock Distilling robust and non-robust features in adversarial examples by information bottleneck.
\newblock {\em Advances in Neural Information Processing Systems}, 34, 2021.

\bibitem[LBBH98]{lecun1998gradient}
Yann LeCun, L{\'e}on Bottou, Yoshua Bengio, and Patrick Haffner.
\newblock Gradient-based learning applied to document recognition.
\newblock {\em Proceedings of the IEEE}, 86(11), 1998.

\bibitem[LD89]{li1989regression}
Ker-Chau Li and Naihua Duan.
\newblock {Regression Analysis Under Link Violation}.
\newblock {\em The Annals of Statistics}, 1989.

\bibitem[Li91]{li1991sliced}
Ker-Chau Li.
\newblock Sliced inverse regression for dimension reduction.
\newblock {\em Journal of the American Statistical Association}, 1991.

\bibitem[LL24]{li2024adversarial}
Binghui Li and Yuanzhi Li.
\newblock {Adversarial Training Can Provably Improve Robustness: Theoretical Analysis of Feature Learning Process Under Structured Data}.
\newblock {\em arXiv preprint arXiv:2410.08503}, 2024.

\bibitem[LOSW24]{lee2024neural}
Jason~D. Lee, Kazusato Oko, Taiji Suzuki, and Denny Wu.
\newblock Neural network learns low-dimensional polynomials with sgd near the information-theoretic limit.
\newblock {\em arXiv preprint arXiv:2406.01581}, 2024.

\bibitem[MHPG{\etalchar{+}}23]{mousavi2023neural}
Alireza Mousavi-Hosseini, Sejun Park, Manuela Girotti, Ioannis Mitliagkas, and Murat~A Erdogdu.
\newblock Neural networks efficiently learn low-dimensional representations with {SGD}.
\newblock In {\em The Eleventh International Conference on Learning Representations}, 2023.

\bibitem[MHWE24]{mousavi2024learning}
Alireza Mousavi-Hosseini, Denny Wu, and Murat~A Erdogdu.
\newblock Learning multi-index models with neural networks via mean-field langevin dynamics.
\newblock {\em arXiv preprint arXiv:2408.07254}, 2024.

\bibitem[MHWSE23]{mousavi2023gradient}
Alireza Mousavi-Hosseini, Denny Wu, Taiji Suzuki, and Murat~A Erdogdu.
\newblock Gradient-based feature learning under structured data.
\newblock {\em Advances in Neural Information Processing Systems}, 36, 2023.

\bibitem[MLHD23]{moniri2023theory}
Behrad Moniri, Donghwan Lee, Hamed Hassani, and Edgar Dobriban.
\newblock A theory of non-linear feature learning with one gradient step in two-layer neural networks.
\newblock {\em arXiv preprint arXiv:2310.07891}, 2023.

\bibitem[MMN18]{mei2018mean}
Song Mei, Andrea Montanari, and Phan-Minh Nguyen.
\newblock A mean field view of the landscape of two-layer neural networks.
\newblock {\em Proceedings of the National Academy of Sciences}, 115(33):E7665--E7671, 2018.

\bibitem[MMS{\etalchar{+}}18]{madry2018towards}
Aleksander Madry, Aleksandar Makelov, Ludwig Schmidt, Dimitris Tsipras, and Adrian Vladu.
\newblock Towards deep learning models resistant to adversarial attacks.
\newblock In {\em International Conference on Learning Representations}, 2018.

\bibitem[MZD{\etalchar{+}}23]{mahankali2023beyond}
Arvind Mahankali, Haochen Zhang, Kefan Dong, Margalit Glasgow, and Tengyu Ma.
\newblock Beyond ntk with vanilla gradient descent: A mean-field analysis of neural networks with polynomial width, samples, and time.
\newblock {\em Advances in Neural Information Processing Systems}, 36, 2023.

\bibitem[NDL23]{nichani2023provable}
Eshaan Nichani, Alex Damian, and Jason~D Lee.
\newblock Provable guarantees for nonlinear feature learning in three-layer neural networks.
\newblock {\em Advances in Neural Information Processing Systems}, 36, 2023.

\bibitem[NOSW24]{nitanda2024improved}
Atsushi Nitanda, Kazusato Oko, Taiji Suzuki, and Denny Wu.
\newblock Improved statistical and computational complexity of the mean-field langevin dynamics under structured data.
\newblock In {\em The Twelfth International Conference on Learning Representations}, 2024.

\bibitem[NWS22]{nitanda2022convex}
Atsushi Nitanda, Denny Wu, and Taiji Suzuki.
\newblock Convex analysis of the mean field langevin dynamics.
\newblock In {\em International Conference on Artificial Intelligence and Statistics}, pages 9741--9757. PMLR, 2022.

\bibitem[OSSW24]{oko2024learning}
Kazusato Oko, Yujin Song, Taiji Suzuki, and Denny Wu.
\newblock Learning sum of diverse features: computational hardness and efficient gradient-based training for ridge combinations.
\newblock In {\em Conference on Learning Theory}. PMLR, 2024.

\bibitem[Pis81]{pisier1981remarques}
Gilles Pisier.
\newblock Remarques sur un r{\'e}sultat non publi{\'e} de b. maurey.
\newblock {\em S{\'e}minaire d'Analyse fonctionnelle (dit" Maurey-Schwartz")}, pages 1--12, 1981.

\bibitem[RVE18]{rotskoff2018neural}
Grant~M Rotskoff and Eric Vanden-Eijnden.
\newblock {Neural networks as Interacting Particle Systems: Asymptotic convexity of the Loss Landscape and Universal Scaling of the Approximation Error}.
\newblock {\em arXiv preprint arXiv:1805.00915}, 2018.

\bibitem[SH20]{schmidt2020nonparametric}
Johannes Schmidt-Hieber.
\newblock {Nonparametric regression using deep neural networks with ReLU activation function}.
\newblock {\em The Annals of Statistics}, 48(4):1875 -- 1897, 2020.

\bibitem[SST{\etalchar{+}}18]{schmidt2018adversarially}
Ludwig Schmidt, Shibani Santurkar, Dimitris Tsipras, Kunal Talwar, and Aleksander Madry.
\newblock Adversarially robust generalization requires more data.
\newblock {\em Advances in neural information processing systems}, 31, 2018.

\bibitem[SWON23]{suzuki2023feature}
Taiji Suzuki, Denny Wu, Kazusato Oko, and Atsushi Nitanda.
\newblock Feature learning via mean-field langevin dynamics: classifying sparse parities and beyond.
\newblock In {\em Thirty-seventh Conference on Neural Information Processing Systems}, 2023.

\bibitem[SZS{\etalchar{+}}14]{szegedy2014intriguing}
Christian Szegedy, Wojciech Zaremba, Ilya Sutskever, Joan Bruna, Dumitru Erhan, Ian Goodfellow, and Rob Fergus.
\newblock Intriguing properties of neural networks.
\newblock In {\em The International Conference on Learning Representations}, 2014.

\bibitem[Tel23]{telgarsky2023feature}
Matus Telgarsky.
\newblock Feature selection and low test error in shallow low-rotation relu networks.
\newblock In {\em The Eleventh International Conference on Learning Representations}, 2023.

\bibitem[TSE{\etalchar{+}}18]{tsipras2018robustness}
Dimitris Tsipras, Shibani Santurkar, Logan Engstrom, Alexander Turner, and Aleksander Madry.
\newblock Robustness may be at odds with accuracy.
\newblock {\em arXiv preprint arXiv:1805.12152}, 2018.

\bibitem[VE24]{vural2024pruning}
Nuri~Mert Vural and Murat~A. Erdogdu.
\newblock Pruning is optimal for learning sparse features in high-dimensions.
\newblock {\em arXiv preprint arXiv:2406.08658}, 2024.

\bibitem[WLLM19]{wei2019regularization}
Colin Wei, Jason~D Lee, Qiang Liu, and Tengyu Ma.
\newblock Regularization matters: Generalization and optimization of neural nets vs their induced kernel.
\newblock {\em Advances in Neural Information Processing Systems}, 32, 2019.

\bibitem[WMHC24]{wang2024mean}
Guillaume Wang, Alireza Mousavi-Hosseini, and L{\'e}na{\"\i}c Chizat.
\newblock Mean-field langevin dynamics for signed measures via a bilevel approach.
\newblock In {\em The Thirty-eighth Annual Conference on Neural Information Processing Systems}, 2024.

\bibitem[WNL24]{wang2024learning}
Zihao Wang, Eshaan Nichani, and Jason~D. Lee.
\newblock Learning hierarchical polynomials with three-layer neural networks.
\newblock In {\em The Twelfth International Conference on Learning Representations}, 2024.

\bibitem[XLS{\etalchar{+}}24]{xiao2024bridging}
Jiancong Xiao, Qi~Long, Weijie Su, et~al.
\newblock Bridging the gap: Rademacher complexity in robust and standard generalization.
\newblock In {\em The Thirty Seventh Annual Conference on Learning Theory}, pages 5074--5075. PMLR, 2024.

\bibitem[YXKH23]{yuan2023efficient}
Gan Yuan, Mingyue Xu, Samory Kpotufe, and Daniel Hsu.
\newblock Efficient estimation of the central mean subspace via smoothed gradient outer products.
\newblock {\em arXiv preprint arXiv:2312.15469}, 2023.

\bibitem[Zha02]{zhang2002covering}
Tong Zhang.
\newblock Covering number bounds of certain regularized linear function classes.
\newblock {\em Journal of Machine Learning Research}, 2(Mar):527--550, 2002.

\bibitem[ZPVB23]{zweig2023single}
Aaron Zweig, Loucas Pillaud-Vivien, and Joan Bruna.
\newblock On single-index models beyond gaussian data.
\newblock {\em Advances in Neural Information Processing Systems}, 36, 2023.

\end{thebibliography}

\appendix

\section{Gradient-Based Neural Feature Learning Algorithms}\label{app:feature_learner}
In this section, we will provide examples of implementations of the feature learner oracles introduced in Section~\ref{sec:learning} using gradient-based training of two-layer neural networks. First, we look at the algorithm provided by~\cite{oko2024learning}, which we restate here as Algorithm~\ref{alg:single_index_polynomial}, for the case where $g$ is a polynomial of degree $p$. Consider the following two-layer neural network with zero bias
$$f(\bx;\ba,\bW) = \sum_{j=1}^Na_j\sigma_j(\binner{\bw_j}{\bx}).$$
Note that we allow the activation to vary based on neuron. Specifically, we let $\sigma_j = \sum_{l=1}^q \beta_{j,l}\mathrm{He}_l$, where $\mathrm{He}_j$ is the $j$th normalized Hermite polynomial, $\beta_{j,l} \stackrel{\mathrm{i.i.d.}}{\sim} \Unif(\{\pm r_l\})$ for appropriately chosen $r_l$, and $q \geq C_p$, see~\citet[Lemma 3]{oko2024learning} for details. Now, we consider the following algorithm.

\begin{algorithm}[h]
\begin{algorithmic}[1]
\REQUIRE $T$, step size $(\eta^t)_{t=0}^{T-1}$, momentum parameters $(\zeta^t_j)$, $r_a$.
\STATE $\bw^0_j \stackrel{\mathrm{i.i.d.}}{\sim} \Unif(\mathbb{S}^{d-1}), \quad a_j \stackrel{\mathrm{i.i.d.}}{\sim} \Unif(\{\pm r_a/N\}), \quad \forall j \in [N]$.
\STATE $(\bx^{(0)},y^{(0)}) \sim \mathcal{P}$
\FOR{$t = 0,\hdots,T-1$}
\IF{$t > 0$ and $t$ is even}
\STATE Draw $(\bx^{(t/2)},y^{(t/2)}) \sim \mathcal{P}$
\STATE $\bw^t_j \leftarrow \bw^t_j - \zeta^t_j(\bw^t_j - \bw^{t-2}_j), \quad \forall\, j \in [N]$
\STATE $\bw^t_j \leftarrow \frac{\bw^t_j}{\norm{\bw^t_j}} \quad \forall\, j \in [N]$
\ENDIF
\STATE $\bw^{t+1}_j \leftarrow \bw^t_j - \eta_t\nabla^S_{\bw_j}(f(\bx^{(\lfloor t/2\rfloor)};\ba,\bW^t) - y^{(\lfloor t/2\rfloor)})^2$
\ENDFOR
\RETURN $(\bw^T_0,\hdots,\bw^T_N)^\top$
\end{algorithmic}
\caption{Gradient-Based Feature Learner for Single-Index Polynomials~\citep[Algorithm 1, Phase I]{oko2024learning}.}
\label{alg:single_index_polynomial}
\end{algorithm}

Note that $\nabla^S f(\bw) = (\ident - \bw\bw^\top)\nabla f(\bw)$ denotes the spherical gradient. Essentially, Algorithm~\ref{alg:single_index_polynomial} takes two gradient steps on each new sample, and in the even iterations performs a certain interpolation. Proper choice of hyperparameters in the above algorithm leads to Proposition~\ref{prop:single_index_poly_fl}.

Next, we consider the algorithm of~\citet{damian2022neural}, which we restate here as Algorithm~\ref{alg:multi_index_polynomial}, for the case where $g$ is a multi-index polynomial.
\begin{algorithm}[h]
\begin{algorithmic}[1]
\REQUIRE $\{\bx^{(i)},y^{(i)}\}_{i=1}^{n_{\mathrm{FL}}}$, $r_a$
\STATE $\ba_j \stackrel{\mathrm{i.i.d.}}{\sim} \Unif(\{\pm r_a\}),\bw^0_j \stackrel{\mathrm{i.i.d.}}{\sim} \Unif(\mathbb{S}^{d-1}), \ba_{N - j} = -\ba_j, \bw_{N-j} = \bw^0_j, \quad \forall \, j \in [N/2]$.
\STATE $\alpha \leftarrow \frac{1}{n_{\mathrm{FL}}}\sum_{i=1}^{n_{\mathrm{FL}}}y^{(i)}, \quad \boldsymbol{\beta} \leftarrow \frac{1}{n_{\mathrm{FL}}}\sum_{i=1}^{n_\mathrm{FL}}y^{(i)}\bx^{(i)}$
\STATE $y^{(i)} \leftarrow y^{(i)} - \alpha - \binner{\boldsymbol{\beta}}{\bx^{(i)}}, \quad \forall \, i \in [n_{\mathrm{FL}}]$.
\STATE $\bW \leftarrow -\nabla_{\bW}\frac{1}{n}\sum_{i=1}^{n_{\mathrm{FL}}}(f(\bx^{(i)};\ba,\bW^0) - y)^2$
\STATE $\bw_i \leftarrow \frac{\bw_i}{\norm{\bw_i}}, \quad \forall \, i \in [N]$
\RETURN $(\bw_0,\hdots,\bw_N)^\top$
\end{algorithmic}
\caption{Gradient-Based Feature Learner for Multi-Index Polynomials~\citep[Algorithm 1, Adapted]{damian2022neural}}
\label{alg:multi_index_polynomial}
\end{algorithm}

After performing a preprocessing on data, Algorithm~\ref{alg:multi_index_polynomial} essentially performs one gradient descent step with weight decay, when the regularizer of the weight decay is the inverse of step size, thus cancelling out initialization and leaving only gradient as the estimate.~\cite{damian2022neural} prove that, with a sample complexity of $n_{\mathrm{FL}} = \tilde{\mathcal{O}}(d^2 + d/\zeta^2)$, the output of Algorithm~\ref{alg:multi_index_polynomial} satisfies 
$$\binner{\bw_i}{ \frac{\bU^\top \bH\bU\bw^0_i}{\norm{\bU^\top \bH\bU\bw^0_i}}} \geq 1 - \zeta, \quad \, \forall i \in [N],$$
witi high probability, where $\bH = \Earg{\nabla^2 g(\bU\bx)}$. Thus, for a full-rank $\bH$, the output of Algorithm~\ref{alg:multi_index_polynomial} satisfies the definition of a $(1,\beta)\SFL$ oracle for a constant $\beta > 0$ depending only on the conditioning of $H$ and the number of indices $k$.

\section{Additional Notations and Details of Section~\ref{sec:learning}}
Throughout the appendix, we will assume the activation satisfies $\sigma(0) = 0$ for simplicity of presentation, without loss of generality. We will also assume that
\begin{equation}\label{eq:psuedo_Lip_sigma}
    \abs{\sigma(z_1) - \sigma(z_2)} \leq L_\sigma(\abs{z_1}^{\bar{q}-1} + \abs{z_2}^{\bar{q}-1} + 1)\abs{z_1 - z_2},
\end{equation}
for all $z_1,z_2 \in \reals$ and some absolute constant $L_\sigma$. In the case of ReLU, we have $\bar{q}=1$ and $L_\sigma=1$. For polynomial activations, $\bar{q}$ is the same as the degree of the polynomial. For a set of parameters $\psi$ (e.g.\ $\psi=q,k$), we will use $C_\psi$ to denote a generic constant whose value depends only on $\psi$ and may change from line to line.

\subsection{Necessity of \texorpdfstring{$\ell_2$}{l2} norm and Assumption~\ref{assump:indep} for Theorem~\ref{thm:robust_standard_fl}}\label{app:necessity}
In this section, we demonstrate that both restricting the attack norm and Assumption~\ref{assump:indep} are necessary for the statement of Theorem~\ref{thm:robust_standard_fl} to hold.

First, we focus on violating Assumption~\ref{assump:indep}. Suppose $\bx \sim \mathcal{N}(0,\bSigma)$. Suppose $k=1$, $y = \binner{\bu_1}{\bx}$, and let $\mathcal{F}$ be the class of linear predictors. Then, the adversarial risk associated to the predictor $\bx \mapsto \binner{\bw}{\bx}$ is given by
$$\AR(\binner{\bw}{\cdot}) = \norm{\bSigma^{1/2}(\bw - \bu_1)}^2 + \varepsilon^2 \norm{\bw}^2 + 2\sqrt{\frac{2}{\pi}}\norm{\bSigma^{1/2}(\bw - \bu_1)}\norm{\bw}.$$
From here, one can verify that the optimal weight $\bw^*$ satisfies $\bw^* = (\bSigma + a\varepsilon \ident_d)^{-1}\bSigma \bu_1$ for some $a > 0$. Note that $\bw^*$ is only in the direction of $\bu_1$ if $\bu_1$ is an eigenvector of $\bSigma$, which would imply $\binner{\bu_1}{\bx}$ is statistically independent from $\binner{\bv}{\bx}$ for every $\bv \in \reals^d$ orthogonal to $\bu_1$.

Next, we replace the $\ell_2$ constraint for the adversary with an $\ell_\infty$ constraint, i.e. we define
$$\AR_\infty(f) = \Earg{\max_{\infnorm{\bdelta} \leq \varepsilon_\infty}(f(\bx + \bdelta) - y)^2}.$$
Suppose $\bx \sim \mathcal{N}(0,\ident_d)$, and let $y = \binner{\bu_1}{\bx}$ and $\mathcal{F}$ be linear as above. Then, we have
$$\AR(\binner{\bw}{\cdot}) = \norm{\bw - \bu_1}^2 + \varepsilon_\infty\onenorm{\bw}^2 + 2\sqrt{\frac
{2}{\pi}}\norm{\bw - \bu_1}\onenorm{\bw}.$$
Then, assuming all the coordinates of $\bu_1$ are bounded away from zero and for sufficiently small $\varepsilon_\infty$, one can show $\bw^* = \bu_1 - c\varepsilon_\infty \sign(\bu_1)$ for some $c > 0$, which will no longer necessary be in the direction of $\bu_1$.

\subsection{Complete Versions of Theorems in Section~\ref{sec:learning}}\label{app:details}
We first restate Theorem~\ref{thm:multi_index_fl} with explicit exponents.
\begin{theorem}\label{thm:multi_index_fl_explicit}
    Suppose Assumptions~\ref{assump:indep},\ref{assump:subgaussian}, and~\ref{assump:Lip} hold. For any $\epsilon > 0$, define $\tilde{\epsilon} \coloneqq \epsilon \land (\epsilon^2/\AR^*)$, and recall $\varepsilon_1 \coloneq 1\lor \varepsilon$. Consider Algorithm~\ref{alg:learning_procedure} with the $\aDFL$ oracle, $r_a = \tilde{\mathcal{O}}\big((\varepsilon_1/\sqrt{\tilde{\epsilon}})^{k+1+1/k}/\alpha\big)$, and $r_b = \tilde{\mathcal{O}}\big(\varepsilon_1(\varepsilon_1/\sqrt{\tilde{\epsilon}})^{1+1/k}\big)$. Then, if the number of second phase samples $\nFA$, number of neurons $N$, and $\aDFL$ error $\zeta$ satisfy
    \begin{align*}
        \nFA \geq \tilde{\Omega}\Big(\frac{\varepsilon_1^4}{\alpha^4\epsilon^2}\big(\frac{\varepsilon_1^2}{\tilde{\epsilon}}\big)^{2k + 4 + 4/k}\Big), && N \geq \tilde{\Omega}\Big(\frac{1}{\alpha\zeta^{(k-1)/2}}\big(\frac{\varepsilon_1}{\sqrt{\tilde{\epsilon}}}\big)^{k+3+2/k}\Big), && \zeta \leq \tilde{\mathcal{O}}\Big(\big(\frac{\tilde{\epsilon}}{\varepsilon_1^2}\big)^{k+2+1/k}\Big),
    \end{align*}
    we have $\AR(\bahat,\bW,\bb) \leq \AR^* + \epsilon$ with probability at least $1 - \nFA^{-c}$ where $c > 0$ is an absolute constant. The total sample complexity of Algorithm~\ref{alg:learning_procedure} is given by $n_{\mathrm{total}} = \nFA + n_{\DFL}(\zeta)$.
\end{theorem}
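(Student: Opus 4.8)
The plan is to combine Theorem~\ref{thm:robust_standard_fl} (reduction to a $k$-dimensional target) with a standard approximation-plus-generalization argument for the trained readout layer, tracking all parameter dependencies explicitly. First I would invoke Theorem~\ref{thm:robust_standard_fl} to obtain $f^* = h(\bU\cdot)$ with $h(\bz) = \Earg{f(\bx)\,|\,\bU\bx=\bz}$ for a minimizer $f\in\mathcal F$ achieving $\AR(f^*)\le\AR^*$; the goal becomes showing the network output $\ba^\top\sigma(\bW\bx+\bb)$ produced by Algorithm~\ref{alg:learning_procedure} attains adversarial risk within $\epsilon$ of $\AR(f^*)$. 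Because the $\aDFL$ oracle guarantees an $\alpha\zeta^{(k-1)/2}$-fraction of the neurons $\bw_i$ have $\binner{\bw_i}{\bu}\ge 1-\zeta$ for every unit $\bu\in\vspn(\bu_1,\dots,\bu_k)$, these "good" neurons effectively behave like a random-feature model on the low-dimensional sphere $\mathbb S^{k-1}$ with ReLU features $z\mapsto\sigma(\binner{\bU\bw_i}{\bz}+b_i)$; the first step is therefore a \emph{universal approximation} lemma: any sufficiently smooth (pseudo-Lipschitz, per Assumption~\ref{assump:Lip}) $h$ on a ball in $\reals^k$ of radius $\sim\varepsilon_1\sqrt{\tilde\epsilon}^{-1/k}$-ish scale can be approximated to squared-$L^2$ error $\tilde\epsilon$ by $\sum_i a_i\sigma(\binner{\bU\bw_i}{\bz}+b_i)$ with $\norm{\ba}$ controlled by $r_a$, provided $N$ good neurons with the stated biases are available — this is where the exponents $(k+1+1/k)$ in $r_a$, $(1+1/k)$ in $r_b$, and the $\zeta$-tolerance enter, since (i) the number of biases in a useful range and the norm of the representing coefficients scale like inverse powers of the approximation radius and of the sphere packing parameter $\zeta^{(k-1)/2}$, and (ii) one must absorb the $\zeta$-perturbation of each good $\bw_i$ away from the true subspace into the approximation error, requiring $\zeta$ small relative to $\tilde\epsilon$ to appropriate powers.

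Second, I would handle \textbf{generalization}: with $\bW,\bb$ fixed, $\ba\mapsto\widehat{\AR}(\ba,\bW,\bb)$ is convex (a sup of convex functions), so $\bahat$ is its constrained minimizer over $\norm{\ba}\le r_a/\sqrt N$; I would bound $\sup_{\norm{\ba}\le r_a/\sqrt N}|\widehat{\AR}(\ba,\bW,\bb)-\AR(\ba,\bW,\bb)|$ by a uniform-convergence argument. The adversarial loss $(x,y)\mapsto\max_{\norm{\bdelta}\le\varepsilon}(\ba^\top\sigma(\bW(\bx+\bdelta)+\bb)-y)^2$ is Lipschitz in $\ba$ with a data-dependent constant, and under Assumption~\ref{assump:subgaussian} (subGaussian $\bx$, polynomial-tailed $y$) together with the ReLU's $1$-Lipschitzness one gets heavy-tailed but controlled increments; a Rademacher-complexity (or covering-number) bound for the class $\{\ba^\top\sigma(\bW(\cdot+\bdelta)+\bb):\norm{\ba}\le r_a/\sqrt N\}$ combined with a truncation argument to tame $\max_{\norm{\bdelta}\le\varepsilon}$ and the $y$-tails yields a deviation of order $\mathrm{poly}(r_a,r_b,\varepsilon_1,p)/\sqrt{\nFA}$, which must be pushed below $\tilde\epsilon$ (or $\epsilon$), producing the $\nFA\gtrsim\varepsilon_1^4\alpha^{-4}\epsilon^{-2}(\varepsilon_1^2/\tilde\epsilon)^{2k+4+4/k}$ requirement once $r_a,r_b$ are substituted. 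I would also use the elementary inequality that if $\ba_{\mathrm{approx}}$ realizes approximation error $\tilde\epsilon$ then $\AR(\bahat)\le\AR(\ba_{\mathrm{approx}})+2\cdot(\text{unif.\ dev.})\le\AR^*+O(\sqrt{\AR^*\tilde\epsilon}+\tilde\epsilon)$, and the definition $\tilde\epsilon=\epsilon\wedge(\epsilon^2/\AR^*)$ is exactly what converts this into the clean bound $\AR^*+\epsilon$.

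The \textbf{main obstacle} is the quantitative $k$-dimensional approximation lemma with the correct joint dependence on the approximation radius, on $\zeta$, and on the oracle density $\zeta^{(k-1)/2}$ — in particular controlling $\norm{\ba}$ (hence $r_a$) and the bias range (hence $r_b$) simultaneously while the target's effective support grows like $\varepsilon_1\tilde\epsilon^{-1/(2k)}$ because the adversary can shift $\bU\bx$ by $\varepsilon$ and Assumption~\ref{assump:Lip}'s pseudo-Lipschitz growth then forces $h$ to be approximated on an enlarged domain. A secondary but real difficulty is the uniform-convergence bound under only subGaussian $\bx$ and polynomially-tailed $y$: the adversarial squared loss is unbounded, so I would need a careful truncation/peeling argument (or a chaining bound for the heavy-tailed empirical process) rather than a textbook bounded-loss Rademacher bound. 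Everything else — the reduction to $f^*$, convexity of the readout objective, the $\AR^*$ vs.\ $\tilde\epsilon$ bookkeeping, and assembling $n_{\mathrm{total}}=\nFA+n_{\DFL}(\zeta)$ — is routine once these two lemmas are in place.
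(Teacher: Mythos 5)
Your proposal takes essentially the same route as the paper: reduce via Theorem~\ref{thm:robust_standard_fl} to approximating the $k$-dimensional $h$, decompose the suboptimality into an approximation term (a good vector $\ba^*$ in the norm ball) plus a uniform-deviation term, handle approximation by a quantitative universal-approximation argument over the good neurons returned by $\aDFL$ (with a Riemann-sum-over-packing discretization), and handle generalization by a covering-number/Rademacher bound combined with a truncation argument for the unbounded adversarial squared loss. You also correctly identify the two genuinely nontrivial pieces (the $k$-dimensional approximation lemma with explicit $r_a,r_b,\zeta,N$-dependence, and the heavy-tailed uniform convergence), and the $\tilde\epsilon=\epsilon\wedge(\epsilon^2/\AR^*)$ bookkeeping via the cross term $\sqrt{\AR^*\cdot\text{approx}}$ matches the paper's $\mathcal{E}_3$ term.

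One small inaccuracy in your narrative, worth flagging although it does not invalidate the plan: you attribute the growth of the effective approximation domain (and hence of $r_b$) to the pseudo-Lipschitz growth of $h$ combined with the adversarial shift, claiming the support grows like $\varepsilon_1\tilde\epsilon^{-1/(2k)}$. In the paper's argument the approximation is actually carried out on a ball of radius $r_z=\tilde\Theta(\varepsilon_1)$ (sub-Gaussian tails plus the adversary's $\varepsilon$-shift, modulo logs); the bias range $r_b=\tilde\Theta(\varepsilon_1(\varepsilon_1/\sqrt{\tilde\epsilon})^{1+1/k})$ is inflated beyond $r_z$ not because the target's support grows, but because the Bach-type infinite-width representation of a Lipschitz function on $\{\norm{\bz}\le r_z\}$ has a slowly decaying bias density whose truncation error scales like $(r_z/r_b)^k$, and pushing that truncation error below $\sqrt{\tilde\epsilon}$ is what forces the exponent $1+1/k$. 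Getting this right is important because the exponent in $r_b$ propagates into $r_a$ and hence into $\nFA$; if you instead enlarge $r_z$ you would pay an unnecessary extra factor through the Lipschitz constant in the Bach bound. Aside from this attribution, your two-lemma decomposition, the convexity of the readout objective, and the final assembly of $n_{\mathrm{total}}=\nFA+n_{\DFL}(\zeta)$ all match the paper's proof.
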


Similarly, we can restate Theorem~\ref{thm:multi_index_sfl} with explicit exponents.
\begin{theorem}
    Consider the same setting as Theorem~\ref{thm:multi_index_fl_explicit}, except that we use the $\abSFL$ oracle in Algorithm~\ref{alg:learning_procedure} with $r_a = \tilde{\mathcal{O}}\big((\varepsilon_1/\sqrt{\tilde{\epsilon}})^{k+1+1/k}/(\alpha\beta)\big)$. Then, if the number of second phase samples $\nFA$, number of neurons $N$, and $\aDFL$ error $\zeta$ satisfy
    \begin{align*}
        \nFA \geq \tilde{\Omega}\Big(\frac{\varepsilon_1^4}{\alpha^4\beta^4\epsilon^2}\big(\frac{\varepsilon_1^2}{\tilde{\epsilon}}\big)^{2k+4+4/k}\Big), &&  N \geq \tilde{\Omega}\Big(\frac{1}{\alpha\beta^2}\big(\frac{\varepsilon_1^2}{\tilde{\epsilon}}\big)^{k + 3 + 2/k}\Big), && \zeta \leq \tilde{\mathcal{O}}\Big(\beta^2\big(\frac{\tilde{\epsilon}}{\varepsilon_1^2}\big)^{k + 2 + 1/k}\Big).
    \end{align*}
    The total sample complexity in this case is given by $n_{\mathrm{total}} = \nFA + n_{\SFL}(\zeta)$.
\end{theorem}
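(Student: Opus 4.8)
The plan is to run the same two-stage argument that proves the $\aDFL$ version in Theorem~\ref{thm:multi_index_fl_explicit}, changing only the ingredient that turns a recovered first layer into a good function approximator: the feature-covering estimate of Definition~\ref{def:fl} is replaced by a random-features Monte-Carlo estimate that exploits the density lower bound $\tfrac{\dee\mu}{\dee\tau_k}\ge\beta$ from the $\abSFL$ oracle (Definition~\ref{def:sfl}). By Theorem~\ref{thm:robust_standard_fl} it suffices to compete against $f^\star=h(\bU\cdot)$ with $h(\bz)=\Earg{f(\bx)\smiddle\bU\bx=\bz}$ for the $\AR$-minimizer $f$, and by Assumption~\ref{assump:Lip} this $h$ is itself pseudo-Lipschitz on $\reals^k$ with constant $L$. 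Since only the value of $h$ at the perturbed latent point $\bU\bx+\bdelta_\parallel$, $\norm{\bdelta_\parallel}\le\varepsilon$, matters, and $\bU\bx$ is subGaussian, it is enough to approximate $h$ uniformly on a ball of radius $R=\tilde{\mathcal O}(\varepsilon_1(\varepsilon_1/\sqrt{\tilde\epsilon})^{1/k})$; the biases must sweep the range of $\binner{\bU\bw_j}{\bz}$ over that ball, which fixes $r_b=\tilde{\mathcal O}(\varepsilon_1(\varepsilon_1/\sqrt{\tilde\epsilon})^{1+1/k})$. The proof then splits into \emph{(i)} an approximation step producing $\ba^\star$ with $\norm{\ba^\star}\le r_a/\sqrt N$ and $\AR(\ba^\star,\bW,\bb)\le\AR^*+\tfrac\epsilon2$, and \emph{(ii)} a uniform-convergence step showing that the empirical minimizer $\bahat$ over $\{\norm{\ba}\le r_a/\sqrt N\}$ loses at most another $\tfrac\epsilon2$.

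\emph{Approximation.} Represent the restriction of $h$ to the $R$-ball through a ReLU ridgelet representation against the product of the uniform law on $\mathbb{S}^{k-1}$ and the uniform bias law $\Unif(-r_b,r_b)$, with representation weight $\Phi$ whose second moment is controlled by $L$, the degree $p$, the radius $R$, and $k$. The $\alpha N$ neurons in the set $S$ of Definition~\ref{def:sfl} have normalized latent directions $\bU\bw_i/\norm{\bU\bw_i}$ i.i.d.\ from $\mu$ and biases i.i.d.\ from $\Unif(-r_b,r_b)$, so the importance-weighted empirical average $\tfrac1{|S|}\sum_{i\in S}\tfrac{\dee\tau_k}{\dee\mu}(v_i)\,\Phi(v_i,b_i)\,\sigma(\binner{v_i}{\bz}+b_i)$ is an unbiased estimator of $h(\bz)$; a Bernstein bound together with a union bound over a net of the $R$-ball (with $k=\mathcal O(1)$ the net is polynomially large) makes the error uniform in $\bz$. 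The importance weight $\tfrac{\dee\tau_k}{\dee\mu}\le\tfrac1\beta$ contributes one factor of $1/\beta$ and the reweighted second moment another, which is the origin of the $1/\beta^2$ in the prescribed $N$ and the $1/\beta$ in $r_a$ (after passing from the $\ell_1$-type control of the coefficients to the $\ell_2$ ball of Algorithm~\ref{alg:learning_procedure} and zeroing the coordinates outside $S$). Crucially this error decays polynomially in $N$ at a rate governed only by $\beta$, with no need to send $\zeta\to0$ to improve it, which is exactly how the $\zeta$-dependence disappears from $N$ relative to Theorem~\ref{thm:multi_index_fl_explicit}. The parameter $\zeta$ then only needs to be small enough that the out-of-subspace component $\norm{\bw_i-\bU^\top\bU\bw_i}\le\sqrt\zeta$ perturbs every pre-activation, hence $\AR$, by $\mathcal O(\epsilon)$; tracking the Lipschitz constants of $\sigma$ and $h$ and the radius $R$ yields the displayed $\zeta\le\tilde{\mathcal O}(\beta^2(\tilde\epsilon/\varepsilon_1^2)^{k+2+1/k})$.

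\emph{Uniform convergence.} For fixed $\bW,\bb$ the output $\ba\mapsto f(\bx+\bdelta;\ba,\bW,\bb)$ is linear with feature vector of norm $\mathcal O(\sqrt N)$, so on $\norm{\ba}\le r_a/\sqrt N$ it is $\mathcal O(r_a)$ and the adversarial loss $\ba\mapsto\max_{\norm{\bdelta}\le\varepsilon}(f(\bx+\bdelta;\ba,\bW,\bb)-y)^2$ — a maximum of convex functions — is bounded and $\mathcal O(r_a+\abs y)$-Lipschitz in $\ba$. Truncating the loss at a high-probability level (using the subGaussian tail of $\bx$ and the polynomial moment bound on $y$ from Assumption~\ref{assump:subgaussian}), a Ledoux--Talagrand contraction to strip the square followed by the Rademacher complexity of the linear class gives $\sup_{\norm{\ba}\le r_a/\sqrt N}\abs{\widehat{\AR}(\ba,\bW,\bb)-\AR(\ba,\bW,\bb)}=\tilde{\mathcal O}\big(r_a^2\,\mathrm{poly}(\varepsilon_1,\tilde\epsilon^{-1})/\sqrt{\nFA}\big)$ with probability $1-\nFA^{-c}$; the square on $r_a$ (the loss is itself $\mathcal O(r_a^2)$ while being $\mathcal O(r_a)$-Lipschitz) is why $\nFA$ scales like $r_a^4$, producing the $1/(\alpha\beta)^4$ once $r_a\propto1/(\alpha\beta)$ is substituted. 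Chaining, $\AR(\bahat,\bW,\bb)\le\widehat{\AR}(\bahat,\bW,\bb)+\tfrac\epsilon4\le\widehat{\AR}(\ba^\star,\bW,\bb)+\tfrac\epsilon4\le\AR(\ba^\star,\bW,\bb)+\tfrac\epsilon2\le\AR^*+\epsilon$ on the intersection of the oracle guarantee, the approximation event, and the uniform-convergence event.

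\emph{Main obstacle.} The crux is the explicit-constant random-features lemma in step (i): building a ReLU ridgelet representation of a merely pseudo-Lipschitz (degree-$p$-growth) target on a ball whose radius itself scales with $\varepsilon_1$, passing from $\tau_k$ to the oracle's law $\mu$ at the advertised $1/\beta$ cost, and making the Monte-Carlo error uniform over the $R$-ball (needed because the adversary moves the latent point), all while keeping $\norm{\ba^\star}$ inside the prescribed $r_a$ and tracking every exponent in $k$. Everything else — the reduction to the low-dimensional problem via Theorem~\ref{thm:robust_standard_fl}, the tail/truncation bookkeeping, and the contraction-plus-Rademacher bound — is routine and essentially the $\aDFL$ argument of Theorem~\ref{thm:multi_index_fl_explicit} with $\alpha$ replaced by $\alpha\beta$ throughout.
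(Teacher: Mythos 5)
Your proposal follows essentially the same route as the paper: reduce to competing against $h(\bU\cdot)$ via Theorem~\ref{thm:robust_standard_fl}, decompose the excess adversarial risk into an approximation term and a uniform-convergence term, build the approximant from a ridgelet (Bach-type) representation of the pseudo-Lipschitz $h$ on a bounded latent ball, discretize via an importance-weighted Monte-Carlo average over the $\abSFL$ neurons (picking up the $1/\beta$ in $r_a$ and, after squaring in the concentration rate, the $1/\beta^2$ in $N$ and the $\beta^2$ in $\zeta$), absorb the out-of-subspace component into the $\zeta$ condition, and close with a truncated Rademacher-complexity bound that scales as $r_a^2/\sqrt{\nFA}$. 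This matches the paper's chain Proposition~\ref{prop:uniform_gen_bound} plus Proposition~\ref{prop:approx_multi_index} (built on Lemma~\ref{lem:Lip_infty_width_approx} and Proposition~\ref{prop:finite_width_concentration}); the only deviations are cosmetic — the approximation ball radius is $r_z=\tilde\Theta(\varepsilon_1)$ (the $(\varepsilon_1/\sqrt{\tilde\epsilon})$-exponent lives in $\Delta$ and $r_b$, not in the latent radius), and the paper handles the adversarial maximum in the generalization step through the $\max$-trick plus Maurey-sparsification covering numbers rather than a bare contraction-to-linear-class argument, which is needed to control the nonlinear dependence on $\ba$ introduced by the inner $\max_{\bdelta}$.
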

The proof of both theorems follows from combining the results of the following sections. Since both proofs are similar, we only present the proof of Theorem~\ref{thm:multi_index_fl_explicit}. The proof of Theorems~\ref{thm:poly_fl} and~\ref{thm:poly_sfl} can be obtained in a similar manner.

\begin{proof}[Proof of Theorem~\ref{thm:multi_index_fl_explicit}]
    The proof is based on decomposing the suboptimality into generalization and approximation terms, namely
    $$\AR(\bahat,\bW,\bb) - \AR^* = \AR(\bahat,\bW,\bb) - \AR(\ba^*,\bW,\bb) + \AR(\ba^*,\bW,\bb) - \AR^*,$$
    where $\ba^* \coloneqq \min_{\norm{\ba} \leq r_a/\sqrt{N}}\AR(\ba,\bW,\bb)$, thus we can see the first term above as generalization error, and the second term as approximation error.
    
    From Proposition~\ref{prop:uniform_gen_bound}, we have $\AR(\bahat,\bW,\bb) - \AR(\ba^*,\bW,\bb) \leq \epsilon/2$ as soon as $n \geq \tilde{\Omega}(r_a^4(\varepsilon_1^4 + r_b^4)/\epsilon^2)$ (recall that $q=1$ here, since we are considering the ReLU activation).
    For the approximation error, we can use Proposition~\ref{prop:approx_multi_index}, which guarantees there exists $\ba^*$ with $\norm{\ba^*} \leq r_a/\sqrt{N}$ such that
    $\AR(\ba^*,\bW,\bb) - \AR^* \leq \epsilon/2$ with $r_a \leq \tilde{\mathcal{O}}((\varepsilon_1/\sqrt{\tilde{\epsilon}})^{k+1+1/k}/\alpha)$, as soon as
    \begin{align*}
            \zeta \leq \tilde{\mathcal{O}}\Big(\big(\frac{\tilde{\epsilon}}{\varepsilon_1^2}\big)^{k+2+1/k}\Big), && \mathrm{and} && N \geq \tilde{\Omega}\Big(\frac{1}{\zeta^{(k-1)/2}\alpha}\big(\frac{\varepsilon_1}{\sqrt{\tilde{\epsilon}}}\big)^{k+3+2/k}\Big),
    \end{align*}
    provided that we choose $r_b = \tilde{\Theta}(\varepsilon_1(\varepsilon_1/\sqrt{\tilde{\epsilon}})^{1+1/k})$. Plugging the value of $r_a$ and $r_b$ in the bound for $n$ completes the proof.
\end{proof}

\section{Generalization Analysis}
We will first focus on proving a generalization bound for bounded and Lipschitz losses, and then extend the results to cover the squared loss. In this section, we will typically use $n$ to refer to $\nFA$, the number of Phase 2 samples.
\subsection{Generalization Bounds for Bounded Lipschitz Losses}
Let us focus on a general $C_\ell$ Lipschitz loss $\ell(f(\cdot;\ba,\bW,\bb) - y)$ for now. Later, we will argue how to extend the results of this section to the squared error loss. 
Our uniform convergence argument depends on the covering number of the family of adversarial loss functions. Let $\Theta \subseteq \reals^N$ be the set of second layer weights, to be determined later.
This family is given by
$$\mathcal{L}(\bW,\bb) = \{(\bx,y) \mapsto \max_{\norm{\bdelta} \leq \varepsilon}\ell(f(\bx + \bdelta;\ba,\bW,\bb) - y) \,:\, \ba \in \Theta\}.$$
For brevity, we will also use $\mathcal{L}$ to denote $\mathcal{L}(\bW,\bb)$, but we highlight that $\bW$ and $\bb$ are fixed at this stage. We define the following metric over this family
$$\forall \tilde{l},\tilde{l'} \in \mathcal{L}(\bW,\bb), \quad d_{\mathcal{L}}(\tilde{l},\tilde{l}')^2 \coloneqq \frac{1}{n}\sum_{i=1}^n(\tilde{\ell}(\bx^{(i)},y^{(i)}) - \tilde{\ell}'(\bx^{(i)},y^{(i)}))^2.$$
We say $\mathcal{S} \subseteq \mathcal{L}$ is an $\epsilon$-cover of $\mathcal{L}$ if for every $\tilde{l} \in \mathcal{L}$, there exists $\tilde{l}' \in \mathcal{S}$ such that $d_\mathcal{L}(\tilde{l},\tilde{l}') \leq \epsilon$. The $\epsilon$-covering number of $\mathcal{L}$ is the least cardinality among all $\epsilon$-covers of $\mathcal{L}$, which we denote by $\mathcal{C}(\mathcal{L},d_\mathcal{L},\epsilon)$.
Note that since $\mathcal{L}$ is paramterized by $\ba$, constructing such a covering reduces to constructing a finite set over $\Theta$.

Therefore, we define the following metric over $\Theta$,
$$\forall\, \ba,\ba' \in \Theta, \quad d_\Theta(\ba,\ba')^2 \coloneqq \frac{1}{n}\sum_{i=1}^n\max_{\norm{\bdelta^{(i)}} \leq \epsilon}\big(f(\bx^{(i)} + \bdelta^{(i)};\ba,\bW,\bb) - f(\bx^{(i)} + \bdelta^{(i)};\ba',\bW,\bb)\big)^2.$$
We can similarly define the $\epsilon$-covering number of $\Theta$ with respect to the metric $d_\Theta$ as $\mathcal{C}(\Theta,d_\Theta,\epsilon)$. The following lemma relates the covering numbers of $\mathcal{L}$ and $\Theta$.
\begin{lemma}
    We have $\mathcal{C}(\mathcal{L},d_\mathcal{L},\epsilon) \leq \mathcal{C}(\Theta,d_\Theta,\epsilon / C_\ell)$ for all $\epsilon > 0$.
\end{lemma}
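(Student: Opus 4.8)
The plan is to reduce the covering-number comparison to a direct translation between covers of $\mathcal{L}$ and covers of $\Theta$, using the Lipschitz property of $\ell$ to control how distances distort. Concretely, suppose $\mathcal{S}_\Theta \subseteq \Theta$ is an $(\epsilon/C_\ell)$-cover of $\Theta$ with respect to $d_\Theta$. For each $\ba \in \mathcal{S}_\Theta$, let $\tilde{\ell}_{\ba}(\bx,y) = \max_{\norm{\bdelta} \leq \varepsilon}\ell(f(\bx + \bdelta;\ba,\bW,\bb) - y)$ be the corresponding element of $\mathcal{L}$, and set $\mathcal{S}_\mathcal{L} = \{\tilde{\ell}_{\ba} : \ba \in \mathcal{S}_\Theta\}$. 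Then $\abs{\mathcal{S}_\mathcal{L}} \leq \abs{\mathcal{S}_\Theta}$, so it suffices to show $\mathcal{S}_\mathcal{L}$ is an $\epsilon$-cover of $\mathcal{L}$ in $d_\mathcal{L}$.

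The key step is the pointwise inequality: for any $\ba, \ba' \in \Theta$ and any fixed $(\bx,y)$,
\[
    \bigl| \tilde{\ell}_{\ba}(\bx,y) - \tilde{\ell}_{\ba'}(\bx,y) \bigr| \leq C_\ell \max_{\norm{\bdelta} \leq \varepsilon} \bigl| f(\bx + \bdelta;\ba,\bW,\bb) - f(\bx + \bdelta;\ba',\bW,\bb) \bigr|.
\]
This follows from the elementary fact that for any two functions $F, G$ on a common domain, $\bigl|\sup_\delta F(\delta) - \sup_\delta G(\delta)\bigr| \leq \sup_\delta \bigl|F(\delta) - G(\delta)\bigr|$, applied to $F(\bdelta) = \ell(f(\bx+\bdelta;\ba,\bW,\bb) - y)$ and $G(\bdelta) = \ell(f(\bx+\bdelta;\ba',\bW,\bb) - y)$, combined with the $C_\ell$-Lipschitzness of $\ell$, which gives $\bigl|F(\bdelta) - G(\bdelta)\bigr| \leq C_\ell \bigl|f(\bx+\bdelta;\ba,\bW,\bb) - f(\bx+\bdelta;\ba',\bW,\bb)\bigr|$ for each $\bdelta$.

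Given this pointwise bound, I would square both sides, average over the sample $\{(\bx^{(i)},y^{(i)})\}_{i=1}^n$, and note that for each $i$ the right-hand side is at most $C_\ell^2 \max_{\norm{\bdelta^{(i)}}\leq\varepsilon}(f(\bx^{(i)}+\bdelta^{(i)};\ba,\bW,\bb) - f(\bx^{(i)}+\bdelta^{(i)};\ba',\bW,\bb))^2$, since the max over a single unconstrained $\bdelta$ upper-bounds a per-sample max; this yields $d_\mathcal{L}(\tilde{\ell}_{\ba}, \tilde{\ell}_{\ba'})^2 \leq C_\ell^2\, d_\Theta(\ba,\ba')^2$, i.e. $d_\mathcal{L}(\tilde{\ell}_{\ba},\tilde{\ell}_{\ba'}) \leq C_\ell\, d_\Theta(\ba,\ba')$. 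Now given any $\tilde{\ell} \in \mathcal{L}$, it equals $\tilde{\ell}_{\ba}$ for some $\ba \in \Theta$; pick $\ba' \in \mathcal{S}_\Theta$ with $d_\Theta(\ba,\ba') \leq \epsilon/C_\ell$, and then $d_\mathcal{L}(\tilde{\ell}, \tilde{\ell}_{\ba'}) \leq C_\ell \cdot \epsilon/C_\ell = \epsilon$, so $\mathcal{S}_\mathcal{L}$ is indeed an $\epsilon$-cover. Taking $\mathcal{S}_\Theta$ to be a minimal cover gives $\mathcal{C}(\mathcal{L},d_\mathcal{L},\epsilon) \leq \mathcal{C}(\Theta,d_\Theta,\epsilon/C_\ell)$.

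There is no serious obstacle here; the only point requiring a moment of care is the interchange-of-suprema inequality $\bigl|\sup F - \sup G\bigr| \leq \sup|F-G|$ and making sure the maxima over $\bdelta$ are handled consistently when passing from $\mathcal{L}$'s definition (a single $\bdelta$ with $\norm{\bdelta}\leq\varepsilon$) to $\Theta$'s metric (per-sample $\bdelta^{(i)}$), but since the per-sample bound holds for every common $\bdelta$, taking the max on each side in the right order is immediate. One should also confirm the maxima are attained (or replace with suprema throughout) so that $\mathcal{L}$ is well-defined — this is guaranteed by continuity of $f$ in $\bx$ and compactness of the $\ell_2$ ball.
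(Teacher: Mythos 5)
Your proof is correct and takes essentially the same route as the paper: construct $\mathcal{S}_\mathcal{L}$ by pushing forward an $(\epsilon/C_\ell)$-cover of $\Theta$, and use the interchange-of-suprema inequality $\left|\sup_\delta F - \sup_\delta G\right| \leq \sup_\delta |F - G|$ together with the $C_\ell$-Lipschitzness of $\ell$ to show $d_\mathcal{L}(\tilde{\ell}_{\ba},\tilde{\ell}_{\ba'}) \leq C_\ell\, d_\Theta(\ba,\ba')$. The paper applies this inequality (its equation (C.3)) directly inside the squared-and-averaged sum rather than stating it as a pointwise bound first, but the content is identical.
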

\begin{proof}
    We will use the following fact in the proof. For any $F_1,F_2 : S \to \reals$, we have
    \begin{equation}\label{eq:max_trick}
        \abs{\max_{\bdelta_1 \in S}F_1(\bdelta_1) - \max_{\bdelta_2 \in S}F_2(\bdelta_2)} \leq \max_{\bdelta \in S}\abs{F_1(\bdelta) - F_2(\bdelta)}.
    \end{equation}
    This is true because
    $$\max_{\bdelta_1 \in S}F_1(\bdelta_1) - \max_{\bdelta_2 \in S}F_2(\bdelta_2) \leq \max_{\bdelta_1 \in S}\big\{F_1(\bdelta_1) - F_2(\bdelta_1)\big\},$$
    and the other direction holds by symmetry. This trick is used to relate the adversarial loss to its non-adversarial counterpart, e.g.\ in~\citet[Lemma 5]{xiao2024bridging}.

    Now, we will show that an $\epsilon/C_\ell$ cover for $\Theta$ implies an $\epsilon$ cover for $\mathcal{L}$. We will supress dependence on the fixed $\bW$ and $\bb$ in the notation. Let $\mathcal{S}_\Theta$ be an $\epsilon/C_\ell$ cover of $\Theta$ with respect to the $d_\Theta$ metric. Then, we define $\mathcal{S}$ via
    $$\mathcal{S} = \{(\bx,y) \mapsto \max_{\norm{\bdelta} \leq \varepsilon}\ell(f(\bx + \bdelta;\ba) - y) : \ba \in \mathcal{S}_\Theta\}.$$
    To show $\mathcal{S}$ is an $\epsilon$ cover of $\mathcal{L}$, consider an arbitrary $\tilde{\ell}(\bx,y) = \max_{\norm{\bdelta} \leq \varepsilon}\ell(f(\bx + \bdelta;\ba) - y)$. Suppose $\ba'$ is the closest element to $\ba$ in $\mathcal{S}_\Theta$,
    and let $\tilde{\ell}'(\bx,y) = \max_{\norm{\bdelta} \leq \varepsilon}\ell(f(\bx + \bdelta;\ba') - y)$. Then,
    \begin{align*}
        d_{\mathcal{L}}(\tilde{\ell},\tilde{\ell}')^2 &= \frac{1}{n}\sum_{i=1}^n\Big(\max_{\Vert\bdelta_1^{(i)}\Vert \leq \varepsilon}\ell(f(\bx + \bdelta_1^{(i)};\ba) - y^{(i)}) - \max_{\Vert\bdelta_2^{(i)}\Vert \leq \varepsilon}\ell(f(\bx + \bdelta^{(i)}_2;\ba') - y^{(i)})\Big)^2\\
        &\leq \frac{1}{n}\sum_{i=1}^n\max_{\Vert \bdelta^{(i)}\Vert \leq \varepsilon}\big(\ell(f(\bx + \bdelta^{(i)};\ba) - y^{(i)}) - \ell(f(\bx + \bdelta^{(i)};\ba') - y^{(i)})\big)^2\\
        &\leq \frac{C_\ell^2}{n}\sum_{i=1}^n\max_{\norm{\bdelta^{(i)}} \leq \varepsilon}\big(f(\bx + \bdelta^{(i)};\ba) - f(\bx + \bdelta^{(i)};\ba')\big)^2\\
        &\leq C_\ell^2 d_{\Theta}(\ba,\ba')^2 \leq \epsilon^2,
    \end{align*}
    where we used~\eqref{eq:max_trick} for the first inequality.
\end{proof}

To construct an $\epsilon$-cover of $\Theta$, we depend on the Maurey sparsification lemma~\citep{pisier1981remarques}, which has been used in the literature for providing covering numbers for linear classes~\citep{zhang2002covering} and neural networks via matrix covering, see e.g.~\citet{bartlett2017spectrally}.
\begin{lemma}[{Maurey Sparsification Lemma,~\citep[Lemma 1]{zhang2002covering}}]
    Let $\mathcal{H}$ be a Hilbert space with norm $\norm{\cdot}$, let $\bu \in \mathcal{H}$ be represented by $\bu = \sum_{j=1}^m \alpha_j\bv_j$, where $\alpha_j \geq 0$ and $\norm{\bv_j} \leq b$ for all $j \in [m]$, and $\alpha = \sum_{j=1}^m\alpha_j \leq 1$. Then, for every $k \geq 1$, there exist non-negative integers $k_1,\hdots,k_m$, such that $\sum_{j=1}^mk_j \leq k$ and
    $$\Big\Vert\bu - \frac{1}{k}\sum_{j=1}^mk_j\bv_j\Big\Vert^2 \leq \frac{\alpha b^2 - \norm{\bu}^2}{k}.$$
\end{lemma}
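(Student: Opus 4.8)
The plan is to prove this by the probabilistic method: realize $\bu$ as the mean of a discrete $\mathcal{H}$-valued random vector, average $k$ independent copies, and observe that some realization must beat the expected error. First I would introduce the auxiliary probability distribution on the finite set $\{\bv_1,\dots,\bv_m,\boldsymbol{0}\}\subseteq\mathcal{H}$ that assigns mass $\alpha_j$ to $\bv_j$ and the remaining mass $1-\alpha\geq 0$ to the zero vector. This is a genuine probability measure precisely because $\alpha=\sum_{j=1}^m\alpha_j\leq 1$, and this is the only place the hypothesis $\alpha\leq 1$ (rather than $\alpha=1$) is used. Letting $\boldsymbol{Z}$ be a random vector with this law and $\boldsymbol{Z}_1,\dots,\boldsymbol{Z}_k$ i.i.d.\ copies, we have $\Earg{\boldsymbol{Z}}=\sum_{j=1}^m\alpha_j\bv_j=\bu$; set $\overline{\boldsymbol{Z}}\coloneqq\frac1k\sum_{i=1}^k\boldsymbol{Z}_i$.

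Next I would compute the mean squared approximation error. Expanding $\norm{\overline{\boldsymbol{Z}}-\bu}^2=\frac{1}{k^2}\sum_{i,i'=1}^k\binner{\boldsymbol{Z}_i-\bu}{\boldsymbol{Z}_{i'}-\bu}$ and taking expectations, the off-diagonal terms vanish: for $i\neq i'$ the increments are independent with $\Earg{\boldsymbol{Z}_i-\bu}=0$, so $\Earg{\binner{\boldsymbol{Z}_i-\bu}{\boldsymbol{Z}_{i'}-\bu}}=\binner{\Earg{\boldsymbol{Z}_i-\bu}}{\Earg{\boldsymbol{Z}_{i'}-\bu}}=0$ (the Hilbert-space Pythagoras for independent mean-zero vectors). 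This leaves
\[
\Earg{\norm{\overline{\boldsymbol{Z}}-\bu}^2}=\frac1k\Big(\Earg{\norm{\boldsymbol{Z}}^2}-\norm{\bu}^2\Big)=\frac1k\Big(\sum_{j=1}^m\alpha_j\norm{\bv_j}^2-\norm{\bu}^2\Big)\leq\frac{\alpha b^2-\norm{\bu}^2}{k},
\]
where the last inequality uses $\norm{\bv_j}\leq b$ and $\sum_{j=1}^m\alpha_j=\alpha$, and the zero atom contributes $0$ to $\Earg{\norm{\boldsymbol{Z}}^2}$.

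Then I would invoke the first-moment principle: a non-negative random variable cannot always strictly exceed its expectation, so there is a deterministic realization $(\boldsymbol{Z}_1,\dots,\boldsymbol{Z}_k)$ with $\norm{\overline{\boldsymbol{Z}}-\bu}^2\leq(\alpha b^2-\norm{\bu}^2)/k$. Fixing such a realization, each $\boldsymbol{Z}_i$ equals some $\bv_j$ or the zero vector; setting $k_j\coloneqq\#\{i\in[k]:\boldsymbol{Z}_i=\bv_j\}$ gives non-negative integers with $\sum_{j=1}^mk_j\leq k$ (strict exactly when some $\boldsymbol{Z}_i=\boldsymbol{0}$) and $\overline{\boldsymbol{Z}}=\frac1k\sum_{j=1}^mk_j\bv_j$, which is exactly the claimed bound.

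There is essentially no obstacle here — the argument is standard — so if I had to name the trickiest points they would be bookkeeping rather than substance: (i) the extra atom at $\boldsymbol{0}$ carrying the mass deficit $1-\alpha$, which is precisely what permits $\sum_j k_j\leq k$ instead of $=k$; and (ii) justifying that cross terms of independent mean-zero increments vanish in a possibly infinite-dimensional Hilbert space, which follows from $\Earg{\binner{X}{Y}}=\binner{\Earg{X}}{\Earg{Y}}$ for independent Bochner-integrable $X,Y$ (finitely supported here, so integrability is immediate). Everything else is the variance identity and a one-line estimate.
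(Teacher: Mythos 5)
Your proof is correct. The paper itself does not prove this lemma --- it cites it directly from Zhang (2002, Lemma~1) --- and your probabilistic-method argument (augment with a zero atom of mass $1-\alpha$, sample $k$ i.i.d.\ draws, compute the variance $\frac{1}{k}(\Earg{\norm{\boldsymbol{Z}}^2}-\norm{\bu}^2)$, and invoke the first-moment principle) is precisely the standard Maurey/Pisier argument given in that reference.
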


Then, we have the following upper bound on the the covering number of $\Theta$.
\begin{lemma}\label{lem:covering}
    Suppose $\sigma$ satisfies~\eqref{eq:psuedo_Lip_sigma}, $\Theta = \{\onenorm{\ba} \leq r_a\}$, and additionally $\norm{\bw_i} \leq r_w$ and $\abs{b_i} \leq r_b$ for all $1 \leq i \leq N$. Then we have
    $$\log\mathcal{C}(\Theta,d_{\Theta},\epsilon) \leq \frac{C_{\bar{q}}L_\sigma^2r_a^2\log N\left\{T^{(\bar{q})}_{\bW,\bX} + r_w^{2\bar{q}}\varepsilon^{2\bar{q}} + r_b^{2\bar{q}} + T^{(2)}_{\bW,\bX} + r_w^2\varepsilon^2 + r_b^2\right\}}{\epsilon^2},$$
    where $T^{(\bar{q})}_{\bW,\bX} \coloneqq \max_{1 \leq j \leq N}\frac{1}{n}\sum_{i=1}^n\binner{\bw_j}{\bx_i}^{2\bar{q}}$.
\end{lemma}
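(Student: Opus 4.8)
The plan is to apply the Maurey sparsification lemma to the vector of second-layer weights $\ba$, after rescaling so that the relevant coefficients form a sub-probability vector. Write $f(\bx;\ba,\bW,\bb) = \sum_{j=1}^N a_j \phi_j(\bx)$ where $\phi_j(\bx) \coloneqq \sigma(\binner{\bw_j}{\bx}+b_j)$. First I would observe that the metric $d_\Theta$ is (up to the max over perturbations) the Hilbert-space seminorm associated with the "feature map" $\bx \mapsto (\phi_1(\bx+\bdelta),\dots,\phi_N(\bx+\bdelta))$ evaluated on the empirical sample; more precisely, for $\ba,\ba' \in \Theta$, $d_\Theta(\ba,\ba')^2 = \frac1n\sum_i \max_{\norm{\bdelta^{(i)}}\le\varepsilon}\big(\sum_j (a_j-a_j')\phi_j(\bx^{(i)}+\bdelta^{(i)})\big)^2$. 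The key point is that this is dominated by an honest Hilbert norm: embed each $\ba$ into the space $\mathcal{H}$ of functions on the index set $\{(i,\bdelta^{(i)})\}$ (or, to keep things finite and clean, discretize each inner perturbation ball to a fine net and take a union-bound / limiting argument, or simply note the max is attained), with $\ba \mapsto \sum_j a_j \mbf{v}_j$ where $\mbf{v}_j$ has coordinates $\phi_j(\bx^{(i)}+\bdelta^{(i)})$. Then $d_\Theta(\ba,\ba')$ is bounded by the $\mathcal{H}$-norm of $\sum_j(a_j-a_j')\mbf{v}_j$ after taking the sup over perturbation choices inside.

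Next I would bound $\norm{\mbf{v}_j}_{\mathcal{H}}$, i.e.\ $\sup_{\bdelta} \frac1n\sum_i \phi_j(\bx^{(i)}+\bdelta^{(i)})^2$. Using $\sigma(0)=0$ and the pseudo-Lipschitz bound~\eqref{eq:psuedo_Lip_sigma}, we get $\abs{\sigma(z)} \le L_\sigma(\abs{z}^{\bar q-1}+1)\abs{z} \lesssim L_\sigma(\abs{z}^{\bar q}+\abs{z})$, so $\phi_j(\bx^{(i)}+\bdelta^{(i)})^2 \lesssim L_\sigma^2\big(\abs{\binner{\bw_j}{\bx^{(i)}}+\binner{\bw_j}{\bdelta^{(i)}}+b_j}^{2\bar q} + (\text{same})^2\big)$. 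Expanding $\abs{u+v+w}^{2\bar q} \le C_{\bar q}(\abs u^{2\bar q}+\abs v^{2\bar q}+\abs w^{2\bar q})$ with $\abs{\binner{\bw_j}{\bdelta^{(i)}}} \le \norm{\bw_j}\varepsilon \le r_w\varepsilon$ and $\abs{b_j}\le r_b$, and averaging over $i$, produces exactly $T^{(\bar q)}_{\bW,\bX} + r_w^{2\bar q}\varepsilon^{2\bar q} + r_b^{2\bar q} + T^{(2)}_{\bW,\bX} + r_w^2\varepsilon^2 + r_b^2$ up to the constant $C_{\bar q}$; call this quantity $b^2/L_\sigma^2$ so that $\norm{\mbf{v}_j}_{\mathcal{H}} \le b$.

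Now I would invoke Maurey. Given $\ba$ with $\onenorm{\ba}\le r_a$, write $\ba = r_a\sum_j \alpha_j (\sign(a_j)\be_j)$ with $\alpha_j = \abs{a_j}/r_a \ge 0$ and $\sum_j\alpha_j \le 1$; the generating vectors $\pm\mbf{v}_j$ have $\mathcal{H}$-norm at most $b$. Maurey's lemma gives, for each integer $\kappa\ge1$, an approximation $\frac{r_a}{\kappa}\sum_j k_j(\sign(a_j)\mbf{v}_j)$ with $\sum_j k_j\le\kappa$ and squared error at most $r_a^2 b^2/\kappa$. This yields an $\epsilon$-net of $\Theta$ (in the $d_\Theta$ metric) once $\kappa \gtrsim r_a^2 b^2/\epsilon^2$, and the net has cardinality at most the number of ways to place $\kappa$ balls into $2N$ signed bins, i.e.\ $\binom{\kappa+2N-1}{2N-1} \le (2N)^\kappa$ roughly, so $\log\mathcal{C}(\Theta,d_\Theta,\epsilon) \lesssim \kappa\log N \lesssim \frac{r_a^2 b^2 \log N}{\epsilon^2}$. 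Substituting $b^2 = C_{\bar q}L_\sigma^2\{T^{(\bar q)}_{\bW,\bX}+r_w^{2\bar q}\varepsilon^{2\bar q}+r_b^{2\bar q}+T^{(2)}_{\bW,\bX}+r_w^2\varepsilon^2+r_b^2\}$ gives the claimed bound.

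\textbf{Main obstacle.} The one delicate point is that $d_\Theta$ involves a \emph{per-sample} worst-case perturbation $\bdelta^{(i)}$ that depends on $\ba$, so it is not literally a fixed Hilbert-space norm. The clean fix is to use inequality~\eqref{eq:max_trick} in the spirit of the previous lemma: $\big(\max_\bdelta F_{\ba}(\bdelta)-\max_\bdelta F_{\ba'}(\bdelta)\big)^2 \le \max_\bdelta (F_\ba(\bdelta)-F_{\ba'}(\bdelta))^2$, which replaces the $\ba$-dependent maximizer by a single worst-case $\bdelta^{(i)}$ applied to the \emph{difference}; then the residual after Maurey approximation is $\frac1n\sum_i\max_{\bdelta^{(i)}}(\sum_j(a_j-\tfrac{r_a k_j}{\kappa}\sign(a_j))\phi_j(\bx^{(i)}+\bdelta^{(i)}))^2$, which one bounds by taking, for each $i$, the $\bdelta^{(i)}$ achieving the max — but this still couples the coordinates. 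The rigorous route is to work over the (possibly infinite) index set $\Xi = \{(i,\bdelta): 1\le i\le n,\ \norm\bdelta\le\varepsilon\}$ with the $\ell^\infty$-over-$\bdelta$, $\ell^2$-over-$i$ combined (semi)norm; Maurey still applies in any Hilbert space, and $\norm{\mbf v_j}$ in this combined norm is again bounded by the same $b$ since the $\ell^\infty$ over $\bdelta$ is exactly what the $r_w\varepsilon$ terms account for. Making this functional-analytic setup precise — and checking that the resulting net, which lives in the finite-dimensional $\ba$-space, genuinely $\epsilon$-covers $\Theta$ under $d_\Theta$ — is the part that requires care; everything else is the routine polynomial-growth expansion above.
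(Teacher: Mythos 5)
Your proposed proof takes exactly the same route as the paper's: apply Maurey sparsification to the second-layer weights, bound the column norms $\norm{\bA_j}$ uniformly over perturbations $\bdelta$ via the pseudo-Lipschitz growth of $\sigma$, and count the integer lattice. The polynomial-growth expansion that produces $b^2 = C_{\bar q}L_\sigma^2\{T^{(\bar q)}_{\bW,\bX} + r_w^{2\bar q}\varepsilon^{2\bar q} + r_b^{2\bar q} + T^{(2)}_{\bW,\bX} + r_w^2\varepsilon^2 + r_b^2\}$ and the net-cardinality count $\binom{\kappa+2N-1}{\kappa} \le (\mathcal{O}(N))^\kappa$ are the same as in the paper, and you correctly flag that the single delicate point is the per-sample $\max_{\bdelta^{(i)}}$ sitting inside $d_\Theta$, which the paper glosses over by sparsifying for a \emph{fixed} $\bDelta$ (the Maurey integers then depend on $\bDelta$, while the claimed cover must work against the worst-case $\bDelta$, which in turn depends on the sparsified $\ba'$).

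The gap in your proposal is in the announced fix. You claim to resolve the issue by working over the index set $\Xi = \{(i,\bdelta)\}$ equipped with the ``$\ell^\infty$-over-$\bdelta$, $\ell^2$-over-$i$'' combined seminorm and then asserting that ``Maurey still applies in any Hilbert space.'' That combined norm is \emph{not} a Hilbert norm: already with $n=1$ it is the pure $\ell^\infty$ norm over $\bdelta$, which violates the parallelogram identity (take two functions supported at disjoint $\bdelta$'s), so the space is not Hilbert and Maurey's sparsification — an expectation-of-squared-norm argument that relies on the parallelogram/inner-product structure to make the cross terms vanish — does not apply to it. Indeed $d_\Theta$, written as $d_\Theta(\ba,\ba')^2 = \frac1n\sum_i \psi^{(i)}(\ba-\ba')^2$ with $\psi^{(i)}(\mathbf{c}) = \sup_{\norm\bdelta\le\varepsilon}\abs{\sum_j c_j\,\sigma(\binner{\bw_j}{\bx^{(i)}+\bdelta}+b_j)}$, is an $\ell^2$-of-support-functions norm, not a Hilbertian one, so the part you label ``requires care'' is not a formality: it is exactly where the argument, as you have framed it, breaks. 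To close the gap you would need either a Maurey-type argument valid uniformly over the $\bDelta$-dependent maximizer (e.g.\ a high-probability Maurey bound combined with a net/chaining argument over the perturbation space, or a type-2 Banach-space version of Maurey, both of which would introduce additional factors not present in the stated bound), or to revisit the choice of metric used for the chaining in the following lemma. As written, the argument does not yield an $\epsilon$-cover of $\Theta$ under $d_\Theta$.
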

\begin{proof}
Given some positive integer $k > 0$, let $\mathcal{S}_\Theta$ be given by the following
$$\mathcal{S}_\Theta = \Big\{\frac{r_a}{k}(k_1 - k'_1, k_2 - k'_2, \hdots, k_N - k'_N)^\top : \forall i, \, k_i,k'_i \geq 0, \quad \sum_{i=1}^Nk_i + \sum_{i=1}^Nk'_i = k\Big\}.$$
Let $\bX,\bDelta \in \reals^{n \times d}$ be the matrices with $(\bx_i)$ and $(\bdelta_i)$ as rows respectively. Let $\bA = \sigma((\bX + \bDelta)\bW^\top + \boldone_n \bb^\top) \in \reals^{n \times N}$. Then,
$$\frac{1}{n}\sum_{i=1}^n\big(f(\bx^{(i)} + \bdelta^{(i)};\ba,\bW,\bb) - f(\bx^{(i)} + \bdelta^{(i)};\ba',\bW,\bb)\big)^2 = \frac{1}{n}\norm{\bA(\ba - \ba')}^2 = \frac{1}{n}\norm{\sum_{i=1}^N\bA_i(a_i - a'_i)}^2,$$
where $\bA_i = \sigma((\bX + \bDelta)\bw_i + \boldone_n b_i)$ is the $i$th column of $\bA$. We are going to choose $\ba'$ from $\mathcal{S}_\Theta$. To that end, define

$$\bAtilde_i = \sign(a_i)\bA_i.$$
By Maurey's sparsification lemma~\cite[Lemma 13]{xiao2024bridging}, there exist $\tilde{k}_i \geq 0$ with $\sum_{i=1}^n\tilde{k}_i = k$ such that
$$\norm{\sum_{i=1}^N\abs{a_i}\bAtilde_i - \frac{r_a}{k}\sum_{i=1}^N\tilde{k}_i\bAtilde_i}^2 \leq \frac{r_a^2b^2}{k},$$
where $\norm{\bA_i} \leq b$ for all $i$. We will then choose
$$k_i = \begin{cases}
    \tilde{k}_i, & \sign(a_i) \geq 0,\\
    0, & \sign(a_i) < 0
\end{cases}, \qquad k'_i = \begin{cases}
    0, & \sign(a_i) \geq 0,\\
    \tilde{k}_i, & \sign(a_i) < 0
\end{cases}.$$
Therefore, we have $\sum_{i=1}^N\tilde{k}_i = k$. Finally, with the constructed $(k_i)$ and $(k'_i)$, let
$$\ba' = \frac{r_a}{k}(k_1-k'_1,\hdots,k_N-k'_N)^\top,$$
and also note that $\sum_{i=1}^N\abs{a_i}\bAtilde_i = \sum_{i=1}^N a_i\bA_i$.
Consequently, given $\ba$, we have constructed $\ba' \in \mathcal{S}_\Theta$ such that
$$\frac{1}{n}\norm{\sum_{i=1}^N\bA_i(a_i - a'_i)}^2 \leq \frac{r_a^2b^2}{nk}.$$
Next, we provide a bound on $b$. By the assumptions on $\sigma$, we have
\begin{align*}
    \norm{\bA_i}^2 \lesssim& C_{\bar{q}}L_\sigma^2\left(\norm{\bX\bw_i}_{2\bar{q}}^{2\bar{q}} + \norm{\bDelta\bw_i}_{2\bar{q}}^{2\bar{q}} + nb_i^{2\bar{q}} + \norm{\bX\bw_i}^2 + \norm{\bDelta}^2 + nb_i^2\right)\\
    \lesssim& nC_{\bar{q}}L_\sigma^2\left(T^{(\bar{q})}_{\bW,\bX} + r_w^{2\bar{q}}\varepsilon^{2\bar{q}} + r_b^{2\bar{q}} + T^{(2)}_{\bW,\bX} + r_w^2\varepsilon^2 + r_b^2\right).
\end{align*}
Consequently, we can choose
$$k = \left\lceil \frac{C_{\bar{q}}L_\sigma^2r_a^2\left(T^{(\bar{q})}_{\bW,\bX} + r_w^{2\bar{q}}\varepsilon^{2\bar{q}} + r_b^{2\bar{q}} + T^{(2)}_{\bW,\bX} + r_w^2\varepsilon^2 + r_b^2\right)}{\epsilon^2}\right\rceil.$$
Finally, we need to count $\abs{\mathcal{S}_\Theta}$. Note that 
$$\abs{\mathcal{S}_\Theta} = \binom{2N + k - 1}{k} \leq \left(\frac{e(2N + k - 1)}{k}\right)^k \leq (3eN)^k,$$
which concludes the proof.
\end{proof}

We can now turn the above covering number into Rademacher complexity via a chaining argument, as follows.

\begin{lemma}\label{lem:Rademacher}
    Let $\mathfrak{R}(\mathcal{L}(\bW,\bb))$ denote the Rademacher complexity of the class of adversarial loss functions $\mathcal{L}(\bW,\bb)$, defined via
    $$\mathfrak{R}(\mathcal{L}(\bW,\bb)) \coloneqq \Earg{\sup_{\ba \in \Theta}\abs{\frac{1}{n}\sum_{i=1}^n\xi_i\max_{\norm{\bdelta^{(i)}} \leq \varepsilon}\ell(f(\bx^{(i)} + \bdelta^{(i)};\ba,\bW,\bb), y^{(i)})}},$$
    where $\xi_i$ are i.i.d.\ Rademacher random variables and $\Theta = \{\ba : \onenorm{\ba} \leq r_a\}$. For simplicity, assume $C_\ell,r_a \gtrsim 1$. Then we have
    $$\mathfrak{R}(\mathcal{L}(\bW,\bb)) \lesssim \frac{C_\ell C_{\bar{q}} L_\sigma r_a \log n\log N\left(\Earg{\sqrt{T^{(\bar{q})}_{\bW,\bX}}} + r_w^{\bar{q}}\varepsilon^{\bar{q}} + r_b^{\bar{q}} + \Earg{\sqrt{T^{(2)}_{\bW,\bX}}} + r_w\varepsilon + r_b\right)}{\sqrt{n}}.$$
\end{lemma}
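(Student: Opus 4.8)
The plan is to convert the covering-number estimate of Lemma~\ref{lem:covering} into a bound on the Rademacher complexity via chaining (the Dudley entropy integral), applied directly to the adversarial loss class $\mathcal{L}(\bW,\bb)$. First I would combine Lemma~\ref{lem:covering} with the preceding comparison bound $\mathcal{C}(\mathcal{L},d_{\mathcal{L}},\epsilon)\le\mathcal{C}(\Theta,d_\Theta,\epsilon/C_\ell)$. Writing $M_{\bW,\bX}\coloneqq T^{(\bar q)}_{\bW,\bX}+r_w^{2\bar q}\varepsilon^{2\bar q}+r_b^{2\bar q}+T^{(2)}_{\bW,\bX}+r_w^2\varepsilon^2+r_b^2$ and $A\coloneqq C_{\bar q}\,C_\ell^2 L_\sigma^2 r_a^2(\log N)\,M_{\bW,\bX}$, this gives, conditionally on the data, $\log\mathcal{C}(\mathcal{L},d_{\mathcal{L}},\epsilon)\le A/\epsilon^2$ for every $\epsilon>0$.

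Next I would record a crude bound on the diameter of $\mathcal{L}$ in the $d_{\mathcal{L}}$ metric, which is needed to truncate the (non-integrable near zero) entropy integral. Reusing the column-norm estimate $\norm{\bA_i}^2\lesssim n\,C_{\bar q}L_\sigma^2 M_{\bW,\bX}$ from the proof of Lemma~\ref{lem:covering}, together with $d_{\mathcal{L}}(\tilde\ell_{\ba},\tilde\ell_{\ba'})\le C_\ell\,d_\Theta(\ba,\ba')\le \tfrac{C_\ell}{\sqrt n}\big(\max_i\norm{\bA_i}\big)\onenorm{\ba-\ba'}$ and $\onenorm{\ba-\ba'}\le 2r_a$, one gets $\mathrm{diam}(\mathcal{L},d_{\mathcal{L}})\lesssim C_\ell\sqrt{C_{\bar q}}\,L_\sigma r_a\sqrt{M_{\bW,\bX}}=:D$, and in particular $D\lesssim\sqrt A$. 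Then, applying a truncated Dudley bound (in a form valid for the absolute-value Rademacher complexity; one may center the Rademacher process at the fixed network $\ba=\mathbf{0}$, whose contribution is at most $n^{-1/2}\Earg{\norm{\tilde\ell_{\mathbf 0}}_n^2}^{1/2}=\mathcal{O}(C_\ell/\sqrt n)$ by Assumption~\ref{assump:subgaussian} and hence lower order), one obtains conditionally on the data
$$\widehat{\mathfrak R}(\mathcal{L}(\bW,\bb))\ \lesssim\ \inf_{\alpha>0}\Big(\alpha+\tfrac{1}{\sqrt n}\textstyle\int_\alpha^{D}\sqrt{\log\mathcal{C}(\mathcal{L},d_{\mathcal{L}},\epsilon)}\,\dee\epsilon\Big)\ \lesssim\ \inf_{\alpha>0}\Big(\alpha+\tfrac{\sqrt A}{\sqrt n}\log(D/\alpha)\Big).$$
Choosing $\alpha=D/n\lesssim\sqrt A/n$ gives $\widehat{\mathfrak R}(\mathcal{L}(\bW,\bb))\lesssim\sqrt A\,(\log n)/\sqrt n$. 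Finally I would simplify $\sqrt A$ using $\sqrt{C_{\bar q}}\le C_{\bar q}$, $\sqrt{\log N}\le\log N$ and the subadditivity $\sqrt{M_{\bW,\bX}}\le\sqrt{T^{(\bar q)}_{\bW,\bX}}+r_w^{\bar q}\varepsilon^{\bar q}+r_b^{\bar q}+\sqrt{T^{(2)}_{\bW,\bX}}+r_w\varepsilon+r_b$, and then take expectation over the data: every prefactor is deterministic, so by linearity of expectation the two data-dependent terms become $\Earg{\sqrt{T^{(\bar q)}_{\bW,\bX}}}$ and $\Earg{\sqrt{T^{(2)}_{\bW,\bX}}}$, yielding exactly the claimed bound (the hypothesis $C_\ell,r_a\gtrsim 1$ serves only to absorb the anchor term and the ceiling in the choice of $\alpha$).

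The main obstacle is the non-integrability of the entropy bound $A/\epsilon^2$ at $\epsilon=0$: the entropy integral must be truncated at a scale $\alpha$ of order $1/n$, which is precisely what produces the extra $\log n$ factor in the statement and which is why one must separately establish the crude diameter bound $D\lesssim\sqrt A$ so that the truncation error $\alpha=D/n$ is dominated by $\sqrt A(\log n)/\sqrt n$. A secondary, standard technicality is passing from control of the increments of the Rademacher process to control of its supremum in absolute value, handled by the centering step above; and one should double-check that the covering-number comparison lemma is applied with $\epsilon/C_\ell$ so that the $C_\ell$ dependence comes out linear rather than quadratic.
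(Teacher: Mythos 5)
Your proposal follows essentially the same route as the paper's proof: combine the covering-number bound from Lemma~\ref{lem:covering} (with the $\epsilon/C_\ell$ conversion) with the Dudley entropy integral, truncate the non-integrable $\sqrt{A}/\epsilon$ entropy bound at a small scale to produce the $\log n$ factor, and then take expectation over the data so the data-dependent terms become $\Earg{\sqrt{T^{(\bar q)}_{\bW,\bX}}}$ and $\Earg{\sqrt{T^{(2)}_{\bW,\bX}}}$. The paper simply picks the truncation scale $\alpha = 1/\sqrt n$ rather than your $D/n$ and leaves the centering/anchor-term and diameter details implicit, but these are minor refinements, not a different argument.
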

\begin{proof}
    Let $\mathfrak{R}_n(\mathcal{L}(\bW,\bb))$ denote the empirical Rademacher complexity by
    $$\mathfrak{R}_n(\mathcal{L}(\bW,\bb)) \coloneqq \Eargs{\bxi}{\sup_{\ba \in \Theta}\abs{\frac{1}{n}\sum_{i=1}^n\xi_i\max_{\norm{\bdelta^{(i)}} \leq \varepsilon}\ell(f(\bx^{(i)} + \bdelta^{(i)};\ba,\bW,\bb), y^{(i)})}},$$
    where the expectation is only taken w.r.t.\ the randomness of $\xi$ and is conditional on the training set. For simplicity, define
    $$B \coloneqq C_{\bar{q}}L_\sigma\left(\sqrt{T^{(\bar{q})}_{\bW,\bX}} + r_w^{\bar{q}}\varepsilon^{\bar{q}} + r_b^{\bar{q}}  + \sqrt{T^{(2)}_{\bW,\bX}} + r_w\varepsilon^2 + r_b\right).$$
    Then, by a standard chaining argument, we have for all $\alpha > 0$,
    \begin{align*}
        \mathfrak{R}_n(\mathcal{L}(\bW,\bb)) &\lesssim \alpha + \int_{\epsilon=\alpha}^\infty\sqrt{\frac{\log \mathcal{C}(\mathcal{L},d_\mathcal{L},\epsilon)}{n}}\dee \epsilon\\
        &\lesssim \alpha + \frac{C_\ell r_a B\log N}{\sqrt{n}}\log\Big(\frac{1}{\alpha}\Big).
    \end{align*}
    By choosing $\alpha = 1/\sqrt{n}$, we obtain
    $$\mathfrak{R}_n(\mathcal{L}(\bW,\bb)) \lesssim \frac{C_\ell r_a B \log n\log N}{\sqrt{n}}.$$
    Taking expectations with respect to the input distribution completes the proof.
\end{proof}

Note that it remains to provide an upper bound for $T^{(\bar{q})}_{\bW,\bx}$ introduced in Lemma~\ref{lem:covering}. This is achieved by the following lemma.
\begin{lemma}
    Suppose $\norm{\bw_i} \leq r_w$. Then, for all $\bar{q} > 0$ and $N > e$, we have
    $$\Earg{\max_{1 \leq j \leq N}\frac{1}{n}\sum_{i=1}^n\binner{\bw_j}{\bx^{(i)}}^{2\bar{q}}} \leq C_{\bar{q}}r_w^{2\bar{q}}(\log N)^{\bar{q}},$$
    where $C_{\bar{q}}$ is a constant depending only on $\bar{q}$.
\end{lemma}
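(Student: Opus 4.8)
The plan is to control the maximum over the $N$ directions via the classical ``$L^m$-norm with $m\asymp\log N$'' device, using that by Assumption~\ref{assump:subgaussian} each $\binner{\bw_j}{\bx}$ is subGaussian with $\psi_2$-norm at most $Cr_w$ (since $\norm{\bw_j}\le r_w$ and $\bx$ has $\mathcal{O}(1)$ subGaussian norm). Write $S_j \coloneqq \frac1n\sum_{i=1}^n\binner{\bw_j}{\bx^{(i)}}^{2\bar q}\ge 0$. For any integer $m\ge 1$, Jensen's inequality applied to $t\mapsto t^m$ gives $\Earg{\max_{j\le N}S_j}\le \Earg{\max_{j\le N}S_j^m}^{1/m}\le\big(\sum_{j=1}^N\Earg{S_j^m}\big)^{1/m}\le N^{1/m}\,\max_{j\le N}\Earg{S_j^m}^{1/m}$, so it suffices to bound a single $\Earg{S_j^m}$.

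For fixed $j$, convexity of $t\mapsto t^m$ on $[0,\infty)$ yields $S_j^m\le\frac1n\sum_{i=1}^n\binner{\bw_j}{\bx^{(i)}}^{2\bar qm}$, hence by the i.i.d.\ assumption $\Earg{S_j^m}\le\Earg{\binner{\bw_j}{\bx}^{2\bar qm}}$. The subGaussian moment bound (valid at every real order, via monotonicity of $L^p$-norms in the small-order case) then gives $\Earg{\binner{\bw_j}{\bx}^{2\bar qm}}\le (Cr_w)^{2\bar qm}(2\bar qm+1)^{\bar qm}$, so $\Earg{S_j^m}^{1/m}\le (Cr_w)^{2\bar q}(2\bar qm+1)^{\bar q}$. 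Now choose $m=\lceil\log N\rceil$, which is $\ge 1$ because $N>e$; then $N^{1/m}\le e$, and since $\log N>1$ we have $2\bar qm+1\le 2\bar q(\log N+1)+1\le C'_{\bar q}\log N$. Combining the two bounds, $\Earg{\max_{j\le N}S_j}\le e\,(Cr_w)^{2\bar q}(C'_{\bar q}\log N)^{\bar q}=C_{\bar q}\,r_w^{2\bar q}(\log N)^{\bar q}$, as claimed.

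The main obstacle is bookkeeping rather than anything conceptual: one must check that the absolute constant in the subGaussian moment bound depends only on $\bar q$ after raising to the $1/m$ power (it does, being absorbed into $C_{\bar q}$), that $m\ge1$ so the convexity/Jensen steps are legitimate (guaranteed by $N>e$), and that the regime $2\bar qm<1$ is still covered. A slightly cleaner but essentially equivalent route would be to invoke a maximal inequality in the Orlicz norm $\psi_{1/\bar q}$ for the $N$ averaged variables $\frac1n\sum_i\binner{\bw_j}{\bx^{(i)}}^{2\bar q}$ directly; the moment method above avoids introducing that machinery.
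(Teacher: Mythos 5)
Your proposal is correct and follows essentially the same route as the paper: pass to an $L^m$-norm with $m\asymp\log N$ via Jensen and nonnegativity, push the power inside the empirical average by convexity, reduce to a single subGaussian moment $\Earg{\binner{\bw_j}{\bx}^{2\bar q m}}$, and then optimize the exponent. The only differences are cosmetic --- you take an integer $m=\lceil\log N\rceil$ (so $N^{1/m}\le e$) whereas the paper works with a real $t=\log N$ directly, and you are slightly more careful about the regime $2\bar q m<1$, which the paper glosses over.
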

\begin{proof}
    For conciseness, let $Z_j \coloneqq \frac{1}{n}\sum_{i=1}^n\binner{\bw_j}{\bx^{(i)}}^{2\bar{q}}$. By non-negativity of $Z_j$ and Jensen's inequality, for all $t \geq 1$ we have
    $$\Earg{\max_{1 \leq j \leq N}Z_j} \leq \Earg{\max_{1 \leq j \leq N}Z_j^t}^{1/t} \leq \Big(\sum_{j=1}^N\Earg{Z_j^t}\Big)^{1/t} \leq N^{1/t}\big(\max_{1\leq j\leq N}\Earg{Z_j^t}\big)^{1/t}.$$
    Further, by Jensens's inequality
    \begin{align*}
        \Earg{Z_j^t} &= \Earg{\left(\frac{1}{n}\sum_{i=1}^n\binner{\bw_j}{\bx^{(i)}}^{2\bar{q}}\right)^t}\\
        &\leq \Earg{\frac{1}{n}\sum_{i=1}^n\binner{\bw_j}{\bx^{(i)}}^{2\bar{q}t}}\\
        &\leq (Cr_w)^{2\bar{q}t}(2\bar{q}t)^{\bar{q}t},
    \end{align*}
    where $C > 0$ is a absolute constant, and we used the moment bound of subGaussian random variables along with the fact that $\binner{\bw_j}{\bx}$ is a centered subGaussian random variable with subGaussian norm $\mathcal{O}(r_w)$. As a result,
    \begin{align*}
        \Earg{\max_{1\leq j\leq N}Z_j} \leq C_{\bar{q}}r_w^{2\bar{q}}N^{1/t}t^{\bar{q}} \lesssim C_{\bar{q}}r_w^{2\bar{q}}(\log N)^{\bar{q}},
    \end{align*}
    where the last inequality follows by choosing $t = \log N$.
\end{proof}

As a consequence, if the loss is also bounded, we get the following high-probability concentration bound.
\begin{corollary}\label{cor:bdd_lip_gen_bound}
    Suppose $\vert\tilde{\ell}\vert \leq B_\ell$ for all $\tilde{\ell} \in \mathcal{L}(\bW,\bb)$. Then, with probability at least $1-\delta$ we have
    \begin{align*}
        \abs{\sup_{\tilde\ell \in \mathcal{L}(\bW,\bb)}\Earg{\tilde\ell(\bx,y)} - \frac{1}{n}\sum_{i=1}^n\tilde\ell\big(\bx^{(i)},y^{(i)}\big)} \lesssim &\frac{C_\ell r_a R\log n\log N + B_\ell\sqrt{\log(1/\delta)}}{\sqrt{n}},
    \end{align*}
    where
    $$R \coloneqq C_{\bar{q}} L_\sigma(r_w^{\bar{q}}(\log^{q/2}N + \varepsilon^{\bar{q}}) + r_b^{\bar{q}} + r_w(\log^{1/2}N + \varepsilon) + r_b).$$
\end{corollary}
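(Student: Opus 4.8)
The plan is to combine the Rademacher complexity bound of Lemma~\ref{lem:Rademacher} with a standard symmetrization step and a bounded-differences concentration inequality, interpreting the left-hand side as the uniform deviation $\sup_{\tilde\ell\in\mathcal{L}(\bW,\bb)}\big|\Earg{\tilde\ell(\bx,y)}-\tfrac1n\sum_{i=1}^n\tilde\ell(\bx^{(i)},y^{(i)})\big|$. First I would invoke the symmetrization inequality, valid for any class of integrable functions, $\Earg{\sup_{\tilde\ell\in\mathcal{L}(\bW,\bb)}\big|\Earg{\tilde\ell(\bx,y)}-\tfrac1n\sum_{i=1}^n\tilde\ell(\bx^{(i)},y^{(i)})\big|}\le 2\,\mathfrak{R}(\mathcal{L}(\bW,\bb))$, and then substitute the bound from Lemma~\ref{lem:Rademacher}, so that the right-hand side becomes $\lesssim C_\ell C_{\bar q}L_\sigma r_a\log n\log N\big(\Earg{\sqrt{T^{(\bar q)}_{\bW,\bX}}}+r_w^{\bar q}\varepsilon^{\bar q}+r_b^{\bar q}+\Earg{\sqrt{T^{(2)}_{\bW,\bX}}}+r_w\varepsilon+r_b\big)/\sqrt n$.

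Second, I would control the two data-dependent terms. By Jensen's inequality $\Earg{\sqrt{T^{(\bar q)}_{\bW,\bX}}}\le\sqrt{\Earg{T^{(\bar q)}_{\bW,\bX}}}$, and $\Earg{T^{(\bar q)}_{\bW,\bX}}=\Earg{\max_{1\le j\le N}\tfrac1n\sum_{i=1}^n\binner{\bw_j}{\bx^{(i)}}^{2\bar q}}$ is exactly the quantity controlled by the moment lemma established just above this corollary, applied with exponent $\bar q$ and using the hypothesis $\norm{\bw_i}\le r_w$; this gives $\Earg{T^{(\bar q)}_{\bW,\bX}}\le C_{\bar q}r_w^{2\bar q}(\log N)^{\bar q}$, hence $\Earg{\sqrt{T^{(\bar q)}_{\bW,\bX}}}\le C_{\bar q}r_w^{\bar q}(\log N)^{\bar q/2}$. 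Applying the same lemma with exponent $1$ yields $\Earg{\sqrt{T^{(2)}_{\bW,\bX}}}\le C\,r_w(\log N)^{1/2}$. Plugging these into the previous display and collecting the factor $r_w^{\bar q}\big((\log N)^{\bar q/2}+\varepsilon^{\bar q}\big)+r_b^{\bar q}+r_w\big((\log N)^{1/2}+\varepsilon\big)+r_b$ into the definition of $R$ (noting that in the polynomial-activation case $\bar q$ equals the polynomial degree, matching the exponent $\log^{q/2}N$ in the statement), we obtain $\Earg{\sup_{\tilde\ell}\big|\Earg{\tilde\ell(\bx,y)}-\tfrac1n\sum_i\tilde\ell(\bx^{(i)},y^{(i)})\big|}\lesssim C_\ell r_a R\log n\log N/\sqrt n$.

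Third, I would upgrade the in-expectation bound to a high-probability bound via McDiarmid's bounded-differences inequality, which is where the new hypothesis $\sup_{\tilde\ell\in\mathcal{L}(\bW,\bb)}|\tilde\ell|\le B_\ell$ enters. Replacing a single sample $(\bx^{(i)},y^{(i)})$ changes any empirical average $\tfrac1n\sum_j\tilde\ell$ by at most $2B_\ell/n$, and since $\big|\sup_{\tilde\ell}g_1(\tilde\ell)-\sup_{\tilde\ell}g_2(\tilde\ell)\big|\le\sup_{\tilde\ell}|g_1(\tilde\ell)-g_2(\tilde\ell)|$, the map $(\bx^{(i)},y^{(i)})_{i=1}^n\mapsto\sup_{\tilde\ell}\big|\Earg{\tilde\ell(\bx,y)}-\tfrac1n\sum_i\tilde\ell(\bx^{(i)},y^{(i)})\big|$ has bounded differences $2B_\ell/n$. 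McDiarmid then gives, with probability at least $1-\delta$, that this supremum is at most its expectation plus $B_\ell\sqrt{2\log(1/\delta)/n}$; combining with the expectation bound from the previous two steps yields the claimed inequality.

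Overall the proof is essentially bookkeeping assembled from Lemma~\ref{lem:Rademacher} and the preceding moment lemma, so I do not anticipate a genuine obstacle; the main points requiring care are keeping the exponents $q$ and $\bar q$ (and the resulting powers of $\log N$) consistent with the statement of $R$, and verifying the uniform bounded-differences property used in the McDiarmid step, which follows directly from the boundedness hypothesis $|\tilde\ell|\le B_\ell$.
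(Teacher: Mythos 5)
Your argument is correct and is precisely the proof the paper leaves implicit after Lemma~\ref{lem:Rademacher} and the moment lemma that precedes the corollary: symmetrize to relate the expected uniform deviation to $2\mathfrak{R}(\mathcal{L}(\bW,\bb))$, apply the moment lemma via Jensen to bound $\Earg{\sqrt{T^{(\bar q)}_{\bW,\bX}}}$ and the analogous lower-order term and thereby produce $R$, and then upgrade from expectation to high probability by McDiarmid using the $B_\ell$ boundedness. You also correctly read through the paper's slight notational slip (the $T^{(2)}$ term and the $\log^{q/2}N$ versus $\log^{\bar q/2}N$ exponent) to the intended meaning.
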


\subsection{Applying the Generalization Bound to Squared Loss}
To apply the generalization argument above to the squared loss, we bound it with a threshold $\tau$, and define the loss family
$$\mathcal{L}_\tau(\bW,\bb) \coloneqq \{(\bx,y) \mapsto \big\{\max_{\norm{\bdelta} \leq \varepsilon}(f(\bx + \bdelta;\ba,\bW,\bb) - y)^2\land \tau  \,:\, \ba \in \Theta\big\}.$$
We similarly define $\AR_\tau$ and $\widehat{\AR}_\tau$. Recall that our goal is to show
$$\AR(\bahat,\bW,\bb) \leq \widehat{\AR}(\bahat,\bW,\bb) + \epsilon_1(n,N,d).$$
We readily have $\widehat{\AR}_\tau(\bahat,\bW,\bb) \leq \widehat{\AR}(\bahat,\bW,\bb)$. Further, Corollary~\ref{cor:bdd_lip_gen_bound} yields
$$\abs{\AR_\tau(\bahat, \bW, \bb) - \widehat{\AR}_\tau(\bahat, \bW, \bb)} \lesssim \frac{\sqrt{\tau}r_aR\log n\log N}{\sqrt{n}} + \tau\sqrt{\frac{\log(1/\delta)}{n}},$$
with probability at least $1-\delta$. Thus, the remaining step is to bound $\AR(\bahat, \bW, \bb)$ and $\widehat{\AR}(\bahat, \bW, \bb)$ with their clipped versions. To do so, we first provide the following tail probability estimate.
\begin{lemma}\label{lem:poly_concentration}
    Suppose $(z_j)_{j=1}^N$ are non-negative random variables with subGaussian norm $r$. Then, for any $\bar{q} > 0$ and $\tau \geq C_{\bar{q}} r^{\bar{q}}$ where $C_{\bar{q}}$ is a constant depending only on $\bar{q}$, we have
    $$\Parg{\frac{1}{N}\sum_{j=1}^Nz_j^{\bar{q}} \geq \tau} \leq \exp\left(-\frac{c\tau^{2/\bar{q}}}{r^2}\right),$$
    where $c > 0$ is an absolute constant.
\end{lemma}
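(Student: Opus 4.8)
The plan is to use the moment method---Markov's inequality applied to a carefully chosen power---which, unlike a Bernstein-type tensorization argument, requires no independence among the $z_j$. It is worth noting at the outset that the naive route $\Parg{\tfrac{1}{N}\sum_{j=1}^N z_j^{\bar{q}}\geq\tau}\leq\Parg{\max_{j}z_j^{\bar{q}}\geq\tau}\leq N\,\Parg{z_1\geq\tau^{1/\bar{q}}}$ is hopeless here, since the factor $N$ cannot be absorbed into $\exp(-c\tau^{2/\bar{q}}/r^2)$ when $\tau$ is only of order $r^{\bar{q}}$; the moment method sidesteps this entirely.

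First I would fix a real exponent $m\geq1$. Markov's inequality applied to the $m$-th power gives $\Parg{\tfrac{1}{N}\sum_{j=1}^N z_j^{\bar{q}}\geq\tau}\leq\tau^{-m}\,\Earg{\big(\tfrac{1}{N}\sum_{j=1}^N z_j^{\bar{q}}\big)^{m}}$, and since $t\mapsto t^{m}$ is convex on $[0,\infty)$, Jensen's inequality yields $\big(\tfrac{1}{N}\sum_j z_j^{\bar{q}}\big)^{m}\leq\tfrac{1}{N}\sum_j z_j^{\bar{q}m}$, so the right-hand side is at most $\tau^{-m}\max_{1\leq j\leq N}\Earg{z_j^{\bar{q}m}}$. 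This last step is precisely what removes all dependence on $N$. Invoking the standard subGaussian moment bound $\Earg{|z_j|^{p}}\leq(Cr\sqrt{p})^{p}$ for an absolute constant $C$ (valid for $p\geq1$; the range $p<1$ is handled by monotonicity of $L^{p}$-norms and only affects constants) at $p=\bar{q}m$, one arrives at
\[
\Parg{\tfrac{1}{N}\sum_{j=1}^N z_j^{\bar{q}}\geq\tau}\;\leq\;\Big(\frac{A\,m^{\bar{q}/2}}{\tau}\Big)^{m},\qquad A\;\coloneqq\;(Cr)^{\bar{q}}\,\bar{q}^{\bar{q}/2}.
\]

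It then remains to optimize over $m$. Taking logarithms, one minimizes $m\big(\tfrac{\bar{q}}{2}\log m-\lambda\big)$ with $\lambda\coloneqq\log(\tau/A)$; the minimizer is $m^{\star}=e^{2\lambda/\bar{q}-1}=e^{-1}(\tau/A)^{2/\bar{q}}$, with minimal value $-\tfrac{\bar{q}}{2e}(\tau/A)^{2/\bar{q}}=-\tfrac{\tau^{2/\bar{q}}}{2eC^{2}r^{2}}$, using $A^{2/\bar{q}}=C^{2}\bar{q}\,r^{2}$. The sole constraint is $m^{\star}\geq1$, needed to apply Jensen's inequality, and this is exactly $\tau\geq A\,e^{\bar{q}/2}=(eC^{2}\bar{q})^{\bar{q}/2}r^{\bar{q}}$---that is, the hypothesis $\tau\geq C_{\bar{q}}r^{\bar{q}}$ with $C_{\bar{q}}\coloneqq(eC^{2}\bar{q})^{\bar{q}/2}$. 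Plugging $m=m^{\star}$ into the displayed bound gives $\Parg{\tfrac{1}{N}\sum_{j=1}^N z_j^{\bar{q}}\geq\tau}\leq\exp\!\big(-\tau^{2/\bar{q}}/(2eC^{2}r^{2})\big)$, i.e.\ the claim with $c=1/(2eC^{2})$, an absolute constant. (If integer moments are preferred, replacing $m^{\star}$ by $\lceil m^{\star}\rceil\leq2m^{\star}$ changes nothing but the constants.)

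The main obstacle is not any single computation but the bookkeeping: all of the $\bar{q}$-dependence must be routed into $C_{\bar{q}}$ while $c$ remains absolute, and one must check that the natural optimal choice $m^{\star}$ is $\geq1$ exactly on the stated range of $\tau$. The conceptual point worth highlighting is that the Jensen step is what makes this tail bound dimension-free in $N$ despite the absence of independence; everything after it is routine subGaussian moment arithmetic.
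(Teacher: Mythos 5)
Your proof is correct and takes essentially the same route as the paper's: Markov's inequality on the $m$-th moment of the average, Jensen's inequality to pass the power inside the sum (which removes the $N$-dependence), the standard subGaussian moment bound, and then an explicit optimization over $m$ whose feasibility constraint $m^\star\geq 1$ is exactly the hypothesis $\tau\geq C_{\bar q}r^{\bar q}$. The only differences are notational (your $m$ is the paper's $t$; you write the moment bound as $\Earg{|z|^p}\leq(Cr\sqrt p)^p$ rather than the equivalent $(Cr^2\bar q t)^{\bar q t/2}$), and your explicit bookkeeping of the optimizer and constraint is, if anything, a cleaner presentation of the same argument.
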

\begin{proof}
    For any $t \geq 1$, we have the following Markov bound,
    $$\Parg{\frac{1}{N}\sum_{j=1}^Nz_j^{\bar{q}} \geq \tau} = \Parg{\Big(\frac{1}{N}\sum_{j=1}^Nz_j^{\bar{q}}\Big)^t \geq \tau^t} \leq \frac{\Earg{\Big(\frac{1}{N}
    \sum_{j=1}^Nz_j^{\bar{q}}\Big)^t}}{\tau^t} \leq \frac{\Earg{\frac{1}{N}\sum_{j=1}^Nz_j^{\bar{q}t}}}{\tau^t},$$
    where the last inequality follows from Jensen's inequality. Further, by subGaussianity of $z_j$, we have $\Earg{z_j^{\bar{q}t}} \leq (Cr^2\bar{q}t)^{\bar{q}t/2}$, where $C > 0$ is an absolute constant. As a result,
    $$\Parg{\frac{1}{N}\sum_{j=1}^Nz_j^{\bar{q}} \geq \tau} \leq \frac{(Cr^2\bar{q}t)^{\bar{q}t/2}}{\tau^t}.$$
    The above bound is minimized at $t = \frac{\tau^{2/\bar{q}}}{Cr^2\bar{q}e}$. Note that $t \geq 1$ requires $\tau \geq C_{\bar{q}}r^{\bar{q}}$. Plugging this choice of $t$ in the above bound yields
    $$\Parg{\frac{1}{N}\sum_{j=1}^Nz_j^{\bar{q}} \geq \tau} \leq \exp\left(-\frac{\tau^{2/\bar{q}}}{2Cr^2e}\right),$$
    which completes the proof.
\end{proof}

\begin{lemma}\label{lem:bounded_effect}
    Suppose Assumption~\ref{assump:subgaussian} holds. Let $\Theta = \{\ba : \norm{\ba} \leq r_a / \sqrt{N}\}$, $\norm{\bw_i} \leq r_w$, and $\abs{b_i} \leq r_b$ for all $i \in [N]$. Assume $\sigma$ satisfes~\eqref{eq:psuedo_Lip_sigma}. Define $\varepsilon_1 \coloneqq 1\lor \varepsilon$, and let 
    $$\varkappa \coloneqq C_{\bar{q}}r_a^2L_\sigma^2(r_w^{2\bar{q}}\varepsilon_1^{2\bar{q}} + r_b^{2\bar{q}} + r_w^2\varepsilon_1^2 + r_b^2) + C_p,$$ 
    where $C_{\bar{q}}$ and $C_p$ are constants depending only on $\bar{q}$ and $p$ respectively. Then, for all
    $$\tau \geq C\left\{\varkappa \lor L_\sigma^2r_w^{2\bar{q}}\log^{\bar{q}}\frac{n}{\delta} \lor \log^p\frac{n}{\delta}\right\},$$
    we have
    \begin{align*}
        \abs{\AR(\ba, \bW, \bb) - \widehat{\AR}(\ba, \bW, \bb)} \leq&  \abs{\AR_\tau(\ba,\bW,\bb) - \widehat{\AR}_\tau(\ba,\bW,\bb)}\\
    &+ C\varkappa\left(\exp\Big(-\Omega\Big(\frac{\tau^{1/\bar{q}}}{L_\sigma^{2/\bar{q}}r_w^2}\Big)\Big) + \exp(-\Omega(\tau^{1/p}))\right),
    \end{align*}
    with probability at least $1 - \delta$ uniformly over all $\ba \in \Theta$.
\end{lemma}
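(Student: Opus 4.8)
The plan is to interpolate with the clipped risks $\AR_\tau$ and $\widehat{\AR}_\tau$ and bound the two resulting ``clipping gaps'' (all risks below are evaluated at the given $\ba,\bW,\bb$):
$$
|\AR - \widehat{\AR}| \;\le\; |\AR - \AR_\tau| \,+\, |\AR_\tau - \widehat{\AR}_\tau| \,+\, |\widehat{\AR}_\tau - \widehat{\AR}| .
$$
The middle term is exactly what the lemma keeps on the right, so it remains (i) to bound the population gap $\AR-\AR_\tau$ deterministically by the stated $\varkappa\cdot(\text{exponential})$ term, and (ii) to show that the empirical gap $\widehat{\AR}-\widehat{\AR}_\tau$ vanishes uniformly over $\ba\in\Theta$ on an event of probability at least $1-\delta$.

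Both parts rest on one pointwise bound on $\ell(\bx,y;\ba) := \max_{\norm{\bdelta}\le\varepsilon}(f(\bx+\bdelta;\ba,\bW,\bb)-y)^2$ that is uniform over $\ba\in\Theta$. Using $(f-y)^2\le 2f^2+2y^2$, the identity $\sup_{\norm{\ba}\le r_a/\sqrt N}(\ba^\top v)^2 = (r_a^2/N)\norm{v}^2$ with $v=\sigma(\bW(\bx+\bdelta)+\bb)$, the fact that $|\binner{\bw_j}{\bdelta}|\le r_w\varepsilon$ shifts each pre-activation by a deterministic amount of size at most $r_w\varepsilon_1+r_b$, and the pseudo-Lipschitz bound~\eqref{eq:psuedo_Lip_sigma} together with $\sigma(0)=0$, one obtains
$$
\sup_{\ba\in\Theta}\ell(\bx,y;\ba) \;\le\; C r_a^2 L_\sigma^2\Big(\frac{1}{N}\sum_{j=1}^N\binner{\bw_j}{\bx}^{2\bar{q}} + \frac{1}{N}\sum_{j=1}^N\binner{\bw_j}{\bx}^2\Big) + C\varkappa + 2y^2 ,
$$
where the deterministic contributions from $r_w\varepsilon_1$ and $r_b$, and $\Earg{y^2}$, are exactly what $\varkappa$ (in particular the $C_p$ term, via Assumption~\ref{assump:subgaussian}) is designed to absorb. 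The two neuron-averaged quantities have precisely the form treated by Lemma~\ref{lem:poly_concentration}, applied with $z_j=|\binner{\bw_j}{\bx}|$, which is a non-negative random variable of subGaussian norm $\mathcal{O}(r_w)$ (no independence among the $z_j$ is needed), giving per-sample tails $\exp(-\Omega(t^{1/\bar{q}}/r_w^2))$ and $\exp(-\Omega(t/r_w^2))$; and the moment bound $\Earg{|y|^r}^{1/r}\le\mathcal{O}(r^{p/2})$ gives the sub-Weibull tail $\Parg{y^2>t}\le\exp(-\Omega(t^{1/p}))$ as well as the moment estimates $\Earg{\ell}\le C\varkappa$ and $\Earg{\ell^2}\le C\varkappa^2$.

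For the population gap, since clipping only decreases the loss, $\AR-\AR_\tau = \Earg{(\ell-\tau)_+}\le \Earg{\ell\,\mathbbm{1}[\ell>\tau]}\le \sqrt{\Earg{\ell^2}}\,\sqrt{\Parg{\ell>\tau}}$ by Cauchy--Schwarz. Here $\sqrt{\Earg{\ell^2}}\le C\varkappa$, and since $\tau\ge C\varkappa$ the deterministic part of $\ell$ is negligible relative to $\tau$, so the pointwise bound and the tails above yield $\Parg{\ell>\tau}\le \exp(-\Omega(\tau^{1/\bar{q}}/(L_\sigma^{2/\bar{q}}r_w^2)))+\exp(-\Omega(\tau^{1/p}))$ (using $\sqrt{a+b}\le\sqrt a+\sqrt b$ and absorbing constants into $\Omega$). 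Multiplying the two factors produces exactly the $C\varkappa\big(\exp(-\Omega(\cdot))+\exp(-\Omega(\cdot))\big)$ term of the claim.

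For the empirical gap, $\widehat{\AR}-\widehat{\AR}_\tau = \frac1n\sum_{i=1}^n(\ell(\bx^{(i)},y^{(i)};\ba)-\tau)_+\ge 0$, and this is zero for \emph{every} $\ba\in\Theta$ on the event $G:=\{\forall i\in[n]:\ \sup_{\ba\in\Theta}\ell(\bx^{(i)},y^{(i)};\ba)\le\tau\}$. Controlling $\Parg{G^c}$ by a union bound over the $n$ samples of the per-sample tails of the two neuron-averaged moments and of $y^2$, the stated requirement $\tau\ge C\{\varkappa\lor L_\sigma^2 r_w^{2\bar{q}}\log^{\bar{q}}(n/\delta)\lor\log^p(n/\delta)\}$ is precisely what pushes each of these below $\delta/3$, so $\Parg{G^c}\le\delta$; on $G$ the third term of the triangle inequality is identically zero. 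The delicate step is the bookkeeping in the pointwise loss bound, namely peeling the adversarial perturbation off as a bounded deterministic shift, isolating the genuinely random and only polynomially heavy-tailed neuron moments and $y^2$ from the deterministic pieces, and verifying that every deterministic piece is $\lesssim\varkappa$ so that $\tau\ge C\varkappa$ dominates it; after that, matching the exponents from Lemma~\ref{lem:poly_concentration} (invoked with exponent $2\bar{q}$) and from the sub-Weibull tail of $y$ is routine. The one conceptual point worth flagging is that above the stated threshold the empirical clipping gap is not merely small but exactly zero on a $(1-\delta)$-event, so all remaining slack is carried by the population gap.
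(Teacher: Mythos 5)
Your proposal follows essentially the same approach as the paper's own proof: interpolate through the clipped risks, bound the population clipping gap $\AR-\AR_\tau$ deterministically via Cauchy--Schwarz combined with the pointwise loss bound and the tails from Lemma~\ref{lem:poly_concentration} (whose proof, as you correctly note, does not require independence across $j$), and show the empirical clipping gap $\widehat{\AR}-\widehat{\AR}_\tau$ vanishes identically on a $(1-\delta)$-probability event by a union bound over the $n$ samples. The bookkeeping of deterministic pieces into $\varkappa$, the role of $\tau\geq C\varkappa$, the $L_\sigma^{2/\bar q}$ rescaling inside the exponent, and the sub-Weibull tail for $y^2$ all match the paper, so this is a correct rederivation rather than a different route.
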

\begin{proof}
    Since $\bW$ and $\bb$ are fixed, we use the shorthand notation $f(\bx;\ba) = f(\bx;\ba,\bW,\bb)$. 
    
    In the first section of the proof, we will upper and lower bound $\AR(\ba,\bW,\bb)$ with $\AR_\tau(\ba,\bW,\bb)$. Note that the lower bound is trivial as $\AR_\tau(\ba,\bW,\bb) \leq \AR(\ba,\bW,\bb)$, thus we move on to the upper bound. Let
    $$\tilde{\ell}(\bx,y) = \max_{\norm{\bdelta} \leq \varepsilon}(f(\bx + \bdelta;\ba) - y)^2.$$
    Then,
    \begin{align*}
        \AR(\ba,\bW,\bb) &= \Earg{\tilde{\ell}(\bx,y)\indic{\tilde{\ell}(\bx,y) \leq \tau}} + \Earg{\tilde{\ell}(\bx,y)\indic{\tilde{\ell}(\bx,y) > \tau}}\\
        &\leq \AR_\tau(\ba, \bW, \bb) + \Earg{\tilde{\ell}(\bx,y)^2}^{1/2}\Parg{\tilde{\ell}(\bx,y) \geq \tau}^{1/2}.\\
    \end{align*}
    Further, we have the following upper bound for the adversarial loss,
    \begin{align*}
        \tilde{\ell}(\bx,y) &= \max_{\norm{\bdelta} \leq \varepsilon}(f(\bx + \bdelta;\ba) - y)^2\\
        &\lesssim \max_{\norm{\bdelta} \leq \varepsilon}f(\bx + \bdelta;\ba)^2 + y^2\\
        &\lesssim \max_{\norm{\bdelta} \leq \varepsilon}\norm{\ba}^2\norm{\sigma(\bW(\bx + \bdelta) + \bb)}^2 + y^2\\
        &\lesssim r_a^2C_{\bar{q}}L_\sigma^2\left(\frac{1}{N}\sum_{j=1}^N\binner{\bw_j}{\bx}^{2\bar{q}} + r_w^{2\bar{q}}\varepsilon^{2\bar{q}} + r_b^{2\bar{q}} + \frac{1}{N}\sum_{j=1}^N\binner{\bw_j}{\bx}^2 + r_w^2\varepsilon^2 + r_b^2\right) + y^2
    \end{align*}
    Moreover, by Jensen's inequality,
    \begin{align*}
        \Earg{\left(\frac{1}{N}\sum_{j=1}^N\binner{\bw_j}{\bx}^{2\bar{q}}\right)^2} &\leq \Earg{\frac{1}{N}\sum_{j=1}^N\binner{\bw_j}{\bx}^{4\bar{q}}}\\
        &\leq (Cr_w)^{4\bar{q}}(4\bar{q})^{2\bar{q}} \leq C_{\bar{q}} r_w^{4\bar{q}}
    \end{align*}
    for all $\bar{q} > 0$, where $C$ is an absolute constant and we used the subGaussianity of $\binner{\bw_j}{\bx}$ to bound its moment. As a result,
    $$\Earg{\tilde{\ell}(\bx,y)^2}^{1/2} \lesssim r_a^2C_{\bar{q}}L_\sigma^2(r_w^{2\bar{q}}(1 + \varepsilon^{2\bar{q}}) + r_b^{2\bar{q}} + r_w^2(1 + \varepsilon^2) + r_b^2) + \Earg{y^4}^{1/2}.$$
    By assumption~\ref{assump:subgaussian}, we have $\Earg{y^4}^{1/2} \leq C_p$.
    
    To estimate the tail probability of $\tilde{\ell}(\bx,y)$. Using the assumption on $\tau$ and the upper bound on $\tilde{\ell}(\bx,y)$ developed above, via a union bound we have
    \begin{align*}
        \Parg{\tilde{\ell}(\bx,y) \geq \tau} &\leq \Parg{\frac{L_\sigma^2}{N}\sum_{j=1}^N\binner{\bw_j}{\bx}^{2\bar{q}} + \frac{L_\sigma^2}{N}\sum_{j=1}^N\binner{\bw_j}{\bx}^2 + y^2 \geq \frac{\tau}{2}}\\
        &\leq \Parg{\frac{L_\sigma^2}{N}\sum_{j=1}^N\binner{\bw_j}{\bx}^{2\bar{q}} \geq \frac{\tau}{6}} + \Parg{\frac{L_\sigma^2}{N}\sum_{j=1}^N\binner{\bw_j}{\bx}^2 \geq \frac{\tau}{6}} + \Parg{y^2 \geq \frac{\tau}{6}}\\
        &\leq 2\exp\left(\frac{-c\tau^{1/\bar{q}}}{L_\sigma^{2/\bar{q}}r_w^2}\right) + \Parg{y^2 \geq \tau},
    \end{align*}
    where we used Lemma~\ref{lem:poly_concentration}, the fact that $\abs{\binner{\bw_j}{\bx}}$ is subGaussian with norm $\mathcal{O}(r_w)$, and that $\bar{q} \geq 1$. Furthermore, using the moment estimate on $y$ in Assumption~\ref{assump:subgaussian} along with the technique developed in Lemma~\ref{lem:poly_concentration}, we have
    $$\Parg{y^2 \geq \frac{\tau}{6}} \leq \exp\left(-c\tau^{1/p}\right),$$
    for $\tau \geq C_p$, where $c > 0$ is an absolute constant.

    As a result, we obtain
    $$\AR(\ba,\bW,\bb) - \AR_\tau(\ba,\bW,\bb) \lesssim \varkappa\left(\exp\Big(-\frac{c\tau^{1/\bar{q}}}{L_\sigma^{2/\bar{q}}r_w^2}\Big) + \exp(-c\tau^{1/p})\right),$$
    for all $\ba \in \Theta$.

    In the next part of the proof, we will show that with probability at least $1 - \delta$, we have $\widehat{\AR}(\ba,\bW,\bb) = \widehat{\AR}_\tau(\ba,\bW,\bb)$ uniformly over all $\ba$. Note that this is equivalent to asking $\tilde{\ell}(\bx^{(i)},y^{(i)}) \leq \tau$ for all $1 \leq i \leq n$. For any fixed $i$, using the upper bound on $\tilde{\ell}(\bx^{(i)},y^{(i)})$, we have
    \begin{align*}
        \Parg{\tilde{\ell}(\bx^{(i)},y^{(i)}) \geq \tau} &\leq \Parg{\frac{L_\sigma^2}{N}\sum_{j=1}^N\binner{\bw_j}{\bx}^{2\bar{q}} + \frac{L_\sigma^2}{N}\sum_{j=1}^N\binner{\bw_j}{\bx}^2 + y^2 \geq \frac{\tau}{2}}\\
        &\lesssim \exp\left(\frac{-c\tau^{1/\bar{q}}}{L_\sigma^{2/\bar{q}}r_w^2}\right) + \exp\left(-c\tau^{1/p}\right).
    \end{align*}
    Consequently, by a union bound we have
    $$\Parg{\max_{1\leq i\leq n}\tilde{\ell}(\bx^{(i)},y^{(i)}) \geq \tau} \leq n\left(\exp\left(\frac{-c\tau^{1/\bar{q}}}{L_\sigma^{2/\bar{q}}r_w^2}\right) + \exp\left(-c\tau^{1/p}\right)\right).$$
    Choosing
    $$\tau \geq C\left\{L_\sigma^2r_w^{2\bar{q}}\log^{\bar{q}}\frac{n}{\delta} \lor \log^p\frac{n}{\delta}\right\}$$
    with a sufficiently large constant $C$ ensures the above probability is at most $\delta$, finishing the proof.
    
\end{proof}

We are now ready to present the main result of this section.
\begin{proposition}\label{prop:uniform_gen_bound}
    Suppose Assumption~\ref{assump:subgaussian} holds and $\sigma$ satisfies~\eqref{eq:psuedo_Lip_sigma}, $\Theta = \{\ba : \norm{\ba} \leq r_a/\sqrt{N}\}$, $\norm{\bw_i} \leq 1$, and $\abs{b_i} \leq r_b$ for all $1 \leq i \leq N$. 
    Let
    $$\varkappa \coloneqq C_{\bar{q}}r_a^2L_\sigma^2(1 + \varepsilon^{2\bar{q}} + r_b^{2\bar{q}}) + C_p,$$
    where $C_{\bar{q}}$ and $C_p$ are constants depending only on $\bar{q}$ and $p$ respectively.
    Then we have
    $$\AR(\bahat,\bW,\bb) - \min_{\ba \in \Theta}\AR(\ba,\bW,\bb) \leq \tilde{\mathcal{O}}\left(\frac{\varkappa}{\sqrt{n}}\right),$$
    with probability at least $1-\mathcal{O}(n^{-c})$ for some constant $c > 0$.
\end{proposition}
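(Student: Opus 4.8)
The plan is the textbook empirical-risk-minimization argument, with the extra wrinkle that the squared adversarial loss is unbounded and must first be truncated. Write $\ba^* \in \argmin_{\ba \in \Theta}\AR(\ba,\bW,\bb)$ (a minimizer exists since $\ba \mapsto \AR(\ba,\bW,\bb)$ is a supremum of convex functions, hence convex and continuous, and $\Theta$ is compact). Since $\bahat$ minimizes $\widehat{\AR}(\cdot,\bW,\bb)$ over $\Theta$, adding and subtracting empirical risks and discarding the nonpositive term $\widehat{\AR}(\bahat,\bW,\bb) - \widehat{\AR}(\ba^*,\bW,\bb)$ gives
\[
\AR(\bahat,\bW,\bb) - \min_{\ba \in \Theta}\AR(\ba,\bW,\bb) \le 2\sup_{\ba \in \Theta}\bigl|\AR(\ba,\bW,\bb) - \widehat{\AR}(\ba,\bW,\bb)\bigr|,
\]
so the task reduces to a uniform deviation bound over $\Theta$. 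This is done conditionally on $(\bW,\bb)$: since Phases~1 and~2 of Algorithm~\ref{alg:learning_procedure} consume disjoint samples, the $\nFA$ Phase-2 examples remain i.i.d.\ from $\mathcal{P}$ after conditioning on $\bW$ (which the oracle guarantees has unit-norm rows, so $\norm{\bw_i} \le 1$) and on the independent biases (which satisfy $|b_i| \le r_b$), and the resulting bound then holds for every admissible fixed $(\bW,\bb)$, hence unconditionally. Also, $\norm{\ba} \le r_a/\sqrt{N}$ implies $\onenorm{\ba} \le r_a$, so $\Theta$ lies inside the $\ell_1$-ball for which the covering and Rademacher estimates of Lemmas~\ref{lem:covering}--\ref{lem:Rademacher} and Corollary~\ref{cor:bdd_lip_gen_bound} were proved.

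Next I would introduce a truncation level $\tau$ and invoke Lemma~\ref{lem:bounded_effect} with $r_w = 1$: provided $\tau \gtrsim \varkappa \lor L_\sigma^2\log^{\bar{q}}(n/\delta) \lor \log^p(n/\delta)$, with probability at least $1-\delta$ one has, uniformly over $\ba \in \Theta$,
\[
\bigl|\AR(\ba,\bW,\bb) - \widehat{\AR}(\ba,\bW,\bb)\bigr| \le \bigl|\AR_\tau(\ba,\bW,\bb) - \widehat{\AR}_\tau(\ba,\bW,\bb)\bigr| + C\varkappa\bigl(e^{-\Omega(\tau^{1/\bar{q}}/L_\sigma^{2/\bar{q}})} + e^{-\Omega(\tau^{1/p})}\bigr).
\]
The clipped squared loss $u \mapsto u^2 \land \tau$ is $2\sqrt{\tau}$-Lipschitz and bounded by $\tau$, and clipping commutes with the inner maximum over $\bdelta$, so Corollary~\ref{cor:bdd_lip_gen_bound} applies with $C_\ell = 2\sqrt{\tau}$, $B_\ell = \tau$, and (using $r_w = 1$ and absorbing $\log N$ into polylog) $R = \tilde{\mathcal{O}}\bigl(C_{\bar{q}}L_\sigma(1 + \varepsilon^{\bar{q}} + r_b^{\bar{q}})\bigr)$, giving with probability at least $1-\delta$
\[
\sup_{\ba \in \Theta}\bigl|\AR_\tau(\ba,\bW,\bb) - \widehat{\AR}_\tau(\ba,\bW,\bb)\bigr| \lesssim \frac{\sqrt{\tau}\,r_a R\,\log n \log N + \tau\sqrt{\log(1/\delta)}}{\sqrt{n}}.
\]

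Finally I would choose $\tau$ to be a sufficiently large polylogarithmic-in-$n$ multiple of $\varkappa$ and set $\delta = \Theta(n^{-c})$; using the normalization $r_a, L_\sigma \gtrsim 1$ inherited from Lemma~\ref{lem:Rademacher}, $\varkappa$ (which is also bounded below by an absolute constant through the $+C_p$ term) dominates the other two quantities in the truncation condition up to $\polylog(n)$, so this $\tau$ is admissible. Then the exponential terms are $\varkappa\,n^{-\Omega(1)} = \tilde{\mathcal{O}}(\varkappa/\sqrt{n})$ once the polylog factor in $\tau$ is large enough; the term $\tau\sqrt{\log(1/\delta)/n}$ is $\tilde{\mathcal{O}}(\varkappa/\sqrt{n})$ by the choice of $\tau$; and $\sqrt{\tau}\,r_a R\log n\log N/\sqrt{n} = \tilde{\mathcal{O}}(\varkappa/\sqrt{n})$ because the very definition of $\varkappa$ forces $\sqrt{\varkappa} \gtrsim r_a L_\sigma(1 + \varepsilon^{\bar{q}} + r_b^{\bar{q}})$ up to constants depending only on $\bar{q}$ (via $(a+b+c)^2 \le 3(a^2+b^2+c^2)$), whence $\sqrt{\tau}\,r_a R \lesssim \varkappa\,\polylog(n)$, while $\log N$ is polylogarithmic in the problem parameters. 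A union bound over the two failure events then yields $\AR(\bahat,\bW,\bb) - \min_{\ba \in \Theta}\AR(\ba,\bW,\bb) \le \tilde{\mathcal{O}}(\varkappa/\sqrt{n})$ with probability at least $1 - \mathcal{O}(n^{-c})$. Given all the supporting machinery, no step is conceptually hard; the one place needing care is this last balancing of the truncation error against the covering term $\sqrt{\tau}\,r_a R\log n\log N/\sqrt{n}$ and the fluctuation term $\tau\sqrt{\log(1/\delta)/n}$, which closes only because $\varkappa$ was defined so that $\sqrt{\varkappa}$ dominates $r_a R$. (Without the supporting lemmas, the genuine obstacles would instead be the Maurey-sparsification covering bound for the adversarial loss class and the sub-Gaussian tail control behind the truncation, i.e.\ Lemmas~\ref{lem:covering} and~\ref{lem:bounded_effect}.)
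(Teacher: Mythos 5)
Your proposal is correct and follows essentially the same route as the paper: truncate at level $\tau$, apply Lemma~\ref{lem:bounded_effect} to control the truncation error, invoke Corollary~\ref{cor:bdd_lip_gen_bound} (which is already a uniform deviation bound over $\Theta$) with $C_\ell = 2\sqrt{\tau}$, $B_\ell = \tau$, and balance by choosing $\tau = \tilde\Theta(\varkappa)$ and $\delta = n^{-c}$. The only cosmetic difference is that the paper writes the ERM decomposition as three one-sided terms $[\AR(\bahat)-\widehat{\AR}(\bahat)] + [\widehat{\AR}(\bahat)-\widehat{\AR}(\ba^*)] + [\widehat{\AR}(\ba^*)-\AR(\ba^*)]$ with the middle term $\le 0$, whereas you use the equivalent two-sided bound $2\sup_{\ba\in\Theta}|\AR-\widehat{\AR}|$; both rest on the same uniform bound and the conclusion is identical.
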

\begin{proof}
We can summarize the generalization bound of Corollary~\ref{cor:bdd_lip_gen_bound} as
$$\abs{\AR_\tau(\bahat, \bW, \bb) - \widehat{\AR}_\tau(\bahat, \bW, \bb)} \lesssim \sqrt{\frac{\tau\varkappa}{n}} + \tau\sqrt{\frac{\log(1/\delta)}{n}},$$
where 
$$\varkappa \coloneqq C_{\bar{q}}r_a^2L_\sigma^2(1 + \varepsilon^{2\bar{q}} + r_b^{2\bar{q}}) + C_p,$$ 
is obtained from Lemma~\ref{lem:bounded_effect} by letting $r_w = 1$. Thanks to Lemma~\ref{lem:bounded_effect}, we arrive at
$$\AR(\bahat,\bW, \bb) - \widehat{\AR}(\bahat, \bW, \bb) \leq \tilde{\mathcal{O}}\left(\sqrt{\frac{\tau\varkappa
}{n}} + \tau\sqrt{\frac{\log(1/\delta)}{n}} + \varkappa e^{-\Omega\big(\frac{\tau^{1/\bar{q}}}{L_\sigma^{2/\bar{q}}}\big)} + \varkappa e^{-\Omega\big(\tau^{1/p}\big)}\right).$$
Note that $\varkappa \gtrsim L_\sigma^2$. Choosing $\tau = C \varkappa \log^{p\lor \bar{q}}(\varkappa n/\delta)$ with a sufficiently large absolute constant $C > 0$ satisfies the assumption of Lemma~\ref{lem:bounded_effect}. By letting $\delta = n^{-c}$ for some constant $c > 0$, we obtain
$$\AR(\bahat, \bW, \bb) - \widehat{\AR}(\bahat, \bW, \bb) \leq \tilde{\mathcal{O}}\left(\frac{\varkappa}{\sqrt{n}}\right),$$
which holds with probability at least $1-n^{-c}$ over the randomness of the training set.

Recall $\ba^* = \argmin_{\ba \in \Theta}\AR(\ba,\bW,\bb)$. Similarly, Lemma~\ref{lem:bounded_effect} guarantees
$$\widehat{\AR}(\ba^*,\bW,\bb) - \AR(\ba^*,\bW,\bb) \leq \tilde{\mathcal{O}}\left(\frac{\varkappa}{\sqrt{n}}\right),$$
on the same event as above. Finally, we have $\widehat{\AR}(\bahat,\bW,\bb) \leq \widehat{\AR}(\ba^*,\bW,\bb)$ by definition of $\bahat$, which concludes the proof of the proposition.
\end{proof}

\section{Approximation Analysis}
Let $\Pi_{\bU}\bw = \frac{\bU^\top\bU\bw}{\norm{\bU\bw}}$ denote the projection of $\bw \in \mathbb{S}^{d-1}$ onto $\mathrm{span}(\bu_1,\hdots,\bu_k)\cap \mathbb{S}^{d-1}$ (if $\norm{\bU\bw} = 0$ we can simply let $\Pi_{\bU}\bw = \bu_1$). Suppose $\binner{\bw}{\bu} \geq 1 - \zeta$ for some $\zeta \in (0,1)$ and $\bu \in \mathrm{span}(\bu_1,\hdots,\bu_k)$ with $\norm{\bu} = 1$. Then, we have the following properties for this projection:
\begin{itemize}
    \item $\binner{\Pi_{\bU}\bw}{\bu} \geq 1 - \zeta$,
    \item $\norm{\bw - \Pi_{\bU}\bw} \leq \sqrt{2\zeta}$.
\end{itemize}

Let $h : \reals^k \to \reals$ be the function constructed in the proof of Theorem~\ref{thm:robust_standard_fl}. Then,
$$\AR^* = \Earg{\max_{\norm{\bdelta} \leq \varepsilon}(h(\bU(\bx + \bdelta)) - y)^2}.$$
Let us denote $f(\bx) = f(\bx;\ba^*,\bW,\bb)$ for conciseness. Then,
\begin{align*}
    \AR(\ba^*, \bW, \bb) - \AR^* &= \Earg{\max_{\norm{\bdelta} \leq \varepsilon}(f(\bx + \bdelta) - y)^2 - \max_{\norm{\bdelta} \leq \varepsilon}(h(\bU(\bx + \bdelta)) - y)^2}\\
    &\leq \Earg{\max_{\norm{\bdelta} \leq \varepsilon}\left\{(f(\bx + \bdelta) - y)^2 - (h(\bU(\bx + \bdelta)) - y)^2\right\}}\\
    &= \Earg{\max_{\norm{\bdelta} \leq \varepsilon}\left\{(f(\bx + \bdelta) - h(\bU(\bx + \bdelta))(\underbrace{f(\bx + \bdelta) + h(\bU(\bx + \bdelta)) - 2y}_{\eqqcolon \mathcal{Z}})\right\}}\\
\end{align*}

Let $\Pi_{\bU}\bW = (\Pi_{\bU}\bw_1,\hdots,\Pi_{\bU}\bw_N)^\top$. Then, we have the decompositions
$$f(\bx + \bdelta;\ba^*,\bW,\bb) = f(\bx + \bdelta;\ba^*,\bW,\bb) - f(\bx + \bdelta;\ba^*,\Pi_{\bU}\bW,\bb) + f(\bx + \bdelta;\ba^*,\Pi_{\bU}\bW,\bb),$$
and
\begin{align*}
    \mathcal{Z} =& f(\bx + \bdelta;\ba^*,\bW,\bb) - f(\bx + \bdelta;\ba^*,\Pi_{\bU}\bW,\bb) + f(\bx + \bdelta;\ba^*,\Pi_{\bU}\bW,\bb) - h(\bU(\bx + \bdelta))\\
    &+ 2h(\bU(\bx + \bdelta)) - 2y.
\end{align*}
Plugging this decomposition into the above and using the Cauchy-Schwartz inequality yields
\begin{equation}\label{eq:adv_decomp}
    \AR(\ba^*,\bW,\bb) - \AR^* \leq (\sqrt{\mathcal{E}_1} + \sqrt{\mathcal{E}_2})^2 + \sqrt{\mathcal{E}_3(\mathcal{E}_1 + \mathcal{E}_2)},
\end{equation}
where
\begin{align}
    \mathcal{E}_1 &\coloneqq \Earg{\max_{\norm{\bdelta} \leq \varepsilon}(f(\bx + \bdelta;\ba^*,\Pi_{\bU}\bW,\bb) - h(\bU(\bx + \bdelta)))^2}\label{eq:E1},\\
    \mathcal{E}_2 &\coloneqq \Earg{\max_{\norm{\bdelta} \leq \varepsilon}(f(\bx + \bdelta;\ba^*,\Pi_{\bU}\bW,\bb) - f(\bx + \bdelta;\ba^*,\bW,\bb))^2}\label{eq:E2},\\
    \mathcal{E}_3 &\coloneqq 4\Earg{\max_{\norm{\bdelta} \leq \varepsilon}(h(\bU(\bx + \bdelta)) - y)^2} = 4\AR^*\label{eq:E3}.
\end{align}

Under Definition~\ref{def:sfl}, we have a set of good neurons $S$ to work with. To continue, we introduce a similar subset of good neurons under Definition~\ref{def:fl}.

\begin{definition}\label{def:packing}
    Suppose the weights $\bW = (\bw_1,\hdots,\bw_N)^\top$ are obtained from the $\aDFL$ oracle of Definition~\ref{def:fl}. Fix a maximal $2\sqrt{2\zeta}$-packing of $\mathbb{S}^{k-1}$ with respect to the Euclidean norm, denoted by $(\bar{\bv}_i)_{i=1}^M$. Define $\bv_j \coloneqq \frac{\bU\bw_j}{\norm{\bU\bw_j}}$ for all $j \in [N]$, and
    $$S_i \coloneqq \{j \in [N] : \norm{\bv_j - \bar{\bv}_i} \leq \sqrt{2\zeta}\},$$
    for all $i \in [M]$. Note that $(S_i)$ are mutually exclusive. Define $S \coloneqq \bigcup_{i=1}^MS_i$. By upper and lower bounds on the surface area of the spherical cap (see e.g.~\citet[Lemma F.11]{wang2024mean}), there are constants $c_k,C_k > 0$ such that $c_k(1/\zeta)^{(k-1)/2} \leq M \leq C_k(1/\zeta
    )^{(k-1)/2}$. Therefore, using Definition~\ref{def:fl}, we have $\abs{S} / N \geq \Omega(\alpha)$.
\end{definition}
Note that when considering the $\abSFL$ oracle, we leave $S$ unchanged from Definition~\ref{def:sfl}. In either case, for every $j \notin S$, we will choose $a^*_j = 0$. Then, we then have the following upper bound on $\mathcal{E}_2$.
\begin{lemma}\label{lem:E2_bound}
    Suppose $a^*_j = 0$ for $j \notin S$ and $\norm{\ba^*} \leq \tilde{r}_a/\sqrt{\abs{S}}$. Then,
    $$\Earg{\max_{\norm{\bdelta} \leq \varepsilon}(f(\bx + \bdelta;\ba^*,\Pi_{\bU}\bW,\bb) - f(\bx + \bdelta;\ba^*,\bW,\bb))^2} \lesssim L_\sigma^2C_{\bar{q}}\tilde{r}_a^2(1 + r_b^{2(\bar{q}-1)} + \varepsilon^{2(\bar{q}-1)})(1 + \varepsilon^2)\zeta,$$
    where $C_{\bar{q}}$ is a constant only depending on $\bar{q}$.
\end{lemma}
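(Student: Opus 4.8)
Write $f(\bx + \bdelta;\ba^*,\Pi_{\bU}\bW,\bb) - f(\bx + \bdelta;\ba^*,\bW,\bb) = \sum_{j \in S} a^*_j\big(\sigma(\binner{\Pi_{\bU}\bw_j}{\bx + \bdelta} + b_j) - \sigma(\binner{\bw_j}{\bx + \bdelta} + b_j)\big)$, using that $a^*_j = 0$ off $S$. For each fixed $\bdelta$, Cauchy--Schwarz gives
$$\Big(\sum_{j \in S}a^*_j\big(\sigma(\cdot) - \sigma(\cdot)\big)\Big)^2 \leq \norm{\ba^*}^2\sum_{j \in S}\big(\sigma(\binner{\Pi_{\bU}\bw_j}{\bx + \bdelta} + b_j) - \sigma(\binner{\bw_j}{\bx + \bdelta} + b_j)\big)^2.$$
The first step is to bound each summand \emph{uniformly in} $\bdelta$ via the pseudo-Lipschitz property~\eqref{eq:psuedo_Lip_sigma} of $\sigma$. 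With $\norm{\bw_j} = \norm{\Pi_{\bU}\bw_j} = 1$, $\abs{b_j} \leq r_b$, and $\norm{\bdelta} \leq \varepsilon$, the arguments of $\sigma$ are bounded by $\abs{\binner{\bw_j}{\bx}} + \varepsilon + r_b$ and $\abs{\binner{\Pi_{\bU}\bw_j}{\bx}} + \varepsilon + r_b$ respectively, while their difference equals $\binner{\Pi_{\bU}\bw_j - \bw_j}{\bx + \bdelta}$, bounded by $\abs{\binner{\Pi_{\bU}\bw_j - \bw_j}{\bx}} + \sqrt{2\zeta}\,\varepsilon$ since $\norm{\Pi_{\bU}\bw_j - \bw_j} \leq \sqrt{2\zeta}$ for $j \in S$ (this last bound holds in the DFL case by Definition~\ref{def:packing} together with the projection properties recalled above, and in the SFL case directly from Definition~\ref{def:sfl}). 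Crucially, every resulting bound is in terms of the \emph{one-dimensional} projections $\binner{\bw_j}{\bx}$, $\binner{\Pi_{\bU}\bw_j}{\bx}$, $\binner{\Pi_{\bU}\bw_j - \bw_j}{\bx}$ rather than $\norm{\bx}$, which is what keeps the estimate dimension-free; and, being $\bdelta$-independent, it lets us pull $\max_{\norm{\bdelta}\leq\varepsilon}$ through the sum.

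Combining, $\mathcal{E}_2 \leq \frac{\tilde{r}_a^2 L_\sigma^2}{\abs{S}}\sum_{j \in S}\Earg{A_j^2 B_j^2}$ where $A_j \coloneqq (\abs{\binner{\Pi_{\bU}\bw_j}{\bx}} + \varepsilon + r_b)^{\bar{q}-1} + (\abs{\binner{\bw_j}{\bx}} + \varepsilon + r_b)^{\bar{q}-1} + 1$ and $B_j \coloneqq \abs{\binner{\Pi_{\bU}\bw_j - \bw_j}{\bx}} + \sqrt{2\zeta}\,\varepsilon$, using $\norm{\ba^*}^2 \leq \tilde{r}_a^2/\abs{S}$. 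The second step is to bound $\Earg{A_j^2B_j^2} \leq \Earg{A_j^4}^{1/2}\Earg{B_j^4}^{1/2}$ by Cauchy--Schwarz. Since $\binner{\bw_j}{\bx}$ and $\binner{\Pi_{\bU}\bw_j}{\bx}$ are centered with $\mathcal{O}(1)$ subGaussian norm (Assumption~\ref{assump:subgaussian} and unit-norm rows), expanding the power and using subGaussian moment bounds gives $\Earg{A_j^4}^{1/2} \lesssim C_{\bar{q}}(1 + \varepsilon^{2(\bar{q}-1)} + r_b^{2(\bar{q}-1)})$. Since $\binner{\Pi_{\bU}\bw_j - \bw_j}{\bx}$ has subGaussian norm $\mathcal{O}(\norm{\Pi_{\bU}\bw_j - \bw_j}) = \mathcal{O}(\sqrt{\zeta})$, we get $\Earg{B_j^4}^{1/2} \lesssim \zeta(1 + \varepsilon^2)$. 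Multiplying, summing over $S$, and dividing by $\abs{S}$ yields $\mathcal{E}_2 \lesssim L_\sigma^2 C_{\bar{q}}\tilde{r}_a^2(1 + r_b^{2(\bar{q}-1)} + \varepsilon^{2(\bar{q}-1)})(1 + \varepsilon^2)\zeta$, as claimed.

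I expect the only genuine subtlety to be the $\max_{\bdelta}$ / summation interchange: one must first apply the pseudo-Lipschitz bound to \emph{each} neuron's increment so that the dependence on $\bdelta$ is absorbed into the additive constant $\varepsilon$ (using $\norm{\bw_j} = \norm{\Pi_{\bU}\bw_j} = 1$), and only afterwards take the supremum over $\bdelta$ and the expectation over $\bx$; doing Cauchy--Schwarz in the wrong order, or bounding through $\norm{\bx + \bdelta} = \mathcal{O}(\sqrt d)$, would reintroduce dimension dependence. Everything else is routine subGaussian moment bookkeeping, and the ReLU case $\bar q = 1$ is recovered by noting the $A_j$-factor is then the constant $3$.
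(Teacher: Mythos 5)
Your proof is correct and takes essentially the same approach as the paper. You apply Cauchy--Schwarz on the sum first and then the pseudo-Lipschitz bound on each neuron's increment, whereas the paper applies the pseudo-Lipschitz bound first (defining $\mathcal{A}_j$ and $\mathcal{B}_j$ still depending on $\bdelta$) and then Cauchy--Schwarz, but both routes land on the same $\bdelta$-uniform bounds for $\mathcal{A}_j,\mathcal{B}_j$ and the same subGaussian moment estimates, so this is just a reordering of identical steps.
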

\begin{proof}
To be concise, we define $\tilde{\bx}_{\bdelta} \coloneqq \bx + \bdelta$ and hide dependence on $\ba^*$ and $\bb$ in the following notation. By pseudo-Lipschitzness of $\sigma$,
\begin{align*}f(\tilde{\bx}_{\bdelta};\Pi_{\bU}\bW) - f(\tilde{\bx}_ {\bdelta};\bW) &= \sum_{j \in S}a^*_j(\sigma(\binner{\Pi_{\bU}\bw_j}{\tilde{\bx}_{\bdelta}} + b_j) - \sigma(\binner{\bw_j}{\tilde{\bx}_{\bdelta}} + b_j))\\
    &\leq L_\sigma\sum_{j \in S}\abs{a^*_j}(\abs{\binner{\Pi_{\bU}\bw_j}{\tilde{\bx}_{\bdelta}} + b_j}^{\bar{q}-1} + \abs{\binner{\bw_j}{\tilde{\bx}_{\bdelta}} + b_j}^{\bar{q}-1} + 1)\abs{\binner{\Pi_{\bU}\bw_j - \bw_j}{\tilde{\bx}_{\bdelta}}}.
\end{align*}
Let
$$\mathcal{A}_j \coloneqq \abs{\binner{\Pi_{\bU}\bw_j}{\tilde{\bx}_{\bdelta}} + b_j}^{\bar{q}-1} + \abs{\binner{\bw_j}{\tilde{\bx}_{\bdelta}} + b_j}^{\bar{q}-1} + 1,$$
and
$$\mathcal{B}_j \coloneqq \abs{\binner{\Pi_{\bU}\bw_j - \bw_j}{\tilde{\bx}_{\bdelta}}}.$$
Then by the Cauchy-Schwartz inequality,
\begin{align*}
    \mathcal{E}_2 \leq L_\sigma^2\Earg{\max_{\norm{\bdelta}\leq \varepsilon}\Big(\sum_{j \in S}\abs{a^*_j}\mathcal{A}_j\mathcal{B}_j\Big)^2} &\leq \frac{L_\sigma^2\tilde{r}_a^2}{\abs{{S}}}\Earg{\max_{\norm{\bdelta} \leq \varepsilon}\sum_{j \in S}\mathcal{A}_j^2\mathcal{B}_j^2}\\
    &\leq \frac{L_\sigma^2\tilde{r}_a^2}{\abs{S}}\sum_{j \in S}\Earg{\max_{\norm{\bdelta} \leq \varepsilon}\mathcal{A}_j^2\mathcal{B}_j^2}\\
    &\leq \frac{L_\sigma^2\tilde{r}_a^2}{\abs{S}}\sum_{j \in S}\Earg{\max_{\norm{\bdelta} \leq \varepsilon}\mathcal{A}_j^4}^{1/2}\Earg{\max_{\norm{\bdelta} \leq \varepsilon}\mathcal{B}_j^4}^{1/2}.
\end{align*}
Additionally, we have
$$\max_{\norm{\bdelta} \leq \varepsilon} \mathcal{A}_j \leq C_{\bar{q}}\left(\abs{\binner{\Pi_{\bU}\bw_j}{\bx}}^{\bar{q}-1} + \abs{\binner{\bw_j}{\bx}}^{\bar{q}-1} + \varepsilon^{\bar{q}-1} + r_b^{\bar{q}-1} + 1\right),$$
and
$$\max_{\norm{\bdelta} \leq \varepsilon} \mathcal{B}_j \leq \varepsilon\norm{\Pi_{\bU}\bw_j - \bw_j} + \abs{\binner{\Pi_{\bU}\bw_j - \bw_j}{\bx}}.$$
Further, by Assumption~\ref{assump:subgaussian}, for all $\bv \in \reals^d$, $\binner{\bv}{\bx}$ is a centered subGaussian random variable with subGaussian norm $\mathcal{O}(\norm{\bv})$, therefore $\Earg{\abs{\binner{\bv}{\bx}}^{\bar{q}}} \leq C_{\bar{q}}\norm{\bv}^{\bar{q}}$ for all $\bar{q} > 0$. In summary,
\begin{align*}
    \Earg{\max_{\norm{\bdelta} \leq \varepsilon}\mathcal{A}_j^4}^{1/2} \leq C_{\bar{q}}(1 + r_b^{2(\bar{q}-1)} + \varepsilon_1^{2(\bar{q}-1)}), &&
    \mathrm{and} && \Earg{\max_{\norm{\bdelta} \leq \varepsilon}\mathcal{B}_j^4}^{1/2} \lesssim (1 + \varepsilon^2)\zeta,
\end{align*}
where we used the fact that $\norm{\Pi_{\bU}\bw_j - \bw_j}^2 \leq 2\zeta$ for all $j \in S$. This completes the proof.
\end{proof}

While the term $\mathcal{E}_1$ defined in~\eqref{eq:E1} is an expectation over the entire distribution of $\bx$, most approximation bounds support only a compact subset of $\reals^d$. The following lemma shows that approximation on compact sets is sufficient to bound $\mathcal{E}_1$.

\begin{lemma}\label{lem:E1_bound}
    Suppose $a^*_j = 0$ for $j \notin S$ and $\norm{\ba^*} \leq \tilde{r}_a/\sqrt{\abs{S}}$. Further, suppose $r_z \geq 1\lor 2\varepsilon$. Let
    $$\epsilon_{\mathrm{approx}} \coloneqq \sup_{\norm{\bU\bx} \leq r_z}\abs{f(\bx;\ba^*,\Pi_{\bU}\bW,\bb) - h(\bU\bx)}.$$
    Assume $h$ satisfies $\abs{h(\bz)} \leq L_h(1 + \norm{\bz}^p)$ for all $\bz \in \reals^k$ and some constant $p \geq 0$. Then,
    $$\mathcal{E}_1 \leq \epsilon_{\mathrm{approx}}^2 + \left(L_\sigma^2 C_{\bar{q}}\tilde{r}_a^2(1 + \varepsilon^{2\bar{q}} + r_b^{2\bar{q}}) + L_h^2C_{p,k}(1 + \varepsilon^{2p})\right)e^{-\Omega(r_z^2)}.$$
\end{lemma}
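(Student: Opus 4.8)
The idea is to split the expectation defining $\mathcal{E}_1$ in~\eqref{eq:E1} according to whether $\bU\bx$ lies in the ball of radius $r_z/2$, handling the ``good'' event via $\epsilon_{\mathrm{approx}}$ and the ``bad'' event via a crude fourth-moment bound paired with a Gaussian-type tail estimate. The structural observation that makes the first part work is that, once the first-layer weights are replaced by $\Pi_{\bU}\bW$, the map $\bx \mapsto f(\bx;\ba^*,\Pi_{\bU}\bW,\bb)$ depends on $\bx$ only through $\bU\bx$, since $\Pi_{\bU}\bw_j \in \mathrm{span}(\bu_1,\dots,\bu_k)$ for every $j$; the map $\bx \mapsto h(\bU\bx)$ obviously has the same property. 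Consequently, on the event $\{\norm{\bU\bx}\leq r_z/2\}$, every $\bdelta$ with $\norm{\bdelta}\leq\varepsilon$ satisfies $\norm{\bU(\bx+\bdelta)} \leq \norm{\bU\bx} + \norm{\bU\bdelta} \leq r_z/2 + \varepsilon \leq r_z$, using $\norm{\bU\bdelta}\leq\norm{\bdelta}$ and the hypothesis $r_z\geq 2\varepsilon$. Taking the point $\bx+\bdelta$ in the definition of $\epsilon_{\mathrm{approx}}$ then gives $\abs{f(\bx+\bdelta;\ba^*,\Pi_{\bU}\bW,\bb)-h(\bU(\bx+\bdelta))}\leq\epsilon_{\mathrm{approx}}$ for all such $\bdelta$, so the restriction of $\mathcal{E}_1$ to this event is at most $\epsilon_{\mathrm{approx}}^2$.

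On the complementary event $\{\norm{\bU\bx}>r_z/2\}$ I would use $(f-h)^2\leq 2f^2+2h^2$ and Cauchy--Schwarz to factor out the small probability:
\[
    \Earg{\max_{\norm{\bdelta}\leq\varepsilon}\big(f(\bx+\bdelta;\ba^*,\Pi_{\bU}\bW,\bb)-h(\bU(\bx+\bdelta))\big)^2\indic{\norm{\bU\bx}>r_z/2}} \leq 2\Big(\mathcal{M}_f^{1/2}+\mathcal{M}_h^{1/2}\Big)\Parg{\norm{\bU\bx}>\tfrac{r_z}{2}}^{1/2},
\]
where $\mathcal{M}_f \coloneqq \Earg{\max_{\norm{\bdelta}\leq\varepsilon}f(\bx+\bdelta;\ba^*,\Pi_{\bU}\bW,\bb)^4}$ and $\mathcal{M}_h \coloneqq \Earg{\max_{\norm{\bdelta}\leq\varepsilon}h(\bU(\bx+\bdelta))^4}$. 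Since each $\binner{\bu_i}{\bx}$ is centered and $\mathcal{O}(1)$-subGaussian by Assumption~\ref{assump:subgaussian} and $k=\mathcal{O}(1)$, the vector $\bU\bx$ obeys $\Parg{\norm{\bU\bx}>r_z/2}\leq e^{-\Omega(r_z^2)}$ for $r_z\geq 1$. For $\mathcal{M}_f$ I would repeat the computation from the proof of Lemma~\ref{lem:E2_bound}: by Cauchy--Schwarz and $a^*_j=0$ off $S$, $\max_{\norm{\bdelta}\leq\varepsilon}f(\cdot)^2\leq\frac{\tilde{r}_a^2}{\abs{S}}\sum_{j\in S}\max_{\norm{\bdelta}\leq\varepsilon}\sigma(\binner{\Pi_{\bU}\bw_j}{\bx+\bdelta}+b_j)^2$, and bounding each activation through~\eqref{eq:psuedo_Lip_sigma} with $\sigma(0)=0$, $\norm{\Pi_{\bU}\bw_j}=1$, followed by subGaussian moment bounds on $\binner{\Pi_{\bU}\bw_j}{\bx}$, gives $\mathcal{M}_f^{1/2}\lesssim L_\sigma^2 C_{\bar{q}}\tilde{r}_a^2(1+\varepsilon^{2\bar{q}}+r_b^{2\bar{q}})$. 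For $\mathcal{M}_h$, the growth bound $\abs{h(\bz)}\leq L_h(1+\norm{\bz}^p)$ yields $\max_{\norm{\bdelta}\leq\varepsilon}\abs{h(\bU(\bx+\bdelta))}\lesssim L_hC_p(1+\norm{\bU\bx}^p+\varepsilon^p)$, and $\Earg{\norm{\bU\bx}^{4p}}\leq C_{p,k}$ (moments of a constant-dimensional subGaussian vector) gives $\mathcal{M}_h^{1/2}\lesssim L_h^2C_{p,k}(1+\varepsilon^{2p})$. Adding the two contributions produces the stated inequality.

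The only genuinely delicate step is the first one: recognizing that the projected-weight network factors through $\bU\bx$, and that the hypothesis $r_z\geq 1\lor 2\varepsilon$ is exactly what guarantees an $\ell_2$ perturbation of budget $\varepsilon$ cannot carry $\bU(\bx+\bdelta)$ outside the radius-$r_z$ region on which $\epsilon_{\mathrm{approx}}$ controls the approximation error. Everything after that is routine tail and moment estimation for a $k$-dimensional subGaussian vector with $k=\mathcal{O}(1)$, and the fourth-moment bounds mirror those already established for $\mathcal{E}_2$ in Lemma~\ref{lem:E2_bound} and in Lemma~\ref{lem:bounded_effect}.
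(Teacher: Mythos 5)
Your proof is correct and follows essentially the same approach as the paper's: decompose $\mathcal{E}_1$ on a high-probability event, bound the good event by $\epsilon_{\mathrm{approx}}^2$, and bound the bad event by Cauchy--Schwarz combined with fourth-moment bounds on $f$ and $h$ and a subGaussian tail on $\norm{\bU\bx}$. The only cosmetic difference is where the threshold is placed: you split on the event $\{\norm{\bU\bx}\leq r_z/2\}$, which does not involve $\bdelta$, and then show the triangle inequality carries $\bU(\bx+\bdelta)$ into the ball of radius $r_z$; the paper instead splits on $\{\norm{\bU(\bx+\bdelta)}\leq r_z\}$ directly and then absorbs the indicator's dependence on $\bdelta$ via $\Parg{\norm{\bU\bx}>r_z-\varepsilon}\leq\Parg{\norm{\bU\bx}>r_z/2}$. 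Both use the hypothesis $r_z\geq 2\varepsilon$ at exactly the same step, and the two formulations are interchangeable.
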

\begin{proof}
    For brevity, define
    $$\Delta_{\bdelta} \coloneqq \big(f(\tilde{\bx}_{\bdelta};\ba^*,\Pi_{\bU}\bW,\bb)- h(\bU(\bx + \bdelta))\big)^2$$
    where $\tilde{\bx}_{\bdelta} \coloneqq \bx + \bdelta$.
    Then,
    \begin{align*}
        \Earg{\max_{\norm{\bdelta} \leq \varepsilon}\Delta_{\bdelta}} &\leq \Earg{\max_{\norm{\bdelta} \leq \varepsilon}\Delta_{\bdelta}\indic{\norm{\bU\tilde{\bx}_{\bdelta}} \leq r_z}} + \Earg{\max_{\norm{\bdelta} \leq \varepsilon}\Delta_{\bdelta}\indic{\norm{\bU\tilde{\bx}_{\bdelta}} > r_z}}\\
        &\leq \epsilon_{\mathrm{approx}}^2 + \Earg{\max_{\norm{\bdelta} \leq \varepsilon}\Delta_{\bdelta}^2}^{1/2}\Earg{\max_{\norm{\bdelta} \leq \varepsilon}\indic{\norm{\bU\tilde{\bx}_{\bdelta}} > r_z}}^{1/2}\\
        &\leq \epsilon_{\mathrm{approx}}^2 + \Earg{\max_{\norm{\bdelta} \leq \varepsilon}\Delta_{\bdelta}^2}^{1/2} \Parg{\norm{\bU\bx} > r_z - \varepsilon}^{1/2}\\
        &\leq \epsilon_{\mathrm{approx}}^2+ \Earg{\max_{\norm{\bdelta} \leq \varepsilon}\Delta_{\bdelta}^2}^{1/2}\Parg{\norm{\bU\bx} > \frac{r_z}{2}}^{1/2}.
    \end{align*}

    Furthermore, we have
    $$\Earg{\max_{\norm{\bdelta} \leq \varepsilon}\Delta_{\bdelta}^2} \lesssim \Earg{\max_{\norm{\bdelta} \leq \varepsilon}f(\tilde{\bx}_{\bdelta};\ba^*,\Pi_{\bU}\bW,\bb)^4} + \Earg{\max_{\norm{\bdelta} \leq \varepsilon}h(\bU(\bx + \bdelta))^4}.$$
    Recall the notation $\bv_j \coloneqq \frac{\bU\bw_j}{\norm{\bU\bw_j}}$ and $\bz \coloneqq \bU\bx$. Then, by Cauchy-Schwartz and Jensen inequalities,
    \begin{align*}
        \Earg{\max_{\norm{\bdelta} \leq \varepsilon}f(\tilde{\bx}_{\bdelta};\ba^*,\Pi_{\bU}\bW,\bb)^4} &\leq \Earg{\max_{\norm{\bdelta} \leq \varepsilon}\norm{\ba^*}^4\norm{\sigma(\Pi_{\bU}\bW(\bx + \bdelta) + \bb)}^4}\\
        &\leq \frac{\tilde{r}_a^4}{\abs{S}}\Earg{\max_{\norm{\bdelta} \leq \varepsilon}\Big(\sum_{j \in S}\sigma(\binner{\bv_j}{\bz + \bU\bdelta} + b_j)^4\Big)}\\
        &\leq \frac{\tilde{r}_a^4L_\sigma^4C_{\bar{q}}}{\abs{S}}\Earg{\sum_{j \in S}\binner{\bv_j}{\bz}^{4\bar{q}} + \varepsilon^{4\bar{q}} + r_b^{4\bar{q}}}\\
        &\leq C_{\bar{q}}L_\sigma^4\tilde{r}_a^4(1 +  \varepsilon^{4\bar{q}} + r_b^{4\bar{q}}).
    \end{align*}
    Similarly we can prove
    $$\Earg{\max_{\norm{\bdelta}}h(\bU(\bx + \bdelta))^4} \leq C_{p,k}L_h^4(1 + \varepsilon^{4p}).$$
    In summary,
    $$\Earg{\max_{\norm{\bdelta} \leq \varepsilon}\Delta_{\bdelta}^2}^{1/2} \lesssim C_{\bar{q}}L_\sigma^2\tilde{r}_a^2(1 + \varepsilon^{2\bar{q}} + r_b^{2\bar{q}}) + C_{p,k}L_h^2(1 + \varepsilon^{2p}).$$
    Finally, the probability bound
    $$\Parg{\norm{\bU\bx} \geq \frac{r_z}{2}} \leq e^{-\Omega(r_z^2)}$$
    follows from subGaussianity of $\bx$ and the fact that $k=\mathcal{O}(1)$.
\end{proof}

\subsection{Approximating Univariate Functions}
In this section, we recall prior results on approximating univariate functions with random biases in the infinite-width regime under ReLU and polynomial activations.

\begin{lemma}[{\citet[Lemma 9, Adapted]{damian2022neural}}]\label{lem:relu_approx_infty_width}
    Let $\sigma$ be the ReLU activation, $a \sim \Unif(\{-1,+1\})$, and $b \sim \Unif(-r_b,r_b)$. Then, there exists $f : \{-1,+1\}\times [-r_b,r_b] \to \reals$, such that for all $\abs{z} \leq r_b$ we have
    $$\Eargs{a,b}{2r_bf(b)\sigma(az + b)} = h(z).$$
    Additionally, if $h$ is a polynomial of degree $s$, we have $\sup_{a,b}\abs{f(a,b)} \leq r_b^{(s-2) \lor 0}$.
\end{lemma}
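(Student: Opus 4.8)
The plan is to exhibit $h$ as a continuous superposition of ReLU ridge functions $\sigma(\pm z + b)$ with bias $b$ ranging over $[-r_b,r_b]$, and then read off $f$ as the corresponding density. First I would unfold the expectation: since $a\sim\Unif(\{-1,+1\})$ and $b\sim\Unif(-r_b,r_b)$ are independent, the factor $2r_b$ exactly cancels the density of $b$, so
\[
  \Eargs{a,b}{2r_b\, f(a,b)\,\sigma(az+b)} = \tfrac12\!\int_{-r_b}^{r_b}\!\big(f(+1,b)\,\sigma(z+b) + f(-1,b)\,\sigma(-z+b)\big)\,\dee b
\]
(here $f$ may depend on $a$ as well, consistent with the $\sup_{a,b}$ in the second claim). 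Hence it suffices to produce bounded functions $f(\pm1,\cdot)$ on $[-r_b,r_b]$ for which the right-hand side equals $h(z)$ for all $|z|\le r_b$.

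The engine is the elementary identity $|z-t| = \sigma(z-t)+\sigma(t-z)$ together with $\partial_z^2|z-t| = 2\delta(z-t)$ (equivalently, differentiating twice under the integral). For any $h\in C^2([-r_b,r_b])$ — in particular any polynomial — the map $z\mapsto h(z) - \tfrac12\int_{-r_b}^{r_b} h''(t)\,|z-t|\,\dee t$ has vanishing second derivative on $(-r_b,r_b)$, so there are reals $p,q$ with
\[
  h(z) = \tfrac12\!\int_{-r_b}^{r_b}\! h''(t)\,|z-t|\,\dee t + p\,z + q,\qquad |z|\le r_b,
\]
where $p = \tfrac12\big(h'(-r_b)+h'(r_b)\big)$ and $q = h(0) - \tfrac12\int_{-r_b}^{r_b} h''(t)\,|t|\,\dee t$ (read off at $z=0$ and from the first derivative). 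I would convert the curvature term via $|z-t| = \sigma(z+(-t)) + \sigma(-z+t)$ and the substitutions $b=-t$, $b=t$, turning it into $\int_{-r_b}^{r_b}\tfrac12 h''(-b)\sigma(z+b)\,\dee b + \int_{-r_b}^{r_b}\tfrac12 h''(b)\sigma(-z+b)\,\dee b$. For the affine remainder I would use the antisymmetric combination $z-t = \sigma(z-t)-\sigma(t-z)$: with the linear density $\nu(t) := \tfrac{p}{2r_b} - \tfrac{3q}{2r_b^3}\,t$ one checks $\int_{-r_b}^{r_b}\nu(t)(z-t)\,\dee t = \big(\!\int\nu\big)\,z - \int t\,\nu(t)\,\dee t = p z + q$, which after the same substitutions contributes $\nu(-b)$ to the $\sigma(z+b)$ piece and $-\nu(b)$ to the $\sigma(-z+b)$ piece. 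Matching against the display above, the choice $f(+1,b) = h''(-b) + 2\nu(-b)$ and $f(-1,b) = h''(b) - 2\nu(b)$ realizes $h$ exactly on $[-r_b,r_b]$.

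For the degree bound, take $h$ a polynomial of degree $s$ with coefficients of size $\mathcal{O}(1)$ (the normalization implicit in the cited statement) and $r_b\ge 1$. Then $h''$ has degree $s-2$, so $\sup_{[-r_b,r_b]}|h''| \le C_s\, r_b^{(s-2)\lor 0}$; moreover $|p| \le C_s\, r_b^{(s-1)\lor 0}$ and $|q| \le C_s\, r_b^{s}$ from the formulas above, whence $\sup_b|\nu(b)| \le \tfrac{|p|}{2r_b} + \tfrac{3|q|}{2r_b^2} \le C_s\, r_b^{(s-2)\lor 0}$. Both $h''$ and $\nu$ are therefore $\mathcal{O}_s\big(r_b^{(s-2)\lor 0}\big)$, giving $\sup_{a,b}|f(a,b)| \le C_s\, r_b^{(s-2)\lor 0}$, which is the claimed bound up to the usual $s$-dependent constant.

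The crux — and the only non-routine point — is the affine correction: the textbook ReLU identities $z = \sigma(z)-\sigma(-z)$ and $1 = \tfrac{1}{2r_b}\big(\sigma(z+r_b)+\sigma(-z+r_b)\big)$ are supported on biases ($b=0$, $b=\pm r_b$) of zero $\Unif(-r_b,r_b)$-measure and so cannot be added to a density. Replacing them by the linear-density representation $\int\nu(t)(z-t)\,\dee t$ resolves this, but one must then verify that the cancellations keep $\|\nu\|_\infty$ at order $r_b^{s-2}$ rather than the naively expected $r_b^{s-1}$ or $r_b^s$; the safest check is to compute $\tfrac12\int_{-r_b}^{r_b} h''(t)|z-t|\,\dee t$ in closed form on monomials $h(z)=z^j$ (for example, $j=3$ gives remainder exactly $3r_b^2 z$, so $\nu\equiv\tfrac32 r_b$) and read off $p,q$ directly.
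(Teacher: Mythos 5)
Your proof is correct, and I checked the key computations: the Peano-kernel identity $h(z)=\tfrac12\int_{-r_b}^{r_b}h''(t)\,|z-t|\,\dee t + pz + q$ with $p=\tfrac12(h'(-r_b)+h'(r_b))$ and $q=h(0)-\tfrac12\int h''(t)|t|\,\dee t$, the moment identities $\int_{-r_b}^{r_b}\nu=p$ and $\int_{-r_b}^{r_b}t\,\nu(t)\,\dee t=-q$ for your $\nu(t)=\tfrac{p}{2r_b}-\tfrac{3q}{2r_b^3}t$, the resulting $f(\pm1,b)$, and the order-$r_b^{(s-2)\lor 0}$ bound all go through. The route is a genuine variant of the paper's. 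The paper expands around the right endpoint: it shows by integration by parts that $\Eargs{a,b}{2r_b(1-a)h''(b)\sigma(az+b)}=\int_z^{r_b}h''(b)(b-z)\,\dee b=h(z)+h'(r_b)(r_b-z)-h(r_b)$, i.e.\ it uses the \emph{one-sided} kernel $\sigma(b-z)$ (only $a=-1$ neurons) and a Taylor remainder at $b=r_b$, then kills the leftover affine part with two separate bounded densities, $\tfrac{6b}{r_b^2}$ giving $1$ and $2a$ giving $z$. You instead use the \emph{symmetric} kernel $|z-t|=\sigma(z-t)+\sigma(t-z)$ (both $a=\pm1$ neurons) with the two-sided second-derivative representation, and merge the constant and linear corrections into a single linear density $\nu$ smeared against $z-t=\sigma(z-t)-\sigma(t-z)$. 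The paper's version is slightly more economical to verify because the main term already involves only one sign of $a$; yours is more symmetric and makes the ``affine correction must be smeared, not delta-supported'' issue explicit, which you correctly identify as the one non-routine point. Both give $f$ of the required size, up to the same implicit $s$-dependent constant.
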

\begin{proof}
    From integration by parts, namely
    \begin{align*}
        \Eargs{a,b}{2r_b(1-a)h''(b)\sigma(az + b)} &= \int_{z}^{r_b} h''(b)(-z + b)\dee b\\
        &= h'(r_b)(-z + r_b) - \int_{z}^{r_b} h'(b)\dee b\\
        &= h'(r_b)(-z + r_b) + h(z) - h(r_b).
    \end{align*}
    Therefore, it remains to approximate the constant and linear parts. It is straightforward to verify that
    $$\Eargs{a,b}{\frac{6b}{r_b^2}\cdot \sigma(az + b)} = 1, \quad \Eargs{a,b}{2a\sigma(az + b)} = z.$$
    Thus, we let
    $$f(a,b) = (1-a)h''(b) + \frac{ah'(r_b)}{r_b} - \frac{3b(h'(r_b)r_b - h(r_b))}{r_b^3},$$
    which completes the proof.
\end{proof}

Furthermore, we have the following result for infinite-width approximation with polynomial activations.

\begin{lemma}[{\citet[Lemma 30, Adapted]{oko2024learning}}]\label{lem:poly_approx_infty_width}
    Let $\sigma$ be a polynomial of degree $q$ and suppose $b \sim \Unif(-r_b,r_b)$ and $h$ is a polynomial of degree $p$ such that $q \geq p$, and in particular satisfies $\abs{h(z)} \leq L_h(1 + \abs{z}^p)$. Suppose $r_b \geq q$. Then, there exists a function $f : [-r_b,r_b] \to \reals$ such that
    $$\Eargs{b}{2r_bf(b)\sigma(z + b)} = h(z), \quad \forall z \in \reals.$$
    Furthermore, we have $\abs{f(z)} \leq C_{\sigma,h}$ for all $z$, where $C_{\sigma,h}$ only depends on the activation and $L_h$.
\end{lemma}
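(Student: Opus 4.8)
The plan is to turn the identity $\Eargs{b}{2r_b f(b)\sigma(z+b)} = \int_{-r_b}^{r_b} f(b)\sigma(z+b)\,\dee b = h(z)$ into a finite‑dimensional linear system and solve it in a way that keeps $f$ bounded independently of $r_b$. Write $\sigma(x) = \sum_{l=0}^q c_l x^l$ with $c_q \neq 0$ (as $\sigma$ has degree exactly $q$). By the binomial theorem $\sigma(z+b) = \sum_{m=0}^q z^m p_m(b)$ with $p_m(b) \coloneqq \sum_{l=m}^q c_l\binom{l}{m} b^{l-m}$, so $\int_{-r_b}^{r_b} f(b)\sigma(z+b)\,\dee b = \sum_{m=0}^q z^m \int_{-r_b}^{r_b} f(b) p_m(b)\,\dee b$ is a polynomial in $z$ of degree at most $q$. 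Writing $h(z) = \sum_{m=0}^p h_m z^m$ (with $h_m = 0$ for $m > p$), matching coefficients is equivalent to the $q+1$ conditions $\int f(b) p_m(b)\,\dee b = h_m$; once these hold the desired identity is valid for every $z \in \reals$, since two polynomials with equal coefficients agree everywhere.

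I would then reduce these conditions to prescribing the monomial moments $\mu_j \coloneqq \int f(b) b^j\,\dee b$. Expanding, $\int f p_m = \sum_{j=0}^{q-m} c_{j+m}\binom{j+m}{m}\mu_j = h_m$ for $m = q, q-1, \dots, 0$; this system is triangular, the coefficient of the highest‑index unknown $\mu_{q-m}$ being $c_q\binom{q}{m}\neq 0$, so solving downward from $m=q$ determines $\mu_0,\dots,\mu_q$ uniquely in terms of the coefficients of $\sigma$ and $h$. Since $\abs{h(z)} \le L_h(1+\abs{z}^p)$ forces $\abs{h_m} \le C_p L_h$ (the coefficients of a degree‑$p$ polynomial are controlled by its values at $p+1$ fixed points via Vandermonde inversion), this yields $\abs{\mu_j} \le C_{\sigma,h}$ for all $j$, with $C_{\sigma,h}$ depending only on $\sigma$ and $L_h$, and in particular not on $r_b$.

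Finally I would realize the prescribed moments with a function supported on the fixed interval $[-1,1]$, which lies inside $[-r_b,r_b]$ since $r_b \ge q \ge 1$. Seeking $f(b) = \sum_{i=0}^q a_i b^i$ on $[-1,1]$ and $f \equiv 0$ elsewhere, the moment conditions read $G a = \mu$ where $G_{ji} = \int_{-1}^1 b^{i+j}\,\dee b$ is the Gram matrix of $1, b, \dots, b^q$ in $L^2[-1,1]$; $G$ is positive definite and depends only on $q$, so $\norm{a}_\infty \le \norm{G^{-1}}\,\norm{\mu} \le C_{\sigma,h}$, whence $\sup_{b}\abs{f(b)} \le \sum_i \abs{a_i} \le C_{\sigma,h}$. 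Combined with the first paragraph this completes the proof.

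The one delicate point is this last step: the bound on $f$ must not deteriorate with $r_b$. If one instead tried to represent $f$ as a low‑degree polynomial on the whole interval $[-r_b,r_b]$, the analogue of $G$ would have entries growing like powers of $r_b$, producing an $r_b$‑dependent constant — exactly the phenomenon reflected in the $r_b^{(s-2)\vee 0}$ factor of Lemma~\ref{lem:relu_approx_infty_width} for ReLU. The resolution is that the target moments $\mu_j$ are themselves $r_b$‑free, so they can be matched on a fixed compact sub‑interval, after which only elementary finite‑dimensional linear algebra remains; the sole structural input is $c_q \neq 0$, which guarantees solvability of the triangular moment system.
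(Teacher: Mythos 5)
Your proof is correct, but it is organized differently from the paper's, and the difference is worth noting. Both arguments share the same two structural ingredients — the fact that $c_q\neq 0$ makes a triangular linear system solvable, and the decision to concentrate the mass of $f$ on an $r_b$-independent sub-interval so that the bound on $\norm{f}_\infty$ does not degrade with $r_b$ — but they deploy them in different orders. The paper first integrates the activation to produce $g_q(z)=\int_{-q}^0\sigma(z+b)\,\dee b$, a polynomial of degree exactly $q$, then takes iterated forward differences $g_{i-1}(z)=g_i(z+1)-g_i(z)$ to obtain one polynomial of each degree $0,\dots,q$; expressing $h=\sum\beta_i g_i$ is the triangular system, and unwinding the differences yields $f$ explicitly as a bounded step function supported on $[-q,q]$. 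You instead expand $\sigma(z+b)=\sum_m z^m p_m(b)$, reduce the problem directly to prescribing moments $\mu_j=\int f(b)b^j\,\dee b$ for $j=0,\dots,q$ via a triangular system whose diagonal entries are $c_q\binom{q}{m}$, and then realize those moments by a polynomial supported on the fixed interval $[-1,1]$, bounded via the invertible $(q{+}1)\times(q{+}1)$ Gram matrix $G_{ji}=\int_{-1}^1 b^{i+j}\,\dee b$. Your route avoids constructing the intermediate basis $\{g_i\}$ and gives $f$ with smaller support ($[-1,1]$ rather than $[-q,q]$), at the cost of invoking Gram-matrix invertibility where the paper has a fully explicit formula. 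Both correctly bound the coefficients of $h$ by a constant depending only on $p$ and $L_h$ (you do this via Vandermonde inversion; the paper asserts the analogous bound on the $\alpha_i$). One cosmetic nit: you should say explicitly that the moment bound $\abs{\mu_j}\le C_{\sigma,h}$ follows by back-substitution in the triangular system, using that both the off-diagonal entries $c_{j+m}\binom{j+m}{m}$ and the diagonal inverses $\bigl(c_q\binom{q}{m}\bigr)^{-1}$ depend only on $\sigma$; as written this is implicit.
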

\begin{proof}
    In order for $\sigma$ to approximate arbitrary polynomials of degree at most $q$, it is sufficient to show that $\sigma$ can approximate at least one polynomial per degree, ranging from degree $0$ to $q$. Defining the corresponding polynomial with degree $i$ as $g_i(z)$, then $h$ will be in the span of $\{g_i\}_{i=0}^{q}$. More specifically, suppose
    $h(z) = \sum_{j=0}^p\alpha_jz^j$, and $g_i(z) = \sum_{j=0}^i \gamma_{i,j}z^j$. Then there exist $\{\beta_i\}_{i=0}^{q}$ such that
    $$\sum_{i=0}^p\beta_ig_i(z) = \sum_{j=0}^p\sum_{i=j}^p\gamma_{i,j}\beta_iz^j = \sum_{j=0}^p\alpha_jz^j.$$
    Indeed, we can let $\beta_i = 0$ for all $i > p$. Additionally, note that $\gamma_{i,i} \neq 0$ for all $i \leq q$ by definition. Therefore, the solution to the above equation is given iteratively by
    $\beta_p = \alpha_p/\gamma_{p,p}$ and
    $$\beta_{p-j} = \frac{\alpha_{p-j} - \sum_{i=0}^{j-1}\gamma_{p-i,p-j}\beta_{p-i}}{\gamma_{p-j,p-j}},$$
    for $1 \leq j \leq p$. Importantly, $\abs{\beta_i}$ for all $i$ can be bounded polynomially by $\{\alpha_j\}_j$, $\{\gamma_{i,j}\}_{i,j}$ and $\{\gamma^{-1}_{i,i}\}_{i}$. Further, $\abs{\alpha_i}$ can be bounded polynomially by $L_h$ for all $i$. Thus, it remains to construct $\{g_i\}$.

    Following~\cite{oko2024learning}, we define
    $$g_q(z) = \int_{-q}^0\sigma(z + b)\dee b.$$
    It is straightforward to verify that $g_q$ has degree (exactly) $q$. We then iteratively define
    $$g_{q-i}(z) = g_{q-(i-1)}(z + 1) - g_{q-(i-1)}(z), \quad \forall \, 1 \leq i \leq q.$$
    Using the definition above and by induction, one can verify $g_i$ has degree exactly $i$. Furthermore, expanding the definition above yields
    $$g_{q-i}(z) = \sum_{j=0}^ic_{i,j}g_q(z + j) = \sum_{j=0}^ic_{i,j}\int_{-q}^0\sigma(z + b + j)\dee b,$$
    where $c_{i,j} = (-1)^{i-j}\binom{i}{j}$, i.e.\ the coefficients that satisfy $(z-1)^i = \sum_{j=0}^ic_{i,j}z^j$. In particular, we can write
    $$g_{q-i}(z) = \sum_{j=0}^ic_{i,j}\int_{-q+j}^j\sigma(z + b)\dee b = \Eargs{b}{2r_b\sum_{j=0}^i\indic{-q + j \leq b \leq j}\sigma(z + b)}.$$
    Therefore, we can define
    $$f(b) \coloneqq \sum_{i=0}^{q}\beta_{q-i}\sum_{j=0}^ic_{i,j}\indic{-q + j\leq b\leq j},$$
    which completes the proof.
\end{proof}

\subsection{Approximating Multivariate Polynomials}
We adapt the approximation result of this section from~\cite{damian2022neural}, modifying the proof to be consistent with our assumption on the first layer weights. 

First, we remark that for any fixed $\bv \in \mathbb{S}^{k-1}$ and any degree $0 \leq s \leq p$, we can approximate the function $\bz \mapsto \binner{\bv}{\bz}^s$ with random biases as established by Lemma~\ref{lem:relu_approx_infty_width} for the ReLU activation and Lemma~\ref{lem:poly_approx_infty_width} for the polynomial activation. Therefore, our main effort will be spent in approximating a polynomial $h(\bz)$ using monomials $\binner{\bv}{\bz}^s$. Note that we can represent $h$ by
$$h(\bz) = \sum_{s=0}^p\bT^{(s)}[\bz^{\otimes s}],$$
where $\bT^{(s)}$ is a symmetric tensor of order $s$, and we use the notation
$$\bT^{(s)}[\bz^{\otimes s}] = \vect(\bT^{(s)})^\top\vect(\bz^{\otimes s}) = \sum_{i_1,\hdots,i_s=1}^k\bT^{(s)}_{i_1,\hdots,i_s}\bz_{i_1}\hdots\bz_{i_s}.$$
The approximation result relies on the following fact.
\begin{lemma}\label{lem:tensor_eigenval}
    Let $\bv \sim \tau_k$. Then, the matrix $\Eargs{\bv \sim \tau_k}{\vect(\bv^{\otimes s})\vect(\bv^{\otimes s})^\top}$ is invertible.
\end{lemma}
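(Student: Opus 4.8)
The plan is to read the claim as the assertion that $M \coloneqq \Eargs{\bv\sim\tau_k}{\vect(\bv^{\otimes s})\vect(\bv^{\otimes s})^\top}$ is positive definite \emph{when restricted to the symmetric subspace} $\mathrm{Sym}^s(\reals^k)$ — equivalently, that no nonzero symmetric tensor of order $s$ lies in its kernel. This is exactly what the approximation argument needs: the target tensors $\bT^{(s)}$ may be taken symmetric without loss of generality, $M$ is positive semidefinite with range contained in $\mathrm{Sym}^s(\reals^k)$, and an invertible $M|_{\mathrm{Sym}^s}$ lets us realize any symmetric $\bT^{(s)}$ as $\Eargs{\bv}{c(\bv)\,\bv^{\otimes s}}$ with density $c(\bv) = \vect(\bv^{\otimes s})^\top (M|_{\mathrm{Sym}^s})^{-1}\vect(\bT^{(s)})$, i.e.\ a combination of the monomials $\binner{\bv}{\bz}^s$ we already know how to approximate.

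First I would fix a symmetric tensor $\bT$ of order $s$ and observe that $\vect(\bT)^\top M\,\vect(\bT) = \Earg{P(\bv)^2}$, where $P(\bv) \coloneqq \bT[\bv^{\otimes s}] = \sum_{i_1,\hdots,i_s} \bT_{i_1,\hdots,i_s}v_{i_1}\cdots v_{i_s}$ is a homogeneous polynomial of degree $s$ in $\bv$. Suppose this quantity vanishes. Then $P$ is zero $\tau_k$-almost everywhere on $\mathbb{S}^{k-1}$; since $\tau_k$ has full support and $P$ is continuous, the open set $\{P\neq 0\}\cap\mathbb{S}^{k-1}$ has zero measure and is therefore empty, so $P$ vanishes on all of $\mathbb{S}^{k-1}$. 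Homogeneity then upgrades this to $P\equiv 0$ on $\reals^k$, via $P(\bv) = \norm{\bv}^s P(\bv/\norm{\bv})$ for $\bv\neq 0$ (the case $\bv = 0$ being trivial for $s\ge 1$, and $s=0$ being immediate).

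Next I would match coefficients. Grouping the $k^s$ terms of $P$ by the multiset of their indices and using symmetry of $\bT$ gives $P(\bv) = \sum_{\abs{\ba}=s}\binom{s}{\ba}\,\bT_{\ba}\,\bv^{\ba}$, where $\bT_{\ba}$ denotes the common value of the entries of $\bT$ on any index tuple containing $a_j$ copies of $j$, and $\binom{s}{\ba}>0$ is the multinomial coefficient. A polynomial that is identically zero has all coefficients equal to zero, so $\bT_{\ba}=0$ for every multi-index $\ba$ with $\abs{\ba}=s$, and hence $\bT = 0$ by symmetry. This shows $M|_{\mathrm{Sym}^s}\succ 0$, which is the desired invertibility; equivalently, it recovers the classical fact that the rank-one symmetric tensors $\{\bv^{\otimes s}:\bv\in\mathbb{S}^{k-1}\}$ span $\mathrm{Sym}^s(\reals^k)$.

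There is no deep obstacle — this is a standard fact — but the one point that genuinely requires care is the domain on which ``invertible'' is meant. For $s\ge 2$ and $k\ge 2$ the full $k^s\times k^s$ matrix $M$ is necessarily singular, because $\vect(\bv^{\otimes s})$ always lies in the proper subspace $\mathrm{Sym}^s(\reals^k)$ (already for $s=2$ the two off-diagonal coordinates coincide). So the statement should be, and the proof should make explicit that it is, the invertibility of $M$ on the symmetric subspace, which is all the downstream approximation argument ever uses.
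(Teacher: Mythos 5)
Your proof is correct and establishes the same conclusion, but via a genuinely different and more elementary route than the paper. The paper's argument is quantitative from the start: it takes a unit-Frobenius-norm symmetric $\bT$, rewrites the quadratic form over $\tau_k$ as $\Eargs{\bw\sim\mathcal{N}(0,\ident_k)}{\bT[\bw^{\otimes s}]^2/\norm{\bw}^{2s}}$, invokes an external anti-concentration bound from \citet[Lemma~23]{damian2022neural} giving $\Eargs{\bw}{\bT[\bw^{\otimes s}]^2}\geq c'_{s,k}$, and then truncates $\norm{\bw}\leq r$ to convert the unweighted Gaussian bound into a bound on the $\norm{\bw}^{-2s}$-weighted one, producing an explicit constant $c_{s,k}=c'_{s,k}/(2r^{2s})$. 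You instead argue qualitatively: if the quadratic form vanishes then the homogeneous polynomial $P(\bv)=\bT[\bv^{\otimes s}]$ vanishes identically on the sphere (full support of $\tau_k$ plus continuity), hence by homogeneity on all of $\reals^k$, hence has all coefficients zero, hence $\bT=0$. Your approach is self-contained and avoids the Gaussian reweighting and the outside reference, at the cost of not producing an explicit constant; however, continuity of $\bT\mapsto\vect(\bT)^\top M\vect(\bT)$ on the compact unit sphere of $\mathrm{Sym}^s(\reals^k)$ immediately upgrades strict positivity to a positive constant $c_{s,k}$, which is all that the downstream Lemma~\ref{lem:monomial_approx} uses (and the paper's constant is not truly explicit either, since $c'_{s,k}$ is itself only existential). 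You also correctly flag that the lemma statement as written is imprecise — for $s\geq 2$ and $k\geq 2$ the full $k^s\times k^s$ matrix is necessarily singular — and that invertibility must be read as restricted to $\mathrm{Sym}^s(\reals^k)$, which is consistent with how the paper's own proof and its application (where $\bT$ is always symmetric) use the result, but which the paper does not state explicitly.
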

\begin{proof}
    Let $\bT$ be an arbitrary symmetric tensor of order $s$ with $\fronorm{\bT} = 1$. We need to find a constant $c_{s,k} > 0$ such that
    $$\vect(\bT)^\top\Eargs{\bv \sim \tau_k}{\vect(\bv^{\otimes s})\vect(\bv^{\otimes s})}\vect(\bT) \geq c_{s,k}.$$
    Note that
    $$\vect(\bT)^\top\Eargs{\bv \sim \tau_k}{\vect(\bv^{\otimes s})\vect(\bv^{\otimes s})}\vect(\bT) = \Eargs{\bv\sim\tau_k}{\bT[\bv^{\otimes s}]^2} = \Eargs{\bw \sim \mathcal{N}(0,\ident_k)}{\frac{\bT[\bw^{\otimes s}]^2}{\norm{\bw}^{2s}}}.$$
    Furthermore,~\cite[Lemma 23]{damian2022neural} implies that
    $$\Eargs{\bw\sim\mathcal{N}(0,\ident_k)}{\bT[\bw^{\otimes s}]^2} \geq c'_{s,k},$$
    for some constant $c'_{s,k} > 0$. Therefore, for any $r > 0$, we have
    $$\Eargs{\bw \sim \mathcal{N}(0,\ident_k)}{\bT[\bw^{\otimes s}]^2\indic{\norm{\bw}> r}} + \Eargs{\bw \sim \mathcal{N}(0,\ident_k)}{\bT[\bw^{\otimes s}]^2\indic{\norm{\bw} \leq r}} \geq c'_{s,k}.$$
    Note that the first term on the LHS above can become arbitrarily small by choosing $r$ sufficiently large (depending on $s$ and $k$). Thus for sufficiently large $r$ we have
    $$\Eargs{\bw \sim \mathcal{N}(0,\ident_k)}{\bT[\bw^{\otimes s}]^2\indic{\norm{\bw} \leq r}} \geq \frac{c'_{s,k}}{2}.$$
    Finally, we have
    $$\Eargs{\bw \sim \mathcal{N}(0,\ident_k)}{\frac{\bT[\bw^{\otimes s}]^2}{\norm{\bw}^{2s}}} \geq \frac{1}{r^{2s}}\Eargs{\bw \sim\mathcal{N}(0,\ident_k)}{\bT[\bw^{\otimes s}]^2\indic{\norm{\bw} \leq r}} \geq \frac{c'_{s,k}}{2r^{2s}}.$$
    Therefore, taking $c_{s,k} = \frac{c'_{s,k}}{2r^{2s}}$ completes the proof.
\end{proof}

The following lemma establishes how we can use monomials of the form $(\bv^\top \bz)^s$ to approximate each term appearing in $h(\bz)$.
\begin{lemma}[{\citet[Corollary 4, Adapted]{damian2022neural}}]\label{lem:monomial_approx}
    There exists $f : \mathbb{S}^{k-1} \to \reals$ such that for all $\bz \in \reals^k$ and non-negative integers $s \geq 0$,
    $$\int_{\mathbb{S}^{k-1}}f(\bv)\binner{\bv}{\bz}^s\dee\tau_k(\bv) = \bT^{(s)}[\bz^{\otimes s}].$$
    Further, $\abs{f(\bv)} \leq C_{k,s}\fronorm{\bT^{(s)}}$ for all $\bv \in \mathbb{S}^{k-1}$.
\end{lemma}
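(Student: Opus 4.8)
The plan is to look for $f$ in the form of a homogeneous degree-$s$ polynomial restricted to the sphere, namely $f(\bv) = \boldsymbol{C}^{(s)}[\bv^{\otimes s}]$ for some symmetric tensor $\boldsymbol{C}^{(s)}$ of order $s$ to be chosen. The point of this ansatz is that $\binner{\bv}{\bz}^s = \vect(\bv^{\otimes s})^\top\vect(\bz^{\otimes s})$, so for this choice
$$\int_{\mathbb{S}^{k-1}}f(\bv)\binner{\bv}{\bz}^s\dee\tau_k(\bv) = \vect(\bz^{\otimes s})^\top\Big(\int_{\mathbb{S}^{k-1}}\vect(\bv^{\otimes s})\vect(\bv^{\otimes s})^\top\dee\tau_k(\bv)\Big)\vect(\boldsymbol{C}^{(s)}) = \vect(\bz^{\otimes s})^\top M_s\,\vect(\boldsymbol{C}^{(s)}),$$
where $M_s \coloneqq \Eargs{\bv\sim\tau_k}{\vect(\bv^{\otimes s})\vect(\bv^{\otimes s})^\top}$. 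Thus it suffices to pick $\boldsymbol{C}^{(s)}$ with $M_s\vect(\boldsymbol{C}^{(s)}) = \vect(\bT^{(s)})$, and this identity only needs to hold after pairing against symmetric tensors, since $\vect(\bz^{\otimes s})$ is always symmetric.

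The next step is to make $M_s$ invertible on the relevant subspace. Let $\mathcal{S}_s$ denote the subspace of symmetric order-$s$ tensors; then $\vect(\bv^{\otimes s})$, $\vect(\bz^{\otimes s})$, and $\vect(\bT^{(s)})$ all lie in $\mathcal{S}_s$, the operator $M_s$ maps everything into $\mathcal{S}_s$, and it annihilates $\mathcal{S}_s^\perp$ (for $\bu\perp\mathcal{S}_s$ one has $\binner{\vect(\bv^{\otimes s})}{\bu}=0$ for every $\bv$). The proof of Lemma~\ref{lem:tensor_eigenval} shows precisely that $\vect(\bT)^\top M_s\vect(\bT)\geq c_{s,k}\fronorm{\bT}^2$ for every symmetric tensor $\bT$ and some constant $c_{s,k}>0$; that is, $M_s$ restricted to $\mathcal{S}_s$ has smallest eigenvalue at least $c_{s,k}$, hence is invertible. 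I would then set $\vect(\boldsymbol{C}^{(s)}) \coloneqq (M_s|_{\mathcal{S}_s})^{-1}\vect(\bT^{(s)})$, which is itself a symmetric tensor; substituting this back and using $M_s\vect(\boldsymbol{C}^{(s)}) = (M_s|_{\mathcal{S}_s})(M_s|_{\mathcal{S}_s})^{-1}\vect(\bT^{(s)}) = \vect(\bT^{(s)})$ gives $\int f(\bv)\binner{\bv}{\bz}^s\dee\tau_k(\bv) = \vect(\bz^{\otimes s})^\top\vect(\bT^{(s)}) = \bT^{(s)}[\bz^{\otimes s}]$ for all $\bz\in\reals^k$, which is the claimed identity. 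The cases $s=0,1$ are degenerate but covered by the same computation (e.g.\ $s=0$ forces $f\equiv\bT^{(0)}$).

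For the sup-norm bound, since $\norm{\bv}=1$ the Cauchy--Schwarz inequality gives $\abs{f(\bv)} = \abs{\boldsymbol{C}^{(s)}[\bv^{\otimes s}]} \leq \fronorm{\boldsymbol{C}^{(s)}}\fronorm{\bv^{\otimes s}} = \fronorm{\boldsymbol{C}^{(s)}}$, and $\fronorm{\boldsymbol{C}^{(s)}} = \norm{(M_s|_{\mathcal{S}_s})^{-1}\vect(\bT^{(s)})} \leq c_{s,k}^{-1}\fronorm{\bT^{(s)}}$, so the claim holds with $C_{k,s} = c_{s,k}^{-1}$.

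I do not anticipate a genuine difficulty here; once Lemma~\ref{lem:tensor_eigenval} is in hand the construction is pure linear algebra. The only point that needs care --- and the reason I would phrase everything in terms of $\mathcal{S}_s$ rather than the ambient $\reals^{k^s}$ --- is that $M_s$ is singular on the full tensor space for $s\geq 2$, so ``inverting $M_s$'' must mean inverting its restriction to $\mathcal{S}_s$, and one must verify that every vector it is ever paired against ($\vect(\bz^{\otimes s})$, $\vect(\bv^{\otimes s})$, $\vect(\bT^{(s)})$) lies in $\mathcal{S}_s$. Lemma~\ref{lem:tensor_eigenval} supplies exactly the quantitative positive-definiteness on $\mathcal{S}_s$ that makes this restriction invertible with an explicit constant.
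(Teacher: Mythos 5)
Your proof is correct and takes essentially the same route as the paper: both choose $f(\bv) = \vect(\bv^{\otimes s})^\top M_s^{-1}\vect(\bT^{(s)})$ and invoke Lemma~\ref{lem:tensor_eigenval} for invertibility, then bound $\Vert f\Vert_\infty$ by Cauchy--Schwarz and the eigenvalue lower bound. Your explicit restriction of $M_s$ to the symmetric subspace $\mathcal{S}_s$ is a welcome clarification, since on the full ambient space $\reals^{k^s}$ the matrix $M_s$ is singular for $s\geq 2$; the paper's phrasing ``$M_s^{-1}$'' implicitly means the same restriction (its Lemma~\ref{lem:tensor_eigenval} proof only tests against symmetric $\bT$), so this is a sharpening of exposition rather than a different argument.
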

\begin{proof}
    Note that by definition, $\binner{\bv}{\bz}^s = \vect(\bv^{\otimes s})^\top\vect(\bz^{\otimes s})$. Therefore,
    $$\int f(\bv)\binner{\bv}{\bz}^s\dee\tau_k(\bv) = \left(\int f(\bv)\vect(\bv^{\otimes s})\dee\tau_k(\bv)\right)^\top\vect(\bz^{\otimes s}).$$
    We need to match the first vector on the RHS above with $\vect(\bT^{\otimes s})$, thus our choice of $f$ is
    $$f(\bv) = \vect(\bv^{\otimes s})^\top\Eargs{\bv \sim \tau_k}{\vect(\bv^{\otimes s})\vect(\bv^{\otimes s})^\top}^{-1}\vect(\bT^{(s)}).$$
    The proof is then completed via the lower bound of Lemma~\ref{lem:tensor_eigenval} which gaurantees the existence of some constant $c_{s,k} > 0$ such that
    $\mineig{\Eargs{\bv \sim \tau_k}{\vect(\bv^{\otimes s})\vect(\bv^{\otimes s})^\top}} \geq c_{s,k}$.
\end{proof}

The above result along with the univariate approximations proved earlier immediately yields the following corollary.
\begin{corollary}\label{cor:multi_index_poly_infty_width}
    Suppose $h$ is a polynomial of degree $p$ denoted by $h(\bz) = \sum_{s=0}^p\bT^{(s)}[\bz^{\otimes s}]$. Further assume the activation $\sigma$ is either ReLU or a polynomial of degree $q \geq p$. Then, there exists $\hat{h} : \mathbb{S}^{k-1} \times [-r_b,r_b] \to \reals$ such that for every $\norm{\bz} \leq r_b$, we have
    $$\int_{\mathbb{S}^{k-1} \times [-r_b,r_b]}\hat{h}(\bv,b)\sigma(\binner{\bv}{\bz} + b)\dee\tau_k(\bv)\dee b = h(\bz).$$
    Furthermore, $\abs{\hat{h}(\bv,b)} \leq C_{k,q}\max_{s \leq p}\fronorm{\bT^{(s)}}$ for the polynomial activation and $\abs{\hat{h}(\bv,b)} \leq C_k r_b^{(p-2)\lor 0}\fronorm{\bT^{(s)}}$ for the ReLU activation.
\end{corollary}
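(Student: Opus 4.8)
The plan is to stitch together the tensor-to-monomial decomposition of Lemma~\ref{lem:monomial_approx} with the univariate bias-averaging results, Lemma~\ref{lem:relu_approx_infty_width} for the ReLU activation and Lemma~\ref{lem:poly_approx_infty_width} for polynomial activations. First I would apply Lemma~\ref{lem:monomial_approx} to each order $s \in \{0,\dots,p\}$, obtaining functions $f_s : \mathbb{S}^{k-1} \to \reals$ with $\int_{\mathbb{S}^{k-1}} f_s(\bv)\binner{\bv}{\bz}^s \dee\tau_k(\bv) = \bT^{(s)}[\bz^{\otimes s}]$ for all $\bz \in \reals^k$ and $\abs{f_s(\bv)} \le C_{k,s}\fronorm{\bT^{(s)}}$ (this already bakes in the nondegeneracy provided by Lemma~\ref{lem:tensor_eigenval}). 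Summing over $s$, the task reduces to representing, for each fixed $\bv \in \mathbb{S}^{k-1}$, the single monomial $\bz \mapsto \binner{\bv}{\bz}^s$ as a bias-average of $\sigma(\binner{\bv}{\bz} + b)$ on the region $\norm{\bz} \le r_b$; note that there $\abs{\binner{\bv}{\bz}} \le \norm{\bz} \le r_b$ since $\norm{\bv}=1$, so the univariate lemmas apply with $t \coloneqq \binner{\bv}{\bz}$.

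For a polynomial activation of degree $q \ge p$, I would invoke Lemma~\ref{lem:poly_approx_infty_width} with the monomial $t \mapsto t^s$ in place of $h$ (it satisfies $\abs{t^s} \le 1 + \abs{t}^p$, so $L_h = 1$, and the lemma's requirement $r_b \ge q$ is met in all downstream uses), yielding $\phi_s : [-r_b,r_b] \to \reals$ with $\int_{-r_b}^{r_b} \phi_s(b)\,\sigma(t+b)\,\dee b = t^s$ for all $t$ and $\abs{\phi_s(b)} \le C_q$. Setting $\hat{h}(\bv,b) \coloneqq \sum_{s=0}^p f_s(\bv)\phi_s(b)$ and applying Fubini, $\int_{\mathbb{S}^{k-1}\times[-r_b,r_b]}\hat{h}(\bv,b)\sigma(\binner{\bv}{\bz}+b)\,\dee\tau_k(\bv)\,\dee b = \sum_{s=0}^p \int_{\mathbb{S}^{k-1}} f_s(\bv)\binner{\bv}{\bz}^s\,\dee\tau_k(\bv) = h(\bz)$, and the triangle inequality gives $\abs{\hat{h}(\bv,b)} \le (p+1)\,(\max_{s\le p} C_{k,s})\,C_q\,\max_{s\le p}\fronorm{\bT^{(s)}} =: C_{k,q}\max_{s\le p}\fronorm{\bT^{(s)}}$.

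For the ReLU activation the only extra ingredient is the sign variable $a\in\{\pm1\}$ appearing in Lemma~\ref{lem:relu_approx_infty_width}: it produces $\phi_s : \{\pm1\}\times[-r_b,r_b] \to \reals$ with $\tfrac12\sum_{a\in\{\pm1\}}\int_{-r_b}^{r_b}\phi_s(a,b)\,\sigma(at+b)\,\dee b = t^s$ for $\abs{t}\le r_b$ and $\sup_{a,b}\abs{\phi_s(a,b)} \le r_b^{(s-2)\lor 0}$. Using $\sigma(a\binner{\bv}{\bz}+b) = \sigma(\binner{a\bv}{\bz}+b)$ together with the invariance of $\tau_k$ under $\bv\mapsto-\bv$, I would change variables $\bv\mapsto-\bv$ in the $a=-1$ summand so that both terms involve $\sigma(\binner{\bv}{\bz}+b)$; this gives $\hat{h}(\bv,b) \coloneqq \sum_{s=0}^p \tfrac12\big(f_s(\bv)\phi_s(+1,b) + f_s(-\bv)\phi_s(-1,b)\big)$, which again integrates to $h(\bz)$ on $\norm{\bz}\le r_b$ and satisfies $\abs{\hat{h}(\bv,b)} \le \sum_{s=0}^p C_{k,s}\fronorm{\bT^{(s)}}\,r_b^{(s-2)\lor0} \le C_k\,r_b^{(p-2)\lor0}\max_{s\le p}\fronorm{\bT^{(s)}}$ for $r_b\ge 1$.

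I do not expect a genuine obstacle here: the statement is a bookkeeping composition of lemmas already established. The only point that needs care is the ReLU sign-folding, where the reflection symmetry of the uniform measure $\tau_k$ on the sphere is what lets the $a=\pm1$ average be absorbed into the spherical integration; beyond that one just has to track which constants depend on $k,q,p$ and which summand carries the $r_b^{(p-2)\lor0}$ prefactor in the ReLU case.
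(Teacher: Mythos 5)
Your proposal is correct and matches the paper's own proof essentially line by line: both use Lemma~\ref{lem:monomial_approx} to express each order-$s$ term of $h$ as a $\tau_k$-average of monomials $\binner{\bv}{\bz}^s$, then compose with the univariate bias-averaging lemma (Lemma~\ref{lem:poly_approx_infty_width} for polynomial $\sigma$, Lemma~\ref{lem:relu_approx_infty_width} for ReLU), and handle the ReLU sign variable $a\in\{\pm1\}$ by the same $\bv\mapsto-\bv$ reflection argument using the symmetry of $\tau_k$. The only difference from the paper is notational ($f_s,\phi_s$ vs.\ $f_{1,s},f_{2,s}$), and your bookkeeping of the constants and the $r_b^{(p-2)\lor0}$ prefactor agrees with the paper's.
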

\begin{proof}
First, we consider the case where we use polynomial activations. Let
$$\hat{h}(\bv,b) = \sum_{s=0}^p f_{1,s}(\bv)f_{2,s}(b),$$
for $(f_{1,s})$ and $(f_{2,s})$ which we now determine. We choose $f_{2,s}$ according to and Lemma~\ref{lem:poly_approx_infty_width}, then
$$\int_{b=-r_b}^{r_b}f_{2,s}(b)\sigma(\binner{\bv}{\bz} + b)\dee b = \binner{\bv}{\bz}^s,$$
for all $\norm{\bz} \leq r_b / 2$, and $\abs{f_{2,s}(b)} \leq C_{s,q}$ for all $b$. Then, we choose $f_{1,s}$ according to Lemma~\ref{lem:monomial_approx}, which yields
$$\int_{\mathbb{S}^{k-1} \times [-r_b,r_b]}f_{1,s}(\bv)f_{2,s}(b)\sigma(\binner{\bv}{\bz} + b)\dee\tau_k(\bv)\dee b = \int f_{1,s}(\bv)\binner{\bv}{\bz}^s\dee\tau_k(\bv) = \bT^{(s)}[\bz^{\otimes s}],$$
for all $\norm{\bz} \leq r_b/2$. Additionally $\abs{f_{1,s}(\bv)} \leq C_{s,k}\fronorm{\bT^{(s)}}$, which completes the proof of the polynomial activation case.

Now, consider the case where we use the ReLU activation. Let
$$\hat{h}(\bv,b) = \sum_{s=0}^pg_s(\bv,b).$$
where
$$g_s(\bv,b) = \frac{1}{2}f_{1,s}(\bv)\tilde{f}_{2,s}(1,b) + \frac{1}{2}f_{1,s}(-\bv)\tilde{f}_{2,s}(-1,b)$$
with $f_{1,s}$ given above and $\tilde{f}_{2,s}$ introduced below.
Since $\bv$ and $-\bv$ have the same distribution, we have
\begin{align*}
    \int g_s(\bv,b)\sigma(\binner{\bv}{\bz})\dee b\,\dee\tau_k(\bv) &= \int \frac{1}{2}\left(f_{1,s}(\bv)\tilde{f}_{2,s}(1,b) + f_{1,s}(-\bv)\tilde{f}_{2,s}(-1,b)\right)\sigma(\binner{\bv}{\bz}\dee b\,\dee \tau_k(\bv)\\
    &= \int_{\mathbb{S}^{d-1}} f_{1,s}(\bv)\frac{1}{2}\left\{\int_{b=-r_b}^{r_b}\tilde{f}_{2,s}(1,b)\sigma(\binner{\bv}{\bz} + b) + \tilde{f}_{2,s}(-1,b)\sigma(-\binner{\bv}{\bz} + b)\dee b\right\}\dee \tau_k(\bv)\\
    &= \int f_{1,s}(\bv)\binner{\bv}{\bz}^s\dee\tau_k(\bv) = \bT^{(s)}[\bz^{\otimes s}].
\end{align*}
As a result, it suffices to choose $\tilde{f}_{2,s}$ according to Lemma~\ref{lem:relu_approx_infty_width}, which completes the proof of the corollary.

\end{proof}

As a last step in this section, we verify that one can indeed control $\max_{s \leq p}\fronorm{\bT^{(s)}}$ with an absolute constant when $h$ is the minimizer of the adversarial risk.

\begin{lemma}\label{lem:h_control}
    Suppose $\mathcal{F}$ is the class of degree $p$ polynomials on $\reals^d$. Let $\mathcal{H} = \{\bz \mapsto \Earg{f(\bx) \,|\, \bU\bx = \bz} \, : \, f \in \mathcal{F}\}$, and define
    $$h = \argmin_{h' \in \mathcal{H}}\Earg{\max_{\norm{\bdelta} \leq \varepsilon}(h'(\bU(\bx + \bdelta)) - y)^2}.$$
    Denote the decomposition of $h$ by $h(\bz) = \sum_{s=0}^p\bT^{(s)}[\bz^{\otimes s}]$. Then, $\fronorm{\bT^{(s)}} \leq C_{k,y}$, where $C_{k,y}$ is a constant depending only on $k$ and the target second moment $\Earg{y^2}$ (thus an absolute constant in our setting). As a consequence, we have $\abs{h(\bz)} \leq L_h(1 + \norm{\bz}^p)$ for all $\bz \in \reals^k$, where $L_h > 0$ is an absolute constant.
\end{lemma}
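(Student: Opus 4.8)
The plan is to compare the risk of the minimizer $h$ against the risk of the zero function. Since $h$ minimizes $\Earg{\max_{\norm{\bdelta} \leq \varepsilon}(h'(\bU(\bx + \bdelta)) - y)^2}$ over $h' \in \mathcal{H}$, and the constant zero function belongs to $\mathcal{H}$ (it is the conditional expectation of $f \equiv 0 \in \mathcal{F}$), we get
$$\Earg{\max_{\norm{\bdelta} \leq \varepsilon}(h(\bU(\bx + \bdelta)) - y)^2} \leq \Earg{\max_{\norm{\bdelta} \leq \varepsilon}y^2} = \Earg{y^2} \eqqcolon M.$$
In particular, taking $\bdelta = 0$ inside the max on the left, $\Earg{(h(\bU\bx) - y)^2} \leq M$, hence by the triangle inequality in $L^2$, $\Earg{h(\bU\bx)^2}^{1/2} \leq \Earg{(h(\bU\bx) - y)^2}^{1/2} + \Earg{y^2}^{1/2} \leq 2\sqrt{M}$. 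So $h$ has bounded $L^2(\mathcal{P}_{\bU\bx})$ norm.

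Next I would convert this $L^2$ bound into a bound on the tensor coefficients $\bT^{(s)}$. The key point is that $\bU\bx = \bx_\parallel$ is, by Assumption~\ref{assump:subgaussian} together with the independence structure, a zero-mean subGaussian random vector in $\reals^k$ whose covariance we may assume is nondegenerate (or handle the degenerate directions separately — on any degenerate direction $h$ can be taken constant in that coordinate). Restricting to a fixed $k = \mathcal{O}(1)$ and degree $p = \mathcal{O}(1)$, the map sending a degree-$p$ polynomial $h(\bz) = \sum_{s=0}^p \bT^{(s)}[\bz^{\otimes s}]$ to the vector of its coefficients is a linear isomorphism between the finite-dimensional space of such polynomials and $\reals^D$ for $D = D(k,p)$. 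The quadratic form $h \mapsto \Earg{h(\bx_\parallel)^2}$ is positive definite on this space (it vanishes only on the zero polynomial, since $\bx_\parallel$ has a density on its support, which spans at least the nondegenerate subspace), so its smallest eigenvalue $\lambda_{\min}(k,p,\mathcal{P}) > 0$ is a strictly positive constant in our setting. Therefore $\sum_{s}\fronorm{\bT^{(s)}}^2 \leq \lambda_{\min}^{-1}\Earg{h(\bx_\parallel)^2} \leq 4M/\lambda_{\min} \eqqcolon C_{k,y}^2$, giving $\fronorm{\bT^{(s)}} \leq C_{k,y}$ for every $s$.

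Finally, the growth bound $\abs{h(\bz)} \leq L_h(1 + \norm{\bz}^p)$ is immediate: $\abs{\bT^{(s)}[\bz^{\otimes s}]} \leq \fronorm{\bT^{(s)}}\norm{\bz}^s \leq C_{k,y}(1 + \norm{\bz}^p)$ by Cauchy--Schwarz on the vectorizations, and summing over $0 \leq s \leq p$ yields the claim with $L_h = (p+1)C_{k,y}$, an absolute constant. The main obstacle is the nondegeneracy argument: one must be careful that $\bx_\parallel$ genuinely excites all coefficient directions so that $\lambda_{\min}$ is bounded below by a quantity depending only on $k$, $p$, and $\Earg{y^2}$ — this is where the subGaussian (and, implicitly, non-atomic/full-support) nature of the input along the target coordinates is used, and if the input covariance along $\mathrm{span}(\bu_1,\ldots,\bu_k)$ is allowed to be ill-conditioned one would need to absorb that conditioning into the constant or renormalize the $\bu_i$.
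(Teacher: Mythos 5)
Your proof follows the same route as the paper's. The first step is identical: compare $h$ against the zero polynomial (which lies in $\mathcal{H}$) to obtain $\Earg{h(\bU\bx)^2}\leq 4\Earg{y^2}$ — the paper uses Cauchy--Schwarz to expand the square where you use the $L^2$ triangle inequality, but the conclusion is the same. For the second step the paper performs a change of measure from $\mu_{\bz}:=\mathrm{Law}(\bU\bx)$ to $\mathcal{N}(0,C_k\ident_k)$ and then reads off $\fronorm{\bT^{(s)}}$ via the Hermite decomposition of $h$; you argue instead that $h\mapsto\Earg{h(\bU\bx)^2}$ is a positive-definite quadratic form on the finite-dimensional space of degree-$p$ polynomials in $k$ variables, so that $\lambda_{\min}^{-1}\Earg{h(\bU\bx)^2}$ controls $\sum_s\fronorm{\bT^{(s)}}^2$. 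These two phrasings are the same idea: dominating the $L^2(\mu_{\bz})$ quadratic form by a fixed reference quadratic form on a finite-dimensional space.

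The gap you flag honestly at the end — that $\lambda_{\min}$ must be bounded below by a quantity depending only on $k$, $p$, and the subGaussian constant — is genuine, and it is present in the paper's proof as well. In fact the paper's change-of-measure step is written with the inequality in the wrong direction: it asserts $\frac{\dee\mu_{\bz}}{\dee\mathcal{N}(0,C_k\ident_k)}\leq C'_k$ (an \emph{upper} bound on the density ratio, which does follow from subGaussianity) and then concludes $\Eargs{\bz\sim\mathcal{N}(0,C_k\ident_k)}{h(\bz)^2}\leq 4C'_k\Earg{y^2}$; but passing from $\Eargs{\mu_{\bz}}{h^2}\leq 4\Earg{y^2}$ to a bound on $\Eargs{\mathcal{N}}{h^2}$ requires a \emph{lower} bound $\frac{\dee\mu_{\bz}}{\dee\mathcal{N}(0,C_k\ident_k)}\geq 1/C'_k$, which does not follow from Assumption~\ref{assump:subgaussian} alone (e.g.\ $\bU\bx$ could be discrete or supported on a lower-dimensional set while remaining subGaussian, and then your $\lambda_{\min}$ is zero). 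Both proofs become rigorous precisely in the setting where the lemma is actually invoked, namely $\bx\sim\mathcal{N}(0,\ident_d)$ with $\bU$ orthonormal, so that $\bU\bx\sim\mathcal{N}(0,\ident_k)$ exactly, the density ratio is identically $1$, and $\lambda_{\min}$ depends only on $k,p$. So your instinct to name this as the main obstacle is correct; it is not a cosmetic technicality, and if anything your write-up is more candid about it than the paper's.
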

\begin{proof}
    By comparing with the zero function, we have
    $$\Earg{(h(\bU\bx) - y)^2} \leq \Earg{\max_{\norm{\bdelta} \leq \varepsilon}(h(\bU(\bx + \bdelta)) - y)^2} \leq \Earg{y^2}.$$
    Furthermore, by the Cauchy-Schwartz inequality,
    $$\Earg{(h(\bU\bx) - y)^2} \geq \Earg{h(\bU\bx)^2} + \Earg{y^2} - 2\Earg{h(\bU\bx)^2}^{1/2}\Earg{y^2}^{1/2}.$$
    Combining the two inequalities above, we obtain $\Earg{h(\bU\bx)^2} \leq 4\Earg{y^2}$. Let $\bz \coloneqq \bU\bx$, and let $\mu_{\bz}$ be the marginal distribution of $\bz$. Then
    $$\Earg{h(\bz)^2} = \int h(\bz)^2\frac{\dee \mu_{\bz}}{\dee \mathcal{N}(0,C_k\ident_k)}(\bz)\dee \mathcal{N}(0,C_k\ident_k)(\bz).$$
    Further, by subGaussianity of $\bx$ and subsequent subGaussianity of $\bz$, we have $\frac{\dee \mu_{\bz}}{\dee \mathcal{N}(0,C_k)}(\bz) \leq C'_k < \infty$ for all $\bz$, when $C_k,C'_k$ are sufficiently large constants depending only on $k$. Therefore,
    $$\Eargs{\bz \sim \mathcal{N}(0,C_k\ident_k)}{h(\bz)^2} \leq 4C'_k\Earg{y^2}.$$
    The proof is completed by using the Hermite decomposition of $h$.
\end{proof}

\subsection{Approximating Multivariate Pseudo-Lipschitz Functions}
We now turn to the more general problem of approximating pseudo-Lipschitz functions. Specifically, when $\mathcal{F}$ satisfies Assumption~\ref{assump:Lip}, functions of the form $h(\bz) = \Earg{f(\bx) \,|\, \bU\bx = \bz}$ will be $L$-pseudo-Lipschitz. The following lemma investigates approximating such functions with infinite-width two-layer neural networks.
\begin{lemma}\label{lem:Lip_infty_width_approx}
    Suppose $h : \reals^k \to \reals$ is $L$-Lipschitz on $\norm{\bz} \leq r_z$ and $\sigma$ is the ReLU activation. Then, for every $\Delta \geq C_k$, there exists $\hat{h} : \mathbb{S}^{k-1} \times [-r_b,r_b] \to \reals$ such that
    $$\abs{h(\bz) - \int_{\mathbb{S}^{k-1} \times [-r_b,r_b]} \hat{h}(\bv,b)\sigma(\binner{\bv}{\bz} + b)\dee\tau_k(\bv)\dee b} \leq C_kLr_z\left\{\Big(\frac{\Delta}{Lr_z}\Big)^{\frac{-2}{k+1}}\log\frac{\Delta}{Lr_z} + \big(\frac{\Delta}{r_z}\big)^{\frac{2k}{k+1}}\big(\frac{r_z}{r_b}\big)^k\right\},$$
    for all $\norm{\bz} \leq r_z$. Furthermore, we have $\abs{\hat{h}(\bv,b)} \leq C_kL(\Delta / Lr_z)^{2k/(k+1)}/r_z$ for all $\bv$ and $b$, and
    $$\int_{\mathbb{S}^{k-1} \times [-r_b,r_b]} \hat{h}(\bv,b)^2\dee\tau_k(\bv)\dee b \leq \frac{C_k\Delta^2}{r_z^3}.$$
\end{lemma}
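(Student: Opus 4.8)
The plan is to approximate $h$ on the ball $\{\norm{\bz} \le r_z\}$ by a mollified piecewise-linear function and then express the latter as an infinite-width ReLU integral. First I would rescale to the unit ball: set $\tilde h(\bz) = h(r_z \bz)/(L r_z)$, which is $1$-Lipschitz on $\mathbb{S}^{k-1}$'s unit ball, so it suffices to approximate a generic $1$-Lipschitz function on $B_1^k$ and track how the rescaling reintroduces factors of $L$, $r_z$. The parameter $\Delta$ (after rescaling, the effective scale is $\Delta/(Lr_z)$) will play the role of a ``smoothness budget'': I would convolve $\tilde h$ with a smooth bump at scale $\rho \coloneqq (\Delta/(Lr_z))^{-2/(k+1)}$ to obtain $h_\rho$ with $\norm{h_\rho - \tilde h}_\infty \lesssim \rho$ and with second derivatives bounded by $\sim 1/\rho$ (hence third derivatives by $1/\rho^2$, etc.); the exponent $2/(k+1)$ is chosen so that the final bound balances the mollification error against the coefficient-size/tail cost, and the logarithmic factor $\log(\Delta/Lr_z)$ comes from the standard sharp rate for approximating Lipschitz functions by neural networks in dimension $k$ (this is essentially the Barron/Sobolev embedding threshold in $d=k$ dimensions, where the rate degrades by a log).

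Next I would invoke the classical integral representation of smooth functions by ReLU ridge functions: any sufficiently smooth $h_\rho$ on a ball admits a representation $h_\rho(\bz) = \int_{\mathbb{S}^{k-1}}\int_{\mathbb{R}} c(\bv,b)\,\sigma(\binner{\bv}{\bz}+b)\,db\,d\tau_k(\bv) + (\text{affine part})$, with $c$ expressed in terms of the Radon transform of the Laplacian-type operator $(-\Delta)^{(k+1)/2}$ applied to $h_\rho$ — this is the representation underlying the infinite-width results cited earlier (e.g.\ the univariate Lemmas~\ref{lem:relu_approx_infty_width} and the multivariate construction in Corollary~\ref{cor:multi_index_poly_infty_width}), extended from monomials to general smooth functions. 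The key quantitative point is that $\norm{c}_\infty \lesssim \rho^{-(k+1)/2}\cdot(\text{a dimensional constant})$ after accounting for the $(k{+}1)$ derivatives hitting $h_\rho$, which translates — after undoing the rescaling — to $\abs{\hat h(\bv,b)} \le C_k L (\Delta/(Lr_z))^{2k/(k+1)}/r_z$; likewise the $L^2$ bound $\int \hat h^2 \le C_k\Delta^2/r_z^3$ follows by integrating this sup bound (the exponents work out because $2\cdot\frac{2k}{k+1} - \frac{k-1}{k+1}\cdot(\text{correction}) $ reduces correctly — I would verify this algebra carefully). Finally, since I want biases confined to $[-r_b, r_b]$ rather than all of $\mathbb{R}$, I would truncate the $b$-integral: on $\{\norm{\bz}\le r_z\}$ the relevant bias range is $\abs{b}\lesssim r_z$, and the mass of $c(\bv,b)$ outside $[-r_b,r_b]$ contributes an error controlled by how fast $h_\rho$'s high-order derivatives decay — bounding this crudely by $\norm{c}_\infty$ times the volume of the discarded bias slab yields the second term $C_k L r_z (\Delta/r_z)^{2k/(k+1)}(r_z/r_b)^k$, and this is where the $(r_z/r_b)^k$ factor enters.

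The main obstacle I expect is making the integral representation genuinely quantitative with the right dependence on $\rho$ (hence $\Delta$) and on $k$: the qualitative ``smooth functions are infinite-width networks'' fact is standard, but pinning down that exactly $(k{+}1)/2$ ``half-derivatives'' are needed and that this produces $\norm{c}_\infty \sim \rho^{-(k+1)/2}$ requires either carefully invoking a known sharp version (e.g.\ via the dual Radon transform and fractional Laplacian, as in the ridgelet literature) or redoing the construction by hand from the univariate lemmas via a spherical tensor-product / Funk–Hecke argument analogous to Lemma~\ref{lem:monomial_approx}. A secondary nuisance is bookkeeping: three interacting error terms (mollification, bias truncation, and the constant/affine correction which must itself be realized by ReLU neurons with bounded bias, cf.\ Lemma~\ref{lem:relu_approx_infty_width}) all need to be simultaneously dominated, and the exponent $2/(k+1)$ has to be the unique choice that makes the mollification error $\rho$ and the truncation error match after rescaling — I would pick $\rho$ last, once both error expressions are in hand, rather than guessing it up front.
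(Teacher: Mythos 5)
Your construction (mollify $h$ at scale $\rho = (\Delta/(Lr_z))^{-2/(k+1)}$, represent the mollified function via a Radon/ridgelet-type inversion, then truncate the bias range to $[-r_b,r_b]$) is a genuinely different route from the paper's: the paper instead lifts $\bz$ to $\tilde{\bz} = (\bz^\top, r_z)^\top \in \reals^{k+1}$, invokes Bach's Proposition~6 to obtain a density $p$ on $\mathbb{S}^{k}$ that satisfies both the $L^\infty$ approximation bound \emph{and} the separate $L^2$ bound $\|p\|_{L^2(\tau_{k+1})}\le\Delta$, imports a uniform bound on $p$ from the construction in~\cite{mousavi2024learning}, and reads off all three conclusions through a single change of variables that exploits the positive homogeneity of the ReLU. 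The advantage of that route is that the $L^2$ norm of the density is tracked from the outset as its own quantity; your route does not do this, and that is where it breaks.

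Concretely, your claim that $\int_{\mathbb{S}^{k-1}\times[-r_b,r_b]}\hat h(\bv,b)^2\,\dee\tau_k(\bv)\,\dee b \le C_k\Delta^2/r_z^3$ ``follows by integrating the sup bound'' is incorrect and cannot be patched by bookkeeping. Writing $A = \Delta/(Lr_z)$, the target is $\Delta^2/r_z^3 = L^2A^2/r_z$, whereas integrating $\|\hat h\|_\infty^2 \lesssim (L\,A^{2k/(k+1)}/r_z)^2$ over $\mathbb{S}^{k-1}\times[-r_b,r_b]$ gives $\lesssim r_b\,L^2A^{4k/(k+1)}/r_z^2$, which exceeds the target whenever $r_b \gtrsim r_z\,A^{-2(k-1)/(k+1)}$. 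In the downstream use of this lemma (Proposition~\ref{prop:approx_multi_index}) one takes $r_b$ substantially larger than $r_z$ and $A \ge C_k$, so the naive integration overshoots badly for $k\ge 2$. The sup bound and the $L^2$ bound carry genuinely different information: the density peaks at height $\sim A^{2k/(k+1)}$ but has $L^2$ mass only $\sim A$, because (as is explicit in the paper's change of variables) it decays in $b$ like $(r_z^2+b^2)^{-(k+2)/2}$. To fix your argument you would have to propagate an $L^2$ estimate of the coefficient function directly through the Radon/ridgelet inversion rather than infer it from $\|\hat h\|_\infty$. As a smaller slip: applying $k+1$ derivatives to the mollified $h_\rho$, whose $j$-th derivative scales like $\rho^{1-j}$, yields $\rho^{-k}$, not $\rho^{-(k+1)/2}$; the former is what matches $A^{2k/(k+1)}$ under $\rho = A^{-2/(k+1)}$.
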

\begin{proof}
    Let $\tilde{\bz} \coloneqq (\bz^\top,r_z)^\top \in \reals^{k+1}$. By~\cite[Proposition 6]{bach2017breaking}, we know that for all $\Delta \geq C_k$, there exists $p : \mathbb{S}^k \to \reals$, such that $\norm{p}_{L^2(\tau_{k+1})} \leq \Delta$ and
    $$\abs{h(\bz) - \int_{\mathbb{S}^{k}}p(\tilde{\bv})\sigma\Big(\frac{\binner{\tilde{\bv}}{\tilde{\bz}}}{r_z}\Big)\dee\tau_{k+1}(\tilde{\bv})} \leq C_kLr_z\Big(\frac{\Delta}{Lr_z}\Big)^{\frac{-2}{k+1}}\log\frac{\Delta}{Lr_z},$$
    for all $\norm{\bz} \leq r_z$. Furthermore, the proof of~\cite[Proposition 19]{mousavi2024learning} demonstrated that
    $$\abs{p(\tilde{\bv})} \leq C_kLr_z\Big(\frac{\Delta}{Lr_z}\Big)^{\frac{2k}{k+1}}, \quad \forall\, \tilde{\bv} \in \mathbb{S}^k.$$
    Let $\tilde{\bv} = (\tilde{\bv}_{1:k}^\top,\tilde{v}_{k+1})^\top$ be the decomposition of $\tilde{\bv}$ into its first $k$ and last coordinate. Then, we will use the fact that for $\tilde{\bv} \sim \Unif(\mathbb{S}^k)$ when conditioned on $\tilde{v}_{k+1}$, by symmetry $\frac{\bv_{1:k}}{\norm{\bv_{1:k}}}$ is uniformly distributed on $\mathbb{S}^{k-1}$. 
    In other words, let $\bv \sim \Unif(\mathbb{S}^{k-1})$ and $\tilde{b} \sim \rho_{k+1}$ independently, where we choose $\rho_{k+1}$ such that $\frac{\tilde{b}}{\sqrt{1+\tilde{b}^2}}$ has the same marginal distribution as $\tilde{v}_{k+1}$. 
    Since the marginal distribution of $\tilde{v}_{k+1}$ is given by $\dee \mathbb{P}(\tilde{v}_{k+1}) \propto (1 - \tilde{v}^2_{k+1})^{(k-2)/2}\dee\tilde{v}_{k+1}$, we have $\rho_{k+1}(\tilde{b}) = Z_k(1 + \tilde{b}^2)^{-(k+1)/2}$, where $Z_k$ is the normalizing constant.
    Then, $\tilde{\bv} = \mathrm{T}(\bv,\tilde{b})$ is distributed uniformly on $\mathbb{S}^k$, where $\mathrm{T} : \mathbb{S}^{k-1} \times \reals \to \mathbb{S}^k$ is given by $\mathrm{T}(\bv,\tilde{b}) = \frac{1}{\sqrt{1+\tilde{b}^2}}\big(\bv^\top,\tilde{b}\big)$. As a result,
    \begin{align*}
        \int p(\tilde{\bv})\sigma\Big(\frac{\binner{\tilde{\bv}}{\tilde{\bz}}}{r_z}\Big)\dee\tau_{k+1}(\tilde{\bv}) &= \int p( \mathrm{T}(\bv,\tilde{b}))\sigma\Big(\frac{\binner{\bv}{\bz} + \tilde{b}r_z}{r_z\sqrt{1+\tilde{b}^2}}\Big)\dee\tau_k(\bv)\dee\rho_{k+1}(\tilde{b})\\\\
        &= Z_k\int_{\mathbb{S}^{k-1} \times \reals} \frac{p(\mathrm{T}(\bv,\tilde{b}))}{r_z\sqrt{1+\tilde{b}^2}}\cdot \frac{1}{(1 + \tilde{b}^2)^{(k+1)/2}}\sigma(\binner{\bv}{\bz} + \tilde{b}r_z)\dee\tau_k(\bv)\dee\tilde{b}\\
        &= Z_k\int_{\mathbb{S}^{k-1} \times \reals}\frac{r_z^kp(\mathrm{T}(\bv,b/r_z))}{(r_z^2 + b^2)^{(k+2)/2}}\sigma(\binner{\bv}{\bz} + b)\dee \tau_k(\bv)\dee b.
    \end{align*}
    Therefore, our choice of $\hat{h}$ will be
    $$\hat{h}(\bv,b) = Z_k\frac{r^k_zp(\mathrm{T}(\bv,b/r_z))}{(r_z^2 + b^2)^{(k+2)/2}}.$$
    Next, we bound the following error term due to cutoff of bias,
    $$\mathcal{E} \coloneqq Z_k\abs{\int_{\mathbb{S}^{k-1} \times (\reals \setminus [-r_b,r_b])} \frac{r_z^kp(\mathrm{T}(\bv,b/r_z))}{(r_z^2 + b^2)^{(k+2)/2}}\sigma(\binner{\bv}{\bz} + b)\dee\tau_k(\bv)\dee b}.$$
    We have
    \begin{align*}
        \mathcal{E} &\lesssim C_kLr_z\big(\frac{\Delta}{Lr_z}\big)^{\frac{2k}{k+1}}\int_{\abs{b} > r_b}\frac{r_z^k(r_z + \abs{b})}{(r_z^2 + b^2)^{(k+2)/2}}\dee b\\
        &\lesssim C_kLr_z\big(\frac{\Delta}{Lr_z}\big)^{\frac{2k}{k+1}}\int_{\abs{b} > r_b}\frac{r_z^k}{(r_z^2 + b^2)^{(k+1)/2}}\dee b\\
        &\lesssim C_k\Delta^{\frac{2k}{k+1}}\int_{\abs{b} > r_b}\frac{r_z^k}{b^{k+1}}\dee b\\
        &\lesssim C_kLr_z\big(\frac{\Delta}{r_z}\big)^{\frac{2k}{k+1}}\big(\frac{r_z}{r_b}\big)^k.
    \end{align*}
    Finally, we prove the guarantees provided for $\hat{h}$. The uniform bound on $\abs{\hat{h}(\bv,b)}$ follows directly by plugging in the uniform bound on $p$. For the $L^2$ bound on $\hat{h}$, we have
    \begin{align*}
        \int_{\mathbb{S}^{k-1} \times [-r_b,r_b]} \hat{h}(\bv,b)^2\dee\tau_k(\bv)\dee b &\leq \int_{\mathbb{S}^{k-1} \times \reals}\hat{h}(\bv,b)^2\dee\tau_k(\bv)\dee b\\
        &= \int \frac{Z_k^2r_z^{2k}p(\mathrm{T}(\bv,b/r_z))^2}{(r_z^2 + b^2)^{k+2}}\dee\tau_k(\bv)\dee b\\
        &= \int \frac{Z_k^2p(\mathrm{T}(\bv,\tilde{b}))^2}{r_z^3(1 + \tilde{b}^2)^{k+2}}\dee\tau_k(\bv)\dee\tilde{b}\\
        &= \frac{Z_k}{r_z^3}\int \frac{p(\mathrm{T}(\bv,\tilde{b}))^2}{(1 + \tilde{b}^2)^{(k+3)/2}}\dee\tau_k(\bv)\dee\rho_{k+1}(\tilde{b})\\
        &= \frac{Z_k}{r_z^3}\int (1-\tilde{v}_{k+1}^2)^{(k+3)/2}p(\tilde{\bv})^2\dee\tau_{k+1}(\tilde{\bv})\\
        &\leq \frac{Z_k\norm{p}^2_{L^2(\tau_{k+1})}}{r_z^3} \leq \frac{Z_k\Delta^2}{r_z^3},
    \end{align*}
    completing the proof.
\end{proof}

\subsection{Discretizing Infinite-Width Approximations}
In this section, we provide finite-width guarantees corresponding to the infinite-width approximations proved earlier. Define the following integral operator
\begin{equation}
    \mathcal{T}\hat{h}(\bz) = \int_{\mathbb{S}^{k-1} \times [-r_b,r_b]}\hat{h}(\bv,b)\sigma(\binner{\bv}{\bz} + b)\dee\tau_k(\bv)\dee b.
\end{equation}
The type of discretization error depends on whether we are using the $\aDFL$ or the $\abSFL$ oracle. We first cover the case of $\aDFL$ oracles.

\begin{proposition}[Approximation by Riemann Sum]\label{prop:finite_width_riemann}
    Suppose $\sigma$ satisfies~\eqref{eq:psuedo_Lip_sigma}. Let $(\bw_1,\hdots,\bw_N)$ be the first layer weights obtained from the $\aDFL$ oracle (Definition~\ref{def:fl}), and define $\bv_i = \frac{\bU\bw_i}{\norm{\bU\bw_i}}$ for $i \in [N]$.
    Suppose $(b_j)_{j \in [N]} \stackrel{\mathrm{i.i.d.}}{\sim} \Unif(-r_b,r_b)$, and let $\Vert\hat{h}\Vert_{\infty} \coloneqq \sup_{\bv,b}\abs{\hat{h}(\bv,b)}$. Then, there exists $\ba^*$ such that $a^*_j = 0$ for $j \notin S$ and $\abs{a^*_j} \leq C_k\Vert\hat{h}\Vert_{\infty} r_b\log(\alpha N/(\zeta \delta))/(\alpha N)$ for $j \in S$ (where $S$ is given by Definition~\ref{def:packing}), and
    \begin{equation}
        \abs{\sum_{j \in S}a^*_j\sigma(\binner{\bv_j}{\bz} + b_j) - \mathcal{T}\hat{h}(\bz)} \leq C_{\bar{q}}\Vert\hat{h}\Vert_{\infty} L_\sigma r_b^{\bar{q}}\Big(r_z\sqrt{\zeta} + \frac{r_b\log(N/\delta)}{\zeta^{(k-1)/2}\alpha N}\Big),
    \end{equation}
    for all $\bz \in \reals^k$ where $\norm{\bz} \leq r_z \leq r_b$, with probability at least $1-\delta$ over the randomness of biases.
\end{proposition}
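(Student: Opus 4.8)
The plan is to construct $\ba^*$ by a two–scale quadrature. The integral $\mathcal{T}\hat h(\bz)=\int_{\mathbb{S}^{k-1}\times[-r_b,r_b]}\hat h(\bv,b)\sigma(\binner{\bv}{\bz}+b)\dee\tau_k(\bv)\dee b$ will be discretized over the sphere using the packing structure that the $\aDFL$ oracle forces on the projected directions $\bv_j\coloneqq\bU\bw_j/\norm{\bU\bw_j}$, and over the bias axis by using, inside each packing cell, the i.i.d.\ biases as Riemann–sum nodes. First I fix the sphere partition: let $(V_i)_{i=1}^M$ be the Voronoi cells of the maximal $2\sqrt{2\zeta}$–packing $(\bar{\bv}_i)$ from Definition~\ref{def:packing}, so that $\bar{\bv}_i\in V_i$, $\mathrm{diam}(V_i)\lesssim\sqrt\zeta$, and $\tau_k(V_i)\asymp1/M\asymp\zeta^{(k-1)/2}$ with constants depending only on $k$. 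With $(S_i)$ as in Definition~\ref{def:packing}, applying Definition~\ref{def:fl} to the unit vector $\bU^\top\bar{\bv}_i\in\mathrm{span}(\bu_1,\dots,\bu_k)$ and using $\binner{\bw_j}{\bU^\top\bar{\bv}_i}\ge1-\zeta\implies\norm{\bv_j-\bar{\bv}_i}\le\sqrt{2\zeta}\implies j\in S_i$ gives $\abs{S_i}\ge\alpha\zeta^{(k-1)/2}N$ for every $i$, hence $\abs{S}=\sum_i\abs{S_i}\gtrsim\alpha N$.

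Next I discretize the biases cell by cell. Inside cell $i$ the biases $\{b_j:j\in S_i\}$ are i.i.d.\ $\Unif(-r_b,r_b)$; let $(w_j)_{j\in S_i}$ be the lengths of the sub-intervals of $[-r_b,r_b]$ in the partition determined by the order statistics of these biases, so $\sum_{j\in S_i}w_j=2r_b$ and each $b_j$ lies in its own sub-interval. By the largest–spacing tail bound for uniform order statistics together with a union bound over the $M\lesssim\zeta^{-(k-1)/2}$ cells, with probability at least $1-\delta$ one has $\max_{j\in S_i}w_j\lesssim r_b\log(\alpha N/(\zeta\delta))/\abs{S_i}$ for all $i$ at once; I work on this event, and I note that it does not depend on $\bz$, so the final bound will hold for all $\norm{\bz}\le r_z$ simultaneously, with no further discretization of the ball. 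I then set $a^*_j\coloneqq\tau_k(V_{i})\,w_j\,\hat h(\bar{\bv}_{i},b_j)$ for $j\in S_i$ and $a^*_j=0$ otherwise. The weight bound follows from $\tau_k(V_i)\lesssim\zeta^{(k-1)/2}$ and $\abs{S_i}\ge\alpha\zeta^{(k-1)/2}N$, giving $\abs{a^*_j}\le\tau_k(V_i)\big(\max_{j'\in S_i}w_{j'}\big)\norm{\hat h}_\infty\lesssim C_k\norm{\hat h}_\infty r_b\log(\alpha N/(\zeta\delta))/(\alpha N)$; likewise $\sum_j\abs{a^*_j}\le\sum_i\tau_k(V_i)\sum_{j\in S_i}w_j\norm{\hat h}_\infty=2r_b\norm{\hat h}_\infty$ since $\sum_i\tau_k(V_i)=1$.

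It then remains to bound $\bigl\lvert\sum_{j\in S}a^*_j\sigma(\binner{\bv_j}{\bz}+b_j)-\mathcal{T}\hat h(\bz)\bigr\rvert$, which I do by the triangle inequality through $A_1\coloneqq\sum_{j\in S}a^*_j\sigma(\binner{\bar{\bv}_{i(j)}}{\bz}+b_j)$ and $A_2\coloneqq\sum_i\tau_k(V_i)\int_{-r_b}^{r_b}\hat h(\bar{\bv}_i,b)\sigma(\binner{\bar{\bv}_i}{\bz}+b)\dee b$. Since $\norm{\bz}\le r_z\le r_b$, every argument $\binner{\bv}{\bz}+b$ appearing lies in $[-2r_b,2r_b]$. (i) Replacing $\bv_j$ by $\bar{\bv}_{i(j)}$ inside $\sigma$: by~\eqref{eq:psuedo_Lip_sigma} and $\norm{\bv_j-\bar{\bv}_{i(j)}}\le\sqrt{2\zeta}$ this costs at most $\big(\sum_j\abs{a^*_j}\big)C_{\bar{q}}L_\sigma r_b^{\bar{q}-1}\sqrt{2\zeta}\,r_z\lesssim C_{\bar{q}}\norm{\hat h}_\infty L_\sigma r_b^{\bar{q}}\sqrt\zeta\,r_z$, which is the first stated term. (ii) $A_1\to A_2$ is a one–dimensional Riemann–sum estimate inside each cell, of error $\lesssim(\max_{j\in S_i}w_j)$ times the total variation of $b\mapsto\hat h(\bar{\bv}_i,b)\sigma(\binner{\bar{\bv}_i}{\bz}+b)$ on $[-r_b,r_b]$; bounding this variation by $C_{\bar{q}}\norm{\hat h}_\infty L_\sigma r_b^{\bar{q}}$, multiplying by $\tau_k(V_i)$, summing, and using $\sum_i\tau_k(V_i)\max_{j\in S_i}w_j\lesssim r_b\log(N/\delta)/(\zeta^{(k-1)/2}\alpha N)$ produces the second stated term. (iii) $A_2\to\mathcal{T}\hat h(\bz)=\sum_i\int_{V_i}\int_{-r_b}^{r_b}\hat h(\bv,b)\sigma(\binner{\bv}{\bz}+b)\dee b\,\dee\tau_k(\bv)$: the quadrature weights are exactly $\tau_k(V_i)$, so the error equals $\sum_i\int_{V_i}\int_{-r_b}^{r_b}\bigl[\hat h(\bar{\bv}_i,b)\sigma(\binner{\bar{\bv}_i}{\bz}+b)-\hat h(\bv,b)\sigma(\binner{\bv}{\bz}+b)\bigr]\dee b\,\dee\tau_k(\bv)$, whose $\sigma$–part is $O(\sqrt\zeta\,r_z)$ exactly as in (i) and whose $\hat h$–part is controlled by the modulus of continuity of $\bv\mapsto\hat h(\bv,b)$ over $V_i$, again of order $\sqrt\zeta$ after integration; both are absorbed into the first term. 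Summing (i)–(iii) yields the claim.

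The main obstacle is step (ii): obtaining the $1/N$ rate in the bias direction — rather than the $1/\sqrt N$ of a plain Monte–Carlo bound — forces the random–node Riemann–sum argument, which requires the largest–spacing estimate for uniform order statistics, a uniform union bound over all $M\asymp\zeta^{-(k-1)/2}$ cells, and, the delicate part, a total–variation bound for $b\mapsto\hat h(\bar{\bv}_i,b)\sigma(\binner{\bar{\bv}_i}{\bz}+b)$ that is uniform in $\bz$ and in $i$; here one uses that the $\hat h$ supplied by Lemmas~\ref{lem:relu_approx_infty_width}, \ref{lem:poly_approx_infty_width}, \ref{lem:Lip_infty_width_approx} and Corollary~\ref{cor:multi_index_poly_infty_width} is, as a function of $b$, a sum of finitely many monotone or Lipschitz pieces with variation controlled by $\norm{\hat h}_\infty$, composed against the pseudo-Lipschitz $\sigma$. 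The remaining work is bookkeeping: arranging the constants so that the three error contributions collapse into precisely the two terms in the statement, and tracking the $k$- and $\bar{q}$-dependent factors.
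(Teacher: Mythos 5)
Your construction closely parallels the paper's — Voronoi cells of the same packing on $\mathbb{S}^{k-1}$, random biases as quadrature nodes, and the same $|S_i|\ge \alpha\zeta^{(k-1)/2}N$ lower bound via applying Definition~\ref{def:fl} to $\bu=\bU^\top\bar{\bv}_i$ — but there is a real gap in how you define the weights. You set $a^*_j=\tau_k(V_{i})\,w_j\,\hat h(\bar{\bv}_{i},b_j)$, i.e.\ a point evaluation of $\hat h$ scaled by the measure of the assigned rectangle. That choice forces your steps (ii) and (iii) to control, respectively, the total variation of $b\mapsto\hat h(\bar{\bv}_i,b)\sigma(\binner{\bar{\bv}_i}{\bz}+b)$ and the modulus of continuity of $\bv\mapsto\hat h(\bv,b)$. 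Neither quantity is under control from the hypotheses: the proposition assumes only $\Vert\hat h\Vert_\infty<\infty$, and the final bound is stated purely in terms of $\Vert\hat h\Vert_\infty$. For a merely bounded $\hat h$ the Riemann-sum error in (ii) can be $\Theta(\Vert\hat h\Vert_\infty)$ regardless of mesh, and (iii) does not shrink with $\zeta$ at all. You acknowledge this by appealing to the specific $\hat h$'s produced elsewhere in the paper, but that turns the proposition from a self-contained quadrature lemma into a statement whose truth depends on downstream lemmas, and even then you would need explicit, uniform (in $\bz,i$) TV and Lipschitz bounds that you have not established (and which, e.g.\ for the $p(\mathrm{T}(\bv,\cdot))$ appearing in Lemma~\ref{lem:Lip_infty_width_approx}, are not obviously available).

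The paper sidesteps this entirely: it sets $a^*_j=\int\hat h(\bv,b)\,\indic{i=\Pi_1(\bv),\,b_j=\Pi_2(i,b)}\,\dee\tau_k(\bv)\dee b$, i.e.\ the \emph{integral} of $\hat h$ over the region assigned to neuron $j$, rather than a point value. With this choice the $\hat h$-contribution is captured exactly, and the approximation error reduces to a single integral of $\hat h(\bv,b)\big(\sigma(\binner{\bv_j}{\bz}+b_j)-\sigma(\binner{\bv}{\bz}+b)\big)$, which is bounded using only the pseudo-Lipschitzness of $\sigma$, the diameters $\norm{\bv-\bv_j}\lesssim\sqrt\zeta$, $|b-b_j|\lesssim r_b/A$, and $\int|\hat h|\le 2r_b\Vert\hat h\Vert_\infty$. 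No regularity of $\hat h$ is used. If you replace your weight definition with this integrated version, your decomposition collapses to a single step and the gap disappears; the bias-spacing and $|a^*_j|$ bookkeeping you already did then goes through unchanged, since the assigned region has $\tau_k\otimes\mathrm{Leb}$-measure at most $C_k\zeta^{(k-1)/2}\cdot\max_jw_j$.
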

\begin{proof}
    The proof is a multivariate version of the argument given in~\cite[Lemma 29]{oko2024learning}. Let $\{\bar{\bv}_i\}_{i=1}^M$ be the maximal $2\sqrt{2\zeta}$-packing of $\mathbb{S}^{k-1}$ from Definition~\ref{def:packing}, which is also a $2\sqrt{2\zeta}$-covering of $\mathbb{S}^{k-1}$. Recall from Definition~\ref{def:packing} that $M \leq C_k(\frac{1}{\zeta})^{(k-1)/2}$.

    For every $i \in [M]$, define
    $$S_i \coloneqq \{j \in [N], \norm{\bv_j - \bar{\bv}_i} \leq \sqrt{2\zeta}\}.$$
    Note that by definition of packing and Definition~\ref{def:fl}, each $\bv_j$ can only belong to exactly one of $S_i$ when $j \in S$, meaning that $(S_i)$ are disjoint and $\bigcup_{i \in [M]} S_i = S$. In particular, $\abs{S_i}/N \geq \zeta^{(k-1)/2}\alpha$, and $\abs{S_i}/N \leq 1/M \leq \zeta^{(k-1)/2}/c_k$.

    We want each group of biases $(b_j)_{j \in S_i}$ to cover the interval $[-r_b,r_b]$. We divide this interval into $2A$ subintervals of the form $[-r_b(1 + \frac{l}{A}),r_b(1 + \frac{l+1}{A}))$ for $0 \leq l \leq 2A - 1$. 
    When $b_j \stackrel{\mathrm{i.i.d.}}{\sim} \Unif(-r_b,r_b)$, by a union bound, the probability that there exists some subinterval and some $S_i$ such that the subinterval contains no element of $\{b_j : j \in S_i\}$ is at most $2A\sum_{i=1}^M(1 - \frac{1}{2A})^{\abs{S_i}}$. 
    Thus, taking $A \leq \lfloor \frac{\abs{S_i}}{2\log(\abs{S_i}M/\delta)}\rfloor$ for all $i \in [M]$ guarantees that all subintervals have at least one bias from every $S_i$ inside them with probability at least $1 - \delta$.

    Next, we define $\Pi_1 : \mathbb{S}^{k-1} \to \mathbb{S}^{k-1}$ as the projection onto the packing, i.e.\ $\Pi_1(\bv) = \argmin_{\{\bar{\bv}_i : i \in [M]\}}\norm{\bv - \bar{\bv}_i}$. Further, we define $\Pi_2 : [M] \times [-r_b,r_b] \to [-r_b,r_b]$ by $\Pi_2(i,b) = \argmin_{\{b_j : j \in S_i\}}\abs{b - b_j}$. Tie braking can be performed by choosing any of the answers. By definition, we have $\norm{\bv - \Pi_1(\bv)} \leq 2\sqrt{2\zeta}$, and additionally $\abs{b - \Pi_2(i,b)} \leq r_b/A$ for all $i \in [M]$ on the event described above.
    
    We are now ready to construct $\ba^*$. Specifically, let
    $$
    a^*_j = \begin{cases}
        \int \hat{h}(\bv,b)\indic{i = \Pi_1(\bv), b_j = \Pi_2(i,b)}\dee\tau_k(\bv)\dee b & \text{if } j \in S_i \text{ for some } i,\\
        0 & \text{if } j \notin S.
    \end{cases}
    $$
    Note that by definition,
    $$\sum_{i=1}^M\sum_{j \in S_i}\indic{i = \Pi_1(\bv), b_j = \Pi_2(i,b)} = 1,$$
    For conciseness, we define $E(\bv,i,j) = \indic{i = \Pi_1(\bv), b_j = \Pi_2(i,b)}$. When $j \in S_i$, on the event $E(\bv,i,j)$ we have
    $$\norm{\bv - \bv_j} \leq \norm{\bv - \bar{\bv}_i} + \norm{\bar{\bv}_i - \bv_j} \leq 3\sqrt{2\zeta}.$$
    Moreover, since $\mathbb{P}_{\bv \sim \tau_k}[\norm{\bv - \bar{\bv}_i} \leq 2\sqrt{2\zeta}] \leq C_k\zeta^{(k-1)/2}$, for $j \in S$ we have
    $$\abs{a^*_j} \leq \frac{C_k\Vert\hat{h}\Vert_{\infty}\zeta^{(k-1)/2}r_b}{A} \leq \frac{C_k\Vert\hat{h}\Vert_{\infty} r_b\log(N/\delta)}{\alpha N}.$$
    As a result,
    \begin{align*}
        \abs{\sum_{j \in S}a^*_j\sigma(\binner{\bv_j}{\bz} + b_j) - \mathcal{T}\hat{h}(z)} &= \abs{\sum_{j \in S}a^*_j\sigma(\binner{\bv_j}{\bz} + b_j) - \int \hat{h}(\bv,b)\sigma(\binner{\bv}{\bz} + b)\dee\tau_k(\bv)\dee b}\\
        &= \abs{\sum_{i=1}^M\sum_{j \in S_i}\int \hat{h}(\bv,b)E(\bv,i,j)\left(\sigma(\binner{\bv_j}{\bz} + b_j) - \sigma(\binner{\bv}{\bz} + b)\right)\dee\tau_k(\bv)\dee b}\\
        &\lesssim C_{\bar{q}}\Vert\hat{h}\Vert_{\infty} L_\sigma r_b^{\bar{q}}(r_z\sqrt{\zeta} + \frac{r_b}{A}), 
    \end{align*}
    for all $\norm{\bz} \leq r_z$, where we used the fact that $\sigma(z)$ is $\mathcal{O}(L_\sigma r_b^{\bar{q}-1})$ Lipschitz when restricted to $\abs{z} \leq r_b$. This concludes the proof.
\end{proof}

Next, we provide a discretization guarantee when using $\abSFL$ oracles.
\begin{proposition}\label{prop:finite_width_concentration}
    Consider the same setting as Proposition~\ref{prop:finite_width_riemann}, except the first-layer weights $(\bw_1,\hdots,\bw_N)$ are obtained from the $\abSFL$ oracle (Definition~\ref{def:sfl}). 
    Then, there exists $\ba^*$ such that $a^*_i = 0$ for $i \notin S$ and $\abs{a^*_i} \leq \Vert\hat{h}\Vert_{\infty} r_b/(\beta \alpha N)$ for $i \in S$, and
    $$\abs{\sum_{j \in S}a^*_j\sigma(\binner{\bv_j}{\bz} + b_j) - \mathcal{T}\hat{h}(\bz)} \leq \frac{C_{\bar{q}}L_\sigma\Vert\hat{h}\Vert_\infty r_b^{\bar{q} + 1}}{\beta}\sqrt{\frac{\log(\alpha N/\delta)}{\alpha N}},$$
    for all $\bz \in \reals^k$ with $\norm{\bz} \leq r_b$, with probability at least $1-\delta$ over the randomness of $(\bv_i,b_i)_{i\in [N]}$.
    Moreover, suppose $\Eargs{\bv,b \sim \tau_k \otimes \Unif(-r_b,r_b)}{\hat{h}(\bv,b)^2} \leq M_2(\hat{h})^2$. Then, assuming $\frac{\dee \mu}{\dee \tau_k} \leq \beta'$, we have
    \begin{align*}
        \norm{\ba^*}^2 \lesssim \frac{r_b^2\beta' M_2(\hat{h})^2}{\alpha\beta^2 N}, && \text{provided that}, && N \gtrsim \frac{\Vert\hat{h}\Vert_{\infty}^4\log(1/\delta)}{\alpha {\beta'}^2 M_2(\hat{h})^4},
    \end{align*}
    which also holds with probability at least $1-\delta$.
\end{proposition}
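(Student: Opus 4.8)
The plan is to realize $\mathcal{T}\hat{h}$ as an importance-weighted Monte Carlo average over the random atoms $(\bv_j,b_j)_{j\in S}$ produced by the $\abSFL$ oracle. Since $\frac{\dee\mu}{\dee\tau_k}\geq\beta>0$ we have $\tau_k\ll\mu$ with $\frac{\dee\tau_k}{\dee\mu}\leq 1/\beta$, so, writing the Lebesgue measure on $[-r_b,r_b]$ as $2r_b$ times $\Unif(-r_b,r_b)$,
$$\mathcal{T}\hat{h}(\bz)=2r_b\,\Eargs{\bv\sim\mu,\,b\sim\Unif(-r_b,r_b)}{\hat{h}(\bv,b)\,\tfrac{\dee\tau_k}{\dee\mu}(\bv)\,\sigma(\binner{\bv}{\bz}+b)}.$$
Accordingly, I would set $a^*_j\coloneqq\frac{2r_b}{\abs{S}}\hat{h}(\bv_j,b_j)\frac{\dee\tau_k}{\dee\mu}(\bv_j)$ for $j\in S$ and $a^*_j\coloneqq 0$ otherwise. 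Conditionally on the oracle's output, $(\bv_j,b_j)_{j\in S}$ are i.i.d.\ from $\mu\otimes\Unif(-r_b,r_b)$ and $\abs{S}\geq\alpha N$, so $\sum_{j\in S}a^*_j\sigma(\binner{\bv_j}{\bz}+b_j)$ is an unbiased estimator of $\mathcal{T}\hat{h}(\bz)$ for each fixed $\bz$, and $\abs{a^*_j}\leq 2r_b\Vert\hat{h}\Vert_\infty/(\beta\abs{S})\leq 2r_b\Vert\hat{h}\Vert_\infty/(\alpha\beta N)$, matching the claimed coefficient bound.

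For the uniform approximation bound I would establish a pointwise bound and then pass to a net. For fixed $\bz$ with $\norm{\bz}\leq r_b$, since $\norm{\bv_j}=1$ and $\abs{b_j}\leq r_b$ we have $\abs{\binner{\bv_j}{\bz}+b_j}\leq 2r_b$, so \eqref{eq:psuedo_Lip_sigma} (with $\sigma(0)=0$) gives $\abs{\sigma(\binner{\bv_j}{\bz}+b_j)}\leq C_{\bar{q}}L_\sigma r_b^{\bar{q}}$; hence each summand is at most $C_{\bar{q}}L_\sigma\Vert\hat{h}\Vert_\infty r_b^{\bar{q}+1}/(\beta\abs{S})$ in absolute value and Hoeffding gives a deviation at most $\frac{C_{\bar{q}}L_\sigma\Vert\hat{h}\Vert_\infty r_b^{\bar{q}+1}}{\beta}\sqrt{\log(1/\delta')/(\alpha N)}$ with probability $1-\delta'$. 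To make this uniform over $\{\norm{\bz}\leq r_b\}\subset\reals^k$, I would use that $\sigma$ restricted to $[-2r_b,2r_b]$ is $C_{\bar{q}}L_\sigma r_b^{\bar{q}-1}$-Lipschitz, so both $\bz\mapsto\sum_{j\in S}a^*_j\sigma(\binner{\bv_j}{\bz}+b_j)$ and $\bz\mapsto\mathcal{T}\hat{h}(\bz)$ are $C_{\bar{q}}L_\sigma r_b^{\bar{q}}\Vert\hat{h}\Vert_\infty/\beta$-Lipschitz on that ball (using $\sum_{j\in S}\abs{a^*_j}\leq 2r_b\Vert\hat{h}\Vert_\infty/\beta$ and $\int\dee b=2r_b$). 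Taking a $\rho$-net with $\rho\asymp r_b/\sqrt{\alpha N}$ (cardinality $(Cr_b/\rho)^k=(C\sqrt{\alpha N})^{k}$), applying the pointwise bound with $\delta'=\delta/(\text{net size})$ and a union bound, and adding the off-net error $\lesssim(\text{Lipschitz constant})\cdot\rho$ yields the stated sup bound, with $k=\mathcal{O}(1)$ absorbing the net cardinality into the logarithm.

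For $\norm{\ba^*}^2$, I would write $\norm{\ba^*}^2=\frac{4r_b^2}{\abs{S}^2}\sum_{j\in S}\hat{h}(\bv_j,b_j)^2\big(\tfrac{\dee\tau_k}{\dee\mu}(\bv_j)\big)^2\leq\frac{4r_b^2}{\beta^2\abs{S}^2}\sum_{j\in S}\hat{h}(\bv_j,b_j)^2$, bounding the Radon--Nikodym factor crudely by $1/\beta^2$. The variables $Z_j\coloneqq\hat{h}(\bv_j,b_j)^2$ are i.i.d., nonnegative, satisfy $Z_j\leq\Vert\hat{h}\Vert_\infty^2$, and, since $\frac{\dee\mu}{\dee\tau_k}\leq\beta'$, $\Earg{Z_j}=\Eargs{\mu\otimes\Unif(-r_b,r_b)}{\hat{h}^2}\leq\beta'\Eargs{\tau_k\otimes\Unif(-r_b,r_b)}{\hat{h}^2}\leq\beta' M_2(\hat{h})^2$. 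A Bernstein bound with range $\Vert\hat{h}\Vert_\infty^2$ and variance proxy $\Vert\hat{h}\Vert_\infty^2\beta' M_2(\hat{h})^2$ shows $\sum_{j\in S}Z_j\lesssim\abs{S}\beta' M_2(\hat{h})^2$ with probability $1-\delta$ once $\abs{S}$ (hence $\alpha N$) exceeds a constant multiple of $\Vert\hat{h}\Vert_\infty^2\log(1/\delta)/(\beta' M_2(\hat{h})^2)$, which is implied by the hypothesis $N\gtrsim\Vert\hat{h}\Vert_\infty^4\log(1/\delta)/(\alpha\beta'^2 M_2(\hat{h})^4)$; substituting back gives $\norm{\ba^*}^2\lesssim r_b^2\beta' M_2(\hat{h})^2/(\alpha\beta^2 N)$.

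The main obstacle is the uniform-over-$\bz$ step: the net resolution must be fine enough that the discretization error is dominated by the target, yet coarse enough that the union-bound logarithm stays $\mathcal{O}(\log(N/\delta))$, and the $r_b^{\bar{q}}$ and $1/\beta$ factors sitting in the Lipschitz constants must be tracked through this trade-off (using $k=\mathcal{O}(1)$ to keep net-cardinality exponents harmless). A secondary point needing care is the two distinct uses of the change of measure --- the importance weight $\frac{\dee\tau_k}{\dee\mu}$ must be retained verbatim for unbiasedness in the sup bound, but for the cheaper $L^2$ estimate it is bounded by $1/\beta^2$ and the base measure switched to $\mu$, which is exactly where $\beta'$ enters --- together with the (routine) measurability remark that $S$ is random and is handled by conditioning on the oracle's output.
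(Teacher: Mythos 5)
Your proposal reproduces the paper's proof almost line for line: the same importance-weighted Monte Carlo estimator $a^*_j=\frac{2r_b}{\abs{S}}\hat{h}(\bv_j,b_j)\frac{\dee\tau_k}{\dee\mu}(\bv_j)$ with the bound $\abs{a^*_j}\leq 2r_b\Vert\hat{h}\Vert_\infty/(\beta\abs{S})$, the same pointwise Hoeffding bound on $\hat{\mathcal{T}}\hat{h}(\bz)-\mathcal{T}\hat{h}(\bz)$ combined with a $\Delta$-covering of $\{\norm{\bz}\leq r_b\}$ of resolution $\Delta\asymp r_b/\sqrt{\abs{S}}$ (the paper writes $\Delta = r_b/\sqrt{\abs{S}}$, you write $\rho\asymp r_b/\sqrt{\alpha N}$, which is the same thing), and the same estimate $\norm{\ba^*}^2\leq\frac{4r_b^2}{\beta^2\abs{S}^2}\sum_{j\in S}\hat{h}(\bv_j,b_j)^2$ followed by concentration of the empirical second moment and the change-of-measure bound $\Earg{\hat{h}^2}\leq\beta' M_2(\hat{h})^2$. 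The only divergence is in that last concentration step, where the paper uses Hoeffding directly---which gives the stated threshold $\abs{S}\gtrsim\Vert\hat{h}\Vert_\infty^4\log(1/\delta)/(\beta'^2 M_2^4)$---while you reach for Bernstein and then assert that the resulting weaker requirement $\abs{S}\gtrsim\Vert\hat{h}\Vert_\infty^2\log(1/\delta)/(\beta' M_2^2)$ is "implied by the hypothesis"; that implication holds iff $\Vert\hat{h}\Vert_\infty^2\geq\beta' M_2^2$, which is not automatic (e.g.\ it fails when $\hat{h}$ is nearly constant and $\beta'>1$). The fix is trivial (Bernstein is never worse than Hoeffding once $\abs{S}\gtrsim\log(1/\delta)$, or simply use Hoeffding as the paper does), so this is a cosmetic gap rather than a substantive one.
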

\begin{proof}
    By definition,
    \begin{align*}
        \mathcal{T}\hat{h}(\bz) &= \int_{\mathbb{S}^{k-1} \times [-r_b,r_b]}\hat{h}(\bv,b)\sigma(\binner{\bv}{\bz} + b)\dee\tau_k(\bv)\dee b\\
        &= \int_{\mathbb{S}^{k-1} \times [-r_b,r_b]}\hat{h}(\bv,b)\frac{\dee \tau_k}{\dee \mu}(\bv)\sigma(\binner{\bv}{\bz} + b)\dee\mu(\bv)\dee b\\
        &= \Eargs{\bv,b \sim \mu\otimes\Unif(-r_b,r_b)}{2r_b\hat{h}(\bv,b)\frac{\dee \tau_k}{\dee \mu}(\bv)\sigma(\binner{\bv}{\bz} + b)}.
    \end{align*}
    Consider $(\bv_i,b_i)_{i \in S} \stackrel{\mathrm{i.i.d.}}{\sim} \mu \otimes \Unif(-r_b,r_b)$ from Definition~\ref{def:sfl}. Let
    $$a^*_i = \begin{cases}
        \frac{2r_b\hat{h}(\bv_i,b_i)}{\abs{S}}\frac{\dee \tau_k}{\dee \mu}(\bv_i) & \text{if } i \in S,\\
        0 & \text{if } i \notin S.
    \end{cases}$$
    Consequently
    $$\abs{a^*_i} \leq \frac{2r_b\Vert\hat{h}\Vert_{\infty}}{\beta\abs{S}},$$
    for all $i \in S$. Given $\bz$, define the random variable
    $$\hat{\mathcal{T}}\hat{h}(\bz) = \sum_{i \in S}a^*_i\sigma(\binner{\bv_i}{\bz} + b_i).$$
    Our next step is to bound the difference between $\hat{\mathcal{T}}\hat{h}(\bz)$ and $\mathcal{T}\hat{h}(\bz)$ uniformly over all $\norm{\bz} \leq r_b$.

    Let $(\hat{\bz}_j)_{j=1}^M$ be a $\Delta$-covering of $\{\bz : \norm{\bz} \leq r_b\}$, therefore $M \leq (3r_b/\Delta)^k$. Note that for any fixed $\bz$ with $\norm{\bz} \leq r_b$, we have $\abs{\hat{\mathcal{T}}\hat{h}(\bz)} \lesssim \Vert\hat{h}\Vert_{\infty} L_\sigma r_b^{\bar{q}+1}/\beta$. Thus, by Hoeffding's lemma,
    $$\abs{\hat{\mathcal{T}}\hat{h}(\bz) - \mathcal{T}\hat{h}(\bz)} \lesssim \frac{\Vert\hat{h}\Vert_{\infty}L_\sigma r_b^{\bar{q} + 1}}{\beta}\sqrt{\frac{\log(1/\delta)}{\abs{S}}},$$
    with probability at least $1-\delta$ for a fixed $\bz$. By a union bound,
    $$\max_{j \in [M]}\abs{\hat{\mathcal{T}}\hat{h}(\hat{\bz}_j) - \mathcal{T}\hat{h}(\hat{\bz}_j)} \lesssim \frac{\Vert\hat{h}\Vert_{\infty}L_\sigma r_b^{\bar{q} + 1}}{\beta}\sqrt{\frac{\log(M/\delta)}{\abs{S}}},$$
    with probability at least $1-\delta$. For any $\bz$ with $\norm{\bz} \leq r_b$, let $\hat{\bz}$ denote the projection of $\bz$ onto the covering $(\hat{\bz}_j)_{j=1}^M$. Then,
    \begin{align*}
        \sup_{\norm{\bz} \leq r_b/2}\abs{\hat{\mathcal{T}}\hat{h}(\bz) - \mathcal{T}\hat{h}(\bz)} &\leq \max_{j \in [M]}\abs{\hat{\mathcal{T}}\hat{h}(\hat{\bz}_j) - \mathcal{T}\hat{h}(\hat{\bz}_j)} + \abs{\mathcal{T}\hat{h}(\hat{\bz}) - \mathcal{T}\hat{h}(\bz)} + \abs{\hat{\mathcal{T}}\hat{h}(\hat{\bz}) - \mathcal{T}\hat{h}(\bz)}\\
        &\lesssim \frac{\Vert\hat{h}\Vert_{\infty}L_\sigma r_b^{\bar{q} + 1}}{\beta}\sqrt{\frac{\log(M/\delta)}{\abs{S}}} + \frac{\Vert\hat{h}\Vert_{\infty}L_\sigma r_b^{\bar{q}}\Delta}{\beta}.\\
        &\lesssim \frac{\Vert\hat{h}\Vert_{\infty}L_\sigma r_b^{\bar{q} + 1}}{\beta}\sqrt{\frac{\log(r_b/(\Delta \delta))}{\abs{S}}} + \frac{\Vert\hat{h}\Vert_{\infty}L_\sigma r_b^{\bar{q}} \Delta}{\beta}.
    \end{align*}
    Choosing $\Delta = r_b / \sqrt{\abs{S}}$ implies
    $$\sup_{\norm{\bz} \leq r_b/2}\abs{\hat{\mathcal{T}}\hat{h}(\bz) - \mathcal{T}\hat{h}(\bz)} \lesssim \frac{\Vert\hat{h}\Vert_{\infty}L_\sigma r_b^{\bar{q} + 1}}{\beta}\sqrt{\frac{\log(\abs{S}/\delta)}{\abs{S}}}$$
    with probability at least $1-\delta$ over the randomness of $(\bv_i,b_i)_{i\in [N]}$. 

    The last step is to bound $\norm{\ba^*}^2$. Note that,
    $$\norm{\ba^*}^2 \leq \frac{4r_b^2}{\beta^2\abs{S}}\sum_{i \in S}\frac{\hat{h}(\bv_i,b_i)^2}{\abs{S}}.$$
    Further, by the Hoeffding inequality,
    $$\sum_{i \in S}\frac{\hat{h}(\bv_i,b_i)^2}{\abs{S}} - \Eargs{\bv,b \sim \mu \otimes \Unif(-r_b,r_b)}{\hat{h}(\bv,b)^2} \lesssim \Vert\hat{h}\Vert_{\infty}^2\sqrt{\frac{\log(1/\delta)}{\abs{S}}}$$
    with probability at least $1-\delta$. Moreover,
    \begin{align*}
        \Eargs{\bv,b \sim \mu\otimes \Unif(-r_b,r_b)}{\hat{h}(\bv,b)^2} &= \Eargs{\bv,b \sim \tau_k\otimes \Unif(-r_b,r_b)}{\hat{h}(\bv,b)^2\frac{\dee \mu}{\dee \tau_k}(\bv)}\\
        &\leq \beta' M_2(\hat{h})^2.
    \end{align*}
    Thus, when $\abs{S} \geq \frac{\Vert\hat{h}\Vert_{\infty}^4\log(1/\delta)}{{\beta'}^2M_2(\hat{h})^4}$, we have $\norm{\ba^*}^2 \lesssim r_b^2\beta' M_2(\hat{h})^2/(\beta^2\abs{S})$
    with probability at least $1-\delta$, which completes the proof.
\end{proof}

\subsection{Combining All Steps}
We can finally bound our original objective of this section, i.e.\ $\AR(\ba^*,\bW,\bb) - \AR^*$. Let us begin with the case where $\mathcal{F}$ is the class of polynomials of degree $p$.
\begin{proposition}\label{prop:approx_poly}
    Suppose $\mathcal{F}$ and $\sigma$ satisfy Assumption~\ref{assump:polynomial} and $(b_i)_{i \in [N]} \stackrel{\mathrm{i.i.d.}}{\sim} \Unif(-r_b,r_b)$. Recall that $\varepsilon_1\coloneqq 1 \lor \varepsilon$, and $\tilde{\epsilon} \coloneqq \epsilon \land \frac{\epsilon^2}{\AR^*}$ for any $\epsilon \in (0,1)$. Using the simplification $k,q,p,L_\sigma \lesssim 1$ and recalling $\varepsilon_1 \coloneqq 1\lor\varepsilon$, there exists a choice of $r_b = \tilde{\Theta}(\varepsilon_1)$ such that:
    \begin{itemize}[leftmargin=0.6cm]
        \item If $\bW = (\bw_1,\hdots,\bw_N)^\top$ are given by the $\aDFL$ oracle, there exists $\ba^*$ such that $\abs{a^*_i} \leq \tilde{\mathcal{O}}(\varepsilon_1 / (\alpha N))$ for all $i \in [N]$, and $\AR(\ba^*,\bW,\bb) - \AR^* \leq \epsilon$ as soon as
        \begin{align*}
            \zeta \leq \tilde{\mathcal{O}}\Big(\frac{\tilde{\epsilon}}{\varepsilon_1^{2(q+1)}}\Big) && \text{and} && N \geq \tilde{\Omega}\Big(\frac{\varepsilon_1^{q+1}}{\alpha\zeta^{(k-1)/2}\sqrt{\tilde{\epsilon}}}\Big).
        \end{align*}
        
        \item If $\bW = (\bw_1,\hdots,\bw_N)^\top$ are given by the $\abSFL$ oracle, there exists $\ba^*$ such that $\abs{a^*_i} \leq \tilde{\mathcal{O}}(\varepsilon_1/(\beta\alpha N))$ for all $i \in [N]$, and $\AR(\ba^*,\bW,\bb) - \AR^* \leq \epsilon$ as soon as
        \begin{align*}
            \zeta \leq \tilde{\mathcal{O}}\Big(\frac{\beta^2\tilde{\epsilon}}{\varepsilon_1^{2(q+1)}}\Big) && \text{and} && N \geq \tilde{\Omega}\Big(\frac{\varepsilon_1^{2(q+1)}}{\alpha\beta^2\tilde{\epsilon}}\Big).
        \end{align*}
    \end{itemize}
    Both cases above hold with probability at least $1 - n^{-c}$ for some absolute constant $c > 0$ over the choice of random biases $(b_i)_{i \in [N]}$ (and random weights $(\bw_i)$ in the case of SFL).
\end{proposition}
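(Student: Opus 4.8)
The plan is to bound $\AR(\ba^*,\bW,\bb) - \AR^*$ through the decomposition~\eqref{eq:adv_decomp}, $\AR(\ba^*,\bW,\bb) - \AR^* \leq (\sqrt{\mathcal{E}_1}+\sqrt{\mathcal{E}_2})^2 + \sqrt{\mathcal{E}_3(\mathcal{E}_1+\mathcal{E}_2)}$ with $\mathcal{E}_3 = 4\AR^*$ by~\eqref{eq:E3}. The first observation is that it suffices to make $\mathcal{E}_1 \leq c\tilde\epsilon$ and $\mathcal{E}_2 \leq c\tilde\epsilon$ for a small absolute constant $c$: since $\tilde\epsilon \leq \epsilon$ this gives $(\sqrt{\mathcal{E}_1}+\sqrt{\mathcal{E}_2})^2 \lesssim \tilde\epsilon \leq \epsilon$, and since $\tilde\epsilon \leq \epsilon^2/\AR^*$ this gives $\sqrt{\mathcal{E}_3(\mathcal{E}_1+\mathcal{E}_2)} \lesssim \sqrt{\AR^*\,\tilde\epsilon} \leq \epsilon$. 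So the whole task reduces to exhibiting $\ba^*$ supported on the good-neuron set $S$ (Definition~\ref{def:packing} for $\aDFL$, Definition~\ref{def:sfl} for $\abSFL$), with the claimed entrywise bound, that drives both $\mathcal{E}_1$ and $\mathcal{E}_2$ below $c\tilde\epsilon$.

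To build $\ba^*$, first fix $h$ to be the minimizer of the population adversarial risk over $\mathcal{H}$ as in Theorem~\ref{thm:robust_standard_fl}; by Lemma~\ref{lem:h_control} it is a degree-$p$ polynomial $h(\bz) = \sum_{s\leq p}\bT^{(s)}[\bz^{\otimes s}]$ with $\fronorm{\bT^{(s)}} = \mathcal{O}(1)$ and $\abs{h(\bz)} \leq L_h(1+\norm{\bz}^p)$, $L_h = \mathcal{O}(1)$. Next invoke Corollary~\ref{cor:multi_index_poly_infty_width} to obtain an infinite-width representation $\mathcal{T}\hat h(\bz) = h(\bz)$ valid for $\norm{\bz} = \mathcal{O}(r_b)$, with $\Vert\hat h\Vert_\infty \leq C_{k,q}$ for a polynomial activation (where $\bar q = q$) and $\Vert\hat h\Vert_\infty \leq C_k r_b^{(p-2)\lor 0}$ for ReLU (where $\bar q = 1$ and $(p-2)\lor 0 = q-1$); this requires $r_b \geq q$. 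Then discretize: for $\aDFL$, Proposition~\ref{prop:finite_width_riemann} (with $\delta = n^{-c}$) produces $\ba^*$ supported on $S$ with $\abs{a_j^*} \lesssim \Vert\hat h\Vert_\infty r_b\log(N/\delta)/(\alpha N)$ and, for $\norm{\bz} \leq r_z \leq r_b$, an $O\bigl(\Vert\hat h\Vert_\infty L_\sigma r_b^{\bar q}(r_z\sqrt{\zeta} + \tfrac{r_b\log(N/\delta)}{\zeta^{(k-1)/2}\alpha N})\bigr)$ bound on $\bigl|\sum_{j\in S}a_j^*\sigma(\binner{\bv_j}{\bz}+b_j) - h(\bz)\bigr|$; for $\abSFL$, Proposition~\ref{prop:finite_width_concentration} plays the same role with the Monte Carlo error $\tfrac{C_{\bar q}L_\sigma\Vert\hat h\Vert_\infty r_b^{\bar q+1}}{\beta}\sqrt{\tfrac{\log(\alpha N/\delta)}{\alpha N}}$ and $\abs{a_j^*} \lesssim \Vert\hat h\Vert_\infty r_b/(\alpha\beta N)$. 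Since $f(\,\cdot\,;\ba^*,\Pi_{\bU}\bW,\bb)$ only depends on $\bv_j = \bU\bw_j/\norm{\bU\bw_j}$, this last quantity is exactly the $\epsilon_{\mathrm{approx}} = \sup_{\norm{\bU\bx}\leq r_z}\abs{f(\bx;\ba^*,\Pi_{\bU}\bW,\bb) - h(\bU\bx)}$ of Lemma~\ref{lem:E1_bound}.

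With $\ba^*$ in hand, feed $\epsilon_{\mathrm{approx}}$ into Lemma~\ref{lem:E1_bound} to get $\mathcal{E}_1 \leq \epsilon_{\mathrm{approx}}^2 + (\text{poly in }\tilde r_a, r_b, \varepsilon_1)\,e^{-\Omega(r_z^2)}$, where $\tilde r_a := \sqrt{\abs{S}}\,\norm{\ba^*} \lesssim \Vert\hat h\Vert_\infty r_b/\alpha$ for $\aDFL$ (using $\abs{S}\leq N$) and $\lesssim \Vert\hat h\Vert_\infty r_b/(\alpha\beta)$ for $\abSFL$, and into Lemma~\ref{lem:E2_bound} to get $\mathcal{E}_2 \lesssim L_\sigma^2\tilde r_a^2(1 + r_b^{2(\bar q-1)}+\varepsilon^{2(\bar q-1)})(1+\varepsilon^2)\zeta$. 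Then choose $r_b = \tilde\Theta(\varepsilon_1)$ and $r_z = \Theta(r_b)$ large enough to meet $r_b\geq q$, $r_z \geq 1\lor 2\varepsilon$, and to kill the tail term against its polynomial prefactor (a $\sqrt{\log(\varepsilon_1/\tilde\epsilon)}$ factor suffices, which $\tilde\Theta$ absorbs). Substituting $\Vert\hat h\Vert_\infty \lesssim \varepsilon_1^{q-1}$ (ReLU) resp.\ $\lesssim 1$ (polynomial) and $r_b^{\bar q} \asymp \varepsilon_1$ (ReLU) resp.\ $\varepsilon_1^{q}$, the $r_z\sqrt\zeta$ part of $\epsilon_{\mathrm{approx}}$ becomes $\lesssim \varepsilon_1^{q+1}\sqrt\zeta$, forcing $\zeta \leq \tilde{\mathcal O}(\tilde\epsilon/\varepsilon_1^{2(q+1)})$; the $1/N$ part gives the stated lower bound on $N$; and $\mathcal{E}_2$ reproduces the $\zeta$ constraint up to $\alpha$- (resp.\ $\beta$-) factors. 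Finally, the events in Propositions~\ref{prop:finite_width_riemann}--\ref{prop:finite_width_concentration} hold with probability $1-\delta$; with $\delta = n^{-c}$ a union bound gives the claimed $1-n^{-c}$ success probability.

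The main obstacle is the parameter balancing in the last step: the contributions $\epsilon_{\mathrm{approx}}^2$ (itself a sum of an $r_z\sqrt\zeta$ term and a $1/N$ term), the Gaussian-tail term of $\mathcal{E}_1$, the term $\mathcal{E}_2$, and the cross term $\sqrt{\mathcal{E}_3(\mathcal{E}_1+\mathcal{E}_2)}$ must all be driven below a constant times $\tilde\epsilon$ simultaneously, with the $\varepsilon_1$-exponents lining up to the advertised $q+1$ and $2(q+1)$, and with the ReLU and polynomial cases handled uniformly (they differ in $\Vert\hat h\Vert_\infty$ and in whether $\bar q$ equals $1$ or $q$). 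Once $r_b \asymp r_z \asymp \varepsilon_1$ is pinned down, each remaining requirement is a one-line rearrangement, so the difficulty is bookkeeping rather than conceptual.
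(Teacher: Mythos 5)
Your proposal is correct and follows essentially the same route as the paper: the decomposition~\eqref{eq:adv_decomp} into $\mathcal{E}_1,\mathcal{E}_2,\mathcal{E}_3$, the infinite-width construction of $\hat h$ from Corollary~\ref{cor:multi_index_poly_infty_width} (with $\Vert\hat h\Vert_\infty$ controlled through Lemma~\ref{lem:h_control}), discretization via Proposition~\ref{prop:finite_width_riemann} ($\aDFL$) or Proposition~\ref{prop:finite_width_concentration} ($\abSFL$), followed by Lemmas~\ref{lem:E1_bound} and~\ref{lem:E2_bound} with $r_z=\Theta(r_b)$ and $r_b=\tilde\Theta(\varepsilon_1)$, exactly as in the paper's proof of Proposition~\ref{prop:approx_poly}. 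The one place where you are slightly more careful than the paper is in tracking $\tilde r_a = \sqrt{\abs S}\,\norm{\ba^*}$, where you retain a $1/\alpha$ (resp.\ $1/(\alpha\beta)$) factor from the fact that $\abs S$ can be of order $N$ rather than $\alpha N$; you note this may introduce additional $\alpha$ (resp.\ $\beta$) factors into the $\zeta$ constraint via $\mathcal E_2$ beyond what is displayed in the statement, which is a fair caveat and consistent with the proposition being stated up to $\tilde{\mathcal O}$-level bookkeeping in those parameters.
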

\begin{proof}
    Recall from~\eqref{eq:adv_decomp} that
    $$\AR(\ba^*,\bW,\bb) - \AR^* \lesssim \mathcal{E}_1 + \mathcal{E}_2 + \sqrt{\mathcal{E}_3(\mathcal{E}_1 + \mathcal{E}_2)}.$$
    By definition, $\mathcal{E}_3 \lesssim \AR^*$. By Lemma~\ref{lem:E2_bound}, we have
    $$\mathcal{E}_2 \lesssim L_\sigma^2\tilde{r}_a^2(1 + r_b^{2(\bar{q}-1)} + \varepsilon^{2(\bar{q}-1)})(1 + \varepsilon^2)\zeta.$$
    Further, thanks to Lemma~\ref{lem:h_control} we have $\abs{h(\bz)} \lesssim 1 + \norm{\bz}^p$. Therefore, by Lemma~\ref{lem:E1_bound} with $r_z = r_b$, we have
    $$\mathcal{E}_1 \lesssim \epsilon_\mathrm{approx}^2 + \big(L_\sigma^2\tilde{r}_a^2(1 + \varepsilon^{2\bar{q}} + r_b^{2\bar{q}}) + 1 + \varepsilon^{2p}\big)e^{-\Omega(r_b^2)}.$$
    Let us now consider the case of $\aDFL$. Define $\bar{p} = (p-2)\lor 0$ if the ReLU activation is used and $\bar{p} = 0$ if the polynomial activation is used. Notice that by the definition in Assumption~\ref{assump:polynomial}, we have $\bar{q} + \bar{p} = q$. By Proposition~\ref{prop:finite_width_riemann}, we know there exists $\ba^*$ with $\abs{a^*_i} \leq \tilde{\mathcal{O}}(r_b^{1 + \bar{p}}/(\alpha N))$ (we used the fact that $\max_{s \leq p}\fronorm{\bT^{(s)}} \lesssim 1$ from Lemma~\ref{lem:h_control}) such that
    $$\epsilon_{\mathrm{approx}} \leq \tilde{\mathcal{O}}\Big(r_b^{q + 1}\big(\sqrt{\zeta} + \frac{1}{\zeta^{(k-1)/2}\alpha N}\big)\Big),$$
    provided that $r_b \gtrsim \varepsilon_1$ where we recall $\varepsilon_1 = 1\lor\varepsilon$, and the above statement holds with probability at least $1-\delta$ for any polynomially decaying $\delta$, e.g.\ $\delta = n^{-c}$ for some absolute constant $c > 0$. 
    Therefore, we have $\tilde{r}_a \leq \tilde{\mathcal{O}}(r_b^{1 + \bar{p}})$. 
    Further, it suffices to choose $r_b$ large enough such that
    $r_b \gtrsim \varepsilon_1 \lor \sqrt{\log(NL_\sigma^2\tilde{r}_a^2r_b^{2\bar{q}} + \varepsilon_1^{2p})} = \tilde{\Theta}(\varepsilon_1)$ to have
    $$\mathcal{E}_1 \leq \tilde{\mathcal{O}}\Big(r_b^{2(q + 1)}\big(\zeta + \frac{1}{\zeta^{k-1}\alpha^2 N^2}\big)\Big).$$
    Plugging in the values of $\tilde{r}_a$ and $r_b$, we obtain,
    \begin{align*}
        \mathcal{E}_2 \leq \tilde{\mathcal{O}}(\varepsilon_1^{2(q + 1)}\zeta), && \mathrm{and} && \mathcal{E}_1 \leq \tilde{\mathcal{O}}\Big(\varepsilon_1^{2(q + 1)}\zeta + \frac{\varepsilon_1^{2(q + 1)}}{\zeta^{k-1}\alpha^2 N^2}\Big).
    \end{align*}
    Hence, choosing
    \begin{align*}
        \zeta \leq \tilde{\mathcal{O}}\Big(\frac{\tilde{\epsilon}}{\varepsilon_1^{2(q + 1)}}\Big), && \mathrm{and} && N \geq \tilde{\Omega}\Big(\frac{\varepsilon_1^{q + 1}}{\alpha\zeta^{(k-1)/2}\sqrt{\tilde{\epsilon}}}\Big)
    \end{align*}
    which concludes the proof of the $\aDFL$ case.

    In the case of $\abSFL$, we instead invoke Proposition~\ref{prop:finite_width_concentration}, thus obtain $\abs{a^*_i} \lesssim r_b^{1 + \bar{p}} / (\beta \alpha N)$, and
    $$\epsilon_{\mathrm{approx}} \leq \tilde{\mathcal{O}}\Big(\frac{L_\sigma r_b^{q+1}}{\beta\sqrt{\alpha N}}\Big),$$
    which holds with probability at least $1-\delta$ for any polynomially decaying $\delta$ such as $\delta = n^{-c}$ for some absolute constant $c > 0$. Consequently, with the same choice of $r_b = \tilde{\Theta}(\varepsilon_1)$ as before, we have
    \begin{align*}
        \mathcal{E}_2 \leq \tilde{\mathcal{O}}\Big(\frac{\varepsilon_1^{2(q+1)}\zeta}{\beta^2}\Big), && \mathrm{and} && \mathcal{E}_1 \leq \tilde{\mathcal{O}}\Big(\frac{\varepsilon_1^{2(q+1)}}{\beta^2\alpha N}\Big),
    \end{align*}
    which completes the proof.
\end{proof}

We can also combine approximation bounds for the more general class of pseudo-Lipschitz $\mathcal{F}$.
\begin{proposition}\label{prop:approx_multi_index}
    Suppose $\mathcal{F}$ and $\sigma$ satisfy Assumption~\ref{assump:Lip} and $(b_i)_{i\in [N]} \stackrel{\mathrm{i.i.d.}}{\sim} \Unif(-r_b,r_b)$. Recall that $\varepsilon_1 \coloneqq 1 \lor \varepsilon$, and $\tilde{\epsilon} \coloneqq \epsilon \land \tfrac{\epsilon^2}{\AR^*}$ for any $\epsilon \in (0,1)$. Using the simplification $k,p,L \lesssim 1$, there exists a choice of $r_b = \tilde{\Theta}\big(\varepsilon_1(\varepsilon_1/\sqrt{\tilde{\epsilon}})^{1+1/k}\big)$ such that:
    \begin{itemize}
        \item If $\bW = (\bw_1,\hdots,\bw_N)^\top$ is given by the $\aDFL$ oracle, there exists $\ba^*$ such that $\abs{a^*_i} \leq \tilde{\mathcal{O}}((\varepsilon_1/\sqrt{\tilde{\epsilon}})^{k+1+1/k}/(\alpha N))$ for all $i \in [N]$, and $\AR(\ba^*,\bW,\bb) - \AR^* \leq \epsilon$ as soon as
        \begin{align*}
            \zeta \leq \tilde{\mathcal{O}}\Big(\big(\frac{\tilde{\epsilon}}{\varepsilon_1^2}\big)^{k+2+1/k}\Big), && \text{and} && N \geq \tilde{\Omega}\Big(\frac{1}{\zeta^{(k-1)/2}\alpha}\big(\frac{\varepsilon_1}{\sqrt{\tilde{\epsilon}}}\big)^{k+3+2/k}\Big).
        \end{align*}

        \item If $\bW = (\bw_1,\hdots,\bw_N)^\top$ is given by the $\abSFL$ oracle, there exists $\ba^*$ such that $\abs{a^*_i} \leq \tilde{\mathcal{O}}\big((\varepsilon_1/\sqrt{\tilde{\epsilon}})^{k+1+1/k}/(\alpha\beta N)\big)$, and $\AR(\ba^*,\bW,\bb) - \AR^* \leq \epsilon$ as soon as
        \begin{align*}
            \zeta \leq \tilde{\mathcal{O}}\Big(\beta^2\big(\frac{\tilde{\epsilon}}{\varepsilon_1^2}\big)^{k + 2 + 1/k}\Big), && \text{and} && N \geq \tilde{\Omega}\Big(\frac{1}{\alpha\beta^2}\big(\frac{\varepsilon_1^2}{\tilde{\epsilon}}\big)^{k + 3 + 2/k}\Big).
        \end{align*}
    \end{itemize}
    Both cases above hold with probability at least $1 - n^{-c}$ for some absolute constant $c > 0$ over the choice of random biases $(b_i)_{i \in [N]}$ (and random weights $(\bw_i)$ in the case of SFL).
\end{proposition}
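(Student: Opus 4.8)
The plan is to substitute the error decomposition~\eqref{eq:adv_decomp}, namely $\AR(\ba^*,\bW,\bb)-\AR^*\lesssim\mathcal{E}_1+\mathcal{E}_2+\sqrt{\mathcal{E}_3(\mathcal{E}_1+\mathcal{E}_2)}$ with $\mathcal{E}_3=4\AR^*$, and to tune every free quantity so that $\mathcal{E}_1,\mathcal{E}_2\lesssim\tilde{\epsilon}$; since $\tilde{\epsilon}\leq\epsilon$ and $\tilde{\epsilon}\leq\epsilon^2/\AR^*$, this forces the right-hand side below $\tilde{\epsilon}+\sqrt{\AR^*\tilde{\epsilon}}\lesssim\epsilon$. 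I would write out the ReLU case ($\bar{q}=L_\sigma=1$), the polynomial activation being identical up to the exponents of $r_b$. Two preliminary facts about $h$ are used throughout: first, by Assumption~\ref{assump:Lip} and Jensen's inequality $h(\bz)=\Earg{f(\bx)\mid\bU\bx=\bz}$ is pseudo-Lipschitz with constant $\leq L$ and prefactor $\varepsilon_1^{1-p}\norm{\cdot}^{p-1}$, hence $\tilde{\Theta}(L)$-Lipschitz on $\{\norm{\bz}\leq r_z\}$ once $r_z=\tilde{\Theta}(\varepsilon_1)$, since then $\varepsilon_1^{1-p}r_z^{p-1}=\tilde{\Theta}(1)$; second, comparing $h$ with the zero predictor and using Assumption~\ref{assump:subgaussian}, exactly as in Lemma~\ref{lem:h_control}, bounds $\Earg{h(\bU\bx)^2}$ by an absolute constant, which together with the pseudo-Lipschitz property yields the growth bound $\abs{h(\bz)}\leq L_h(1+\norm{\bz}^p)$ with $L_h=\tilde{\Theta}(1)$ required by Lemma~\ref{lem:E1_bound}.

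For $\mathcal{E}_1$ I would take $r_z=\tilde{\Theta}(\varepsilon_1)$ in Lemma~\ref{lem:E1_bound}, making the tail term $e^{-\Omega(r_z^2)}$ polynomially small so that $\mathcal{E}_1\lesssim\epsilon_{\mathrm{approx}}^2$, and then split $\epsilon_{\mathrm{approx}}$ into the infinite-width error $\sup_{\norm{\bz}\leq r_z}\abs{h(\bz)-\mathcal{T}\hat{h}(\bz)}$ of Lemma~\ref{lem:Lip_infty_width_approx} and the discretization error $\sup_{\norm{\bz}\leq r_z}\abs{\sum_{j\in S}a^*_j\sigma(\binner{\bv_j}{\bz}+b_j)-\mathcal{T}\hat{h}(\bz)}$ of Proposition~\ref{prop:finite_width_riemann} (for the $\aDFL$ oracle) or Proposition~\ref{prop:finite_width_concentration} (for $\abSFL$). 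In Lemma~\ref{lem:Lip_infty_width_approx} the internal parameter $\Delta$ trades the $(\Delta/(Lr_z))^{-2/(k+1)}$ term against the bias-cutoff term $(\Delta/r_z)^{2k/(k+1)}(r_z/r_b)^k$: dominating the cutoff term by the first pins $r_b=\tilde{\Theta}\big(\varepsilon_1(\varepsilon_1/\sqrt{\tilde{\epsilon}})^{1+1/k}\big)$, and making the first term $\lesssim\sqrt{\tilde{\epsilon}}$ pins $\Delta/(Lr_z)=\tilde{\Theta}\big((\varepsilon_1/\sqrt{\tilde{\epsilon}})^{(k+1)/2}\big)$, hence $\Vert\hat{h}\Vert_\infty=\tilde{\Theta}\big((\varepsilon_1/\sqrt{\tilde{\epsilon}})^{k}/\varepsilon_1\big)$ and $\Vert\hat{h}\Vert_\infty r_b=\tilde{\Theta}\big((\varepsilon_1/\sqrt{\tilde{\epsilon}})^{k+1+1/k}\big)$, which is exactly the claimed order of $a^*_i$. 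Substituting $\Vert\hat{h}\Vert_\infty$ and $r_b$ into Proposition~\ref{prop:finite_width_riemann} bounds the discretization error by $\lesssim\Vert\hat{h}\Vert_\infty r_b\big(r_z\sqrt{\zeta}+r_b\log(N/\delta)/(\zeta^{(k-1)/2}\alpha N)\big)$; requiring the first summand to be $\lesssim\sqrt{\tilde{\epsilon}}$ gives $\zeta\leq\tilde{\mathcal{O}}\big((\tilde{\epsilon}/\varepsilon_1^2)^{k+2+1/k}\big)$ and requiring the second gives $N\geq\tilde{\Omega}\big(\zeta^{-(k-1)/2}\alpha^{-1}(\varepsilon_1/\sqrt{\tilde{\epsilon}})^{k+3+2/k}\big)$, matching the statement.

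For $\mathcal{E}_2$, the weights from Proposition~\ref{prop:finite_width_riemann} are supported on $S$ with $\abs{S}\asymp\alpha N$ and $\abs{a^*_i}\lesssim\Vert\hat{h}\Vert_\infty r_b/(\alpha N)$, so that $\tilde{r}_a\coloneqq\norm{\ba^*}\sqrt{\abs{S}}\leq\abs{S}\max_i\abs{a^*_i}=\tilde{\mathcal{O}}(\Vert\hat{h}\Vert_\infty r_b)$ and Lemma~\ref{lem:E2_bound} (with $\bar{q}=1$) gives $\mathcal{E}_2\lesssim\tilde{r}_a^2\varepsilon_1^2\zeta$, which is again $\lesssim\tilde{\epsilon}$ under the same $\zeta$-threshold up to constants; since also $\norm{\ba^*}\leq\tilde{\mathcal{O}}(\Vert\hat{h}\Vert_\infty r_b/\sqrt{\alpha N})$, the $\aDFL$ bullet follows, including the stated bound on $\abs{a^*_i}$. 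The $\abSFL$ bullet repeats the argument with Proposition~\ref{prop:finite_width_concentration}, whose $L^2$-type estimate for $\norm{\ba^*}^2$ (using $M_2(\hat{h})^2\lesssim C_k\Delta^2/r_z^3$ from Lemma~\ref{lem:Lip_infty_width_approx}) produces a smaller $\tilde{r}_a$ and hence the improved thresholds $\zeta\leq\tilde{\mathcal{O}}\big(\beta^2(\tilde{\epsilon}/\varepsilon_1^2)^{k+2+1/k}\big)$ and $N\geq\tilde{\Omega}\big(\alpha^{-1}\beta^{-2}(\varepsilon_1^2/\tilde{\epsilon})^{k+3+2/k}\big)$, at the price of an extra $1/\beta$ factor in $\abs{a^*_i}$ and the side condition on $N$ internal to that proposition; the high-probability statement ($1-n^{-c}$) comes from the union bounds already inside Propositions~\ref{prop:finite_width_riemann}--\ref{prop:finite_width_concentration}.

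The main obstacle is the simultaneous calibration of the five quantities $r_z,r_b,\Delta,\zeta,N$ so that all four error contributions---infinite-width approximation, bias cutoff, discretization, and the $\mathcal{E}_2$ misalignment term---each fall below $\sqrt{\tilde{\epsilon}}$ (resp.\ $\tilde{\epsilon}$); the unusual exponent pattern $k+1+1/k$, $k+2+1/k$, $k+3+2/k$ only comes out correctly if the $\Delta$-optimization inside Lemma~\ref{lem:Lip_infty_width_approx} and the distinction between $\Vert\hat{h}\Vert_\infty$ and $M_2(\hat{h})$ are tracked precisely. No genuinely new estimate is needed: the analytically heavy inputs (the Bach-type infinite-width approximation rate and its discretization) are already packaged as lemmas, so the remaining work is disciplined bookkeeping rather than a new hard argument.
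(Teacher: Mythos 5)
Your proposal follows the paper's proof essentially verbatim for the $\aDFL$ bullet: the decomposition~\eqref{eq:adv_decomp}, the choice $r_z=\tilde{\Theta}(\varepsilon_1)$ in Lemma~\ref{lem:E1_bound}, the splitting of $\epsilon_{\mathrm{approx}}$ into a Bach-type infinite-width error (Lemma~\ref{lem:Lip_infty_width_approx}) plus a Riemann-sum discretization error (Proposition~\ref{prop:finite_width_riemann}), and the calibration of $\Delta$, $r_b$, $\zeta$, $N$ to make each contribution $\lesssim\sqrt{\tilde{\epsilon}}$. The exponent arithmetic you sketch ($\|\hat{h}\|_\infty r_b\asymp(\varepsilon_1/\sqrt{\tilde{\epsilon}})^{k+1+1/k}$, etc.) is the correct one and matches the paper.

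However, your explanation of \emph{why} the $\abSFL$ thresholds improve is off, in a way that would actually send you down a wrong path. You claim the improvement comes from the $L^2$-type estimate on $\norm{\ba^*}^2$ in Proposition~\ref{prop:finite_width_concentration}, ``producing a smaller $\tilde{r}_a$.'' That estimate is not used by the paper and cannot be used here without an additional hypothesis: it requires an upper bound $\dee\mu/\dee\tau_k\leq\beta'$ on the sampling density, which Definition~\ref{def:sfl} does not provide (the definition only guarantees the lower bound $\dee\mu/\dee\tau_k\geq\beta$). The paper instead uses the $\ell_\infty$ bound $\abs{a^*_i}\lesssim\|\hat{h}\|_\infty r_b/(\beta\alpha N)$ from the first part of Proposition~\ref{prop:finite_width_concentration}, and the resulting $\tilde{r}_a=\tilde{\mathcal{O}}((\varepsilon_1/\sqrt{\tilde{\epsilon}})^{k+1+1/k}/\beta)$ is in fact \emph{larger} than in the $\aDFL$ case, not smaller (this is precisely why the $\zeta$-threshold picks up the $\beta^2$ prefactor through Lemma~\ref{lem:E2_bound}). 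The genuine source of the improvement in $N$ is different: the Monte-Carlo concentration in Proposition~\ref{prop:finite_width_concentration} gives a discretization error $\lesssim\|\hat{h}\|_\infty L_\sigma r_b^{\bar{q}+1}\beta^{-1}\sqrt{\log(\alpha N/\delta)/(\alpha N)}$ that carries no $\zeta^{(k-1)/2}$ factor, unlike the Riemann-sum bound, so $N$ no longer needs to scale with $\zeta^{-(k-1)/2}$. Once you replace the $L^2$-estimate reasoning with this observation, your argument aligns with the paper and the stated thresholds fall out.
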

\begin{proof}
    Our starting point is once again the decomposition
    $$\AR(\ba^*,\bW,\bb) - \AR^* \leq \mathcal{E}_1 + \mathcal{E}_2 + \sqrt{\mathcal{E}_3(\mathcal{E}_1 + \mathcal{E}_2)}.$$
    Given Assumption~\ref{assump:Lip}, it is straightforward to verify that $\abs{h(\bz_1) - h(\bz_2)} \lesssim (\varepsilon_1^{1-p}\norm{\bz_1}^{p-1} + \varepsilon_1^{1-p}\norm{\bz_2}^{p-1} + 1)\norm{\bz_1 - \bz_2}$ for $\bz_1,\bz_2 \in \reals^k$. As a consequence, we have $\abs{h(\bz)} \lesssim 1 + \norm{\bz}^p$ for all $\bz \in \reals^k$. Therefore, by Lemma~\ref{lem:E1_bound} with a choice of $r_z = \tilde{\Theta}(\varepsilon_1)$, we have $\mathcal{E}_1 \lesssim \epsilon_{\mathrm{approx}}^2$. In the rest of the proof we will fix $r_z = \tilde{\Theta}(\varepsilon_1)$.

    We begin by considering the case of $\aDFL$. Unlike the proof of Proposition~\ref{prop:approx_poly} where $\mathcal{T}\hat{h} = h$, in this case we have an additional error due to $\mathcal{T}\hat{h}$ only approximating $h$.  From Lemma~\ref{lem:Lip_infty_width_approx}, we have
    \begin{align*}
        \Vert \hat{h}\Vert_\infty \leq \tilde{\mathcal{O}}\Big(\frac{1}{\varepsilon_1}\big(\frac{\Delta}{\varepsilon_1}\big)^{2k/(k+1)}\Big).
    \end{align*}
    Thus,
    \begin{align*}
        \epsilon_{\mathrm{approx}} &\leq \sup_{\norm{\bz} \leq r_z}\abs{\sum_{j \in S}a^*_j\sigma(\binner{\bv_j}{\bz} + b_j) - \mathcal{T}\hat{h}(\bz)} + \abs{\mathcal{T}\hat{h}(\bz) - h(\bz)}\\ &\leq \tilde{\mathcal{O}}\Big(\frac{r_b}{\varepsilon_1}\big(\frac{\Delta}{\varepsilon_1}\big)^{\frac{2k}{k+1}}\big(\varepsilon_1\sqrt{\zeta} + \frac{r_b}{\zeta^{(k-1)/2}\alpha N}\big)\Big) + \tilde{\mathcal{O}}\Big(\varepsilon_1\big(\frac{\Delta}{\varepsilon_1}\big)^{-\frac{2}{k+1}} + \varepsilon_1\big(\frac{\Delta}{\varepsilon_1}\big)^{\frac{2k}{k+1}}\big(\frac{\varepsilon_1}{r_b}\big)^k\Big),
    \end{align*}
    where we bounded the first term via Proposition~\ref{prop:finite_width_riemann} with $\bar{q}=1$, and the second term via Lemma~\ref{lem:Lip_infty_width_approx}. Additionally, we have
    $$\abs{a^*_j} \leq \frac{r_b}{\varepsilon_1\alpha N}\big(\frac{\Delta}{\varepsilon_1}\big)^{\frac{2k}{k+1}},$$
    for all $j \in [N]$. To obtain $\AR(\ba^*,\bW,\bb) \leq \AR^* + \epsilon$, we must choose $\Delta = \tilde{\Theta}(\varepsilon_1(\varepsilon_1/\sqrt{\tilde{\epsilon}})^{(k+1)/2})$. Next, we choose $r_b = \tilde{\Theta}(\varepsilon_1(\varepsilon_1/\sqrt{\tilde{\epsilon}})^{(k+1)/k})$. This combination ensures $\abs{\mathcal{T}\hat{h}(\bz) - h(\bz)} \lesssim \sqrt{\tilde{\epsilon}}$. To make sure $\abs{\hat{\mathcal{T}}\hat{h}(\bz) - \mathcal{T}\hat{h}(\bz)} \lesssim \sqrt{\tilde{\epsilon}}$, we should let 
    \begin{align*}
        \zeta \leq \tilde{\mathcal{O}}\Big(\big(\frac{\tilde{\epsilon}}{\varepsilon_1^2}\big)^{k+2+1/k}\Big), && \mathrm{and} && N = \tilde{\Theta}\Big(\frac{1}{\zeta^{(k-1)/2}\alpha}\big(\frac{\varepsilon_1}{\sqrt{\tilde{\epsilon}}}\big)^{k+3+2/k}\Big).
    \end{align*}
    The above guarantees that $\epsilon_{\mathrm{approx}} \lesssim \sqrt{\tilde{\epsilon}}$ and consequently $\mathcal{E}_1 + \sqrt{\mathcal{E}_3\mathcal{E}_1} \lesssim \epsilon$. Note that the above choices imply $\abs{a^*_j} \leq \tilde{r}_a/\abs{S}$ for all $i \in S$ with $\tilde{r}_a = \tilde{\mathcal{O}}((\varepsilon_1/\sqrt{\tilde{\epsilon}})^{k + 1 + 1/k})$. From Lemma~\ref{lem:E2_bound} with $\bar{q}=1$, we have
    $\mathcal{E}_2 \lesssim \tilde{r}_a^2\varepsilon_1^2\zeta$.
    Therefore, if we let
    $$\zeta = \tilde{\Theta}\Big(\big(\frac{\tilde{\epsilon}}{\varepsilon_1^2}\big)^{k + 2 + 1/k}\Big),$$
    we have $\mathcal{E}_2 \lesssim \tilde{\epsilon}$ and consequently $\mathcal{E}_2 + \sqrt{\mathcal{E}_3\mathcal{E}_1} \lesssim \epsilon$. This concludes the proof of the $\aDFL$ case.

    Next, we consider the case of $\abSFL$. Note that the error $\abs{\mathcal{T}\hat{h}(\bz) - h(\bz)}$ remains unchanged. However, this time we invoke Proposition~\ref{prop:finite_width_concentration} for controlling $\abs{\hat{\mathcal{T}}\hat{h}(\bz) - \mathcal{T}\hat{h}(\bz)}$.
    Therefore,
    \begin{align*}
        \epsilon_{\mathrm{approx}} &\leq \sup_{\norm{\bz} \leq r_z}\abs{\sum_{j \in S}a^*_j\sigma(\binner{\bv_j}{\bz} + b_j) - \mathcal{T}\hat{h}(\bz)} + \abs{\mathcal{T}\hat{h}(\bz) - h(\bz)}\\
        &\leq \tilde{\mathcal{O}}\Big(\frac{r_b^2}{\beta\varepsilon_1}\big(\frac{\Delta}{\varepsilon_1}\big)^{\frac{2k}{k+1}}\sqrt{\frac{1}{\alpha N}}\Big) + \tilde{\mathcal{O}}\Big(\varepsilon_1\big(\frac{\Delta}{\varepsilon_1}\big)^{-\frac{2}{k+1}} + \varepsilon_1\big(\frac{\Delta}{\varepsilon_1}\big)^{\frac{2k}{k+1}}\big(\frac{\varepsilon_1}{r_b}\big)^k\Big).
    \end{align*}
    Since the second term is unchanged, we have the same choices of $\Delta = \tilde{\Theta}\big(\varepsilon_1(\varepsilon_1/\sqrt{\tilde{\epsilon}})^{(k+1)/2}\big)$ and $r_b = \tilde{\Theta}\big(\varepsilon_1(\varepsilon_1/\sqrt{\tilde{\epsilon}})^{1 + 1/k}\big)$ as in the $\aDFL$ case. However, for the finite-width discretization, we should choose
    \begin{equation}\label{eq:N_strong_bound}
        N = \tilde{\Theta}\Big(\frac{1}{\alpha\beta^2}\big(\frac{\varepsilon_1^2}{\tilde{\epsilon}}\big)^{k + 3 + 2/k}\Big).
    \end{equation}
    Moreover, Proposition~\ref{prop:finite_width_concentration} implies $\abs{a^*_j} \leq \tilde{r}_a / \abs{S}$ with $\tilde{r}_a = \tilde{\mathcal{O}}\big((\varepsilon_1/\sqrt{\tilde{\epsilon}})^{k+1+1/k} / \beta\big)$. As a result, to get $\mathcal{E}_2 \lesssim \tilde{r}_a^2\varepsilon_1^2\zeta \leq \tilde{\epsilon}$ from Lemma~\ref{lem:E2_bound} with $q=1$, we let
    $$\zeta = \tilde{\Theta}\Big(\beta^2\big(\frac{\tilde{\epsilon}}{\varepsilon_1^2}\big)^{k + 2 + 1/k}\Big),$$
    completing the proof.
    
\end{proof}

\section{Additional Experiments}\label{app:experiments}
In this section, we perform a simple experiment on the MNIST dataset~\citep{lecun1998gradient} to demonstrate that the intuitions from our theoretical results go beyond the setting of multi-index models, squared loss, and $\ell_2$ norm attacks. We choose a convolutational neural network as our predictor, given by two convolution layers with 32 and 64 channels respectively, a max pooling layer, and two fully connected layers. For each choice of $\epsilon$, we do two experiments:
\begin{enumerate}
    \item \emph{ADV training}: We train the network adversarially for 11 epochs, where we initialize the model using the default PyTorch initialization, then use the adversarial training algorithm of~\cite{madry2018towards}.
    \item \emph{STD + ADV training}: We first train the model with standard SGD for 10 epochs from PyTorch initialization, then perform 10 epochs of adversarial training on top of this standard presentation. 
\end{enumerate}
 For both approaches, we use the corss entropy loss, a batch size of 64, a learning rate of 0.01 for both PGD and SGD updates, and we use $\ell_\infty$ norm to constrain perturbations, where pixels are normalized between 0 and 1. Note that due to the additional cost of generating adversarial samples, taking one additional epoch on ADV ensures a fare comparison where ADV and STD + ADV will roughly have the same computational complexity.

\begin{table}
    \centering
    \def\arraystretch{1.2}
    \begin{tabular}{|c|c|c|c|c|}
        \hline
        Algorithm & $\varepsilon = 0.2$ & $\varepsilon = 0.3$ & $\varepsilon = 0.4$ \\
        \hline\hline 
        ADV & $91 \pm 1$ & $82 \pm 2$ & $69 \pm 2$\\
        \hline
        STD + ADV & $\textbf{93.80} \pm 0.09$ & $\textbf{87.2} \pm 0.4$ & $\textbf{75.3} \pm 0.6$\\
        \hline
    \end{tabular}
    \caption{Adversarial test accuracy comparison between ADV and STD + ADV across three different values for $\varepsilon$ on the MNIST dataset. The error shown for each accuracy is the standard deviation over three runs.}
    \label{tab:mnist}
\end{table}

As can be seen from Table~\ref{tab:mnist}, the STD + ADV training approach achieves a higher test accuracy compared to the ADV approach across all model architectures considered here. This is consistent with our intuition from the guarantees of Algorithm~\ref{alg:learning_procedure}. The standard training phase can recover an optimal low-dimensional representation on top of which adversarial training becomes easier. Note that since we work with multi-layer convolutional neural networks, there is no longer a single layer that captures the entirety of the low-dimensional projection, which is why we choose to retrain all parameters of the network adversarially. One interesting direction for future research is to understand in settings beyond two layers, which parameters need to be adversarially trained after the standard training phase.
\ifcsname confversion\endcsname
The code to reproduce the results of Figure~\ref{fig:experiment} and Table~\ref{tab:mnist} is provided at: \url{https://github.com/mousavih/robust-feature-learning}.
\fi

\end{document}